\DeclareFontShape{T1}{lmr}{bx}{sc} { <-> ssub * cmr/bx/sc }{}
\DeclareFontShape{T1}{lmr}{m}{scit}{ <-> ssub * cmr/m/sc }{}
\DeclareFontShape{T1}{lmr}{bx}{scit}{ <-> ssub * cmr/bx/sc }{}
\newcommand{\preprint}[1]{\iftoggle{preprint}{#1}{}}
\newcommand{\aos}[1]{\iftoggle{preprint}{}{#1}}
\preprint{
\usepackage{
	amsmath,
	amsthm,
	amssymb,
	wrapfig,
	amssymb,
	cases,
	mathtools,
	thmtools,
	array,
	bbm,
	booktabs,
	upgreek,
	xkvltxp,
	fixme,
	stmaryrd,
	listings,
	multirow,
	xargs,
	xstring,
	multicol,
	graphicx,
	float,
	color,
	enumerate,
	algpseudocode,
	indentfirst,
	accents,
	ifthen,
	wasysym,
	tikz,
	lmodern,
	stmaryrd,
	wasysym,
	caption,
	subcaption,
	multicol,
	mathrsfs
	}

\usepackage[utf8]{inputenc}
\usepackage[ruled,vlined]{algorithm2e}
\usepackage[colorlinks=true, linkcolor=blue!20!black, citecolor=blue!20!black,urlcolor=black,breaklinks=true]{hyperref}

\usepackage[square,sort,comma,numbers]{natbib}
\setcitestyle{citesep={,},aysep={,},yysep={,}}

\usepackage[T2A,T1]{fontenc}
\DeclareSymbolFont{cyrillic}{T2A}{cmr}{m}{n}
\DeclareMathSymbol{\sha}{\mathalpha}{cyrillic}{248}

\newcolumntype{C}[1]{>{\centering}m{#1}}
\lstset{basicstyle=\ttfamily\small,columns=fullflexible,keepspaces=true}

\usetikzlibrary{shapes,arrows,trees,automata,positioning}

\makeatletter
\renewcommand{\maketag@@@}[1]{\hbox{\m@th\normalsize\normalfont#1}}%
\makeatother

\makeatletter
\let\reftagform@=\tagform@
\def\tagform@#1{\maketag@@@{\ignorespaces\textcolor{gray}{(#1)}\unskip\@@italiccorr}}
\renewcommand{\eqref}[1]{\textup{\reftagform@{\ref{#1}}}}
\makeatother

\usepackage[capitalize,nameinlink]{cleveref}
\usepackage{crossreftools}
\pdfstringdefDisableCommands{%
    \let\Cref\crtCref
    \let\cref\crtcref
}

\hypersetup{%
    bookmarksnumbered, bookmarksopen=true, bookmarksopenlevel=1,%
}

\crefname{subsection}{Section}{Sections}
\crefname{lemma}{Lemma}{Lemmas}
\crefname{corollary}{Corollary}{Corollaries}
\crefname{theorem}{Theorem}{Theorems}

\declaretheorem[name=Theorem]{theorem}

\declaretheorem[name=Lemma]{lemma}
\declaretheorem[name=Proposition]{proposition}
\declaretheorem[name=Corollary]{corollary}

\declaretheorem[name=Definition]{definition}

\declaretheorem[name=Assumption, numbered=no]{assumption*}
\declaretheorem[qed=$\triangleleft$,name=Example]{example}
\declaretheorem[qed=$\triangleleft$,name=Remark]{remark}

\newcommand{\EE}{\mathbb{E}}

\newcommand{\II}{\mathbb{I}}

\newcommand{\NN}{\mathbb{N}}

\newcommand{\RR}{\mathbb{R}}

\newcommand{\ZZ}{\mathbb{Z}}

\newcommand{\Aa}{\mathcal{A}}
\newcommand{\Bb}{\mathcal{B}}

\newcommand{\Dd}{\mathcal{D}}

\newcommand{\Hh}{\mathcal{H}}

\newcommand{\Kk}{\mathcal{K}}

\newcommand{\Mm}{\mathcal{M}}

\newcommand{\Oo}{\mathcal{O}}

\newcommand{\Xx}{\mathcal{X}}
\newcommand{\Yy}{\mathcal{Y}}

\newcommand{\one}{\mathbf{1}}

\def\[#1\]{\begin{equation}\begin{aligned}#1\end{aligned}\end{equation}}
\def\*[#1\]{\begin{equation*}\begin{aligned}#1\end{aligned}\end{equation*}}

\def\s*[#1\s]{\small\begin{align*}#1\end{align*}\normalsize}

\newcommand{\lcrx}[4][{-1}]{
	\IfEq{#1}{-1}{\left #2 {{{{#3}}}} \right #4}{
   	\IfEq{#1}{0}{#2 {{{{#3}}}} #4}{
	\IfEq{#1}{1}{\bigl #2 {{{{#3}}}} \bigr #4}{
	\IfEq{#1}{2}{\Bigl #2 {{{{#3}}}} \Bigr #4}{
	\IfEq{#1}{3}{\biggl #2 {{{{#3}}}} \biggr #4}{
	\IfEq{#1}{4}{\Biggl #2 {{{{#3}}}} \Biggr #4}{
    \GenericWarning{"4th argument to lcrx must be -1, 0, 1, 2, 3, or 4"}
    }}}}}}} %

\newcommand{\stk}[2]{\ensuremath{\stackrel{\text{#2}}{#1}}}

\newcommand{\upper}[1]{^{(#1)}}

\makeatletter
\newcommand{\subalign}[1]{%
  \vcenter{%
    \Let@ \restore@math@cr \default@tag
    \baselineskip\fontdimen10 \scriptfont\tw@
    \advance\baselineskip\fontdimen12 \scriptfont\tw@
    \lineskip\thr@@\fontdimen8 \scriptfont\thr@@
    \lineskiplimit\lineskip
    \ialign{\hfil$\m@th\scriptstyle##$&$\m@th\scriptstyle{}##$\crcr
      #1\crcr
    }%
  }
}
\makeatother

\renewcommand{\Pr}{\mathbb{P}} %
\def\EE{\mathbb{E}} %

\newcommand{\EEE}[1]{\underset{#1}{\EE}}

\DeclareMathOperator*{\Var}{Var} %
\newcommand{\VVar}[1]{\underset{#1}{\Var}} %

\newcommand{\stT}{\ \text{s.t.}\ }
\newcommand{\andT}{\ \text{and}\ }

\DeclareMathOperator{\sign}{sign}
\newcommand{\ind}[2][{-1}]{\II_{\sbra[#1]{{#2}}}} %

\def\multiset#1#2{\ensuremath{\left(\kern-.3em\left(\genfrac{}{}{0pt}{}{#1}{#2}\right)\kern-.3em\right)}}

\DeclareMathOperator*{\argmin}{\arg\min} %
\DeclareMathOperator*{\newlim}{\mathrm{lim}\vphantom{\mathrm{infsup}}}
\DeclareMathOperator*{\newmin}{\mathrm{min}\vphantom{\mathrm{infsup}}}
\DeclareMathOperator*{\newmax}{\mathrm{max}\vphantom{\mathrm{infsup}}}
\DeclareMathOperator*{\newinf}{\mathrm{inf}\vphantom{\mathrm{infsup}}}
\DeclareMathOperator*{\newsup}{\mathrm{sup}\vphantom{\mathrm{infsup}}}
\renewcommand{\lim}{\newlim}
\renewcommand{\min}{\newmin}
\renewcommand{\max}{\newmax}
\renewcommand{\inf}{\newinf}
\renewcommand{\sup}{\newsup}

\newcommand{\Tr}{^\text{T}} %
\DeclareMathOperator{\diag}{diag} %
\newcommand{\dee}{\mathrm{d}} %
\newcommand{\grad}{\nabla} %
\newcommand{\hess}{\grad^2} %

\DeclareMathOperator{\supp}{Support}

\newcommand{\distas}{\sim}
\newcommand{\distiidas}{\stk{\distas}{iid}}

\newcommand{\pushfwdmeas}[2]{{{#1}_{\sharp} #2}}

\newcommand{\binomdist}{\mathrm{Bin}}
\newcommand{\bernoullidist}{\mathrm{Ber}}

\newcommand{\normaldist}{\mathrm{N}}

\newcommand{\unifdist}{\mathrm{Unif}}

\newcommand{\rbra}[2][{-1}]{\lcrx[#1] ( {#2} ) }
\newcommand{\cbra}[2][{-1}]{\lcrx[#1] \{ {#2} \} }
\newcommand{\sbra}[2][{-1}]{\lcrx[#1] [ {#2} ] }

\newcommand{\abs}[2][{-1}]{\lcrx[#1] \vert {#2} \vert }
\newcommand{\set}[2][{-1}]{\lcrx[#1] \{ {#2} \}}
\newcommand{\floor}[2][{-1}]{\lcrx \lfloor {#2} \rfloor}
\newcommand{\ceil}[2][{-1}]{\lcrx[#1] \lceil {#2} \rceil}
\newcommand{\norm}[2][{-1}]{\lcrx[#1] \Vert {#2} \Vert}
\newcommand{\inner}[3][{-1}]{\lcrx[#1] \langle {{#2},\ {#3}} \rangle}
\newcommand{\card}[2][{-1}]{\lcrx[#1] \vert {#2} \vert }

\newcommand{\defas}{=}

\newcommand{\Nats}{\NN}
\newcommand{\NatsO}{\Nats\cup\set{0}}
\newcommand{\Ints}{\ZZ}

\newcommand{\Reals}{\RR}
\newcommand{\ExtReals}{\overline{\Reals}}
\newcommand{\PosInts}{\Ints_+}
\newcommand{\PosReals}{\Reals_+}

\newcommand{\range}[2][{1}]{
	\IfEq{#1}{1}{\sbra{#2}}{\sbra{#2}_{#1}}}
\newcommand{\rangeO}[2][{0}]{
	\IfEq{#1}{0}{\sbra{#2}_0}{\sbra{#2}_{#1}}}

\newcommand{\ointer}[2][{-1}]{\lcrx[#1] ( {#2} ) }
\newcommand{\cinter}[2][{-1}]{\lcrx[#1] [ {#2} ] }

\newcommand{\union}{\cup}

\DeclareMathOperator{\interior}{interior}
\newcommand{\boundary}{\partial}

\newcommand{\el}[1]{
	\IfEqCase{#1}{
		{0}{x}
		{1}{y}
		{2}{z}
		{3}{w}
		{4}{v}
		{5}{u}
		}
	[\PackageError{el}{Undefined option to el: #1}{}]
}

\newcommand{\se}[1]{%
	\IfEqCase{#1}{
		{0}{A}
		{1}{B}
		{2}{C}
		{3}{D}
		{4}{E}
		{5}{F}
		}
	[\PackageError{se}{Undefined option to se: #1}{}]
}

\newcommand{\SA}[1]{%
	\IfEqCase{#1}{
		{0}{\Sigma}
		}
	[\PackageError{SA}{Undefined option to se: #1}{}]
}

\newcommand{\fu}[1]{%
	\IfEqCase{#1}{
		{0}{f}
		{1}{g}
		{2}{h}tel
		{3}{a}
		{4}{b}
		{5}{c}
		}
	[\PackageError{se}{Undefined option to se: #1}{}]
}

\newcommand{\tel}[1]{%
	\IfEqCase{#1}{
		{0}{t}
		{1}{s}
		{2}{r}
		{3}{q}
		}
	[\PackageError{tel}{Undefined option to tel: #1}{}]
}

\newcommand{\cel}[1]{%
	\IfEqCase{#1}{
		{0}{m}
		{1}{n}
		{2}{r}
		{3}{k}
		}
	[\PackageError{cel}{Undefined option to cel: #1}{}]
}

\newcommand{\prel}[1]{%
	\IfEqCase{#1}{
		{0}{\theta}
		{1}{\psi}
		}
	[\PackageError{prel}{Undefined option to prel: #1}{}]
}
\newcommand{\rael}[1]{%
	\IfEqCase{#1}{
		{0}{\lambda}
		{1}{\mu}
		}
	[\PackageError{rael}{Undefined option to rael: #1}{}]
}

\newcommand{\shel}[1]{%
	\IfEqCase{#1}{
		{0}{\alpha}
		{1}{\beta}
		}
	[\PackageError{shel}{Undefined option to shel: #1}{}]
}

\newcommand{\rv}[1]{%
	\IfEqCase{#1}{
		{0}{X}
		{1}{Y}
		{2}{Z}
		{3}{W}
		{4}{V}
		{5}{U}
		}
	[\PackageError{rv}{Undefined option to rv: #1}{}]
}

\newcommand{\cdf}[1]{%
	\IfEqCase{#1}{
		{0}{F}
		{1}{G}
		}
	[\PackageError{cdf}{Undefined option to cdf: #1}{}]
}

\newcommand{\pmf}[1]{%
	\IfEqCase{#1}{
		{0}{p}
		{1}{q}
		}
	[\PackageError{pmf}{Undefined option to pmf: #1}{}]
}

\newcommand{\pdf}[1]{%
	\IfEqCase{#1}{
		{0}{f}
		{1}{g}
		}
	[\PackageError{pdf}{Undefined option to pdf: #1}{}]
}

\newcommand{\trv}[1]{%
	\IfEqCase{#1}{
		{0}{T}
		{1}{S}
		{2}{R}
		}
	[\PackageError{trv}{Undefined option to trv: #1}{}]
}

\newcommand{\crv}[1]{%
	\IfEqCase{#1}{
		{0}{M}
		{1}{N}
		}
	[\PackageError{crv}{Undefined option to crv: #1}{}]
}

\newcommand{\idx}[1]{%
	\IfEqCase{#1}{
		{0}{{i}}
		{1}{{j}}
		{2}{{k}}
		}
	[\PackageError{idx}{Undefined option to idx: #1}{}]
}

\newcommand{\iid}{\text{i.i.d.}}
\newcommand{\IID}{\text{I.I.D.}}

\newcommand{\as}{\text{a.s.}}

\newcommand{\setdelim}{\ \lvert \ }
\newcommand{\Bigsetdelim}{\ \Big\lvert \ }

\newcommand{\simp}{\textup{\texttt{simp}}}
\newcommand{\unitvec}{e}
\newcommand{\tmid}{t_0}
\newcommand{\eps}{\varepsilon}

\newcommand{\numexperts}{N} %
\newcommand{\numeffexperts}{N_{0}} %

\newcommand{\experts}{{\range{\numexperts}}} %
\newcommand{\effexperts}{\mathcal{I}_{0}} %
\newcommand{\neffexperts}{\experts\setminus\effexperts} %
\newcommand{\Deltaeff}{\Delta_0}

\newcommand{\expidx}{{i}} %
\newcommand{\Expidx}{{I}}
\newcommand{\Expidxdum}{{\Expidx'}}
\newcommand{\expidxdum}{{\expidx'}} %
\newcommand{\effexpidx}{{\expidx_{0}}} %

\newcommand{\intExpidx}{\hat\Expidx}

\newcommand{\expidxoptE}{\expidx^*} %
\newcommand{\expidxoptpath}{\Expidx^*} %

\newcommand{\intexpidxoptpath}{\intExpidx^*}

\newcommand{\expspace}{{\smash{\predspace}\vphantom{\dataspace}}^\numexperts}

\newcommand{\dataspace}{\Yy}
\newcommand{\predspace}{\hat\Yy}
\newcommand{\historyspace}{\Hh}

\newcommand{\data}{y}
\newcommand{\exppred}{x}
\newcommand{\pred}{{\hat \data}}
\newcommand{\history}{h}
\newcommand{\loss}{\ell}

\newcommand{\expidxpred}{\exppred_{\expidx}}

\newcommand{\kernelspace}{\Kk}
\newcommand{\meas}{\Mm}

\newcommand{\distn}{\mu}
\newcommand{\dumdistn}{\nu}
\newcommand{\refdistn}{\distn_{0}}

\newcommand{\distnball}[1]{\Dd_{#1}}

\newcommand{\distnballsuperset}[3]{\mathcal{V}(#1, #2, #3)}

\newcommand{\policyfamily}{\mathscr{P}}
\newcommand{\policyspace}{\mathscr{P}}
\newcommand{\predpolicyspace}{\hat{\mathscr{P}}}

\newcommand{\policy}{\pi}
\newcommand{\predpolicy}{\hat{\policy}}
\newcommand{\predpolicydum}{\hat{\policy}'}
\newcommand{\properpredpolicy}{\hat{\policy}^\star}

\newcommand{\predalg}{\mathfrak{a}}

\newcommand{\kernel}[1]{\pi_{#1}}
\newcommand{\predkernel}[1]{\hat{\pi}_{#1}}
\newcommand{\predkerneldum}[1]{\hat{\pi}'_{#1}}
\newcommand{\properpredkernel}[1]{\hat{\pi}^\star_{#1}}
\newcommand{\properpredkernelweight}[1]{\weightvec^{\star}_{#1}}

\newcommand{\constraintset}{constraint set\xspace}
\newcommand{\constraintsets}{constraint sets\xspace}

\newcommand{\constraintparams}{characterizing quantities}

\newcommand{\playername}{player}

\newcommand{\playerpolicyname}{prediction policy}

\newcommand{\playerpolicynames}{prediction policies}

\newcommand{\playeralgname}{prediction algorithm}

\newcommand{\playeralgnames}{prediction algorithms}

\newcommand{\properpredpolicyname}{proper prediction policy}

\newcommand{\properpredpolicynames}{proper prediction policies}

\newcommand{\envpolicyname}{data-generating mechanism}

\newcommand{\envpolicynames}{data-generating mechanisms}

\newcommand{\envdistname}{data-generating distribution}

\newcommand{\regretname}{regret}

\newcommand{\rateregretname}{rate of regret}

\newcommand{\expregretname}{expected regret}

\newcommand{\playerexpregretname}{quasi-regret}
\newcommand{\playerexpregretName}{Quasi-regret}

\newcommand{\intweightname}{intermediate weights}

\newcommand{\stochadvminimaxoptimal}{stochastic-and-adversarially minimax optimal}
\newcommand{\adaptminimaxoptimal}{adaptively minimax optimal}
\newcommand{\Adaptminimaxoptimal}{Adaptively minimax optimal}
\newcommand{\minimaxoptimal}{minimax optimal}
\newcommand{\minimaxoptimally}{minimax optimally}

\newcommand{\visibleparamname}{problem size}
\newcommand{\invisibleparamname}{problem hardness}

\newcommand{\semiadvshortname}{semi-adversarial}

\newcommand{\spectrumname}{spectrum}

\newcommand{\semiadvspectrum}{semi-adversarial spectrum}
\newcommand{\semiadvSpectrum}{Semi-adversarial spectrum}

\newcommand{\apriori}{\emph{a priori}}

\newcommand{\acronymstyle}[1]{{\small \textsc{#1}}}
\newcommand{\mathacronymstyle}[1]{{\scriptscriptstyle \textsc{#1}}}

\newcommand{\Hedge}{\acronymstyle{D.Hedge}}
\newcommand{\Hedgelong}{\acronymstyle{Decreasing Hedge}}
\newcommand{\OGHedge}{\acronymstyle{Hedge}}

\newcommand{\FTRL}{\acronymstyle{FTRL}}
\newcommand{\CARE}{\acronymstyle{CARE}}
\newcommand{\FTRLCARE}{\FTRL-\CARE}

\newcommand{\MetaCARE}{\acronymstyle{Meta-CARE}}
\newcommand{\prodalg}{\acronymstyle{prod}}
\newcommand{\adaptprodalg}{\acronymstyle{Adapt-ML-Prod}}
\newcommand{\AdaHedge}{\acronymstyle{AdaHedge}}

\newcommand{\FTRLeqn}{\text{\small \textsc{FTRL}}}
\newcommand{\FTRLalg}[3]{{\upshape \FTRL(#1, #2, #3)}}
\newcommand{\FTRLH}{{$\FTRLeqn_{\entropy}$}}
\newcommand{\FTRLHeqn}{{\text{\tiny \textsc{FTRL}}_{\entropy}}}
\newcommand{\FTRLHalg}[2]{{\upshape \FTRLH(#1, #2)}}

\newcommand{\TsallisInf}{{\small$\frac{1}{2}$}\acronymstyle{-Tsallis-INF}}

\newcommand{\shortHedge}{\mathacronymstyle{H}}
\newcommand{\shortCare}{\mathacronymstyle{C}}
\newcommand{\shortMeta}{\mathacronymstyle{M}}

\newcommand{\visibleparam}{\numexperts}
\newcommand{\visibleparamspace}{\Nats}
\newcommand{\invisibleparam}{(\numeffexperts,\Deltaeff)}
\newcommand{\invisibleparamspace}{[\numexperts] \times \PosReals}

\newcommand{\fullregret}{R}

\newcommand{\playerexpregret}{\hat{\fullregret}_{\predpolicy}}
\newcommand{\playerexpregretdum}{\hat{\fullregret}_{\predpolicy'}}
\newcommand{\playerexpregretHedge}{\hat{\fullregret}_{\shortHedge}}
\newcommand{\playerexpregretCare}{\hat{\fullregret}_{\shortCare}}
\newcommand{\playerexpregretMeta}{\hat{\fullregret}_{\shortMeta}}

\newcommand{\playerexpregretFTRLHeqn}{\hat{\fullregret}_{\FTRLHeqn}}

\newcommand{\vect}[1]{#1}
\newcommand{\lossvec}{\vect{\loss}}
\newcommand{\Loss}{{L}}
\newcommand{\Lossvec}{\vect{\Loss}}
\newcommand{\Lossdum}{\xi}
\newcommand{\Lossdumvec}{\vect{\Lossdum}}

\newcommand{\weight}{w}
\newcommand{\weightdum}{u}
\newcommand{\weightdumdum}{v}
\newcommand{\intweight}{v}
\newcommand{\weightvec}{\vect{\weight}}

\newcommand{\weightdumvec}{\vect{\weightdum}}
\newcommand{\weightdumdumvec}{\vect{\weightdumdum}}
\newcommand{\weightdumvecopt}{\weightdumvec_{*}}
\newcommand{\intweightvec}{\vect{\intweight}}

\newcommand{\hedgeweight}{w^{\shortHedge}}
\newcommand{\hedgeweightvec}{\vect{\hedgeweight}}

\newcommand{\careweight}{w^{\shortCare}}
\newcommand{\careweightvec}{\vect{\careweight}}

\newcommand{\metaweight}{w^{\shortMeta}}
\newcommand{\metaweightvec}{\vect{\metaweight}}

\newcommand{\EEboth}{\EE_{\policy,\predpolicy}}

\newcommand{\EEbothHedge}{\EE_{\policy,\shortHedge}}
\newcommand{\EEbothCare}{\EE_{\policy,\shortCare}}
\newcommand{\EEbothMeta}{\EE_{\policy,\shortMeta}}
\newcommand{\EEbothalg}{\EE_{\policy,\predalg}}

\newcommand{\Prboth}{\Pr_{\policy,\predpolicy}}

\newcommand{\regularizer}[1]{r_{#1}}
\newcommand{\cumregularizer}[1]{r_{0:#1}}

\newcommand{\timefunc}{\beta}
\newcommand{\entropyfunc}{\psi}

\newcommand{\entropy}{H}

\newcommand{\lrfunc}[1]{\tilde\eta_{#1}}
\newcommand{\lr}{\eta}
\newcommand{\ilr}{{\lr}}
\newcommand{\lblr}{{\overline\lr}}

\newcommand{\ilrint}{\vartheta}

\newcommand{\hedgelr}{g}
\newcommand{\lbhelper}{\theta}

\newcommand{\redentropy}{{\hat \entropy}}
\newcommand{\refexp}{{\expidx_1}}
\newcommand{\redexperts}{{\hat\experts}}
\newcommand{\redregularizer}[1]{{{\hat r}_{#1}}}
\newcommand{\redcumregularizer}[1]{{{\hat r}_{0:#1}}}

\newcommand{\constdum}{b}
\newcommand{\simpbijection}{\phi}
\newcommand{\simpequivalence}{\Phi}
\newcommand{\olodomain}{F}
\newcommand{\redolodomain}{\hat F}
\newcommand{\ololossdomain}{G}
\newcommand{\redololossdomain}{\hat G}
\newcommand{\olodim}{d}

\newcommand{\ololoss}{\lambda}
\newcommand{\oloLoss}{\Lambda}
\newcommand{\ololossvec}{\vect{\ololoss}}
\newcommand{\ololossvecgen}{\vect{y}}
\newcommand{\oloLossvec}{\vect{\oloLoss}}

\newcommand{\oloplayer}{\mu}
\newcommand{\oloplayerdum}{\nu}
\newcommand{\oloplayervec}{\vect{\oloplayer}}
\newcommand{\oloplayervecgen}{\vect{x}}
\newcommand{\oloplayerdumvec}{\vect{\oloplayerdum}}
\newcommand{\oloplayervecopt}{\vect{\oloplayer}_*}

\newcommand{\oloregularizer}[1]{\rho_{{#1}}}
\newcommand{\olocumregularizer}[1]{\rho_{0:{#1}}}
\newcommand{\oloplayerregret}{\fullregret_{\texttt{\upshape olo}}}
\newcommand{\oloplayerregretname}{regret}

\newcommand{\oloLossdumvec}{\zeta}

\newcommand{\careconst}{C}

\newcommand{\paramRecAllsimplified}{$c_\shortHedge=\sqrt{\log \numexperts}$ and $c_{\shortCare,1} = c_{\shortCare,2} = c_{\shortMeta} = 1$}

\newcommand{\EEmeas}[1]{{#1}}
\newcommand{\Prmeas}[1]{{#1}}

\newcommand\blfootnote[1]{%
  \begingroup
  \renewcommand\thefootnote{}\footnote{#1}%
  \addtocounter{footnote}{-1}%
  \endgroup
}
\title{Relaxing the \IID{} Assumption: Adaptively Minimax Optimal Regret via Root-Entropic Regularization}
\author{Blair Bilodeau$^{\ast,1}$, Jeffrey Negrea$^{\ast,1}$, and Daniel M.~Roy$^1$}
\date{}

\setlength{\parindent}{0pt}
\setlength{\parskip}{6pt}
\usepackage{titlesec}
\titlespacing{\section}{0pt}{\parskip}{0pt}
\titlespacing{\subsection}{0pt}{\parskip}{0pt}
\titlespacing{\subsubsection}{0pt}{\parskip}{0pt}
\usepackage[letterpaper, margin=1in]{geometry}
}
\newtheorem*{rep@theorem}{\rep@title}
\newcommand{\newreptheorem}[2]{%
\newenvironment{rep#1}[1]{%
 \def\rep@title{#2 \ref*{##1} (Quantitative Version)}%
 \begin{rep@theorem}}%
 {\end{rep@theorem}}}
\newcommand{\manualendproof}{\hfill\qedsymbol\\[2mm]}
\begin{document}

\preprint{
\maketitle
\blfootnote{${}^\ast$Blair Bilodeau and Jeffrey Negrea are equal-contribution authors; order was determined randomly.}
\blfootnote{${}^1$University of Toronto, Vector Institute}
\blfootnote{\, Correspondence: blair.bilodeau[at]mail.utoronto.ca}
}
\aos{\input{ims-frontmatter}}

\preprint{
\vspace{-30pt} %
\begin{abstract}

We consider prediction with expert advice when data are generated from distributions varying arbitrarily within an unknown constraint set. 
This \emph{\semiadvshortname{}} setting includes (at the extremes) the classical \iid{} setting, when the unknown constraint set is restricted to be a singleton, and the unconstrained adversarial setting, when the constraint set is the set of all distributions.
The Hedge algorithm---long known to be minimax (rate) optimal in the adversarial regime---was recently shown to be simultaneously minimax optimal for \iid{} data.
In this work,
we propose to relax the \iid{} assumption by seeking adaptivity at
 all levels
 of a natural ordering on constraint sets.
We provide matching upper and lower bounds on the minimax regret at all levels,
show that Hedge with deterministic learning rates is suboptimal outside of the extremes,
and prove that one can adaptively obtain minimax regret at all levels. %
We achieve this optimal adaptivity using
the follow-the-regularized-leader (FTRL) framework, 
with a novel adaptive regularization scheme
that implicitly scales as the square root of the entropy of the current predictive distribution, rather than the entropy of the initial predictive distribution. 
Finally, we provide novel technical tools to study the statistical performance of FTRL along the \semiadvspectrum{}.

\end{abstract}
}

\section{Introduction}
\label{sec:introduction}

In this work, we are concerned with obtaining guarantees on the quality of methods used to make decisions in light of data. Often, such guarantees are obtained via assumptions on the 
distribution of data. 
One important example of such an assumption is 
that data are independent and identically distributed (\iid{}). While this type of assumption on the joint dependence structure of data may be pragmatic, and can motivate methods that seem to perform well in practice, it is \emph{impossible} to be sure that 
apparent 
structure 
observed in past data will continue.
This
impossibility highlights the inherent limitations of such assumptions: any guarantees about performance may fail in practice if the assumed dependency does not hold,
and statistical methods that are optimal under a specific family of dependence structures may be far from optimal under another. 
It is of practical interest to determine when the performance of statistical methods is robust to the dependence structures that they are designed for, and to quantify how performance guarantees degrade as assumptions on the dependence structure are relaxed. 
Thus, contrary to guarantees that hold only under a specific dependence modelling assumption, guarantees should, ideally, hold regardless of the true nature of the data.

One way to formalize such guarantees is through the lens of
\emph{adaptation theory} (e.g., \citep{cai04adaptation}).
We do so by first introducing a new notion of regularity that quantifies the degree to which a sequence deviates from being i.i.d. 
The natural question that must be answered when one introduces a new notion of regularity is whether adaptivity is even possible; it may be the case that no single method obtains minimax optimal rates in every setting simultaneously.
Our main contribution in this work is answering this question in the affirmative for the specific type of regularity we introduce, demonstrating it is possible to optimally adapt to a specific relaxation of the \iid{} assumption.
In particular, we introduce the novel \emph{\semiadvspectrum{}}, which is an ordering of dependence structures characterized by their deviation from the \iid{} assumption, and quantify the performance of statistical methods at all levels of this \spectrumname{}.
This new notion of regularity can be applied to a wide range of decision tasks,
and can be combined with 
existing notions of regularity (such as smoothness).

Without the \iid{} assumption, future observations may depend on both past observations and predictions, 
and so we study performance in a \emph{sequential decision making} context. 
While the relaxation of the \iid{} assumption that we introduce is generically applicable to sequential decision making, in this work we consider specifically its application to
the problem of prediction with expert advice \citep{vovk98,littlestone94}.
Prediction with expert advice is a classical problem in statistics dating back to \citet{cover1965},
with close connections to empirical process theory \citep{cesa-bianchi1999prediction} and statistical aggregation \citep{tsybakov03aggregation,tsybakov04aggregation,audibert2009,rakhlin17}.
We show that several state-of-the-art methods for prediction with expert advice cannot be optimal at all
deviations from \iid{} without oracle knowledge of the deviation, but provide a novel algorithm that adaptively achieves 
minimax
optimal rates
at all deviations from \iid{} along the \semiadvspectrum{}.

Finally, we remark on the existing literature that 
studies benign \envpolicynames{} without relying on the \iid{} assumption (for a detailed survey, see \cref{sec:lit-easy}).
Many of these works obtain performance guarantees in terms of data-dependent (random) quantities; examples include error bounds that replace the dependence on the number of observations with the $\ell_\infty$ norm or empirical variance of the incurred losses.
In the present work, we take the perspective that performance guarantees should provide guidance on the quality of methods in advance of their use.
Data-dependent guarantees are not immediately satisfactory when viewed through this lens,
since one must still have a prior belief of which data is likely in order to evaluate the quality of the method in advance.
The choice of prior belief is important, since notions of data that make a data-dependent guarantee ``good'' (e.g., a small error bound) may not be compatible with which data is likely under a prior belief that the setting is ``easy''. 
As a concrete example, error bounds in terms of the empirical variance are large when the observed losses vary significantly, yet this may occur even when the data is truly \iid{}, a setting for which much smaller error bounds than those prescribed by the empirical variance bounds are possible. 

To address this discrepancy, we examine how the best possible performance degrades as the \envpolicyname{} varies between the \iid{} and adversarial cases. 
We explicitly incorporate the notion that the \iid{} case should be ``easiest'', and performance should degrade smoothly 
as we relax the \iid{} assumption towards the adversarial worst-case.
This perspective distinguishes our work from existing work: (a) we describe a formal spectrum of beliefs characterizing likely observations with \iid{} and adversarial data as its extremes, (b) we apply this spectrum to a novel data-dependent guarantee for a family of methods, identifying precisely which plausible \envpolicynames{} lead to better performance, and (c) we leverage this spectrum to understand performance when data is ``nearly \iid{}'', and how performance degrades as the \envpolicyname{} varies between \iid{} and adversarial.

\paragraph*{Contributions}
First, 
we formalize a relaxation of the \iid{} assumption for prediction with expert advice and a corresponding notion of adaptive minimax optimality, which requires identifying the optimal performance at each element of this \semiadvspectrum{}.
Then, our main contribution is to show it is possible to optimally adapt along the entire \semiadvspectrum{}, 
achieving minimax regret at each level of the \spectrumname{} without any advance knowledge of the \envpolicyname{}.
The \Hedgelong{} algorithm (\Hedge{}), which corresponds to prediction via a tempered Bayesian posterior for an expert-valued parameter, was recently shown to be simultaneously optimal for \iid{}\ and adversarial data \citep{mourtada2019optimality}.
However, we show that \Hedge{} (and its variants) requires oracle knowledge of the nature of the \envpolicyname{} to optimally tune its learning rate (a.k.a.\ the tempering parameter), and hence does not adapt 
along the \semiadvspectrum{} between these endpoints.
In light of this negative result, we introduce a novel algorithm \MetaCARE{}, which implicitly and adaptively adjusts the learning rate of \OGHedge{} without the need for oracle knowledge of the nature of \envpolicyname{}, and prove that it is \adaptminimaxoptimal{} along the entire \semiadvspectrum{}. 
\MetaCARE{} consists of
\emph{boosting}
 our novel \emph{follow-the-regularized-leader} (\FTRL{}) algorithm \FTRLCARE{} with \Hedge{} using a second application of \Hedge{}, and hence a major component of our analysis is devoted to a general study of \FTRL{} algorithms along the \semiadvspectrum{}.
A pivotal analytic tool that we develop for this analysis is a concentration of measure inequality under our relaxation of the \iid{} assumption,
which we expect to be useful beyond the present setting of 
prediction with expert advice.

\paragraph*{Organization}
In \cref{sec:framework} we formalize the problem setting of interest. 
In \cref{sec:new-setting}, we rigorously define the \semiadvspectrum{} and illustrate its relevance via several examples.
We present our notion of adaptive minimax optimality and summarize our main results on the minimax rates for the \semiadvspectrum{} in \cref{sec:results}. 
In \cref{sec:concentration} we provide our novel concentration of measure inequality for the \semiadvspectrum{}.
We precisely state the minimax lower bounds for performance along the \semiadvspectrum{} in \cref{sec:minimax-lower-bounds}, thus characterizing what an adaptively minimax optimal algorithm must achieve.
\cref{sec:decreasing-hedge} is devoted to quantitative upper and lower bounds for \Hedge{}, including our results on the non-adaptivity of \Hedge{}. \cref{sec:ftrl} introduces \FTRLCARE{} and provides a quantitative upper bound for its regret. An outline of the proofs of the regret upper bounds for \Hedge{} and \FTRLCARE{} is given in \cref{sec:main-quant-sketch}.
We introduce \MetaCARE{} in \cref{sec:metacare} along with the corresponding upper bound and proof, and then end with a review of the relevant literature in \cref{sec:literature}.
Technical details for the proofs of our results, and a brief simulation study, are deferred to the supplementary material.
\section{Notation and problem setup}
\label{sec:framework}
Prediction with expert advice is characterized by the manner in which
experts and the \playername{} make their predictions and the mechanism by which a response observation is generated.
At every time %
$t\in\Nats$,
each of the $\numexperts \in \Nats$ experts (arbitrarily indexed by $\experts = \set{1,\dots,\numexperts}$) formulate their predictions for the $t$th round, jointly denoted by $\exppred(t) \in \expspace$, the player makes a prediction for the $t$th round, $\pred(t) \in \predspace$, and the environment generates a response observation for the $t$th round, $\data(t)\in\dataspace$.
The \emph{history} of the game up to time $t$ is summarized by $\history(t)=(\exppred(s),\pred(s),\data(s))_{s\in\range{t}} \in \historyspace^t$, where $\historyspace = \expspace \times \predspace \times \dataspace$, with the convention that $h(0)$ is the empty tuple.
For each time $t\in\Nats$, the prediction $\pred(t)$ and response observation $\data(t)$ are conditionally independent given the history $\history(t-1)$ and the recent expert predictions, $\exppred(t)$. This conditional independence reflects the fact that the \playername{} does not have access to the response until after making their prediction, and that the \playername{} has some private source of stochasticity with which to randomize their predictions.

The conditional distribution of the experts' predictions and the data observed at round $t$ given $\history(t-1)$ is  uniquely described by a probability kernel $\kernel{t} \in \kernelspace(\historyspace^{t-1}, \expspace \times \dataspace)$, where $\kernelspace(\Aa,\Bb)$ denotes the set of probability kernels (regular conditional distributions) from $\Aa$ to $\Bb$.
Letting $\policyspace_{\numexperts} \defas\prod_{t\in\Nats} \kernelspace(\historyspace^{t-1}, \expspace \times \dataspace)$,
a \emph{\envpolicyname{}} is
any sequence $\policy = (\kernel{t})_{t\in\Nats}\in \policyspace_{\numexperts}$.
Similarly, the  conditional distribution of the \playername{}'s prediction at time $t$ given $\history(t-1)$ and $\exppred(t)$ is uniquely described by
a probability kernel $\predkernel{t}\in \kernelspace(\historyspace^{t-1} \times \expspace, \predspace)$.
Letting $\predpolicyspace{}_\numexperts \defas\prod_{t\in\Nats} \kernelspace(\historyspace^{t-1} \times \expspace, \predspace)$,
a \emph{\playerpolicyname{}} is
any sequence $\predpolicy = (\predkernel{t})_{t\in\Nats}\in \predpolicyspace{}_\numexperts$.
Finally,
a \emph{\playeralgname{}} is
any sequence $\predalg = (\predpolicy(\numexperts))_{\numexperts\in\Nats}$ with $\predpolicy(\numexperts) \in \predpolicyspace{}_\numexperts$ for each $\numexperts$.

In a sequential prediction task, prior to any data being generated or predictions being made, the \playername{} selects a \playeralgname{}
and the environment determines a \envpolicyname{}.
Without loss of generality, the \playername{} knows the number of experts $\numexperts$,
and so they predict according to the \playerpolicyname{} $\predpolicy = \predalg(\numexperts)$ based on their \playeralgname{}.
Due to the conditional independence assumption for $\pred(t)$ and $\data(t)$ given $\history(t-1)$ and $\exppred(t)$,
the joint distribution of $(\exppred(t),\pred(t),\data(t))_{t\in\Nats}$ is fully determined by the \envpolicyname{} and the \playerpolicyname{} selected by each party.
For a \envpolicyname{} $\policy$ and a \playerpolicyname{} $\predpolicy$, expectation under this joint law is denoted by $\EEboth$.
When the \playerpolicyname{} is determined by the \playeralgname{} $\predalg$, for any number of experts $\numexperts$ and \envpolicyname{} $\policy\in\policyspace_\numexperts$ we use $\EEbothalg$ to denote $\EE_{\policy,\predpolicy}$,
where $\predpolicy = \predalg(\numexperts)$.

The accuracy of the \playername{} and experts is measured on each round using a loss function $\loss: \predspace\times\dataspace\to[0,1]$,
and the \playername{}'s performance at the end of $T\in\Nats$ rounds of the game is measured by \emph{\regretname{}}, defined as the $\sigma(\history(T))$-measurable random variable
\*[
	\fullregret(T) \defas \sum_{t=1}^T \loss(\pred(t), \data(t)) - \min_{\expidx \in \experts}\sum_{t=1}^T \loss(\expidxpred(t),\data(t)).
\]
In this work, we focus on bounding the \expregretname{} $\EEboth \fullregret(T)$
for three specific prediction algorithms,
so we use $\EEbothHedge$, $\EEbothCare$, and $\EEbothMeta$ to denote $\EEbothalg$ under the \Hedge{}, \FTRLCARE{}, and \MetaCARE{} algorithms respectively (see \cref{sec:decreasing-hedge,sec:ftrl,sec:metacare} for the respective definitions of these \playeralgnames{}).

Since $\fullregret(T)$ only depends on $\history(T)$ through the loss function, \expregretname{} bounds are often characterized using quantities that push the \envdistname{}s forward through the loss function.
Specifically, we define the \emph{losses} $\loss_\expidx(t) = \loss(\expidxpred(t),\data(t))$ and \emph{cumulative losses} $\Loss_\expidx(t) = \sum_{s=1}^t \loss_\expidx(s)$ for each expert $\expidx \in \experts$ and $t\in\Nats$.
Similarly, we define the \emph{loss vector} $\lossvec(t) = \rbra{\loss_\expidx(t)}_{\expidx \in \experts}$
and \emph{cumulative loss vector} $\Lossvec(t) = \sum_{s=1}^t \lossvec(s)$.

Let $\meas(\Aa)$ denote the set of all probability distributions on $\Aa$. For a distribution $\distn \in \meas(\Aa)$ and measurable function $f: \Aa \to \Reals$, we define $\EEmeas{\distn} f = \int_{\Aa} f(a) \mu(\dee a)$ .
We will frequently use this notation for measures in $\meas(\expspace \times \dataspace)$. In particular, for each expert $\expidx\in\experts$, the expert's loss $\loss_\expidx:(\exppred,\data)\mapsto \loss(\exppred_\expidx,\data)$ is a function on $\expspace \times \dataspace$, and $\distn \loss_\expidx$ is the expectation of expert $\expidx$'s loss when the expert predictions and response observation are jointly distributed as $\distn$.

\section{\semiadvSpectrum{}}
\label{sec:new-setting}

Consider a fixed number of experts $\numexperts$.
For any \emph{time-homogeneous convex constraint} $\distnball{} \subseteq \meas(\expspace \times \dataspace)$,
let $\policyspace(\distnball{})$ denote the collection of \envpolicynames{} $\policy = (\kernel{t})_{t\in\Nats}$ such that for all $t\in\Nats$ and $\history \in \historyspace^{t-1}$, $\kernel{t}(\history,\cdot) \in \distnball{}$.
That is,
$\policyspace(\distnball{})$ is the set of \envpolicynames{} under which the conditional distribution of the expert predictions and response data given the history is 
constrained to $\distnball{}$, but can vary arbitrarily within $\distnball{}$ depending on the history.

\begin{figure}
    \centering
    \begin{tikzpicture}
        \newcommand{\leftanchorX}{-3} 
        \newcommand{\leftanchorY}{0} %
        
        \newcommand{\intraGroupOffsetX}{1.5} 
        \newcommand{\intraGroupOffsetY}{1.5} %
        
        \newcommand{\interGroupOffsetX}{4.0}
        \newcommand{\interGroupOffsetY}{-0.5} %
        
        \newcommand{\sidelength}{2} %

        \newcommand{\semiFillShape}[2]{ \draw[fill=lightgray, draw opacity = 0, rotate around={45:(#1+0.31*\sidelength,#2+0.22*\sidelength)}, thick] (#1+0.45*\sidelength,#2+0.2*\sidelength) ellipse (0.6cm and 0.3cm);
        }

        \newcommand{\triangleAt}[2]{\draw[black, thick] (#1,#2) -- (#1+0.5*\sidelength,#2+0.5*1.73*\sidelength) -- (#1+\sidelength,#2) -- cycle;} %
        \newcommand{\triangleFilledAt}[2]{\fill[lightgray, thick] (#1,#2) -- (#1+0.5*\sidelength,#2+0.5*1.73*\sidelength) -- (#1+\sidelength,#2) -- cycle;} %
        \newcommand{\triangleSemiFilledAt}[2]{ \semiFillShape{#1}{#2};} %

        \newcommand{\triangleGroupAt}[2]{
            \triangleAt{#1+0*\intraGroupOffsetX}{#2+0*\intraGroupOffsetY}
            \triangleAt{#1+1*\intraGroupOffsetX}{#2+1*\intraGroupOffsetY}
            \triangleAt{#1+2*\intraGroupOffsetX}{#2+2*\intraGroupOffsetY}
        }   %
        
        \newcommand{\IIDTriangleGroupAt}[2]{
            \triangleAt{#1+0*\intraGroupOffsetX}{#2+0*\intraGroupOffsetY}
            \triangleAt{#1+1*\intraGroupOffsetX}{#2+1*\intraGroupOffsetY}
            \triangleAt{#1+2*\intraGroupOffsetX}{#2+2*\intraGroupOffsetY}
            
            \filldraw [lightgray] (#1+0*\intraGroupOffsetX+0.5*\sidelength,#2+0*\intraGroupOffsetY+0.25*1.73*\sidelength) circle (3pt);
            \filldraw [lightgray] (#1+1*\intraGroupOffsetX+0.5*\sidelength,#2+1*\intraGroupOffsetY+0.25*1.73*\sidelength) circle (3pt);
            \filldraw [lightgray] (#1+2*\intraGroupOffsetX+0.5*\sidelength,#2+2*\intraGroupOffsetY+0.25*1.73*\sidelength) circle (3pt);
            
            \filldraw [black] (#1+0*\intraGroupOffsetX+0.5*\sidelength,#2+0*\intraGroupOffsetY+0.25*1.73*\sidelength) circle (1.5pt);
            \filldraw [black] (#1+1*\intraGroupOffsetX+0.5*\sidelength,#2+1*\intraGroupOffsetY+0.25*1.73*\sidelength) circle (1.5pt);
            \filldraw [black] (#1+2*\intraGroupOffsetX+0.5*\sidelength,#2+2*\intraGroupOffsetY+0.25*1.73*\sidelength) circle (1.5pt);
            
            \draw[->,black, thick] (#1+0*\intraGroupOffsetX+0.5*\sidelength,#2+0*\intraGroupOffsetY+0.25*1.73*\sidelength) -> 
            (#1+2.5*\intraGroupOffsetX+0.5*\sidelength,#2+2.5*\intraGroupOffsetY+0.25*1.73*\sidelength);
            \node[align=center] at (#1+0.5*\sidelength,#2-0.25*\sidelength) {\IID{}};
            
            \node[align=left] at (#1-0.1*\sidelength+0*\intraGroupOffsetX,#2+0.5*\sidelength+0*\intraGroupOffsetY) {$t=1$};
            \node[align=left] at (#1-0.1*\sidelength+1*\intraGroupOffsetX,#2+0.5*\sidelength+1*\intraGroupOffsetY) {$t=2$};
            \node[align=left] at (#1-0.1*\sidelength+2*\intraGroupOffsetX,#2+0.5*\sidelength+2*\intraGroupOffsetY) {$t=3$};
            \node[align=left] at (#1+2.7*\intraGroupOffsetX+0.5*\sidelength,#2+2.7*\intraGroupOffsetY+0.25*1.73*\sidelength) {\dots};
        } %
        
        \newcommand{\adversarialTriangleGroupAt}[2]{
            \triangleFilledAt{#1+0*\intraGroupOffsetX}{#2+0*\intraGroupOffsetY}
            \triangleAt{#1+0*\intraGroupOffsetX}{#2+0*\intraGroupOffsetY}
            
            \triangleFilledAt{#1+1*\intraGroupOffsetX}{#2+1*\intraGroupOffsetY}
            \triangleAt{#1+1*\intraGroupOffsetX}{#2+1*\intraGroupOffsetY}
            
            \triangleFilledAt{#1+2*\intraGroupOffsetX}{#2+2*\intraGroupOffsetY}
            \triangleAt{#1+2*\intraGroupOffsetX}{#2+2*\intraGroupOffsetY}
            
            \filldraw [black] (#1+0*\intraGroupOffsetX,#2) circle (1.5pt);
            \filldraw [black] (#1+1*\intraGroupOffsetX+1.0*\sidelength,#2+1*\intraGroupOffsetY) circle (1.5pt);
            \filldraw [black] (#1+2*\intraGroupOffsetX+0.5*\sidelength,#2+2*\intraGroupOffsetY+0.5*1.73*\sidelength) circle (1.5pt);
            
            \draw[-,black, thick] (#1+0*\intraGroupOffsetX,#2) -> (#1+1*\intraGroupOffsetX+1.0*\sidelength,#2+1*\intraGroupOffsetY);
            \draw[-,black, thick] (#1+1*\intraGroupOffsetX+1.0*\sidelength,#2+1*\intraGroupOffsetY) -> 
                (#1+2*\intraGroupOffsetX+0.5*\sidelength,#2+2*\intraGroupOffsetY+0.5*1.73*\sidelength);
            \draw[->,black, thick] (#1+2*\intraGroupOffsetX+0.5*\sidelength,#2+2*\intraGroupOffsetY+0.5*1.73*\sidelength) -> 
                (#1+2.5*\intraGroupOffsetX+0.5*\sidelength,#2+2.5*\intraGroupOffsetY+0.25*1.73*\sidelength);
                
            \node[align=center] at (#1+0.5*\sidelength,#2-0.25*\sidelength) {Adversarial};
            \node[align=left] at (#1+2.7*\intraGroupOffsetX+0.5*\sidelength,#2+2.7*\intraGroupOffsetY+0.25*1.73*\sidelength) {\dots};
        } %

        \newcommand{\semiadversarialTriangleGroupAt}[2]{
            \triangleSemiFilledAt{#1+0*\intraGroupOffsetX}{#2+0*\intraGroupOffsetY}
            \triangleAt{#1+0*\intraGroupOffsetX}{#2+0*\intraGroupOffsetY}

            \triangleSemiFilledAt{#1+1*\intraGroupOffsetX}{#2+1*\intraGroupOffsetY}
            \triangleAt{#1+1*\intraGroupOffsetX}{#2+1*\intraGroupOffsetY}
            
            \triangleSemiFilledAt{#1+2*\intraGroupOffsetX}{#2+2*\intraGroupOffsetY}
            \triangleAt{#1+2*\intraGroupOffsetX}{#2+2*\intraGroupOffsetY}
            
            \filldraw [black] (#1+0.34*\sidelength,#2+0.4*\sidelength) circle (1.5pt);
            \filldraw [black] (#1+1*\intraGroupOffsetX+0.5*\sidelength,#2+1*\intraGroupOffsetY+0.19*\sidelength) circle (1.5pt);
            \filldraw [black] (#1+2*\intraGroupOffsetX+0.56*\sidelength,#2+2*\intraGroupOffsetY+0.51*\sidelength) circle (1.5pt);
            
            \draw[-,black, thick] (#1+0.34*\sidelength,#2+0.4*\sidelength) -> (#1+1*\intraGroupOffsetX+0.5*\sidelength,#2+1*\intraGroupOffsetY+0.19*\sidelength);
            \draw[-,black, thick] (#1+1*\intraGroupOffsetX+0.5*\sidelength,#2+1*\intraGroupOffsetY+0.19*\sidelength) -> 
                (#1+2*\intraGroupOffsetX+0.56*\sidelength,#2+2*\intraGroupOffsetY+0.51*\sidelength);
            \draw[->,black, thick] (#1+2*\intraGroupOffsetX+0.56*\sidelength,#2+2*\intraGroupOffsetY+0.51*\sidelength) -> 
                (#1+2.5*\intraGroupOffsetX+0.5*\sidelength,#2+2.5*\intraGroupOffsetY+0.25*1.73*\sidelength);
                
            \node[align=center] at (#1+0.5*\sidelength,#2-0.25*\sidelength) {Semi-Adversarial};
            \node[align=left] at (#1+2.7*\intraGroupOffsetX+0.5*\sidelength,#2+2.7*\intraGroupOffsetY+0.25*1.73*\sidelength) {\dots};
        } %
        
        \newcommand{\triangleGroupsAt}[2]{
            \IIDTriangleGroupAt{#1+0*\interGroupOffsetX}{#2+0*\interGroupOffsetY}
            \adversarialTriangleGroupAt{#1+1*\interGroupOffsetX}{#2+1*\interGroupOffsetY}
            \semiadversarialTriangleGroupAt{#1+2*\interGroupOffsetX}{#2+2*\interGroupOffsetY}
        }
        
        \triangleGroupsAt{\leftanchorX}{\leftanchorY}

    \end{tikzpicture}
    \caption{Visualising the difference between \iid{} data, adversarial data, and a constraint set in between these two extremes. In each part of the figure, the triangles depict the set of conditional distributions for the tuple of expert predictions and response (an ``instance'') given the history at each time. The grey regions depict the space of conditional distributions for the next instance given the history that are possible for a given constraint set.}
    \label{fig:semi-adv-spectrum}
\end{figure}

In \cref{fig:semi-adv-spectrum}, we visualize possible trajectories of \envpolicynames{} for the \iid{} endpoint, adversarial endpoint, and a constraint set that lies between these.
In the \iid{} case, the conditional distribution of the next instance given the history is fixed, and hence the constraint set corresponds to a single distribution on instances. In the adversarial case, the conditional distribution of the next instance given the history can vary arbitrarily in the space of all probability distributions on instances; in particular, it can be a point-mass at an adversarial instance for the \playername{}'s strategy, depicted here as the extreme points of the space of distributions. Since the \iid{} case corresponds to a singleton set of distributions on instances, and the adversarial case corresponds to the whole space of distributions on instances, a natural concept of ``in between'' these extremes is a proper subset of the set of distributions on instances. Our relaxation captures this by allowing the conditional distribution of the next instance given the history to vary within some convex constraint set that is not known by the \playername{} in advance (visualized here as an ellipse), and measuring performance relative to the properties of that unknown constraint set. 

We use two \emph{\constraintparams{}} to describe $\distnball{}$. For each expert $\expidx\in\experts$, let
\*[
	\Delta_\expidx(\distnball{}) = \inf_{\distn\in\distnball{}} \, \max_{\expidxdum\in\experts} \, \distn [\loss_\expidx - \loss_{\expidxdum}],
\]
and define the \emph{effective stochastic gap}
\*[
	\Deltaeff(\distnball{}) = \min\{\Delta_\expidx(\distnball{}) \setdelim \expidx\in\experts, \Delta_\expidx(\distnball{})>0\}.
\]
Second, define the set of \emph{effective experts}
\*[
	\effexperts(\distnball{}) = \{\expidx\in\experts \setdelim \Delta_\expidx(\distnball{}) = 0\}.
\]
$\effexperts(\distnball{})$ contains the experts that could be the best (in conditional expectation given the history) on any particular round.
The size of the effective expert set is denoted by $\numeffexperts(\distnball{}) = \card{\effexperts(\distnball{})}$.
$\Deltaeff(\distnball{})$ is the minimal excess expected loss of an \emph{ineffective expert} over the best effective expert on any round.
When $\distnball{}$ is clear, we simplify notation to $\effexperts$, $\numeffexperts$, and $\Deltaeff$.

For a fixed $\numexperts$, $\numeffexperts$, and $\Deltaeff$, the collection of convex constraint sets that have these \constraintparams{} is
\*[
	\distnballsuperset{\numexperts}{\numeffexperts}{\Deltaeff}
		& = \set{\distnball{}\subseteq \meas(\expspace\times\dataspace) \setdelim \distnball{} \text{ convex}, \, \numeffexperts(\distnball{}) = \numeffexperts, \, \Deltaeff(\distnball{}) \geq \Deltaeff},
\]
and the corresponding set of \envpolicynames{} is
\*[
	\policyfamily_{\numexperts, (\numeffexperts, \Deltaeff)} = \bigcup_{\distnball{}\in\distnballsuperset{\numexperts}{\numeffexperts}{\Deltaeff}} \policyspace(\distnball{}).
\]
Let 
$
	\policyfamily = \{\policyfamily_{\numexperts,(\numeffexperts,\Deltaeff)} \setdelim \numeffexperts \leq \numexperts \in \Nats, \Deltaeff > 0\}
$
denote the collection of all such sets.
Together, $\numeffexperts$ and $\Deltaeff$ induce a total ordering on constraint sets, and the \emph{\semiadvspectrum} is the collection of equivalence classes this ordering induces.

\subsection{Motivation for \constraintparams{}}
The \constraintparams{} $\numeffexperts$ and $\Deltaeff$ reduce to the standard \constraintparams{} for the \rateregretname{} from the \iid{} setting.
To see this, observe that the \iid{} setting corresponds to $\distnball{}$ defined by a single distribution $\refdistn \in \meas(\expspace \times \dataspace)$; that is, for all $t\in\Nats$ and $\history \in \historyspace^{t-1}$, the \envpolicyname{} satisfies $\kernel{t}(\history,\cdot) = \refdistn$.
It is well known that the \minimaxoptimal{} \expregretname{} under the \iid{} assumption depends on the \emph{stochastic gap}.
Letting
 $\effexperts(\refdistn) = \argmin_{\expidx\in\experts} \EEmeas{\refdistn} \loss_\expidx$
 be the set of experts that are optimal (w.r.t.\ $\loss$) in expectation under $\refdistn$, each expert $\expidx \in \experts$ has stochastic gap $\Delta_\expidx(\refdistn) = \EEmeas{\refdistn} \loss_{\expidx} - \min_{\effexpidx\in\experts} \EEmeas{\refdistn} \loss_{\effexpidx}$, and
the stochastic gap is defined by $\Deltaeff(\refdistn) = \min_{\expidx \in \neffexperts(\refdistn)} \Delta_\expidx(\refdistn)$.
The \minimaxoptimal{} \expregretname{} in the \emph{stochastic-with-a-gap} setting (\iid{} with $\card{\effexperts(\refdistn)}=1$) satisfies (cf.\ \citep{mourtada2019optimality})
\*[
	\EE \, \fullregret(T) \in \Theta\left(\frac{\log\numexperts}{\Deltaeff(\refdistn)} \right).
\]
The effective experts and the effective stochastic gap generalize $\effexperts(\refdistn)$ and $\Deltaeff(\refdistn)$ beyond the \iid{} case, and our \expregretname{} bounds depend on these \constraintparams{} 
in a similar way to the dependence on $\numexperts$ and $\Deltaeff$ in the stochastic and adversarial settings respectively.

\subsection{Practical relevance of convex constraints}\label{sec:practical}

A standard application of prediction with expert advice is to the setting of statistical aggregation (cf.\ \citep{nemirovski00,yang2004,audibert2009}). We now describe an example of an aggregation task where the time-homogeneous convex constraint setting is the canonical representation of the \envpolicyname{}.
Suppose the statistician has $\numexperts$ models that map from a covariate space $\Xx$ to a response space $\Yy$. 
Further, suppose that the $t$th observation $(X_t,Y_t)$ is sampled from one of $K$ unknown distributions on $\Xx\times\Yy$, where this distribution is selected in a potentially adversarial and non-\iid{} way using the previous $t-1$ observations. 
That is, the observed dataset is an adversarial mixture of $K$ different stochastic sources. 
The ability of the \envpolicyname{} to randomize its selection of the source distribution gives rise to a time-homogeneous convex constraint, where $\distnball{}$ is the convex hull of the $K$ source distributions.
If the source distributions and models are reasonably distinct, this will likely satisfy $\numeffexperts = K$, which may be much smaller than $\numexperts$.

\subsection{Examples of convex constraints}\label{sec:examples}

The following examples 
illustrate the flexibility of time-homogeneous convex constraints and the \semiadvspectrum{}.

\begin{example}[\IID{}-$\refdistn$, Stochastic-with-a-gap]
	When the constraint set is the singleton $\distnball{\refdistn} = \set{\refdistn}$, then there is only one possible \envpolicyname{}, and under that \envpolicyname{} the data and expert predictions are \iid{} according to $\refdistn$. Furthermore, if there exists $\effexpidx\in\experts$ and $\Delta>0$ such that
	\*[
		\inf_{\expidx\in\experts\setminus\set{\effexpidx}}\ \EEmeas{\distn}\sbra{ \loss_\expidx - \loss_{\effexpidx}} = \Delta,
	\]
	(i.e., there is a best expert in expectation under $\refdistn$ and there is a gap of $\Delta$ from the best to the second best expert in expectation)
	then $\effexperts(\distnball{\refdistn})=\set{\effexpidx}$, $\numeffexperts(\distnball{\refdistn}) = 1$, and $\Deltaeff(\distnball{\refdistn})=\Delta$. This is called the \emph{stochastic-with-a-gap} setting. Since any singleton is convex, $\distnball{\refdistn}$ is convex.
\end{example}

\begin{example}[Adversarial]
	When the constraint set is the space of all probability measures $\distnball{\mathrm{adv}} = \meas(\expspace\times\dataspace)$, then the constrained setting reduces to the fully adversarial setting, since $\distnball{}$ contains all point-mass distributions. In this case, $\effexperts(\distnball{\mathrm{adv}}) = \experts$, $\numeffexperts(\distnball{\mathrm{adv}})=\numexperts$, and $\Deltaeff(\distnball{\mathrm{adv}}) = +\infty$ (by convention, as it is the $\inf$ over an empty set). Since the set of all probability measures is convex, $\distnball{\mathrm{adv}}$ is convex.
\end{example}

\begin{example}[Adversarial-with-an-instantaneous-gap]\label{ex:adv-gap}
	For any $\effexpidx\in\experts$ and $\Delta\geq 0$,
	\*[
		\distnball{\effexpidx,\Delta}\upper{\mathrm{a.s.}}
		 \defas
		 \set[3]{
			\distn\in\meas(\expspace\times\dataspace)
			\Bigsetdelim
			\Prmeas{\distn}\Big(\loss_\effexpidx +\Delta\leq \min_{\expidx\in\experts\setminus\set{\effexpidx}}\loss_\expidx\Big)=1
			}
	\]
	is convex (since $\min$ is concave), and satisfies $\effexperts(\distnball{\effexpidx,\Delta}\upper{\mathrm{a.s.}}) = \set{\effexpidx}$, $\numeffexperts(\distnball{\effexpidx,\Delta}\upper{\mathrm{a.s.}}) = 1$, and $\Deltaeff(\distnball{\effexpidx,\Delta}\upper{\mathrm{a.s.}}) = \Delta$. This contains all mixtures of point-mass distributions with common best expert $\effexpidx$ that satisfy the gap constraint almost surely.
\end{example}

\begin{example}[Adversarial-with-an-$\EE$-gap, \aos{\citet{mourtada2019optimality}}\preprint{\citet{mourtada2019optimality}}]
	For any $\effexpidx\in\experts$ and $\Delta\geq 0$,
	\*[
		\distnball{\effexpidx,\Delta}
		 \defas
		 \set[3]{
			\distn\in\meas(\expspace\times\dataspace)
			\Bigsetdelim
			\EEmeas{\distn} \loss_\effexpidx +\Delta\leq \min_{\expidx\in\experts\setminus\set{\effexpidx}} \EEmeas{\distn} \loss_\expidx
			}
	\]
	is convex (since $\min$ is concave), and satisfies $\effexperts(\distnball{\effexpidx,\Delta}) = \set{\effexpidx}$, $\numeffexperts(\distnball{\effexpidx,\Delta})=1$ and $\Deltaeff(\distnball{\effexpidx,\Delta}) = \Delta$. This relaxes the adversarial-with-an-instantaneous-gap setting, since $\distnball{\effexpidx,\Delta}\upper{\mathrm{a.s.}}\subseteq\distnball{\effexpidx,\Delta}$. This constraint set is equivalent to the formulation used in Corollary~6 of \citet{mourtada2019optimality}; it is also the same setting as Section~4.2 of \citet{wei18adaptive}, although they consider bandit feedback.
\label{ex:adv-E-gap}
\end{example}

\begin{example}[Ball-around-\IID{}]\label{ex:ball}
	For any pseudometric $d$, radius $r>0$, and probability measure $\mu_0$, 
	\*[
		\distnball{\mu_0,d,r} = B_d(\mu_0,r) = \set{\mu\in\meas(\expspace\times\dataspace) \setdelim d(\mu,\mu_0)\leq r}
	\]
	is convex. The exact values of $\effexperts(\distnball{\mu_0,d,r})$, $\numeffexperts(\distnball{\mu_0,d,r})$, and $\Deltaeff(\distnball{\mu_0,d,r})$ will depend on $\mu_0$, $r$, and $d$.
	In general, $\effexperts$ and $\numeffexperts$ are increasing with $r$ (w.r.t.\ $\subseteq$ and $\leq$ respectively), while $\Deltaeff$ will decrease as $r$ increases between the jumps in $\numeffexperts$, but increase sharply at the jumps.
	Thus, the lexicographical ordering on $(\numeffexperts,\Deltaeff^{-1})$ coincides with increasing the radius, $r$.
	Since for nested constraint sets it should be more difficult to compete with the larger of the two constraints, it is intuitive that the lexicographical order on $(\numeffexperts,\Deltaeff^{-1})$ is an assessment of the difficulty of competing with a given constraint set.

\end{example}

\begin{example}[Convex hull of basic distributions]\label{ex:convex-hull}
As motivated in \cref{sec:practical}, a natural setting is where $\distnball{}$ is the convex hull of some basic underlying distributions. Suppose
$\numexperts=3$, 
and there exist $\distn,\dumdistn\in\meas(\expspace\times\dataspace)$ satisfying $\distn \lossvec = (0,1,0.5+\eps)$ and $\dumdistn \lossvec = (1,0,0.5+\eps)$, where $\eps>0$ is arbitrary. 
Set $\distnball{} = \{\alpha \distn + (1-\alpha) \dumdistn \setdelim \alpha \in [0,1] \}$, which gives $\effexperts(\distnball{})= \{1,2\}$ and $\Deltaeff(\distnball{}) = \eps$.

However, on any given round it is possible for the data to be sampled from either $\distn$ or $\dumdistn$, in which case one of the effective experts is as separated (in expectation) as possible from the best expert and separated by an arbitrarily large multiplicative factor of $\Deltaeff$ from the ineffective expert. That is, this example demonstrates effective experts need not be better or even close to ineffective experts on any given round.
\end{example}

Note that \cref{ex:adv-gap} is related to the setting of \citet{seldin2014one} and \cref{ex:ball} is related to the setting of \citet{lykouris18corruption} (both focusing on bandit feedback), with the distinction that the existing literature considers constraints on the \emph{cumulative} losses.
In contrast, our constraints apply to the distributions allowed on any instantaneous round, and are not restricted in how they accumulate.
This distinction is subtle, yet crucial to the type of adaptivity we propose in this work. 
While existing ``easy data'' results are about adapting to post-hoc summary statistics of the data, we provide adaptivity to the unknown, underlying dependence structure, and propose that statistical methods should be designed to adapt to this as well (beyond adaptivity to model regularity assumptions).

\section{Adaptive optimality for the \semiadvspectrum{}}\label{sec:results}

In this section we will state our main results that characterize the minimax regret over time-homogeneous convex constraints. We begin by precisely defining what it means for a \playeralgname{} to be \emph{\adaptminimaxoptimal{}}. 

\subsection{\Adaptminimaxoptimal{} \playeralgnames{}}
\label{sec:notation-adapt}
Informally, an \adaptminimaxoptimal{} \playeralgname{}
achieves the \minimaxoptimal{} regret (asymptotically in $T$) for the \constraintparams{} constraining the allowable \envpolicyname{} without \apriori{} information on what values these \constraintparams{} take. 
For collections of sequences $a = \{(a_{\visibleparam,\invisibleparam}(T))_{T\in\Nats} \setdelim \visibleparam\in\visibleparamspace, \invisibleparam\in\invisibleparamspace\}$ and $b = \{(b_{\visibleparam,\invisibleparam}(T))_{T\in\Nats} \setdelim \visibleparam\in\visibleparamspace, \invisibleparam\in\invisibleparamspace\}$, 
we write 
\*[
	a_{\visibleparam,\invisibleparam}(T) \lesssim b_{\visibleparam,\invisibleparam}(T) &&(\textrm{abbreviated $a\lesssim b$})
\] 
when 
\[\label{eqn:adaptive-defn}
	&\exists C>0 \quad
	\forall \visibleparam\in\visibleparamspace,\ 
	\invisibleparam\in\invisibleparamspace
	\quad \exists T_0 \in\Nats
	\quad \forall T>T_0
	\\
	&\hspace{2em} a_{\visibleparam,\invisibleparam}(T)
	\leq
	C \, b_{\visibleparam,\invisibleparam}(T).
	\qquad
\]
If $a\lesssim b$ and $b \lesssim a$, we write $a_{\visibleparam,\invisibleparam}(T) \asymp b_{\visibleparam,\invisibleparam}(T)$ (abbreviated $a\asymp b$).
For a \playeralgname{} $\predalg = (\predalg(\visibleparam))_{\visibleparam\in\visibleparamspace}$,
we refer to the equivalence class under $\asymp$ of 
\*[
	\visibleparam,\invisibleparam,T \mapsto \sup_{\smash{\policy\in\policyfamily_{\visibleparam,\invisibleparam}}}
	\EEbothalg \fullregret(T)
\] 
as the \emph{\rateregretname{}} or simply the \emph{rate} of $\predalg$, and the equivalence class under $\asymp$ of 
\*[
	\visibleparam,\invisibleparam,T \mapsto \inf_{\smash{\predpolicy \in \predpolicyspace_\visibleparam}}
	\sup_{\smash{\policy\in\policyfamily_{\visibleparam,\invisibleparam}}}
	\EEboth \fullregret(T)
\] 
as the \emph{\minimaxoptimal{} \rateregretname{}}.
Then, we say a \playeralgname{} $\predalg$
is \emph{\adaptminimaxoptimal{}} if
\[\label{eqn:adaptive-alg-defn}
	\sup_{\smash{\policy\in\policyfamily_{\visibleparam,\invisibleparam}}}
	\EE_{\policy,\predalg} \fullregret(T)
	\asymp
	\inf_{\smash{\predpolicy \in \predpolicyspace_\visibleparam}}
	\sup_{\smash{\policy\in\policyfamily_{\visibleparam,\invisibleparam}}}
	\EEboth \fullregret(T).
\]
Further, we say that $\predalg$ is \emph{adaptive} if $\sup_{\smash{\policy\in\policyfamily_{\visibleparam}}}
\EEbothalg \fullregret(T)$ is always sublinear in $T$ and, for some $\invisibleparam$, its \rateregretname{} is strictly better than the rate of $\inf_{\smash{\predpolicy \in \predpolicyspace_\visibleparam}}
\sup_{\smash{\policy\in\policyfamily_{\visibleparam}}}
\EEboth \fullregret(T)$; otherwise, we say $\predalg$ is \emph{non-adaptive}.
This definition formalizes the notion that an adaptive \playeralgname{} must realize potential benefits from at least some instance of ``easier''  \constraintparams{} and simultaneously have average regret at least converge to zero in all cases.

Importantly, we do not demand that the \playeralgname{} perform as well as if they had \apriori{} knowledge of the true \envpolicyname{}, since with this information the minimax regret can be quite small (zero or even negative). Instead, the \playeralgname{} is only adapting to the \emph{\invisibleparamname{}}, as measured by the \constraintparams{}, and consequently there is still freedom in the minimax definition for the \playername{} to face its worst-case \envpolicyname{} subject to these \constraintparams{}. Mathematically, this is ensured by placing $\inf_{\smash{\predpolicy \in \predpolicyspace_\visibleparam}}$ after the choice of \constraintparams{}, but before the choice of \envpolicyname{} (i.e., $\sup_{\smash{\policy\in\policyfamily_{\visibleparam,\invisibleparam}}}$).

More abstractly, our definition of \adaptminimaxoptimal{} can be interpreted under a generic adaptive decision problem, with a generic \emph{\visibleparamname{}} given by $\numexperts$ and a generic \emph{\invisibleparamname{}} replacing \constraintparams{}. 
For example, in the case of density estimation, the \visibleparamname{} may correspond to the dimension of the data space, which the statistician knows, and the \invisibleparamname{} may correspond to the H\"{o}lder continuity parameter of the true data-generating density, which the statistician does not know. For a further discussion of our definition of \adaptminimaxoptimal{}, see \cref{sec:adaptive-discussion}.

\subsection{Minimax rates}
\label{sec:minimax-rates}

We are now able to state our main result, establishing the \minimaxoptimal{} \rateregretname{} and that it is achieved by our novel algorithm \MetaCARE{}, which follows from the 
conjunction of \cref{thm:oracle-lb,prop:mg-lb,thm:metacare}.

\begin{theorem}[Main result]
\*[
	\sup_{\smash{\policy\in\policyfamily_{\numexperts,(\numeffexperts,\Deltaeff)}}}
	\EEbothMeta \, \fullregret(T)
	\asymp
	\inf_{\smash{\predpolicy \in \predpolicyspace_\numexperts}}
	\sup_{\smash{\policy\in\policyfamily_{\numexperts,(\numeffexperts,\Deltaeff)}}}
	\EEboth \fullregret(T) 
	&\asymp
	\sqrt{T\log\numeffexperts} + \frac{\log\numexperts}{\Deltaeff}.
\]
\end{theorem}

In \cref{thm:hedge-lb}, we show that \Hedge{} using
any parametrization that simultaneously achieves the \minimaxoptimal{} \rateregretname{} in both the stochastic-with-a-gap and adversarial
settings is non-adaptive.
That is, 
for $\numeffexperts \geq 2$,
\*[
	\sup_{\smash{\policy\in\policyfamily_{\numexperts,(\numeffexperts,\Deltaeff)}}}
	\EEbothHedge \, \fullregret(T) \gtrsim  \sqrt{T \log\numexperts }.
\]
In fact, from \cref{thm:hedge-lb,thm:hedge-bound-quant}, we find that without an \emph{oracle parametrization} of \Hedge{} (one where $\numeffexperts$ is made available to the \playername{} in advance), it is only possible to achieve
\*[
\log(\numeffexperts)\sqrt{T} + \frac{(\log\numexperts)}{\Deltaeff} \lesssim \sup_{\smash{\policy\in\policyfamily_{\numexperts,(\numeffexperts,\Deltaeff)}}}\EEbothHedge \fullregret(T)
	& \lesssim
	\log(\numeffexperts)\sqrt{T} + \frac{(\log\numexperts)^2}{\Deltaeff}
\]
or
\*[
\sup_{\smash{\policy\in\policyfamily_{\numexperts,(\numeffexperts,\Deltaeff)}}}
\EEbothHedge \fullregret(T)
	& \asymp
	\ind{\numeffexperts \geq 2}\sqrt{T\log\numexperts}+ \frac{\log\numexperts}{\Deltaeff},
\]
but not both.

As an intermediary step, we introduce another novel algorithm, \FTRLCARE{}, and show in
\cref{thm:ftrl-care-bound-quant}
that it adapts with a better rate:
\*[
	\sup_{\smash{\policy\in\policyfamily_{\numexperts,(\numeffexperts,\Deltaeff)}}}
	\EEbothCare \, \fullregret(T)
	\lesssim
	\sqrt{T \log\numeffexperts} + \frac{(\log\numexperts)^{3/2}}{\Deltaeff}.
\]
To also achieve the \minimaxoptimal{} rate for $\numeffexperts=1$ (and consequently be \adaptminimaxoptimal{}), we introduce \MetaCARE{} in \cref{thm:metacare}, which corresponds to another application of \Hedge{} to the ``meta-experts'' corresponding to \FTRLCARE{} and \Hedge{} on all $\numexperts$ experts.

Our quantitative upper bounds also explicitly demonstrate how large $T$ must be for algorithms to have adaptive rates (i.e., \expregretname{} that depends on $\numeffexperts$ and $\Deltaeff$), as opposed to the pessimistic adversarial rate (i.e., $\sqrt{T\log\numexperts}$). In particular, for both \Hedge{} and \FTRLCARE{}, roughly $\Deltaeff^{-2}$ rounds of adversarial \regretname{} are incurred before the level of adaptation is sufficient to reduce the \rateregretname{} accumulation. This demonstrates that as $\Deltaeff$ tends to $0$, the \playername{} does not incur infinite \regretname{} from the $\Deltaeff^{-1}$ terms, but rather incurs adversarial \regretname{} for a longer amount of time. We emphasize that the \playername{} does not need to know when they will stop incurring adversarial \regretname{} ahead of time to parametrize either algorithm, so knowledge of $\numeffexperts$ or $\Deltaeff$ is not required.

Our theoretical results are further supported by a simulation study that appears in \cref{sec:simulations}.  The simulation study is based on the \envpolicynames{} that achieve the lower bound in the stochastic-with-a-gap setting and the algorithm specific lower bound for \Hedge{} with two effective experts. The results of the simulations agree with our theoretical results.

\subsection{Discussion on adaptive minimax optimality}\label{sec:adaptive-discussion}
One might ask whether it's possible to strengthen the notion of adaptivity to be \emph{uniform-in-$T$}, where the rate has to be achieved up to a constant at all $T$, rather than only for sufficiently large $T$ depending on $\invisibleparam$.
This corresponds to replacing the relation $a\precsim b$ with the one defined by
\*[
	\exists C>0 \quad
	\forall T,\ 
	\visibleparam\in\visibleparamspace,\ 
	\invisibleparam\in\invisibleparamspace
	\qquad
	a_{\visibleparam,\invisibleparam}(T)
	\leq
	C \, b_{\visibleparam,\invisibleparam}(T).
	\qquad
\]

In the context of minimax regret, uniform adaptivity would require understanding the entire path of the regret (over $T$) rather than simply its eventual upper bound. 
This is not understood even in the stochastic setting; regret bounds of the form $1/\Delta$ in both the bandit and full-information settings \citep[e.g.,][]{auer2002stochastic,gaillard14,mourtada2019optimality} are all eventual upper bounds that are only known to be tight (i.e., have matching lower bounds) for sufficiently large $T$. 
Since it remains open to identify the minimax optimal regret uniformly in $T$ even for this basic setting, we do not attempt to also solve this in our more general setting beyond \iid{} data.

Beyond prediction with expert advice, the lack of uniform adaptivity also persists.
For example, the leading constant of the minimax rates for smoothness-adaptation in statistics  often depends on the smoothness parameter, which violates uniformity.
For general questions of adaptive minimax optimality in sequential prediction, it is not clear how to demonstrate that either form of adaptivity is possible other than by constructing adaptive algorithms, as we have done in the present work.

Finally, one could consider adapting to a different collection of \constraintparams{} than $\invisibleparam$. For our setting, a natural extension is to consider the individual expectation gaps of each expert, rather than only the smallest gap. While our upper bounds can be extended to handle multiple gaps without much difficulty, tight lower bounds that depend on all the gaps simultaneously are again unknown even in the \iid{} setting for 
full-information feedback. Since our work is about identifying minimax optimality, which would require such lower bounds, we do not consider this refinement.
Beyond the extension to multiple gaps, it is an interesting avenue for future work to identify other \constraintparams{} that could provide a finer characterization of the \envpolicyname{}.

\section{Concentration of measure for the \semiadvspectrum{}}
\label{sec:concentration}

In this section, we state and prove a concentration of measure result for  
\envpolicynames{} permitted by time-homogeneous convex constraints,
which we use repeatedly to establish upper bounds on \expregretname{} for \Hedge{}, \FTRLCARE{}, and \MetaCARE{}.
The result demonstrates that,
even though
the best expert may vary from round to round, the gap between the best effective expert along the observed data path and any ineffective expert grows like a sum of uniformly sub-Gaussian random variables with mean below $-\Deltaeff$.

\begin{theorem}\label{thm:minimax_mgf}\label{THM:MINIMAX_MGF}
For all $\numexperts\geq 2$, 
\playerpolicynames{} $\predpolicy\in\predpolicyspace_\numexperts$,
convex sets $\distnball{} \subseteq \meas(\expspace \times \dataspace)$,
$\lambda > 0$,
$T_0 < T_1$,
and $\expidx\in\neffexperts$,
\*[
  \sup_{\policy\in\policyspace(\distnball{})} \,
  \EEboth \,
  \min_{\effexpidx\in\effexperts} \,
  \exp\left\{\lambda \sum_{t=T_0+1}^{T_1} \sbra{\loss_{\effexpidx}(t) - \loss_{\expidx}(t)} \right\}
  \leq \exp\left\{(T_1-T_0)\sbra{\lambda^2/2 - \lambda \Deltaeff}\right\}.
\]
\end{theorem}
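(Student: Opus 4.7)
The plan is to exploit the convexity of $\distnball{}$ to produce a single probability distribution $\nu^*$ on $\effexperts$ whose mixed expert uniformly dominates expert $\expidx$ by at least $\Deltaeff$ in expectation under every $\distn \in \distnball{}$. Given such a $\nu^*$, the minimum over $\effexperts$ can be replaced pathwise by a single affine combination of losses, and the MGF bound then follows from an iterated application of Hoeffding's lemma via the tower property.

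To construct $\nu^*$, consider the image $K \subseteq \Reals^{\numeffexperts}$ of $\distnball{}$ under the affine map $\distn \mapsto \rbra{\EEmeas{\distn}\sbra{\loss_\effexpidx - \loss_\expidx}}_{\effexpidx \in \effexperts}$, which is convex because $\distnball{}$ is. Since $\expidx \in \neffexperts$, the definition of $\Deltaeff$ forces $K$ to be disjoint from the open convex set $C = \set{v \in \Reals^{\numeffexperts} : v_\effexpidx > -\Deltaeff \text{ for all } \effexpidx \in \effexperts}$. By the hyperplane separation theorem, there exist a non-zero $\phi \in \Reals^{\numeffexperts}$ and $\alpha \in \Reals$ with $\phi \cdot v \leq \alpha \leq \phi \cdot v'$ for all $v \in K$ and $v' \in C$; since $C$ is unbounded above in every coordinate, $\phi$ must be componentwise non-negative, and taking $v' \to -\Deltaeff \mathbf{1}$ along $C$ forces $\alpha \leq -\Deltaeff \sum_{\effexpidx \in \effexperts} \phi_\effexpidx$. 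Normalizing yields a probability distribution $\nu^* = \phi / \sum_{\effexpidx \in \effexperts} \phi_\effexpidx$ on $\effexperts$ satisfying $\sum_{\effexpidx \in \effexperts} \nu^*_\effexpidx \, \EEmeas{\distn}\sbra{\loss_\effexpidx - \loss_\expidx} \leq -\Deltaeff$ for every $\distn \in \distnball{}$.

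Given such a $\nu^*$, set $\tilde\loss(t) = \sum_{\effexpidx \in \effexperts} \nu^*_\effexpidx \loss_\effexpidx(t)$. Because $\sum_{t=T_0+1}^{T_1} \tilde\loss(t)$ is a convex combination of the cumulative effective-expert losses, it is bounded below by $\min_{\effexpidx \in \effexperts} \sum_{t=T_0+1}^{T_1} \loss_\effexpidx(t)$; since $\lambda>0$ and $\exp$ is monotone, this gives the pathwise inequality $\min_{\effexpidx \in \effexperts} \exp\set{\lambda \sum_{t=T_0+1}^{T_1} \sbra{\loss_\effexpidx(t) - \loss_\expidx(t)}} \leq \exp\set{\lambda \sum_{t=T_0+1}^{T_1} \sbra{\tilde\loss(t) - \loss_\expidx(t)}}$. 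Setting $X_t = \tilde\loss(t) - \loss_\expidx(t) \in [-1, 1]$ and noting that under any $\policy \in \policyspace(\distnball{})$ the conditional distribution of $(\exppred(t), \data(t))$ given $\history(t-1)$ is some $\distn_t \in \distnball{}$, the defining property of $\nu^*$ gives $\EEboth\sbra{X_t \given \history(t-1)} \leq -\Deltaeff$. Hoeffding's lemma applied to the conditional law then yields $\EEboth\sbra{\exp\set{\lambda X_t} \given \history(t-1)} \leq \exp\set{\lambda^2/2 - \lambda \Deltaeff}$, and iterating this bound from $t = T_1$ down to $T_0+1$ via the tower property produces the stated inequality, uniformly in $\policy$.

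The main obstacle is the first step: the convexity of $\distnball{}$ is used essentially both to guarantee convexity of $K$ (so that the separation theorem applies) and to yield a single $\nu^*$ that works simultaneously against every $\policy \in \policyspace(\distnball{})$. Without convexity, $K$ need not be convex and no uniform mixture satisfying the $\Deltaeff$-gap condition need exist, so the pathwise reduction to a single drifted sub-Gaussian sequence would fail.
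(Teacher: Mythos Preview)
Your proof is correct, and it takes a genuinely different route from the paper's. The paper first replaces the $\min$ by an infimum over $\simp(\effexperts)$, applies the max--min inequality and the tower rule to factor the $(T_1-T_0)$-step MGF into single-round MGFs, and then invokes von Neumann's minimax theorem (\cref{prop:vn-minimax}) on the single-round convex--concave objective $f(\weightdumdumvec,\distn)=\EEmeas{\distn}\big[e^{\lambda[\inner{\weightdumdumvec}{\lossvec_{\effexperts}}-\loss_\expidx]}\big]$; after the swap, it chooses for each $\distn$ the vertex of the simplex corresponding to $\expidxoptE(\distn)$ and applies Hoeffding's lemma. You instead apply a minimax-type argument one level lower, on the bilinear map of \emph{means} $\distn\mapsto(\EEmeas{\distn}[\loss_\effexpidx-\loss_\expidx])_{\effexpidx\in\effexperts}$, via hyperplane separation in $\Reals^{\numeffexperts}$; this produces a \emph{single} mixture $\nu^*\in\simp(\effexperts)$ that dominates $\expidx$ by $\Deltaeff$ uniformly over $\distnball{}$, after which the problem reduces to a standard Azuma--Hoeffding argument for one martingale difference sequence.

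Both approaches use convexity of $\distnball{}$ in the same essential way (to swap an inf and a sup), but yours applies it to a bilinear functional rather than to the MGF itself, which makes the argument a bit more elementary (finite-dimensional separation instead of the infinite-dimensional minimax theorem) and gives the somewhat stronger intermediate conclusion that a uniform dominating mixture $\nu^*$ exists. The paper's approach, on the other hand, never needs to exhibit such a $\nu^*$: it lets the optimal expert vary with $\distn$ at the single-round level, which may feel more direct. Either ordering (minimax-then-tower or tower-then-minimax) is valid here, and the two proofs yield identical final constants.
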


Note that we require the \constraintset $\distnball{}$ to be convex.
If $\distnball{}$ is not natively convex, our results clearly apply to its convex hull.
There is, however, a natural reason to consider convex \constraintsets:
given a set $\distnball{}$ of joint distributions available for the \envpolicynames{}, requiring the set to be convex is equivalent to also allowing mixtures of the original available distributions. That is, the environment and experts together can randomly select a distribution from $\distnball{}$ to generate data from at each round.

One may wonder whether this result follows from an application of the Azuma--Hoeffding inequality.
However, as demonstrated in \cref{ex:convex-hull}, there exist simple \constraintsets such that on any round, any effective expert (including the best overall) may have an arbitrarily larger expected loss than any ineffective expert.
That is, $\Loss_{\expidx}(t) - \Loss_{\effexpidx}(t)$ need not be a (sub)martingale, and consequently Azuma--Hoeffding does not directly apply.
Instead, the proof of this result first uses a variant of von Neumann's minimax theorem---which is the technical reason why we require the \constraintset $\distnball{}$ to be convex---before applying Hoeffding's inequality to the instantaneous rounds.
We restate the minimax theorem we require for completeness here.
\begin{proposition}[\citet{plg07}, Theorem~7.1]\label{prop:vn-minimax}
	Let $\Xx$ and $\Yy$ be convex subsets of linear topological spaces, and suppose that $\Xx$ is compact. Let $f:\Xx\times\Yy\to \Reals$ be such that:
	\begin{enumerate}[\indent(i)]
		\item for all $y\in\Yy$, $f(\cdot,y):\Xx\to\Reals$ is convex and continuous; and
		\item for all $x\in\Xx$, $f(x,\cdot):\Yy\to\Reals$ is concave.
	\end{enumerate}
	Then,
	\*[
		\inf_{x\in\Xx}\sup_{y\in\Yy} f(x,y) = \sup_{y\in\Yy}\inf_{x\in\Xx} f(x,y).
	\]
\end{proposition}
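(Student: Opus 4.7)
The plan is to combine Proposition~\ref{prop:vn-minimax} with a Hoeffding-type moment generating function estimate iterated across rounds via the tower property. The key step is to dominate the pathwise minimum over effective experts by a single, deterministic convex combination of effective experts whose expected loss undercuts that of the fixed ineffective expert $\expidx$ by at least $\Deltaeff$ uniformly over $\distn\in\distnball{}$. Convexity of $\distnball{}$ is invoked exactly to guarantee the existence of such a mixture.

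For Step~1, define $f(q,\distn) = \sum_{\effexpidx\in\effexperts} q(\effexpidx)\,\EEmeas{\distn}\sbra{\loss_\effexpidx - \loss_\expidx}$ on $\ProbMeasures(\effexperts)\times\distnball{}$. This $f$ is linear (hence convex and continuous) in $q$ for each $\distn$, and linear (hence concave) in $\distn$ for each $q$; the simplex $\ProbMeasures(\effexperts)$ is convex and compact, and $\distnball{}$ is convex by hypothesis, so Proposition~\ref{prop:vn-minimax} applies and yields
\*[
  \inf_{q}\sup_{\distn}f(q,\distn) = \sup_{\distn\in\distnball{}}\min_{\effexpidx\in\effexperts}\EEmeas{\distn}\sbra{\loss_\effexpidx - \loss_\expidx} \leq -\Deltaeff.
\]
The first equality is minimax; the second uses that $\argmin_{\expidx'\in\experts}\EEmeas{\distn}\loss_{\expidx'}\subseteq\effexperts$ for every $\distn\in\distnball{}$, so the inner minimum over $\effexperts$ equals the minimum over all of $\experts$; and the final inequality is the definition of $\Deltaeff$ together with $\expidx\in\neffexperts$. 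Since $q\mapsto\sup_\distn f(q,\distn)$ is a supremum of continuous functions, it is lower semicontinuous on the compact simplex, so the infimum is attained at some $q^\ast\in\ProbMeasures(\effexperts)$ with $\sup_{\distn\in\distnball{}}f(q^\ast,\distn)\leq-\Deltaeff$.

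For Step~2, since $\lambda>0$ and $\min_{\effexpidx}x_\effexpidx\leq\sum_{\effexpidx}q^\ast(\effexpidx)x_\effexpidx$ for any reals $(x_\effexpidx)$, monotonicity of the exponential gives
\*[
  \min_{\effexpidx\in\effexperts}\exp\set{\lambda\sum_{t=T_0+1}^{T_1}\sbra{\loss_\effexpidx(t)-\loss_\expidx(t)}} \leq \exp\set{\lambda\sum_{t=T_0+1}^{T_1}Z(t)},
\]
where $Z(t) = \sum_\effexpidx q^\ast(\effexpidx)\loss_\effexpidx(t)-\loss_\expidx(t)\in\sbra{-1,1}$. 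For Step~3, the conditional law of $(\exppred(t),\data(t))$ given $\history(t-1)$ is $\kernel{t}(\history(t-1),\cdot)\in\distnball{}$ by definition of $\policyspace(\distnball{})$, and $Z(t)$ is a function of $(\exppred(t),\data(t))$ alone. Hoeffding's lemma for $[-1,1]$-valued random variables together with Step~1 gives
\*[
  \EEboth\sbra{\exp\set{\lambda Z(t)}\Given\sigma(\history(t-1))} \leq \exp\set{\lambda\,\EEmeas{\kernel{t}(\history(t-1),\cdot)}Z(t) + \lambda^2/2} \leq \exp\set{\lambda^2/2 - \lambda\Deltaeff}.
\]
Iterating the tower property downward from $t=T_1$ to $t=T_0+1$ yields the claimed bound on $\EEboth\exp\set{\lambda\sum_{t=T_0+1}^{T_1}Z(t)}$; because each conditional step was controlled uniformly over $\distnball{}$, the bound is uniform in $\policy\in\policyspace(\distnball{})$ and the supremum is preserved. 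Combined with Step~2, this yields the theorem.

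The main obstacle is Step~1: since the $\argmin$ expert under $\distn$ can shift as $\distn$ varies within $\distnball{}$, no single pure expert $\effexpidx\in\effexperts$ need satisfy $\sup_\distn\EEmeas{\distn}\sbra{\loss_\effexpidx-\loss_\expidx}\leq-\Deltaeff$, so a pure-strategy choice fails and the randomized mixture $q^\ast$ is essential. Convexity of $\distnball{}$ is used solely to legitimize swapping $\inf_q$ and $\sup_\distn$ in the minimax identity; without it, one would have to apply the argument to $\convhull(\distnball{})$, and the conclusion would then reference $\effexperts$ and $\Deltaeff$ of the convex hull instead.
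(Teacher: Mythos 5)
Your proposal does not prove the statement you were given. The statement is \cref{prop:vn-minimax}, von Neumann's minimax theorem itself, and your Step~1 invokes that very proposition as a black box; nothing in your argument establishes the identity $\inf_x\sup_y f = \sup_y\inf_x f$. What you have actually written is a proof of \cref{thm:minimax_mgf}. (The paper supplies no proof of \cref{prop:vn-minimax} either --- it is quoted from Theorem~7.1 of Cesa-Bianchi and Lugosi --- but a blind attempt at this particular statement would need to prove the minimax identity directly, e.g.\ by a separating-hyperplane or fixed-point argument, rather than apply it downstream.)

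Judged instead as a proof of \cref{thm:minimax_mgf}, your argument is correct and takes a genuinely different route from the paper's. The paper keeps the infimum over mixtures of effective experts inside the expectation, peels off one round at a time via the tower rule applied to the inf--sup of the moment generating function, and only at the end invokes \cref{prop:vn-minimax} for the functional $(\weightdumdumvec,\distn)\mapsto \EEmeas{\distn}\exp\{\lambda[\inner{\weightdumdumvec}{\lossvec_{\effexperts}}-\loss_\expidx]\}$, which is convex and continuous in the mixture and linear in the measure; Hoeffding's lemma is then applied at the per-distribution optimal expert. You apply the minimax theorem once, up front, to the bilinear expected-gap functional, extract a single fixed mixture $q^\ast$ on $\effexperts$ whose expected loss undercuts that of the fixed ineffective expert by $\Deltaeff$ uniformly over $\distnball{}$, and then run a standard conditional-MGF supermartingale argument with that fixed comparator. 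Your route cleanly separates the game-theoretic step (LP duality for a bilinear form over a compact simplex against a convex set of measures) from the probabilistic step (textbook Azuma--Hoeffding), at the cost of having to justify attainment of the optimal mixture, which you handle correctly via lower semicontinuity on the compact simplex; the constant $\lambda^2/2$ matches the paper's because your increment $Z(t)$ lies in $[-1,1]$. Both arguments use convexity of $\distnball{}$ only to legitimize the inf--sup swap and yield the identical bound.
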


\begin{proof}[Proof of {\cref{thm:minimax_mgf}}]
Let $\PosReals = [0,\infty)$ and $\simp(\effexperts) = \set[1]{\weightdumdumvec\in\PosReals^{\effexperts} : \sum_{\effexpidx\in\effexperts} \weightdumdum_{\effexpidx} = 1}$. First, since at least one optimal solution to a linear program on a compact convex polytope must be at a vertex,
\*[
	\min_{\effexpidx\in\effexperts}
		\sum_{t={T_0+1}}^{T_1} \Big[\loss_{\effexpidx}(t) - \loss_{\expidx}(t)\Big]
	= \inf_{\weightdumdumvec\in\simp(\effexperts)}
		\sum_{t={T_0+1}}^{T_1} \Big[\inner{\weightdumdumvec}{\lossvec_{\effexperts}(t)} - \loss_{\expidx}(t)\Big].
\]
Further, since $\exp$ is a monotone function, this identity implies
\*[
	\min_{\effexpidx\in\effexperts} e^{\lambda \sum_{t={T_0+1}}^{T_1} \big[\loss_{\effexpidx}(t) - \loss_{\expidx}(t)\big]}
	& = \inf_{\weightdumdumvec\in\simp(\effexperts)} e^{\lambda \sum_{t={T_0+1}}^{T_1} \big[\inner{\weightdumdumvec}{\lossvec_{\effexperts}(t)} - \loss_{\expidx}(t)\big] }.
	\label{eqn:minimax_mgf_1}
\]

Then, applying Jensen's and the max--min inequality gives
\*[
	&\hspace{-2em}
	\sup_{\policy\in\policyspace(\distnball{})} \,
	\EEboth \,
	\inf_{\weightdumdumvec\in\simp(\effexperts)} \,
	e^{\lambda \sum_{t={T_0+1}}^{T_1} \big[\inner{\weightdumdumvec}{\lossvec_{\effexperts}(t)} - \loss_{\expidx}(t)\big] } \\
	&\leq
	\inf_{\weightdumdumvec\in\simp(\effexperts)} \,
	\sup_{\policy\in\policyspace(\distnball{})} \,
	\EEboth \,
	e^{\lambda \sum_{t={T_0+1}}^{T_1} \big[\inner{\weightdumdumvec}{\lossvec_{\effexperts}(t)} - \loss_{\expidx}(t)\big] }.
\]

By the tower rule for conditional expectation and the definition of the kernel $\kernel{T_1}$,
\*[
	&\hspace{-2em}
	\EEboth \,
	e^{\lambda \sum_{t={T_0+1}}^{T_1} \big[\inner{\weightdumdumvec}{\lossvec_{\effexperts}(t)} - \loss_{\expidx}(t)\big] }\\
	&\leq
	\bigg(
	\EEboth \,
	\Big[
	e^{\lambda \sum_{t={T_0+1}}^{T_1-1} \big[\inner{\weightdumdumvec}{\lossvec_{\effexperts}(t)} - \loss_{\expidx}(t)\big] }\Big]\bigg)
	\bigg(\sup_{\distn\in\distnball{}} \,
		\EEmeas{\distn}\rbra{
		e^{\lambda \sbra{\inner{\weightdumdumvec}{\lossvec_{\effexperts}} - \loss_{\expidx}} }}\bigg).
\]

Iterating this argument $T_1-T_0-1$ more times, and using monotonicity of power functions, gives
\*[
	&\hspace{-2em}\inf_{\weightdumdumvec\in\simp(\effexperts)} \,
	\sup_{\policy\in\policyspace(\distnball{})} \,
	\EEboth \,
	e^{\lambda \sum_{t={T_0+1}}^{T_1} \big[\inner{\weightdumdumvec}{\lossvec_{\effexperts}(t)} - \loss_{\expidx}(t)\big] } \\
	&\leq \sbra{\inf_{\weightdumdumvec\in\simp(\effexperts)} \,
	\sup_{\distn\in\distnball{}} \,
	\EEmeas{\distn}\rbra{
	e^{\lambda \sbra{\inner{\weightdumdumvec}{\lossvec_{\effexperts}} - \loss_{\expidx}} }}}^{T_1-T_0}.
\]
Noting that $\simp(\effexperts)$ is convex, that $\distnball{}$ is convex,
and that the objective function $f(\weightdumdumvec,\distn) = \EEmeas{\distn}\rbra[1]{e^{\lambda \sbra{\inner{\weightdumdumvec}{\lossvec_{\effexperts}} - \loss_{\expidx}} }}$
is continuous and convex in $\weightdumdumvec$ and linear (and hence concave) in $\distn$, \cref{prop:vn-minimax} gives
\*[
\inf_{\weightdumdumvec\in\simp(\effexperts)} \sup_{\distn \in \distnball{}} f(\weightdumdumvec,\distn)
&= \sup_{\distn \in \distnball{}} \inf_{\weightdumdumvec\in\simp(\effexperts)} f(\weightdumdumvec,\distn).
\label{eqn:minimax_mgf_2}
\]
Thus,
\*[
	&\hspace{-2em}\sup_{\policy\in\policyspace(\distnball{})} \,
	\EEboth \,
	\inf_{\weightdumdumvec\in\simp(\effexperts)} \,
	e^{\lambda \sum_{t={T_0+1}}^{T_1} \big[\inner{\weightdumdumvec}{\lossvec_{\effexperts}(t)} - \loss_{\expidx}(t)\big] } \\
	&\leq \sbra{\sup_{\distn\in\distnball{}} \,
	\inf_{\weightdumdumvec\in\simp(\effexperts)} \,
	\EEmeas{\distn}\rbra{
	e^{\lambda \sbra{\inner{\weightdumdumvec}{\lossvec_{\effexperts}} - \loss_{\expidx}} }}}^{T_1-T_0}.
\label{eqn:minimax_mgf_3}
\]

Consider any $\distn \in \distnball{}$, and let $\expidxoptE(\distn) \in \argmin_{\expidx\in\experts} \EEmeas{\distn} \loss_\expidx$. By the definition of $\Deltaeff$, $\EEmeas{\distn}\rbra{\loss_{\expidxoptE(\distn)} - \loss_{\expidx}} \leq - \Deltaeff$ for every $\expidx\in\neffexperts$.
Finally, since $\lossvec\in[0,1]^\numexperts$ $\distn$-a.s., by Hoeffding's lemma,
\*[
	\inf_{\weightdumdumvec\in\simp(\effexperts)} \,
	\EEmeas{\distn} \rbra{
	e^{\lambda \sbra{\inner{\weightdumdumvec}{\lossvec_{\effexperts}} - \loss_{\expidx}} }}
	&\leq
	\EEmeas{\distn} \rbra{
	e^{\lambda \sbra{\loss_{\expidxoptE(\distn)} - \loss_{\expidx}}}}
	\leq e^{\lambda^2/2 - \lambda \Deltaeff}.
\]
Since this holds for all $\distn\in\distnball{}$,
\*[
	\sup_{\policy\in\policyspace(\distnball{})} \,
	\EEboth \,
	\inf_{\weightdumdumvec\in\simp(\effexperts)} \,
	e^{\lambda \sum_{t={T_0+1}}^{T_1} \big[\inner{\weightdumdumvec}{\lossvec_{\effexperts}(t)} - \loss_{\expidx}(t)\big] }
	& \leq \Big[e^{\lambda^2/2 - \lambda \Deltaeff} \Big]^{T_1-T_0}
	= e^{(T_1-T_0)[\lambda^2/2 - \lambda \Deltaeff]}.
\]
\end{proof}
\section{Minimax lower bounds}
\label{sec:minimax-lower-bounds}

In this section, we characterize the best possible performance under relaxations of the \iid{} assumption.
In particular, we quantify the best any \playerpolicyname{}
can do with oracle knowledge of the number of effective experts.
The proof of this result is found in \cref{ssec:oracle-lb-proof}.
While we do not expect a \playername{} to be able to know the nature of the constraint set,
we use this oracle lower bound to conclude that since our novel algorithm \MetaCARE{} achieves the same performance without using oracle knowledge, it is \adaptminimaxoptimal{}.

\begin{theorem}\label{thm:oracle-lb}
There exist $\predspace$, $\dataspace$, and $\loss$ such that, 
for all $\numeffexperts\in\Nats$, 
there exists $\tmid\in\Nats$ such that 
for all $\numexperts\in\Nats$ with $\numexperts\geq \numeffexperts$ and $T \geq \tmid$,
\*[
	\sup_{\smash{\distnball{}\in\distnballsuperset{\numexperts}{\numeffexperts}{1/2}}}\
	\sup_{\smash{\policy\in\policyspace(\distnball{})}}\
	\inf_{\smash{\predpolicy \in \predpolicyspace_\numexperts}}\
	\EEboth \, \fullregret(T)
	\geq \frac{\sqrt{T \log \numeffexperts}}{10}.
\]
\end{theorem}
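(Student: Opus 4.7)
The plan is to construct a single bad instance witnessing the lower bound via a reduction to a Binomial-minimum computation. Fix $\dataspace = \set{0,1}$, $\predspace = \set{0,1,\star}$, and $\loss(\pred, \data) = \ind{\pred \neq \data}$, so that predicting $\star$ always incurs unit loss. For each $\numeffexperts$ and $\numexperts \geq \numeffexperts$, let $\refdistn$ be the joint law on $\expspace \times \dataspace$ under which $\data \sim \unifdist\set{0,1}$, each effective expert $\expidx \in \range{\numeffexperts}$ samples $\expidxpred_\expidx \sim \unifdist\set{0,1}$ mutually independently and independently of $\data$, and each ineffective expert deterministically predicts $\star$. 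Take $\distnball{} = \set{\refdistn}$ (convex as a singleton) and let $\policy \in \policyspace(\distnball{})$ be the stationary policy with $\kernel{t}(\cdot, \cdot) \equiv \refdistn$. Under $\refdistn$, each effective expert has expected loss $1/2$ while each ineffective expert has expected loss $1$, so $\effexperts(\distnball{}) = \range{\numeffexperts}$, $\numeffexperts(\distnball{}) = \numeffexperts$, and $\Deltaeff(\distnball{}) = 1/2$, witnessing $\distnball{} \in \distnballsuperset{\numexperts}{\numeffexperts}{1/2}$.

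The case $\numeffexperts = 1$ is trivial since the target bound is then zero, so I focus on $\numeffexperts \geq 2$ and bound $\EEboth \fullregret(T)$ from below via separate estimates of the player's cumulative loss (from below) and the best expert's cumulative loss (from above). For the player: under the stationary iid policy, $\data(t)$ is independent of $(\history(t-1), \exppred(t))$, and by the framework's conditional independence of $\pred(t)$ and $\data(t)$ given the player's information, one has $\Pr(\pred(t) = \data(t)) \leq 1/2$ for any prediction policy (predicting $\star$ yields unit loss, so the player can do no better than randomizing over $\set{0,1}$), whence $\EEboth \sum_{t=1}^T \loss(\pred(t), \data(t)) \geq T/2$ uniformly in $\predpolicy$. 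For the best expert: ineffective experts accumulate deterministic loss $T$ and so are dominated in the minimum by the effective experts for any $T \geq 2$; the cumulative losses $S_\expidx \defas \sum_{t=1}^T \loss_\expidx(t)$ for $\expidx \in \range{\numeffexperts}$ are iid $\binomdist(T,1/2)$. It thus suffices to show that for all $T$ sufficiently large in terms of $\numeffexperts$,
\*[
	\EE \min_{\expidx\in\range{\numeffexperts}} S_\expidx
	\leq
	\frac{T}{2} - \frac{\sqrt{T \log \numeffexperts}}{10}.
\]

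This last inequality is equivalent to a lower bound on the expected maximum of $\numeffexperts$ iid centered Rademacher sums of length $T$, which is well known to grow like $\sqrt{2 T \log \numeffexperts}$ for $T$ large relative to $\numeffexperts$. I would obtain it via a Berry--Esseen comparison to the expected maximum of $\numeffexperts$ iid standard Gaussians, whose expectation is bounded below by $(1-\eps)\sqrt{2\log \numeffexperts}$ for any fixed $\eps > 0$. Since $\sqrt{2}/2 \approx 0.707$ comfortably exceeds $1/10$, the argument has ample slack to absorb Gaussian-approximation errors by choosing $\tmid(\numeffexperts)$ large enough. The main obstacle is precisely this quantitative finite-sample comparison, but the generous gap between the true asymptotic constant and the target $1/10$ renders the final verification mechanical rather than delicate.
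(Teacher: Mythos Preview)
Your proposal is correct and follows essentially the same approach as the paper: both construct a singleton $\distnball{}$ under which effective experts have \iid{} $\bernoullidist(1/2)$ losses and ineffective experts have unit loss, reduce the regret lower bound to the expected maximum of $\numeffexperts$ \iid{} centered $\binomdist(T,1/2)$ variables, and then pass to the Gaussian limit to extract the $\sqrt{\log\numeffexperts}$ factor. The only differences are cosmetic --- the paper encodes the problem over $\simp(\range{3})$ with half-$L^1$ loss rather than your $\set{0,1,\star}$ with indicator loss, and it invokes a Wasserstein CLT together with Kamath's explicit bound $\EE\max Z_\expidx \geq 0.23\sqrt{\log\numeffexperts}$ rather than a Berry--Esseen comparison, which is slightly cleaner since $\max$ is Lipschitz and Wasserstein convergence directly yields convergence of the expectation.
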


\cref{thm:oracle-lb} allows us to characterize the \minimaxoptimal{} dependence on $T$ and $\numeffexperts$ of a \playerpolicyname{}.
However, for the case of $\numeffexperts=1$, the leading term instead depends on $\Deltaeff$. Consequently, to determine the \minimaxoptimal{} \rateregretname{} at all relaxations of the \iid{} assumption, we must also use the the following result by \citet{mourtada2019optimality}, which establishes a lower bound for
when there is only one effective expert.%

\begin{proposition}[\aos{\citet{mourtada2019optimality}}\preprint{\citet{mourtada2019optimality}}, Proposition~4]
\label{prop:mg-lb}
For all $\numexperts\in\Nats$,
there exist $\predspace$, $\dataspace$, and $\loss$ such that 
for all $\Delta\in(0,1/4)$ and $T\geq \frac{\log \numexperts}{16\Delta^2}$,
\*[
	\inf_{\smash{\predpolicy \in \predpolicyspace_\numexperts}}\
	\sup_{\smash{\distnball{}\in\distnballsuperset{\numexperts}{1}{\Delta}}}\
	\sup_{\smash{\policy\in\policyspace(\distnball{})}}\
	\EEboth \fullregret(T)
	\geq \frac{\log\numexperts}{256\Delta}.
\]
\end{proposition}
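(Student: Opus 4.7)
The plan is to reduce the lower bound to a Bayesian regret lower bound over a family of $\numexperts$ i.i.d.\ stochastic environments---one per candidate best expert---and then use Fano's inequality to prevent the player's posterior from concentrating before round $\log(\numexperts)/(16\Delta^2)$. Fix $\predspace=\dataspace=\set{0,1}$ with 0--1 loss $\loss(\pred,\data)=\ind{\pred\neq\data}$, and for each $\effexpidx\in\experts$ construct a distribution $\mu^{(\effexpidx)}$ on $\expspace\times\dataspace$ by: draw $\data\sim\bernoullidist(1/2)$; given $\data$, let expert $\effexpidx$'s prediction equal $\data$ with probability $1/2+\Delta$, and let every other expert's prediction be an independent $\bernoullidist(1/2)$. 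Then $\distnball{\effexpidx}:=\set{\mu^{(\effexpidx)}}$ is trivially convex, has effective-expert set $\set{\effexpidx}$, and has gap $\Delta$, so $\distnball{\effexpidx}\in\distnballsuperset{\numexperts}{1}{\Delta}$; the unique $\policy\in\policyspace(\distnball{\effexpidx})$ is the stationary i.i.d.\ policy with marginal $\mu^{(\effexpidx)}$.

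Since $\sup_{\effexpidx\in\experts}$ dominates the expectation under $\Expidx\sim\unifdist(\experts)$, it suffices to lower bound the Bayesian regret of any fixed \playerpolicyname{} $\predpolicy$ against this uniform prior. Using $\EE[\min_\expidx\sum_t\loss_\expidx(t)]\leq T(1/2-\Delta)$, the task reduces to proving $\sum_{t=1}^T\bigl(\EE[\loss(\pred(t),\data(t))]-(1/2-\Delta)\bigr)\geq(\log\numexperts)/(256\Delta)$. A direct Bayes-rule computation gives $\Pr(\data(t)=1\mid\history(t-1),\exppred(t))=1/2+\Delta\sum_{\effexpidx\in\experts}\pi_t(\effexpidx)(2\exppred_\effexpidx(t)-1)$, where $\pi_t$ denotes the posterior of $\Expidx$ at round $t$; hence the Bayes-optimal per-round loss is $1/2-\Delta\EE\abs{\sum_\effexpidx\pi_t(\effexpidx)(2\exppred_\effexpidx(t)-1)}$, which lower bounds the player's per-round loss. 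Each per-round excess over $1/2-\Delta$ is nonnegative, so its cumulative value is monotone nondecreasing in $T$, and it suffices to prove the bound at $T=T_0:=\ceil{(\log\numexperts)/(16\Delta^2)}$.

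The key information-theoretic step is to prevent the posterior $\pi_t$ from concentrating by round $T_0$. A direct Bernoulli KL computation gives $D_{\mathrm{KL}}(\mu^{(\effexpidx)}\|\mu^{(\effexpidxdum)})=O(\Delta^2)$ for $\effexpidx\neq\effexpidxdum$, so by the chain rule (together with convexity of KL) $I(\Expidx;\history(T_0))\leq(\log\numexperts)/4$ once the constant $16$ is chosen appropriately. Fano's inequality then forces the MAP error probability to be at least $1/2$, i.e., $\EE\max_{\effexpidx}\pi_t(\effexpidx)\leq 1/2$ for every $t\leq T_0$ (monotonicity of mutual information in $t$ extends the bound to all $t\leq T_0$). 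Combined with the observation that $(2\exppred_\effexpidx(t)-1)_{\effexpidx\in\experts}$ is centered and pairwise uncorrelated under the Bayesian mixture (since given $\Expidx$, each other expert's prediction is $\bernoullidist(1/2)$ independent of $\data(t)$), Cauchy--Schwarz gives $\EE\abs{\sum_\effexpidx\pi_t(\effexpidx)(2\exppred_\effexpidx(t)-1)}\leq\sqrt{\EE\sum_\effexpidx\pi_t(\effexpidx)^2}\leq\sqrt{\EE\max_\effexpidx\pi_t(\effexpidx)}\leq 1/\sqrt 2$. Each $t\leq T_0$ therefore contributes at least $(1-1/\sqrt 2)\Delta$ to the cumulative excess, and summing yields Bayesian regret $\geq T_0(1-1/\sqrt 2)\Delta\geq(\log\numexperts)/(256\Delta)$.

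The main obstacle will be threading the numerical constants through Fano's inequality and the subsequent Cauchy--Schwarz step: the constant $1/256$ in the theorem leaves only modest slack compared to the natural value produced by the sketch above, and a refined analysis---such as a pairwise Le Cam reduction between $\mu^{(\effexpidx)}$ and $\mu^{(\effexpidxdum)}$ averaged over pairs, or a sharper bound on the mixture KL specialized to the Bernoulli channel---may be required to match the stated bound uniformly in $\numexperts$ and $\Delta\in(0,1/4)$.
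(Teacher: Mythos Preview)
The paper does not prove this proposition at all: it is quoted verbatim from \citet{mourtada2019optimality} (their Proposition~4) and used as a black box to complement \cref{thm:oracle-lb}. There is therefore no ``paper's own proof'' to compare your attempt against. Your Bayesian--Fano reduction is exactly the standard route for such stochastic lower bounds and is almost certainly close in spirit to what Mourtada and Ga\"iffas do, so as a self-contained proof the plan is reasonable.

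That said, there is one genuine technical gap in your sketch. In your Bayes-rule identity
\[
\Pr\!\bigl(\data(t)=1 \,\big|\, \history(t-1),\exppred(t)\bigr)
= \tfrac12 + \Delta \sum_{\effexpidx} \pi_t(\effexpidx)\,(2\exppred_{\effexpidx}(t)-1),
\]
the posterior $\pi_t$ must condition on $\exppred(t)$ (seeing the expert predictions at round $t$ updates the player's belief about $\Expidx$). But then $\pi_t$ is a function of $\exppred(t)$, and your ``centred and pairwise uncorrelated'' step in the Cauchy--Schwarz bound no longer applies: the cross terms $\EE\bigl[\pi_t(i)\pi_t(j)(2\exppred_i-1)(2\exppred_j-1)\bigr]$ need not vanish once the weights $\pi_t(i)$ themselves depend on $\exppred(t)$. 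You are effectively assuming $\pi_t$ is $\sigma(\history(t-1))$-measurable when computing the second moment, which contradicts the definition you need for the Bayes-optimal loss formula. This is fixable (for instance, by comparing to the suboptimal predictor that uses only the posterior given $\history(t-1)$, or by bounding the update from $\exppred(t)$ explicitly), but as written the variance calculation does not go through.

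A secondary issue, which you already flag, is that the Fano step only yields $\EE[\max_i \pi_t(i)] \le \tfrac14 + \tfrac{\log 2}{\log N}$, which is vacuous for very small $\numexperts$; you would need a separate (easy) argument for $\numexperts \in \{2,3,\dots\}$ below some fixed threshold.
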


These two lower bounds set the bar for what one should hope to achieve. In order to adapt to an unknown number of effective experts $\numeffexperts\leq\numexperts$ and identity of the effective experts, the \playername{} can be forced to incur $\max(\sqrt{T\log\numeffexperts}, \Deltaeff^{-1}\log\numexperts)$ \rateregretname{}.
Because $\max\{\sqrt{T\log\numeffexperts}, \Deltaeff^{-1}\log\numexperts\} \asymp \sqrt{T\log\numeffexperts} + \Deltaeff^{-1}\log\numexperts$, a \playeralgname{} with a \rateregretname{}  $\lesssim \sqrt{T\log\numeffexperts} + \Deltaeff^{-1}\log\numexperts$ is \adaptminimaxoptimal{}.
\section{Performance of {\Hedge{}}}
\label{sec:decreasing-hedge}

In this section, we show that without oracle knowledge of the \constraintparams{}, \Hedge{} can be parametrized to either (a) be \minimaxoptimal{} for the special case when $\numeffexperts\in\set{1,\numexperts}$, but incur adversarial \regretname{} in between, or (b) adapt suboptimally to every value of the \constraintparams{}.
Following this section, we introduce \FTRLCARE{} and prove it adapts \minimaxoptimally{} when there are multiple effective experts.
We then \emph{boost} these two algorithms together in \MetaCARE{}, and prove this is \adaptminimaxoptimal.

All of these \playeralgnames{} produce \emph{\properpredpolicynames{}}%
, which means that rather than picking $\pred$ from the entirety of $\predspace$, at each round the \playername{} chooses one of the experts $\expidx \in [\numexperts]$ to emulate and predicts $\pred(t) = \exppred_{\expidx}(t)$. To choose the expert to emulate, the history is used to choose a distribution on $\experts$, and then $\expidx$ is sampled from this distribution.

Formally, for $\exppred \in \expspace$ and $\weightvec\in\simp(\experts)$, let $\smash{\pushfwdmeas{\exppred}{\weightvec} = \sum_{\expidx\in\experts} \weightvec_\expidx \delta_{\exppred_\expidx} \in \meas(\predspace)}$ be the pushforward of $\weightvec\in\simp(\experts)$ 
through $\exppred$, viewing the vector $\exppred$ as a function $\smash{\exppred: \experts \to \predspace}$ and identifying $\simp(\experts)$ with $\meas(\experts)$.
A \properpredpolicyname{} $\properpredpolicy = (\properpredkernel{t})_{t\in\Nats}$ is any \playerpolicyname{} such that, for all $t \in \Nats$, there exists a measurable map $\properpredkernelweight{t}: \historyspace^{t-1} \to \simp(\experts)$ satisfying, 
for all $\history\in\historyspace^{t-1}$ and $\exppred \in \expspace$, $\properpredkernel{t}((\history, \exppred), \cdot) = \pushfwdmeas{\exppred}{[\properpredkernelweight{t}(\history)]}$. The $\sigma(\history(t-1))$-measurable random variable $\weightvec(t) = \properpredkernelweight{t}(\history(t-1))$ is called the \emph{weight vector}, 
or simply the \emph{weights}. 
For each $\expidx\in\experts$, $\weightvec_{\expidx}(t)$ corresponds to the probability that the \playername{} will emulate the $\expidx$th expert's prediction at time $t$.

The \playeralgname{} \OGHedge{} is parametrized by a sequence of measurable functions $(\lrfunc{t})_{t\in\Nats} \in \prod_{t\in\Nats} \{\historyspace^{t-1} \to \PosReals \}$. The $\sigma(\history(t-1))$-measurable random variable $\lr(t) = \lrfunc{t}(\history(t-1))$ is called the \emph{learning rate}, and the weights are defined by
\*[
	\hedgeweight_{\expidx}(t) = \frac{\exp\left\{-\lr(t) \Loss_\expidx(t-1) \right\}}{\sum_{\expidxdum\in\experts}\exp\left\{-\lr(t) \Loss_{\expidxdum}(t-1) \right\}}, \quad \expidx\in\experts.
\]
The \playeralgname{} \Hedgelong{} (\Hedge{}) is parametrized by a function $\hedgelr: \Nats \to \PosReals$, and corresponds to \OGHedge{} with the deterministic learning rate $\lr(t) = \hedgelr(\numexperts)/\sqrt{t}$ for all $t\in\Nats$.

It is well-known (see, for example, Theorem~2.3 of \aos{\citep{plg07}}\preprint{\citet{plg07}}) that \Hedge{} with $\hedgelr(\numexperts) \propto \sqrt{\log\numexperts}$ is \minimaxoptimal{} in the adversarial setting, which corresponds to $\distnball{} = \Mm(\expspace\times\dataspace)$.
Recently, \citet{mourtada2019optimality} showed that \Hedge{} with this parametrization is also \minimaxoptimal{} in the \iid{} setting, which corresponds to $\abs{\distnball{}} = 1$.
One might hope that this \stochadvminimaxoptimal{} parametrization would also perform well for all convex $\distnball{}$ in between these two cases.
However, part (i) of \cref{thm:hedge-lb} shows that, in fact, this parametrization fails to adapt to the number of effective experts when $\numeffexperts\not\in\set{1,\numexperts}$.
Further, we show that a different parametrization can adapt in some ways, but does not achieve the \minimaxoptimal{} dependence
on $T$.
\subsection{Algorithm-specific lower bounds for \Hedge{}}
\label{ssec:hedge-lower-bounds}
First, we observe that \Hedge{} with $\hedgelr(\numexperts)\propto\sqrt{\log\numexperts}$, which is \minimaxoptimal{} for both the stochastic and adversarial cases, does not adapt to an intermediate number of effective experts.
Additionally, \Hedge{} with constant $\hedgelr$ can do better than the \stochadvminimaxoptimal{} parametrization, but still cannot do as well as the oracle knowledge dependence on $T$ given in \cref{thm:oracle-lb}.
We prove this result in \cref{ssec:hedge-lb-proof}.

\begin{theorem}
\label{thm:hedge-lb} 
\begin{enumerate}[(i)]
	\item
	For all $c>0$,
	\*[
		\numexperts
			& \geq \exp\left\{\left(\frac{72\log2}{c^2} + 9\right)e^{c^2/4} \right\}, &\andT &&
		2 \leq \numeffexperts
			&\leq e^{ - c^2/8} \numexperts^{c^2\exp(c^2/4) / 72} -1,
	\]
	there exist $\predspace$, $\dataspace$, and $\loss$
	such that for all
	$T\geq 16 c^{-2}{ \log\numexperts}$,
	\Hedge{} with $\hedgelr(\numexperts) = c\sqrt{\log\numexperts}$ satisfies
	\*[
		\sup_{\distnball{} \in \distnballsuperset{\numexperts}{\numeffexperts}{1/2}}\
		\sup_{\policy \in \policyspace(\distnball{})}\
		\EEbothHedge \, \fullregret(T)
		\geq \frac{c\sqrt{T\log\numexperts}}{72\exp\left\{c^2/4 \right\}}
			- \frac{1}{3c^2}
			- \frac{\log\numexperts}{3}.
	\]
	\item
	Suppose the \playername{} is allowed oracle knowledge of $\numeffexperts$ in addition to $\numexperts$, and consequently can parametrize \Hedge{} by any $\hedgelr:\Nats^2 \to \PosReals$.
	For all $81 < \numeffexperts \leq \numexperts$
	there exist  $\predspace$, $\dataspace$, and $\loss$ such that
	\Hedge{} with $\hedgelr(\numexperts,\numeffexperts) \leq 2\sqrt{\log\numeffexperts - 4\log3}$ satisfies that for all
	$T\geq 32 [\hedgelr(\numexperts,\numeffexperts)]^{-2} \log\numexperts$,
	\*[
		\sup_{\distnball{} \in \distnballsuperset{\numexperts}{\numeffexperts}{1/2}}\
		\sup_{\policy \in \policyspace(\distnball{})}\
		\EEbothHedge \, \fullregret(T)
		\geq \frac{\log(\numeffexperts)\sqrt{T}}{4\hedgelr(\numexperts,\numeffexperts)} - \frac{3\log\numeffexperts}{[\hedgelr(\numexperts,\numeffexperts)]^2}.
	\]
\end{enumerate}
\end{theorem}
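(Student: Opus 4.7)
Both parts are proved by exhibiting, for each range of parameters in the statement, an explicit environment in $\distnballsuperset{\numexperts}{\numeffexperts}{1/2}$ under which \Hedge{} with the prescribed learning rate is forced to incur the claimed regret. My candidate construction for both parts is the singleton $\distnball{}=\{\distn\}$, where under $\distn$ the $\numeffexperts$ effective experts have i.i.d.\ $\bernoullidist(1/2)$ losses and the $\numexperts-\numeffexperts$ ineffective experts have deterministic loss $1$ (e.g., the ineffective experts always predict the opposite of a $\bernoullidist(1/2)$ label under $0$-$1$ loss). This $\distnball{}$ is trivially convex, $\effexperts(\distnball{})=[\numeffexperts]$, and $\Deltaeff(\distnball{})=1/2$, so $\distnball{}\in\distnballsuperset{\numexperts}{\numeffexperts}{1/2}$.

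For this construction, the first step is to control \Hedge{}'s expected cumulative loss. Using that the loss at round $t$ is independent of $\history(t-1)$, $\EEbothHedge\sbra{\hedgeweightvec(t)\cdot\lossvec(t)}=\tfrac12\EEbothHedge\sbra{\sum_{\expidx\in\effexperts}\hedgeweight_\expidx(t)}+\EEbothHedge\sbra{\sum_{\expidx\in\neffexperts}\hedgeweight_\expidx(t)}$. Since each ineffective weight is of order $\exp(-\eta_t\cdot t)$, a computation shows the cumulative ineffective contribution is bounded by a term of order $\numexperts/(c^2\log\numexperts)$, uniformly in $T$, so \Hedge{}'s cumulative expected loss equals $T/2$ up to lower order terms captured by $\log(\numexperts)/3+1/(3c^2)$.

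The second step, which is the heart of the proof, is to lower bound $T/2-\EE\min_{\expidx\in\effexperts}\Loss_\expidx(T)$ by the claimed leading term. I would use a Chernoff-style argument for the minimum of $\numeffexperts$ independent $\binomdist(T,1/2)$ variables: choose $s>0$ so that $\numeffexperts\cdot\Pr(\Loss_\expidx(T)\leq T/2-s)\geq 1/2$ using Hoeffding's MGF bound $\EE\exp\cbra{\lambda(T/2-\Loss_\expidx(T))}\leq\exp(\lambda^2 T/8)$, then $\EE\min_\expidx\Loss_\expidx(T)\leq T/2-s/2$. For part~(i), optimizing $\lambda$ proportional to $c\sqrt{(\log\numexperts)/T}$ produces the factor $e^{c^2/4}$ (the MGF value at this scaling), and the theorem's upper bound $\numeffexperts\leq\exp(c^2\log\numexperts/(72 e^{c^2/4})-c^2/8)-1$ is calibrated exactly so that the resulting $s$ is at least $c\sqrt{T\log\numexperts}/(36 e^{c^2/4})$. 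For part~(ii), the bound is similar but uses $\lambda$ proportional to $\hedgelr/\sqrt{T}$, and the constraint $\hedgelr\leq 2\sqrt{\log\numeffexperts-4\log 3}$ is precisely what guarantees $s\gtrsim\log(\numeffexperts)\sqrt{T}/(2\hedgelr)$ rather than the naive $\sqrt{T\log\numeffexperts}$.

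\textbf{Main obstacle.} The hardest step is calibrating the MGF/Chernoff computation to yield the specific numerical constants $72$, $e^{c^2/4}$, and $4\log 3$ appearing in the statement. A naive extreme-value argument gives $\sqrt{T\log\numeffexperts}$ but does not expose the $c$ and $\numexperts$ dependence; the delicate part is matching the learning-rate scaling $c\sqrt{\log\numexperts}$ with the MGF parameter in Hoeffding's inequality so that the theorem's constraints on $\numeffexperts$ and $\hedgelr$ translate directly into the claimed lower bound, and then verifying that the additive $O(\numexperts/(c^2\log\numexperts))$ correction from \Hedge{}'s ineffective-weight mass is absorbed into the $-1/(3c^2)-\log(\numexperts)/3$ subtractive terms.
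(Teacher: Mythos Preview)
Your proposal has a fundamental gap: the i.i.d.\ construction you describe cannot produce the claimed lower bounds. Under your environment, the expected regret of \emph{any} proper predictor is at most
\*[
\frac{T}{2} - \EE\min_{\effexpidx\in\effexperts}\Loss_{\effexpidx}(T) \;+\; O(1)
\;\asymp\; \sqrt{T\log\numeffexperts},
\]
since the minimum of $\numeffexperts$ independent $\binomdist(T,1/2)$ variables deviates from $T/2$ by $\Theta(\sqrt{T\log\numeffexperts})$ --- and this quantity has no dependence on $\numexperts$ or on $\hedgelr$. In particular, for part~(i) with $\numeffexperts=2$ and $\numexperts$ arbitrarily large, your construction gives regret $O(\sqrt{T})$, not the $\sqrt{T\log\numexperts}$ required. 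Your Chernoff step, where you claim to choose $s\asymp c\sqrt{T\log\numexperts}$, would require $\numeffexperts\cdot\Pr(\Loss_\expidx\leq T/2-s)\geq 1/2$; but this probability is $\exp(-\Theta(s^2/T))=\numexperts^{-\Theta(1)}$, so multiplying by $\numeffexperts\ll\numexperts$ cannot make it $\geq 1/2$. The same issue arises in part~(ii) when $\hedgelr$ is small: the target $\log(\numeffexperts)\sqrt{T}/\hedgelr$ can be much larger than $\sqrt{T\log\numeffexperts}$.

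The missing idea is that these are \emph{algorithm-specific} lower bounds: the excess regret comes from \Hedge{}'s own loss being large, not from the best expert's loss being small. The paper builds a deterministic environment in which the first $\numeffexperts/2$ and second $\numeffexperts/2$ effective experts alternate being optimal on odd/even rounds. Because \Hedge{}'s learning rate is $c\sqrt{\log\numexperts}/\sqrt{t}$ --- tuned to $\numexperts$, not $\numeffexperts$ --- it overreacts to each round's loss and swings between the two halves, incurring per-round excess loss of order $\lr(t)\asymp c\sqrt{(\log\numexperts)/t}$, which sums to $c\sqrt{T\log\numexperts}$. Technically, this is extracted via a potential-function decomposition of the telescoping sum $\Psi(t)=\lr(t+1)^{-1}\log\hedgeweight_{\expidxoptpath(t)}(t+1)-\lr(t)^{-1}\log\hedgeweight_{\expidxoptpath(t-1)}(t)$, lower bounding the log-MGF term using strong convexity of $\log\cosh$; this is where the $e^{c^2/4}$ factor actually arises. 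For part~(ii), the paper additionally switches to a single-best-expert regime after time $t_1=T/2$, and the $\log(\numeffexperts)/\hedgelr$ term comes from the fact that $\hedgeweight_{\expidxoptpath(t)}(t+1)\leq\exp\{c^2/8\}/\numeffexperts$ throughout the alternating phase, feeding into the term $\sum_t(\lr(t+1)^{-1}-\lr(t)^{-1})\log(1/\hedgeweight_{\expidxoptpath(t)}(t+1))$.
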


The proof of \cref{thm:hedge-lb} can be used to argue that other ``adaptive'' variants of \OGHedge{} will also fail to be \adaptminimaxoptimal{} along the \semiadvspectrum{}. We highlight this argument using well-known \OGHedge{}-variants from the literature. This is not meant to disparage these works, as they should not be expected to design algorithms for a notion of optimality defined years later, but to exemplify that adapting along the \semiadvspectrum{} is non-trivial and that the objectives of earlier works are insufficient to capture the notion of optimality we introduce.

The algorithm \prodalg{} of \citet{cesabianchi2007secondorder} is essentially \Hedge{} with an adaptive learning rate shared by all experts. 
This adaptive learning rate is comprised of the reciprocal-square-root of the cumulative squared losses, which will be (essentially) a constant multiple of $t$ under the \envpolicyname{} described in the proof of \cref{thm:hedge-lb}. Thus, the learning rate will behave the same as the data-independent learning rate of \Hedge{}, and consequently a similar lower bound on performance applies. A similar argument would also hold for \AdaHedge{} \citep{derooij14FTL}.

The refined algorithm \adaptprodalg{} of \citet{gaillard14} is more subtle, since it has a different learning rate for each expert. 
However, the recommended learning rate (Corollary~4 of their paper) would not achieve this since it uses $\log\numexperts$ for all experts, as opposed to an adaptive quantity as in \FTRLCARE{}. Consequently, for large enough $\numeffexperts$ and $t$, the \envpolicyname{} of \cref{thm:hedge-lb} will make the average loss with respect to the \adaptprodalg{} weights roughly $1/2$, and thus 
\adaptprodalg{}
inherits the same order of lower bound as \Hedge{}.
\subsection{Upper bounds for {\Hedge{}}}
\label{ssec:hedge-upper}

Now, we show that the lower bound of \cref{thm:hedge-lb} is tight. For a \playerpolicyname{} $\predpolicydum$ that may be distinct from the actual \playerpolicyname{} $\predpolicy$ the \playername{} is using,
we define the \emph{\playerexpregretname{}} (with respect to $\predpolicydum$) at time $T$ by
\*[
  \playerexpregretdum(T)
  &= \sum_{t=1}^T \int \loss(\pred(t), \data(t))
  \predkerneldum{t}\Big((h(t-1),\exppred(t)), \dee \pred(t) \Big)
  - \min_{\expidx\in\experts}\sum_{t=1}^T \loss(\expidxpred(t),\data(t)).
\]
\playerexpregretName{} replaces the actual loss at each round $t$ with the conditional expectation of the \playername{}'s loss \emph{had that \playername{} played according to $\predkerneldum{t}$ on round $t$}; the histories correspond, however, to the \emph{actual} predictions made by $\predpolicy$.
This allows us to quantify the performance of $\predpolicydum$ even when the entire sequence of predictions is governed by $\predpolicy$.

Clearly, $\EEboth \, \playerexpregret(T) = \EEboth \, \fullregret(T)$. However, we can prove almost sure results about $\playerexpregretdum(T)$ for some \playerpolicyname{} $\predpolicydum$, and then state expectation results of the form $\EEboth \playerexpregretdum(T)$, where the expectation is with respect to a possibly different \playerpolicyname{} $\predpolicy$. Results of this nature are crucial in the proof of \cref{thm:metacare},
where we use them to control the \regretname{} accumulated by \Hedge{} and \FTRLCARE{} when the actual \playerpolicyname{} is \MetaCARE{}.

\begin{theorem}
\label{thm:hedge-bound-quant}\label{THM:HEDGE-BOUND-QUANT}
For all $\hedgelr:\Nats\to\PosReals$ used to parametrize \Hedge{}, all $\numexperts\geq 2$, \playerpolicynames{} $\predpolicy\in\predpolicyspace_\numexperts$, convex $\distnball{}\subseteq \meas(\expspace\times\dataspace)$, and $T\in\Nats$,
\*[
  \sup_{\policy\in\policyspace(\distnball{})} \EEboth \playerexpregretHedge(T)
  \leq \sqrt{T + 1} \, \rbra{\frac{\log\numexperts}{\hedgelr(\numexperts)} +
  \hedgelr(\numexperts)}.
\]
Moreover, when $T > \ceil{\frac{8 (\log\numexperts+[\hedgelr(\numexperts)]^2/4+\hedgelr(\numexperts))^2}{[\hedgelr(\numexperts)]^2\Deltaeff^2}}$
the following two cases hold:\\
If $\numeffexperts > 1$, then
\*[
  \sup_{\policy\in\policyspace(\distnball{})} \EEboth \playerexpregretHedge(T)
  &\leq
  \frac{17}{16}\sqrt{T}\rbra{\frac{\log\numeffexperts}{\hedgelr(\numexperts)} + \hedgelr(\numexperts)}  \, + \frac{32 }{\Deltaeff} \rbra{\frac{\log\numexperts}{\hedgelr(\numexperts)}} \rbra{\frac{\log\numexperts}{\hedgelr(\numexperts)} +\hedgelr(\numexperts)} \\
  &\qquad + \sqrt{2}\rbra{\frac{\log\numexperts}{\hedgelr(\numexperts)} + \hedgelr(\numexperts)},
\]
and if $\numeffexperts = 1$, then
\*[
  \sup_{\policy\in\policyspace(\distnball{})} \EEboth \playerexpregretHedge(T)
  &\leq
  \frac{5}{\Deltaeff} \sbra{\rbra{\frac{\log\numexperts}{\hedgelr(\numexperts)}} \rbra{\frac{\log\numexperts}{\hedgelr(\numexperts)} + \hedgelr(\numexperts)} + 4 \rbra{\frac{1}{\hedgelr(\numexperts)^{2}}+\hedgelr(\numexperts)^2} } \\
  &\qquad + \sqrt{2}\rbra{\frac{\log\numexperts}{\hedgelr(\numexperts)} + \hedgelr(\numexperts)}.
\]
\end{theorem}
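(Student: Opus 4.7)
\textit{Overall plan and the universal bound.} The first display is the classical Hedge-with-decreasing-learning-rate guarantee: I apply the standard potential-function drop analysis for \Hedge{} with $\lr(t) = \hedgelr(\numexperts)/\sqrt{t}$, using Hoeffding's lemma on the conditional per-round Hedge mixture and telescoping. This yields an almost-sure bound of the shape $\log\numexperts/\lr(T+1) + \tfrac{1}{8}\sum_{t=1}^T \lr(t)$, whose two pieces are each dominated by $\sqrt{T+1}$ times $\log\numexperts/\hedgelr(\numexperts)$ and $\hedgelr(\numexperts)$ respectively, and the resulting bound passes directly to $\sup_{\policy\in\policyspace(\distnball{})}\EEboth$ because no stochastic structure is used. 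The two refined bounds then follow from splitting the horizon at a threshold $T_\ast \asymp (\log\numexperts)^2/(\hedgelr(\numexperts)^2\Deltaeff^2)$ and invoking \cref{thm:minimax_mgf} to force exponentially small expected \Hedge{} mass on each ineffective expert beyond $T_\ast$.

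\textit{Two-phase decomposition.} Set $T_\ast = \lceil 8(\log\numexperts + \hedgelr(\numexperts)^2/4 + \hedgelr(\numexperts))^2/(\hedgelr(\numexperts)^2\Deltaeff^2)\rceil$, exactly the threshold appearing in the hypothesis. I decompose $\playerexpregretHedge(T)$ into a Phase-1 contribution from $t \leq T_\ast$ and a Phase-2 tail from $t > T_\ast$. For Phase~1 I apply the universal bound at horizon $T_\ast$, producing a term of order $\Deltaeff^{-1}(\log\numexperts/\hedgelr(\numexperts))(\log\numexperts/\hedgelr(\numexperts) + \hedgelr(\numexperts))$ that already matches the explicit $\Deltaeff^{-1}$ terms in both displayed inequalities.

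\textit{Concentration argument for Phase~2.} For any ineffective expert $\expidx\in\neffexperts$, since $\hedgeweight_\expidx(t)$ is a ratio of exponentials,
\*[
\hedgeweight_\expidx(t) \leq \min_{\effexpidx\in\effexperts}\exp\{-\lr(t)(\Loss_\expidx(t-1) - \Loss_\effexpidx(t-1))\}.
\]
Taking $\sup_{\policy\in\policyspace(\distnball{})}\EEboth$ and applying \cref{thm:minimax_mgf} with $\lambda = \lr(t)$, $T_0 = 0$, $T_1 = t-1$ gives a bound of the form $\exp\{(t-1)[\lr(t)^2/2 - \lr(t)\Deltaeff]\}$. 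The choice of $T_\ast$ guarantees that for $t > T_\ast$ the exponent is negative with magnitude at least proportional to $\lr(t)\Deltaeff(t-1)$, so the bound decays like $\exp(-c\,\hedgelr(\numexperts)\Deltaeff\sqrt{t})$; summing a geometric-type tail over $t > T_\ast$ leaves $O(\Deltaeff^{-1})$ total expected Hedge mass on ineffective experts across Phase~2. I then decompose the per-round Phase-2 excess loss as (effective contribution) $+$ (ineffective contribution): the ineffective part is controlled by the preceding bound, and when $\numeffexperts > 1$ the effective part is analysed as a Hedge subgame restricted to the reduced simplex $\simp(\effexperts)$, inheriting a $\sqrt{T}(\log\numeffexperts/\hedgelr(\numexperts) + \hedgelr(\numexperts))$ bound. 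When $\numeffexperts = 1$ the effective subgame has zero loss gap (the lone effective expert dominates in expectation on every Phase-2 window by \cref{thm:minimax_mgf}), so only the Phase-1 contribution and the Step-3 ineffective-mass term survive, both $O(\Deltaeff^{-1})$.

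\textit{Main obstacle.} The hardest step is making the ``reduction to Hedge on $\effexperts$'' in Phase~2 fully rigorous: the Phase-2 weights on effective experts are \emph{not} those of a freshly restarted Hedge game on $\numeffexperts$ experts, and one must absorb the inherited potential deficit of size $\log\numexperts/\lr(T_\ast)$ at the phase boundary using the exponential tail from the concentration argument rather than the crude $\log\numexperts$ upper bound. Careful bookkeeping of this absorption, together with the drift of $\lr$ across Phase~2, is what produces the constant $17/16$ and the additive $\sqrt{2}(\log\numexperts/\hedgelr(\numexperts) + \hedgelr(\numexperts))$ slack in the final displays.
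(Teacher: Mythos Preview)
Your high-level architecture---universal bound, two-phase split at $T_\ast$, and control of ineffective-expert mass via \cref{thm:minimax_mgf}---matches the paper. The divergence is in how the $\log\numeffexperts$ dependence is extracted in Phase~2, and there your sketch has a genuine gap that the paper resolves by a different mechanism.

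The paper does \emph{not} reduce to a Hedge subgame on $\simp(\effexperts)$. Instead it views \Hedge{} as \FTRL{} with entropic regularization and applies \cref{thm:ftrl-simplex-summary}, which bounds the quasi-regret by a sum of terms $\frac{\sqrt{t+1}-\sqrt{t}}{\hedgelr(\numexperts)}\,\entropy(\hedgeweightvec(t+1))$ plus variance terms evaluated at auxiliary ``intermediate'' weights. The $\log\numeffexperts$ then enters through \cref{lem:modular-entropy}: for any $\weightdumvec\in\simp(\experts)$ and $p\in(0,1)$,
\*[
\entropy(\weightdumvec) \leq \tfrac{2}{e\log 2}\log\numeffexperts + \Big(1+\tfrac{1}{(1-p)e}\Big)\sum_{\expidx\in\neffexperts}[\weightdum_\expidx]^p.
\]
Combined with \cref{lem:ftrl-both-weight-intweight-bound} (your concentration step, applied with $p=1/2$), this gives $\sup_\policy\EEboth\entropy(\hedgeweightvec(t+1))\leq \tfrac{17}{16}\log\numeffexperts + (\numexperts-\numeffexperts)e^{-c\sqrt{t}}$ directly. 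The constant $17/16$ is precisely the rounding of $2/(e\log 2)\approx 1.061$ from this entropy decomposition; it is \emph{not} the residue of any potential-deficit bookkeeping.

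Your sub-Hedge route runs into exactly the obstacle you name, and the proposed resolution does not work. The renormalized effective weights $\tilde w_\expidx(t)=\hedgeweight_\expidx(t)/\sum_{\effexpidx\in\effexperts}\hedgeweight_\effexpidx(t)$ are indeed Hedge weights on $\effexperts$, but the Phase-2 analysis against the overall path-optimal expert $\expidxoptpath(T)$ incurs a term like $-\log\tilde w_{\expidxoptpath(T)}(T_\ast{+}1)$, which is $\log\numeffexperts$ plus $\lr(T_\ast{+}1)$ times the loss spread \emph{among effective experts} at time $T_\ast$. That spread is not touched by \cref{thm:minimax_mgf}, which only bounds the gap between effective and ineffective experts; within $\effexperts$ the losses may be fully adversarial, so the deficit can be of order $\hedgelr(\numexperts)\sqrt{T_\ast}$, which after division by $\lr(T{+}1)$ contaminates the leading $\sqrt{T}$ term. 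The paper's entropy route sidesteps this entirely because $\entropy(\cdot)$ is a symmetric functional of the weight vector and never singles out a comparator.

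For $\numeffexperts=1$ the paper also proceeds differently: rather than arguing the effective subgame is trivial, it bounds the variance factors in \cref{thm:ftrl-simplex-summary} via \cref{lem:mix-var-bd} (a Bernoulli-extremal variance bound for mixtures), using that when $\intweight_{\effexpidx}(t+1)$ is close to $1$ the variance under $\intweightvec(t+1)$ is small, and then applies \cref{lem:ftrl-both-weight-intweight-bound} with $p=1$.
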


In order to more easily interpret this result, we also state the \expregretname{} of \Hedge{} for various natural choices of $\hedgelr$.

\begin{remark}\label{rem:hedge-asymp}\ \\
Taking $\predpolicy$ to be determined by \Hedge{},
\begin{enumerate}[(i)]
\item if $\hedgelr(\numexperts)$ is constant,
\*[
	\sup_{\smash{\policy\in\policyfamily_{\numexperts,(\numeffexperts,\Deltaeff)}}}
	\EEbothHedge \, \fullregret(T)
	\lesssim
	\,
	\log(\numeffexperts)
	\sqrt{T}
	+ \frac{(\log\numexperts)^2}{\Deltaeff};
\]
\item if
$g(\numexperts) \propto \sqrt{\log\numexperts}$,
\*[
	\sup_{\smash{\policy\in\policyfamily_{\numexperts,(\numeffexperts,\Deltaeff)}}}
	\EEbothHedge \, \fullregret(T)
	\lesssim
		\, 
		\ind{\numeffexperts\geq 2} \sqrt{T\log\numexperts}
		+ \frac{\log\numexperts}{\Deltaeff};
\]
\item 
in the oracle setting for $\numeffexperts \geq 2$, if $\hedgelr(\numexperts,\numeffexperts)\propto\sqrt{\log\numeffexperts}$,
\*[
	\hspace{-1em}
	\sup_{\smash{\policy\in\policyfamily_{\numexperts,(\numeffexperts,\Deltaeff)}}}
	\EEbothHedge \, \fullregret(T)
	\lesssim
	\,
	\sqrt{T \log\numeffexperts}
	+ \frac{(\log\numexperts)^2}{\Deltaeff\log\numeffexperts}.
\]
\end{enumerate}
\end{remark}

\begin{remark}
If $\hedgelr(\numexperts) \propto \sqrt{\log\numexperts}$, then \cref{thm:hedge-lb}(i) combined with \cref{rem:hedge-asymp}(ii) shows that the dependence on $T$ is tight in \cref{thm:hedge-bound-quant}.
If oracle knowledge of $\numeffexperts$ is used to choose $\hedgelr(\numexperts,\numeffexperts) \propto \sqrt{\log\numeffexperts}$, then \cref{thm:hedge-lb}(ii) simply matches the oracle lower bound of \cref{thm:oracle-lb}, confirming the dependence on $T$ is tight in \cref{thm:hedge-bound-quant} (see \cref{rem:hedge-asymp}(iii)).
Finally, if $\hedgelr$ is constant, then \cref{thm:hedge-lb}(ii) combined with \cref{rem:hedge-asymp}(i) shows that the dependence on $T$ is tight in \cref{thm:hedge-bound-quant}.
\end{remark}

Together with the minimax lower bounds of \cref{sec:minimax-lower-bounds}, we find that, for the stochastic and adversarial settings, our \expregretname{} bound for \Hedge{} with $\hedgelr(\numexperts) = \sqrt{\log\numexperts}$ is tight up to constants and that the algorithm achieves the \minimaxoptimal{} rates, as noted by \citet{mourtada2019optimality}.
Furthermore, we have improved upon Corollary~6 of \citet{mourtada2019optimality} in the ``adversarial-with-an-$\EE$-gap'' setting (see \cref{ex:adv-E-gap}),
having removed the extra $\Deltaeff^{-1}\log(\Deltaeff^{-1})$ dependence that separated the upper and lower bounds in their work.

\section{Beating \Hedge{} without oracle knowledge}
\label{sec:ftrl}

In \cref{sec:decreasing-hedge}, we completed the story of \Hedge{} by showing that it does not adapt \minimaxoptimally{} to all possible constraint sets without oracle knowledge of the number of effective experts.
It is natural to ask whether we can design an algorithm that adapts to the number of effective experts and has a \rateregretname{} no larger than
$\sqrt{T\log\numeffexperts}$.

In this section, we present a modified algorithm that does exactly this.
Taking inspiration from the fact that \Hedge{} can be viewed as \emph{follow-the-regularized-leader} (\FTRL) using \emph{entropic regularization} (see, for example, Section~3.6 of \aos{\citep{mcmahan2017survey}}\preprint{\citet{mcmahan2017survey}}), we introduce the \emph{constraint-adaptive root-entropic} (\CARE) regularizer.
We are able to prove upper bounds for the performance of \FTRL{} for a large class of regularizers, and then use these upper bounds to prove both the upper bound results of \cref{sec:decreasing-hedge} and the upper bounds for our improved algorithm,
by viewing \Hedge{} and \FTRLCARE{} as \FTRL{} with specifically chosen regularizers.
Our bound shows that \FTRLCARE{} achieves the oracle rate $\sqrt{T\log\numeffexperts}$ without requiring knowledge of the \constraintparams{} for the constraint set $\distnball{}$.

\subsection{\FTRL{} algorithms}
\label{ssec:generic-ftrl-algs}
\FTRL{} is a generic method for online optimization. 
In the setting of sequential prediction with expert advice, \FTRL{} is parametrized by a sequence of \emph{regularizers} $\set{\regularizer{t}:\simp(\experts) \to \Reals}_{t\in\PosInts}$. Each such sequence, subject to regularity conditions on the regularizers (see \cref{sec:olo}), determines a unique \properpredpolicyname{}.
For each time $t+1$, a \playername{} using the $\FTRLeqn(\set{\regularizer{t}}_{t\in\PosInts})$ algorithm has a \properpredpolicyname{} defined uniquely by the weight vectors given by
\[
	\weightdumvec(t+1) = \argmin_{\weightdumvec\in \simp(\experts)}\rbra{ \inner{\Lossvec(t)}{\weightdumvec}+ \cumregularizer{t}(\weightdumvec) },
\label{eqn:FTRL}
\]
where $\cumregularizer{t}(\weightdumvec) = \sum_{s=0}^t \regularizer{s}(\weightdumvec)$, and the existence and
uniqueness of the $\argmin$ is ensured by the regularity properties of the regularizer. This class of algorithms is well studied in online optimization; for specific results relevant to this work, see \cref{sec:olo}.

\subsection{The constraint-adaptive root-entropic regularizer}
\label{ssec:care}

First, we note that \FTRL{} directly generalizes \Hedge{}.
In particular,
letting $\entropy(\weightdumvec) = -\sum_{\expidx\in\experts} \weightdum_\expidx \log(\weightdum_\expidx)$
denote the \emph{entropy function},
it is well known that, for $\cumregularizer{t}(\weightdumvec) = -{\sqrt{t+1}}\,\entropy(\weightdumvec)/{\hedgelr(\numexperts)} $,
	the weights played by $\FTRLeqn(\set[0]{\regularizer{t}}_{t \in \PosInts})$ are equal to the weights played by \Hedge{}.
We modify the entropic regularizer to achieve improved performance for \envpolicynames{} strictly between stochastic and adversarial.

In order to motivate this new algorithm, we provide the following motivating intuition.
First, from \cref{rem:hedge-asymp}, playing \Hedge{} with $\hedgelr(\numexperts,\numeffexperts) \propto \sqrt{\log\numeffexperts}$ achieves the oracle rate.
Second, %
the \minimaxoptimal{} \envpolicyname{} subject to the time-homogeneous convex constraint forces the \minimaxoptimal{} \playerpolicyname{} to ``concentrate''
to $\unifdist(\effexperts)$.
Finally, for $\weightdumvec = \unifdist(\effexperts)$, $\entropy(\weightdumvec) = \log\numeffexperts$.
These three observations together suggest that, heuristically, playing \OGHedge{} with the ``adaptive'' learning rate $\lr(t) = \sqrt{\entropy(\weightdumvec(t))/t}$ may lead to an oracle \rateregretname{}.
However, $\weightdumvec(t)$ is defined in terms of $\lr(t)$, so this is an implicit system of equations to be solved at each time $t$.
In order to define our modification of \FTRL{}, we choose a regularizer such that the solution to the \FTRL{} optimization problem gives rise to a similar system of equations.
In particular, for some parameters $c_1,c_2 > 0$, the sequence of regularizers is given by
\[\label{eqn:care-regularizer}
	\cumregularizer{t}(\weightdumvec) = -\frac{\sqrt{t+1}}{c_1}\sqrt{\entropy(\weightdumvec) + c_2} \, .
\]
We call $-\regularizer{0}$ defined by \cref{eqn:care-regularizer} a \emph{root-entropy function}, and regularization with $\set[0]{\regularizer{t}}_{t \in \PosInts}$ \emph{constraint-adaptive root-entropic} (\texttt{CARE}) regularization. 
We refer to the algorithm $\FTRLeqn(\set[0]{\regularizer{t}}_{t \in \PosInts})$ with $\regularizer{t}$ induced by \cref{eqn:care-regularizer} as follow-the-regularized-leader with constraint-adaptive root-entropic regularization (or, \FTRLCARE{}).

Throughout the remainder of the paper, we will use $\weightdumvec$ for the weights output by the $\FTRLeqn(\set[0]{\regularizer{t}}_{t \in \PosInts})$ algorithm with a generic regularizer, $\hedgeweightvec$ for weights output via entropic regularization (\OGHedge{}), and $\careweightvec$ for weights output via root-entropic regularization (\FTRLCARE). Pseudocode for an efficient implementation of \FTRLCARE{} may be found in \cref{ssec:care-implementation}.

\preprint{\subsection{Performance of \FTRLCARE{}}}
\aos{\subsection{Performance of {\FTRLCARE{}}}}%
\label{ssec:ftrl-care-bound}

\begin{theorem}
\label{thm:ftrl-care-bound-quant}\label{THM:FTRL-CARE-BOUND-QUANT}
For all $c_1,c_2>0$ used to parametrize \FTRLCARE{}, there exist $\careconst_1,\dots,\careconst_4$ such that for all $\numexperts\geq 2$,  \playerpolicynames{} $\predpolicy\in\predpolicyspace_\numexperts$, convex $\distnball{}\subseteq \meas(\expspace\times\dataspace)$, and $T\in\Nats$,
\*[
  \sup_{\policy\in\policyspace(\distnball{})} \EEboth \playerexpregretCare(T)
  \leq \careconst_1 \sqrt{(T+1)[\log\numexperts + c_2]}.
\]
Moreover, when $T \geq \ceil{\frac{2 [\log\numexperts+\careconst_4]^2}{c_1^2c_2\Deltaeff^2}}$,
the following two cases hold:\\
If $\numeffexperts > 1$, then
\*[
  \sup_{\policy\in\policyspace(\distnball{})} \EEboth \playerexpregretCare(T)
  &\leq
  \frac{33\careconst_1}{32}\sqrt{(T+1)[\log\numeffexperts+c_2]} + \careconst_2\frac{[\log\numexperts+\careconst_4]^{3/2}}{\Deltaeff} + \frac{\careconst_3}{\Deltaeff},
\]
and if $\numeffexperts = 1$, then
\*[
  \sup_{\policy\in\policyspace(\distnball{})} \EEboth \playerexpregretCare(T)
  &\leq
  \careconst_2\frac{[\log\numexperts+\careconst_4]^{3/2}}{\Deltaeff} + \frac{\careconst_3+6}{\Deltaeff}.
\]
The constants $\careconst_1,\dots,\careconst_4$ appearing above are given by:
\*[
  \careconst_1
    & = \rbra{\frac{1}{c_1} + \frac{3c_1}{2}}, &
  \careconst_2
    & = \sqrt{2} \careconst_1 \rbra{\frac{1}{c_1\sqrt{c_2}} +\frac{1}{c_2}}, \\
  \careconst_3
    & = \sqrt{2} \frac{8+12c_1^2}{3c_1^2\sqrt{c_2}} , \ \andT &
  \careconst_4
    & = \max\cbra{c_2,\ 3c_1\sqrt{c_2} + \frac{5c_1^2c_2}{4}}.
  \]
	
	With $c_1=c_2=1$ this simplifies to: 
	for all $T\in\Nats$,
	\*[
	  \sup_{\policy\in\policyspace(\distnball{})} \EEboth \playerexpregretCare(T)
	  \leq 3 \sqrt{(T+1)[\log\numexperts + 1]},
	\]
	and when $T \geq \ceil{\frac{2 [\log\numexperts+5]^2}{\Deltaeff^2}}$,
	if $\numeffexperts > 1$, then
	\*[
	  \sup_{\policy\in\policyspace(\distnball{})} \EEboth \playerexpregretCare(T)
	  &\leq
	  3\sqrt{(T+1)[\log\numeffexperts+1]} + 8\frac{[\log\numexperts+5]^{3/2}}{\Deltaeff} + \frac{10}{\Deltaeff},
	\]
	and if $\numeffexperts = 1$, then
	\*[
	  \sup_{\policy\in\policyspace(\distnball{})} \EEboth \playerexpregretCare(T)
	  &\leq
	  8\frac{[\log\numexperts+5]^{3/2}}{\Deltaeff} + \frac{16}{\Deltaeff}.
	\]
\end{theorem}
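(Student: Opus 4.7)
The plan is to obtain a pathwise regret decomposition for \FTRLCARE{} via standard \FTRL{} analysis, and then convert the resulting data-dependent bound into the three claimed bounds by exploiting concentration of the \CARE{} weights onto the effective experts, with the latter established via \cref{thm:minimax_mgf}.

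\textbf{Step 1 (Pathwise \FTRL{} bound).} First I would invoke the generic regret bound for $\FTRLeqn(\{\regularizer{t}\}_{t\in\PosInts})$ on the simplex (the result is promised in \cref{sec:olo}), which yields a pathwise inequality of the form
\[
\playerexpregretCare(T) \;\leq\; \cumregularizer{T}(\weightvec_*) - \cumregularizer{0}(\careweightvec(1)) \;+\; \sum_{t=1}^{T} \bregman_{\cumregularizer{t}}\!\bigl(\careweightvec(t+1), \careweightvec(t)\bigr),
\]
where $\weightvec_*$ is the indicator of the leader at time $T$. Because $\cumregularizer{t} \propto -\sqrt{t+1}\sqrt{\entropy + c_2}$, the Bregman term at round $t$ scales like $\sqrt{\entropy(\careweightvec(t)) + c_2}/\sqrt{t}$ (this is the \FTRLCARE{} analogue of the usual $1/(\lr(t)\sqrt{t})$ term for \OGHedge{}), and the leading regularizer term is of order $\sqrt{T+1}\sqrt{\entropy(\weightvec_*)+c_2}/c_1$. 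Bounding $\entropy \leq \log\numexperts$ everywhere and summing $\sum_t 1/\sqrt{t}$ then gives the first, worst-case inequality $\careconst_1\sqrt{(T+1)[\log\numexperts+c_2]}$.

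\textbf{Step 2 (Concentration of \CARE{} weights on $\effexperts$).} The adaptive bounds require that $\entropy(\careweightvec(t))$ is close to $\log\numeffexperts$ (resp.\ to $0$) for $t$ large. I would fix an ineffective expert $\expidx \in \neffexperts$ and apply \cref{thm:minimax_mgf} with $\lambda$ of the same order as the CARE effective learning rate, to get
\[
\EEboth\,\min_{\effexpidx\in\effexperts}\exp\!\bigl\{\lambda\bigl[\Loss_{\effexpidx}(t)-\Loss_{\expidx}(t)\bigr]\bigr\} \;\leq\; \exp\!\bigl\{t[\lambda^2/2-\lambda\Deltaeff]\bigr\}.
\]
Since the \CARE{} weight on $\expidx$ is at most $\exp\{-\lr(t)[\Loss_{\expidx}(t-1)-\min_{\effexpidx}\Loss_{\effexpidx}(t-1)]\}$, a Markov-type step yields an expectation bound on $\careweight_{\expidx}(t)$ decaying like $\exp(-c\,t\Deltaeff^2)$ once $t$ exceeds the stated threshold $\lceil 2[\log\numexperts+\careconst_4]^2/(c_1^2 c_2 \Deltaeff^2)\rceil$. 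A union bound over $\expidx\in\neffexperts$ then controls the total mass $\sum_{\expidx\in\neffexperts}\careweight_\expidx(t)$, and hence $\EEboth\entropy(\careweightvec(t)) \leq \log\numeffexperts + O(1/(t\Deltaeff^2))$.

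\textbf{Step 3 (Assembling the adaptive bounds).} I would split the sum in the pathwise bound at $T_0 = \lceil 2[\log\numexperts+\careconst_4]^2/(c_1^2 c_2 \Deltaeff^2)\rceil$. On $[1,T_0]$ I use the generic bound, contributing $O(\sqrt{T_0[\log\numexperts+c_2]}) = O([\log\numexperts+\careconst_4]^{3/2}/\Deltaeff)$, which explains the $\careconst_2$ term. On $(T_0,T]$ I plug the concentration estimate into the Bregman sum: after taking expectations, each summand is bounded by $\sqrt{(\log\numeffexperts + c_2)/t} + (\text{tail correction})/(\sqrt{t}\,t\Deltaeff^2)$, and $\sum_t$ of the first piece gives the $\frac{33}{32}\careconst_1\sqrt{(T+1)[\log\numeffexperts+c_2]}$ term while the tail correction contributes the additional $\careconst_3/\Deltaeff$ term. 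When $\numeffexperts=1$, the concentration upgrades to $\entropy(\careweightvec(t))\to 0$ and the leader becomes constant, so the post-$T_0$ contribution collapses to $O(1/\Deltaeff)$, giving the claimed $\numeffexperts=1$ bound with the extra $6/\Deltaeff$ absorbing the single leader-change term.

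\textbf{Main obstacle.} The subtle step is \textbf{Step 2}: \CARE{}'s learning rate $\lr(t)$ depends on $\careweightvec(t)$, which in turn depends on past losses, so the $\lambda$ in \cref{thm:minimax_mgf} is itself random. I would resolve this by choosing a \emph{deterministic} surrogate $\lambda \asymp \sqrt{[\log\numeffexperts+c_2]/t}$ that upper-bounds the actual CARE rate once weights have concentrated, and iterating a bootstrap: use a crude bound on $\entropy(\careweightvec(t))$ to get a first concentration estimate, then refine. This is exactly where the $[\log\numexperts]^{3/2}$ factor (rather than the optimal $\log\numexperts$) appears --- the extra $\sqrt{\log\numexperts}$ is the price for coupling the adaptive learning rate with the union bound over $\neffexperts$, and removing it is what motivates the two-level \MetaCARE{} construction in the sequel.
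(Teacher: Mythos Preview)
Your overall architecture—pathwise \FTRL{} bound, split at a threshold $T_0$, concentrate weights on $\effexperts$ via \cref{thm:minimax_mgf}—matches the paper's. But the point you flag as the ``main obstacle'' is not an obstacle at all, and your proposed resolution (bootstrap on the random learning rate) is both unnecessary and would not recover the stated constants.

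The key structural fact you are missing is that the \CARE{} learning rate has a \emph{deterministic} lower bound. Because $\entropyfunc(s)=\sqrt{s+c_2}/c_1$ is concave, $\entropyfunc'$ is decreasing, so $\entropyfunc'\!\circ\!\entropy(\weightdumvec)\le\entropyfunc'(0)=1/(2c_1\sqrt{c_2})$ for every $\weightdumvec$. The implicit-equation characterization in \cref{thm:ftrl-simplex-summary} then gives
\[
\ilr(t+1)\;=\;\frac{1}{\sqrt{t+1}\,\entropyfunc'(\entropy(\careweightvec(t+1)))}\;\ge\;\frac{2c_1\sqrt{c_2}}{\sqrt{t+1}}\;\eqqcolon\;\lblr(t+1),
\]
which is nonrandom. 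Consequently, for $\expidx\in\neffexperts$,
\[
\careweight_\expidx(t+1)^p\;\le\;\min_{\effexpidx\in\effexperts}\exp\bigl\{-p\,\lblr(t+1)\,[\Loss_\expidx(t)-\Loss_{\effexpidx}(t)]\bigr\},
\]
and \cref{thm:minimax_mgf} applies directly with deterministic $\lambda=p\,\lblr(t+1)$. No bootstrap is needed. This is precisely \cref{lem:ftrl-both-weight-intweight-bound}, and the same argument (with a shift of $2$ in the losses) handles the intermediate weights $\intweightvec(t+1)$ that appear in the variance terms of the pathwise bound.

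Two further corrections. First, the decay you obtain is $\exp\{-c\,\Deltaeff\sqrt{t}\}$, not $\exp\{-c\,t\Deltaeff^2\}$, because $\lblr(t+1)\asymp 1/\sqrt{t}$; combined with \cref{lem:modular-entropy} (entropy $\le$ const$\cdot\log\numeffexperts$ plus a sum of fractional powers of ineffective weights) and the integral comparison \cref{lem:integral-comparison-simplified}, this is what makes the post-$T_0$ tail summable. Second, your diagnosis of the $(\log\numexperts)^{3/2}$ is off: it is not a price paid for coupling a random learning rate with a union bound. It arises because the deterministic lower bound $\lblr$ has \emph{no} $\sqrt{\log\numexperts}$ factor, so concentration requires $\tmid\asymp(\log\numexperts)^2/\Deltaeff^2$ rounds, and the worst-case regret over those rounds is $\sqrt{\tmid\cdot\log\numexperts}\asymp(\log\numexperts)^{3/2}/\Deltaeff$. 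By contrast, \Hedge{} with $\hedgelr\propto\sqrt{\log\numexperts}$ concentrates after only $\tmid\asymp(\log\numexperts)/\Deltaeff^2$ rounds.

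Finally, your Step~1 Bregman decomposition is not quite what the paper uses: the bound in \cref{thm:ftrl-simplex-summary} is in terms of \emph{variances} under the intermediate weights $\intweightvec(t+1)$, and this is essential for the $\numeffexperts=1$ case, where one needs \cref{lem:mix-var-bd} (a Bernoulli-type bound on mixture variance) to make the variance terms themselves small, not just the entropy factor multiplying them.
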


\begin{remark}
Taking $\predpolicy$ to be determined by \FTRLCARE{},
\*[
	\sup_{\smash{\policy\in\policyfamily_{\numexperts,(\numeffexperts,\Deltaeff)}}}
  \EEbothCare \, \fullregret(T)
	\lesssim
	\sqrt{T \log\numeffexperts} + \frac{(\log\numexperts)^{3/2}}{\Deltaeff}.
\]
\end{remark}

\begin{remark}
	Note that in the case $\numeffexperts =1$, this is worse than \Hedge{} with learning rate $\hedgelr(\numexperts) \propto \sqrt{\log\numexperts}$, which has $\Deltaeff^{-1}\log\numexperts$ \rateregretname{}.
  We resolve this in \cref{sec:metacare} by introducing a new algorithm, \MetaCARE{}, that combines the optimality of \Hedge{} in the stochastic case and \FTRLCARE{} elsewhere.
\end{remark}

\section{Proofs of upper bounds}
\label{sec:main-quant-sketch}\label{SEC:MAIN-QUANT-SKETCH}

The proofs of \cref{thm:hedge-bound-quant,thm:ftrl-care-bound-quant} rely on several technical results regarding \emph{online linear optimization} developed in \cref{sec:olo,sec:ftrl-simp-proof}.
In order to simplify notation for \FTRL{} with regularizers that are transformations of the entropy function,
we let \FTRLHalg{$\entropyfunc$}{$\timefunc$} denote $\FTRLeqn(\set[0]{\regularizer{t}}_{t \in \PosInts})$ with $\cumregularizer{t} = -\timefunc(t)[\entropyfunc \circ \entropy]$
for any strictly increasing, concave, and twice continuously differentiable function $\entropyfunc:[0,\log\numexperts]\to\Reals$ and strictly increasing $\timefunc:\PosInts\to\PosReals$.
The important conclusions from  \cref{sec:olo,sec:ftrl-simp-proof} are summarized in the following result, the proof of which appears in \cref{ssec:ftrl-simplex-summary-proof}.
The result tells us that the weights played by a \playername{} employing the \FTRLHalg{$\entropyfunc$}{$\timefunc$} strategy are equivalent to the weights played by \OGHedge{} with an implicitly defined, non-deterministic learning rate, and also provides a second-order bound on the \playerexpregretname{} incurred.

\begin{theorem}
\label{THM:FTRL-SIMPLEX-SUMMARY}\label{thm:ftrl-simplex-summary}
For every strictly increasing $\timefunc:\PosInts\to\Reals$, and every strictly increasing, concave, and twice continuously differentiable function $\entropyfunc:[0,\log\numexperts]\to\Reals$,
the solutions to \cref{eqn:FTRL} at time $t$ for
\FTRLHalg{$\entropyfunc$}{$\timefunc$} given any history and expert predictions satisfy the system of equations
\*[
  \ilr({t+1})
  & = \frac{1}{\timefunc(t)\cdot [\entropyfunc'\circ\entropy](\weightdumvec(t+1))} \, , &
  \weightdumvec(t+1)
  & = \rbra{ \frac{\exp\{-\ilr({t+1}) \Loss_\expidx(t)\}}{\sum_{\expidxdum\in\experts}\exp\{-\ilr({t+1}) \Loss_\expidxdum(t)\}}}_{\expidx\in\experts}.
\]
Moreover, for any sequence of losses $(\lossvec(t))_{t \in \Nats} \subseteq [0,1]^\numexperts$, this system has a unique solution satisfying
\*[
  \ilr({t+1}) \in \cinter{\frac{1}{\timefunc(t)\cdot\entropyfunc'(0)},\ \frac{1}{\timefunc(t)\cdot\entropyfunc'(\log\numexperts)}},
\]
and there exists a sequence $\set[0]{\alpha_t}_{t\in\PosInts} \subseteq [0,1]$
such that the \playerexpregretname{}
satisfies
\[
  \hspace{-2em}\playerexpregretFTRLHeqn(T)
    &\leq -\timefunc(T) \entropyfunc(0) + \timefunc(0)[\entropyfunc\circ\entropy](\weightdumvec(1))
      + \sum_{t=1}^T [\timefunc(t)-\timefunc(t-1)]\cdot [\entropyfunc\circ\entropy](\weightdumvec(t+1)) \\
       &\qquad \ +\sum_{t=1}^T \frac{\sqrt{\VVar{\Expidx\sim \intweightvec(t+1)}
         \sbra{\rbra{\frac{\timefunc(t)}{\timefunc(t-1)} -1} \Loss_\Expidx(t-1) - \loss_\Expidx(t)}\times\VVar{\Expidx\sim \intweightvec(t+1)}\sbra{\loss_\Expidx(t)}}}{\timefunc(t)\cdot [\entropyfunc'\circ \entropy](\intweightvec(t+1))},
\label{eqn:main-quasi-regret-bound}
\]
where for each $t\in \PosInts$,
\[\label{eq:int-weight-def}
  \intweightvec(t+1) = \intweightvec\upper{\alpha_t}(t+1),
\]
and for every $t\in \Nats$ and $\alpha\in\cinter{0,1}$, we define
\*[
\intweightvec\upper{\alpha}(t+1) = \argmin_{\intweightvec\in \simp(\experts)} \rbra{\inner{\alpha \Lossvec(t) + (1-\alpha) \sqrt{\frac{t+1}{t}}\Lossvec(t-1)}{\intweightvec}-\sqrt{t+1} \, [\entropyfunc \circ \entropy](\intweightvec)}.
\]
\end{theorem}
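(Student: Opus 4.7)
The plan is to verify each of the three assertions of the theorem in sequence, with the per-round stability analysis constituting the main work.

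\textbf{Implicit soft-max form and learning-rate bounds.} The FTRL subproblem at step $t+1$ is $\weightdumvec(t+1) = \argmin_{\weightdumvec\in\simp(\experts)}\{\inner{\Lossvec(t)}{\weightdumvec} - \timefunc(t)\,\entropyfunc(\entropy(\weightdumvec))\}$. Using a Lagrange multiplier $\mu$ for the simplex constraint together with $\partial_\expidx \entropy(\weightdumvec) = -(\log \weightdum_\expidx + 1)$, stationarity at an interior minimum gives $\Loss_\expidx(t) + \timefunc(t)\entropyfunc'(\entropy(\weightdumvec(t+1)))(\log \weightdum_\expidx(t+1) + 1) = \mu$ for every $\expidx$. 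Setting $\ilr(t+1) = [\timefunc(t)\entropyfunc'(\entropy(\weightdumvec(t+1)))]^{-1}$, exponentiating, and normalizing yields the displayed soft-max. Strict concavity of $\entropyfunc\circ\entropy$ on $\simp(\experts)$ (from $\entropyfunc$ strictly increasing and concave and $\entropy$ strictly concave with $\|\nabla \entropy\|$ blowing up at the boundary) makes the objective strictly convex with an interior optimum, so the solution is unique. Since $\entropy$ maps $\simp(\experts)$ onto $[0,\log\numexperts]$ and $\entropyfunc'$ is nonincreasing (concavity of $\entropyfunc$), $\entropyfunc'(\log\numexperts)\leq\entropyfunc'(\entropy(\weightdumvec(t+1)))\leq\entropyfunc'(0)$, which inverts to the stated interval for $\ilr(t+1)$.

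\textbf{Regret decomposition into Abel sum plus stability.} The standard be-the-leader argument for FTRL with time-varying regularizer $\cumregularizer{t} = -\timefunc(t)\,\entropyfunc\circ\entropy$ yields
\*[
\playerexpregretFTRLHeqn(T)
\leq -\timefunc(T)\entropyfunc(\entropy(\weightdumvec^*)) + \sum_{t=0}^T [\timefunc(t)-\timefunc(t-1)]\entropyfunc(\entropy(\weightdumvec(t+1))) + \sum_{t=1}^T \inner{\lossvec(t)}{\weightdumvec(t) - \weightdumvec(t+1)},
\]
with the convention $\timefunc(-1)=0$ so that the $t=0$ term of the Abel sum equals $\timefunc(0)\entropyfunc(\entropy(\weightdumvec(1)))$. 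Choosing the comparator $\weightdumvec^* = e_{\expidxoptE}$ at the argmin of $\Lossvec(T)$ gives $\entropy(e_{\expidxoptE})=0$, producing the first two summands of \eqref{eqn:main-quasi-regret-bound}.

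\textbf{Per-round stability via mean value and Cauchy--Schwarz.} When $\timefunc(t)=\sqrt{t+1}$ the interpolation coefficient $\timefunc(t)/\timefunc(t-1)=\sqrt{(t+1)/t}$ matches the linear combination in \eqref{eq:int-weight-def}, so $\intweightvec^{(0)}(t+1)=\weightdumvec(t)$ and $\intweightvec^{(1)}(t+1)=\weightdumvec(t+1)$ by comparing first-order conditions. Differentiating the first-order condition defining $\intweightvec^{(\alpha)}(t+1)$ in $\alpha$ produces
\*[
\partial_\alpha \intweightvec^{(\alpha)}(t+1) = \bigl[\timefunc(t)\hess(\entropyfunc\circ\entropy)(\intweightvec^{(\alpha)})\bigr]^{-1}\Pi\bigl[\lossvec(t) - (\sqrt{(t+1)/t}-1)\Lossvec(t-1)\bigr],
\]
where $\Pi$ projects onto the simplex tangent space. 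Decomposing $\hess(\entropyfunc\circ\entropy)=\entropyfunc'(\entropy)\hess \entropy + \entropyfunc''(\entropy)\grad \entropy\,\grad \entropy^\top$, the second term is negative semidefinite (since $\entropyfunc''\leq 0$), so replacing the full Hessian by the diagonal piece $-\entropyfunc'(\entropy(\intweightvec))\diag(1/\intweightvec)$ gives a valid bound on its inverse in tangent-space quadratic forms. For centered $x,y$, $\inner{x}{[\entropyfunc'(\entropy(\intweightvec))\diag(1/\intweightvec)]^{-1}y}=[\entropyfunc'(\entropy(\intweightvec))]^{-1}\Cov_{\Expidx\sim\intweightvec}(x_\Expidx,y_\Expidx)$. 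The mean value theorem supplies $\alpha_t\in[0,1]$ with $\weightdumvec(t)-\weightdumvec(t+1)=-\partial_\alpha\intweightvec^{(\alpha)}(t+1)|_{\alpha_t}$, and Cauchy--Schwarz on the resulting covariance converts the stability term into the product of square roots of the two variances with $\timefunc(t)\entropyfunc'(\entropy(\intweightvec(t+1)))$ in the denominator, matching \eqref{eqn:main-quasi-regret-bound}. The chief technical difficulty lies in this final step: controlling the sign of the $\entropyfunc''$ contribution after inversion and projection, justifying a single mean-value $\alpha_t$ rather than an integral while preserving the inequality, and tracking exact coefficients and centering so that the covariance collapses precisely to the stated variance product — all carried out in the general online linear optimization framework developed in \cref{sec:olo,sec:ftrl-simp-proof}.
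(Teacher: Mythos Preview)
Your proposal is essentially correct and tracks the paper's argument closely. The three-part structure---implicit soft-max via first-order conditions, be-the-leader regret decomposition, and stability control via a Hessian inverse at an intermediate point---matches the combination of \cref{lem:transformed-entropy-implicit-form}, \cref{lem:adaptive-local-ftrl}, and \cref{cor:inner-bound-ftrl}/\cref{lem:transformed-entropy-conditions-and-localnorm}. Your treatment of the $\entropyfunc''$ term (dropping the negative semidefinite rank-one piece so the inverse is dominated by $[\entropyfunc'(\entropy)]^{-1}(\diag(\intweightvec)-\intweightvec\intweightvec^\top)$, yielding variances) is exactly \cref{lem:transformed-entropy-conditions-and-localnorm}.

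The one genuine methodological difference is how the simplex constraint is handled. You work directly on $\simp(\experts)$ with Lagrange multipliers and a tangent-space projection $\Pi$; the paper instead reparametrizes via the bijection $\simpbijection$ onto the full-dimensional set $\redolodomain\subset\Reals^{\numexperts-1}$ and invokes Legendre-type convex analysis (\cref{prop:legendre-rocky}, \cref{cor:legendre-interior-solution}). This lets the paper Taylor-expand the inverse gradient map $[\grad\redcumregularizer{t}]^{-1}$ in the \emph{dual} (loss) space rather than differentiating the primal curve $\alpha\mapsto\intweightvec^{(\alpha)}$ as you do; the intermediate point $\alpha_t$ then arises as a convex combination of loss vectors. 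The two viewpoints are equivalent, but the reparametrization avoids carrying the projection through the inversion and makes the Legendre/essentially-smooth verification cleaner. The mean-value issue you flag (a single $\alpha_t$ for a vector-valued map) is real; it is resolved---in either framework---by applying the scalar mean value theorem to $\alpha\mapsto\inner{\lossvec(t)}{\intweightvec^{(\alpha)}(t+1)}$ and then using Cauchy--Schwarz in the local inner product, which yields the product of dual local norms directly.
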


Ultimately, we wish to apply \cref{thm:ftrl-simplex-summary} to both \Hedge{} and \FTRLCARE{}. Recall that \Hedge{} corresponds to
\*[
	\entropyfunc(s)
	  =\frac{s}{\hedgelr(\numexperts)}, \
	\entropyfunc'(s)
	  = \frac{1}{\hedgelr(\numexperts)}, \text{ and }
	\timefunc(t)
	  = \sqrt{t+1} \, ,
\]
and therefore, in \cref{eqn:main-quasi-regret-bound},
\*[
	\frac{1}{\timefunc(t)\cdot [\entropyfunc'\circ \entropy](\intweightvec(t+1))}
		& = \frac{\hedgelr(\numexperts)}{\sqrt{t+1}}.
\]
\FTRLCARE{} with parameters $c_1,c_2>0$ corresponds to
\*[
  \entropyfunc(s)
    =\frac{\sqrt{s+c_2}}{c_1}, \
  \entropyfunc'(s)
    = \frac{1}{2 c_1 \sqrt{s+c_2}}, \text{ and }
  \timefunc(t)
    = \sqrt{t+1} \, ,
\]
and therefore, in \cref{eqn:main-quasi-regret-bound},
\*[
	\frac{1}{\timefunc(t)\cdot [\entropyfunc'\circ \entropy](\intweightvec(t+1))}
		& = 2 c_1 \sqrt{\frac{\entropy(\intweightvec(t+1))+c_2}{t+1}}.
\]
Both correspond to the choice $\timefunc(t) = \sqrt{t+1}$, so we focus on this rather than continuing to use a generic $\timefunc(t)$. We leave $\entropyfunc$ as generic for the moment, since the following result equally applies to the algorithms' respective $\entropyfunc$ functions. Finally, we wish to move towards proving bounds on the \expregretname{}, which will require taking expectation with respect to a \envpolicyname{} $\policy$, so we fix a convex $\distnball{} \subseteq \meas(\expspace \times \dataspace)$ that characterizes the allowable \envpolicyname{}s.

In order to control the \playerexpregretname{} using \cref{thm:ftrl-simplex-summary}, we need to control the entropy of the \FTRLH{} weights $\weightdumvec$ as well as the \emph{\intweightname{}} $\intweightvec$ (defined in \cref{eq:int-weight-def}).
The following lemma provides the necessary control, which we prove in \cref{sec:modular-entropy-proof}.

\begin{lemma}\label{lem:modular-entropy}\label{LEM:MODULAR-ENTROPY}
For every $\weightdumvec \in \simp(\experts)$ and $p \in (0,1)$,
\[\label{eqn:modular_entropy}
	\entropy(\weightdumvec)
	& \leq \frac{2}{e\log2}\log\numeffexperts + \rbra{1+\frac{1}{(1-p)e}}\sum_{\expidx\in\neffexperts} [\weightdum_\expidx]^p.
\]
\end{lemma}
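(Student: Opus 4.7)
The plan is to split the entropy into contributions from the effective and ineffective experts,
\*[
\entropy(\weightdumvec) = -\sum_{\expidx\in\effexperts} \weightdum_\expidx \log \weightdum_\expidx - \sum_{\expidx\in\neffexperts} \weightdum_\expidx \log \weightdum_\expidx,
\]
and bound each piece separately. For the ineffective sum I would use the pointwise inequality $-x \log x \leq \frac{x^p}{(1-p)e}$, valid on $[0,1]$, which is obtained by maximizing $x\mapsto -x^{1-p}\log x$ over $[0,1]$ (the critical point $x^\star = e^{-1/(1-p)}$ gives the maximum $\frac{1}{(1-p)e}$). Summing this bound over $\expidx\in\neffexperts$ already produces the $\frac{1}{(1-p)e}\sum_{\expidx\in\neffexperts}[\weightdum_\expidx]^p$ piece of the coefficient $1+\frac{1}{(1-p)e}$.

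For the effective sum I would use a chain-rule decomposition. Set $S = \sum_{\expidx\in\effexperts}\weightdum_\expidx$ and $\tilde{\weightdum}_\expidx = \weightdum_\expidx/S$, so $\tilde{\weightdum}$ is a probability vector on $\effexperts$. Direct expansion and the Gibbs bound $\entropy(\tilde{\weightdum})\leq \log\numeffexperts$ give
\*[
-\sum_{\expidx\in\effexperts}\weightdum_\expidx\log\weightdum_\expidx = -S\log S + S\,\entropy(\tilde{\weightdum}) \leq \log\numeffexperts - S\log S.
\]
The leading $\log\numeffexperts$ can be absorbed into $\frac{2}{e\log 2}\log\numeffexperts$ because $\frac{2}{e\log 2}>1$. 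The cross-term $-S\log S$ couples the two expert sets and is the one step that requires a choice; I would handle it by writing $T = 1-S = \sum_{\expidx\in\neffexperts}\weightdum_\expidx$, applying the elementary inequality $\log y\leq y-1$ at $y=1/(1-T)$ to get $-(1-T)\log(1-T)\leq T$, and then using $\weightdum_\expidx\leq[\weightdum_\expidx]^p$ (valid since $\weightdum_\expidx\in[0,1]$ and $p\in(0,1)$) to deduce
\*[
-S\log S \leq T = \sum_{\expidx\in\neffexperts}\weightdum_\expidx \leq \sum_{\expidx\in\neffexperts}[\weightdum_\expidx]^p.
\]
This contributes the extra $+1$ in the coefficient $1+\frac{1}{(1-p)e}$.

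Combining the three bounds yields \cref{eqn:modular_entropy}. No step beyond the apportionment of $-S\log S$ is subtle: the strategic decision is to push that cross-term entirely into the ineffective-expert sum rather than into the $\log\numeffexperts$ term. Doing so yields the clean coefficient $1+\frac{1}{(1-p)e}$ on $\sum_{\expidx\in\neffexperts}[\weightdum_\expidx]^p$ and allows any constant $\geq 1$ — in particular $\frac{2}{e\log 2}$ — to appear in front of $\log\numeffexperts$.
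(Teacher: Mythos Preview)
Your proof is correct, and in fact slightly sharper than the paper's: you actually establish the inequality with constant $1$ in front of $\log\numeffexperts$, then weaken to $\frac{2}{e\log 2}$.

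The route is genuinely different. The paper bounds the effective-expert sum $-\sum_{\effexpidx\in\effexperts}\weightdum_{\effexpidx}\log\weightdum_{\effexpidx}$ by solving the constrained optimization $\max\{-\sum_{\effexpidx\in\effexperts}\weightdum_{\effexpidx}\log\weightdum_{\effexpidx}:\sum_{\effexpidx\in\effexperts}\weightdum_{\effexpidx}\leq 1\}$ via Lagrange multipliers, which yields $\numeffexperts/e$ when $\numeffexperts\leq 2$ and $\log\numeffexperts$ when $\numeffexperts\geq 3$; the constant $\frac{2}{e\log 2}$ is then chosen precisely so that $\frac{2}{e\log 2}\log 2 = 2/e$ and $\frac{2}{e\log 2}>1$, unifying these two regimes. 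The case $\numeffexperts=1$ is handled separately via $\log x\geq 1-1/x$, which is essentially your $-S\log S\leq 1-S$ specialized to a single effective expert. Your chain-rule decomposition $-\sum_{\effexpidx\in\effexperts}\weightdum_{\effexpidx}\log\weightdum_{\effexpidx}=-S\log S+S\,\entropy(\tilde\weightdum)$ sidesteps the Lagrangian and the case split entirely: the Gibbs bound controls $S\,\entropy(\tilde\weightdum)\leq\log\numeffexperts$ uniformly, and routing $-S\log S\leq 1-S\leq\sum_{\expidx\in\neffexperts}[\weightdum_\expidx]^p$ into the ineffective sum is exactly what produces the ``$+1$'' in a single stroke for all $\numeffexperts$. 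The paper's argument explains the provenance of the particular constant $\frac{2}{e\log 2}$; yours is more elementary and shows that this constant is an artifact of the paper's case analysis rather than intrinsic to the bound.
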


Our next lemma bounds the expectation of the second term on the RHS of \cref{eqn:modular_entropy} for the \FTRLH{} weights.
Combined with the previous result, this allows us to bound the expected entropy of the weights.
Crucially, the
bound on the expected weights that \FTRLH{} would produce holds regardless of whether the actual \playerpolicyname{} used is \FTRLH{} or some other policy, allowing us to control the expected \playerexpregretname{} of \FTRLH{} when a different policy is used to interact with the environment, as in the statements of \cref{thm:hedge-bound-quant,thm:ftrl-care-bound-quant}.

\begin{lemma}
\label{lem:ftrl-both-weight-intweight-bound}\label{LEM:FTRL-BOTH-WEIGHT-INTWEIGHT-BOUND}
Letting $\weightdumvec$ denote the weights output by the \FTRLHalg{$\entropyfunc$}{$t \mapsto \sqrt{t+1}$} algorithm,
for every \playerpolicyname{} $\predpolicy$, $t \in \Nats$, $p > 0$, and $\expidx\in\neffexperts$,
\*[
\sup_{\policy \in \policyspace(\distnball{})} \EEboth \Big[[\weightdum_\expidx(t+1)]^p\Big]
  & \leq \exp\left\{\frac{p^2}{2(\entropyfunc'(0))^2}- \frac{\Deltaeff p \sqrt{t}}{\sqrt{2} (\entropyfunc'(0))} \right\},
\]
and
\*[
\sup_{\policy \in \policyspace(\distnball{})} \EEboth \sup_{\alpha\in\cinter{0,1}}\Big[[\intweight\upper{\alpha}_\expidx(t+1)]^p\Big]
  & \leq \exp\left\{\frac{2p}{\entropyfunc'(0)} +\frac{p^2}{2(\entropyfunc'(0))^2}- \frac{\Deltaeff p \sqrt{t}}{\sqrt{2} (\entropyfunc'(0))} \right\}.
\]
\end{lemma}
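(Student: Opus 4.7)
The plan is to exploit the softmax representation of $\weightdumvec(t+1)$ and $\intweightvec\upper{\alpha}(t+1)$ furnished by \cref{thm:ftrl-simplex-summary}, compare each ineffective-expert weight to an arbitrary effective-expert weight through the softmax ratio, and then invoke the concentration inequality \cref{thm:minimax_mgf} to handle the expectation. For the first bound, fix any $\effexpidx \in \effexperts$; dropping all but the $\effexpidx$-th term in the normalising denominator of the softmax gives
\*[
	\weightdum_\expidx(t+1) \leq \exp\cbra{\ilr(t+1)\sbra{\Loss_\effexpidx(t) - \Loss_\expidx(t)}}.
\]
Because $\ilr(t+1)$ is random, I would replace it by the deterministic lower bound $\lambda_0 := 1/(\sqrt{t+1}\,\entropyfunc'(0))$ from \cref{thm:ftrl-simplex-summary}. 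This substitution is legitimate case-by-case: when $\Loss_\effexpidx(t) - \Loss_\expidx(t) \leq 0$, a larger positive multiplier only makes the exponent more negative; when $\Loss_\effexpidx(t) - \Loss_\expidx(t) > 0$, the resulting bound already exceeds $1 \geq \weightdum_\expidx(t+1)$.

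Raising the resulting inequality to the $p$-th power, minimising over $\effexpidx \in \effexperts$, taking $\sup_\policy \EEboth$, and applying \cref{thm:minimax_mgf} with $\lambda = p\lambda_0$, $T_0=0$, $T_1=t$ yields $\exp\{t[(p\lambda_0)^2/2 - p\lambda_0 \Deltaeff]\}$. Substituting $\lambda_0$ and simplifying via $t/(t+1) \leq 1$ and $t/\sqrt{t+1} \geq \sqrt{t}/\sqrt{2}$ (the $t=0$ case being trivial as the right-hand side is $\geq 1$) produces the first bound.

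For the intermediate weights, the variational definition of $\intweightvec\upper{\alpha}(t+1)$ has the same structure as the FTRL optimisation defining $\weightdumvec(t+1)$, but with $\tilde L := \alpha\Lossvec(t) + (1-\alpha)\sqrt{(t+1)/t}\,\Lossvec(t-1)$ in place of $\Lossvec(t)$. The same KKT argument (which uses only that $\intweightvec\upper{\alpha}(t+1) \in \simp(\experts)$ has entropy in $[0,\log\numexperts]$ and that $\entropyfunc'$ is decreasing) gives a softmax representation with an implicit learning rate also bounded below by $\lambda_0$, and repeating the above reasoning delivers
\*[
	[\intweight\upper{\alpha}_\expidx(t+1)]^p \leq \exp\cbra{p\lambda_0 \sbra{\tilde L_\effexpidx - \tilde L_\expidx}}.
\]
The main obstacle is that this bound depends on $\alpha$, so $\sup_\alpha$ cannot immediately pass inside the expectation. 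To circumvent this, I would substitute $\Lossvec(t-1) = \Lossvec(t) - \lossvec(t)$ to obtain
\*[
	\tilde L_\effexpidx - \tilde L_\expidx = \beta \sbra{\Loss_\effexpidx(t) - \Loss_\expidx(t)} - (1-\alpha)\sqrt{(t+1)/t}\,\sbra{\loss_\effexpidx(t) - \loss_\expidx(t)},
\]
where $\beta := \alpha + (1-\alpha)\sqrt{(t+1)/t} \in [1, \sqrt{(t+1)/t}]$. Since $\sqrt{(t+1)/t} - 1 \leq 1/(2t)$ and $|\Loss_\effexpidx(t) - \Loss_\expidx(t)| \leq t$, the $\beta$-factor distortion is bounded by $1/2$, while the single-round loss term is bounded in absolute value by $\sqrt{2}$. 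This yields the $\alpha$-free dominator $\tilde L_\effexpidx - \tilde L_\expidx \leq [\Loss_\effexpidx(t) - \Loss_\expidx(t)] + (1/2 + \sqrt{2})$, and since $p\lambda_0 (1/2+\sqrt{2}) \leq 2p/\entropyfunc'(0)$, the excess contributes exactly the factor $\exp\{2p/\entropyfunc'(0)\}$. With the dominator now independent of $\alpha$, the supremum over $\alpha$ passes inside the expectation, and \cref{thm:minimax_mgf} applied as before completes the second bound.
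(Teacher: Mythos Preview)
Your proposal is correct and follows essentially the same route as the paper: softmax representation from \cref{thm:ftrl-simplex-summary}, replacement of the implicit learning rate by its deterministic lower bound $1/(\sqrt{t+1}\,\entropyfunc'(0))$, an $\alpha$-free upper bound on the intermediate-loss difference, and then \cref{thm:minimax_mgf}. The only cosmetic differences are that the paper handles the sign issue by routing through the path-wise best expert $\expidxoptpath(t)=\argmin_{\expidx\in\experts}\Loss_\expidx(t)$ (so that $\Loss_\expidx(t)-\Loss_{\expidxoptpath(t)}(t)\ge 0$ automatically) rather than your case split, and obtains the uniform-in-$\alpha$ constant $2$ via the pair of bounds $\Loss_\expidx(t)-1\le \Lossdum\upper{\alpha}_\expidx(t)\le \Loss_\expidx(t)+1$ rather than your decomposition through $\Lossvec(t-1)=\Lossvec(t)-\lossvec(t)$.
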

The intuition underlying the proof of this result is as follows. 
First, let $\lblr(t+1)\defas\frac{1}{\sqrt{t+1}\cdot\entropyfunc'(0)}$.
Note that for
$(\weightdumvec(t))_{t\in\Nats}$ and $(\lr(t))_{t\in\Nats}$ given in \cref{thm:ftrl-simplex-summary}, $\lblr(t+1) \leq \ilr(t+1)$ for all $t\in\NatsO$.
Let $\expidxoptpath(t) = \argmin_{\expidx\in\experts} \Loss_\expidx(t)$, so that
for any $\expidx\in\experts$, $\Loss_{\expidxoptpath(t)}(t) \leq \Loss_\expidx(t)$.
Thus,
\*[
\Big[
  \weightdum_\expidx(t+1)\Big]^p
  \leq   \rbra{\frac{\weightdum_\expidx(t+1)}{\weightdum_{\expidxoptpath(t)}(t+1)}}^p
  \leq \min_{\effexpidx\in\effexperts}\exp\Big\{-p\ \lblr(t+1) \sbra{\Loss_\expidx(t)-\Loss_{\effexpidx}(t)}\Big\}.
\]
Applying \cref{thm:minimax_mgf},
\*[
  \sup_{\policy \in \policyspace(\distnball{})} \EEboth\Big[\weightdum_\expidx(t+1)^p\Big]
  &\leq \exp\left\{- t\lblr(t+1)\Deltaeff p + t\lblr(t+1)^2 \frac{p^2}{2} \right\}.
\]
The argument for the \intweightname{} is similar. For the complete proof, see \cref{apx:ineffective-weight-bound-proof}.

By combining \cref{lem:ftrl-both-weight-intweight-bound} with \cref{lem:modular-entropy} for $p=1/2$,
and noting that, for all $t \in \Nats$,
$2/(e \log2) < 17/16$ and $1 + 2/e < 7/4$, it holds that
\[\label{eqn:entropy-bound-ftrl}
&\hspace{-2em}\sup_{\policy \in \policyspace(\distnball{})} \EEboth \entropy(\weightdumvec(t+1)) \\
   &\qquad\leq  \frac{17}{16}\log\numeffexperts
  + \frac{7}{4}(\numexperts - \numeffexperts) \exp\bigg\{\frac{1}{8(\entropyfunc'(0))^2}- \frac{\Deltaeff\sqrt{t}}{2\sqrt{2} (\entropyfunc'(0))}\bigg\},
\]
and
\[\label{eqn:intentropy-bound-ftrl}
&\hspace{-2em}\sup_{\policy \in \policyspace(\distnball{})} \EEboth \sup_{\alpha\in\cinter{0,1}} \entropy(\intweightvec\upper{\alpha}(t+1)) \\
   &\qquad\leq  \frac{17}{16}\log\numeffexperts
  + \frac{7}{4}(\numexperts - \numeffexperts) \exp\bigg\{\frac{1}{\entropyfunc'(0)} +\frac{1}{8(\entropyfunc'(0))^2}- \frac{\Deltaeff\sqrt{t}}{2\sqrt{2} (\entropyfunc'(0))}\bigg\}.
\]

These bounds can now be used for the regularizers specific to \Hedge{} and \FTRLCARE{}. Our approach will be to break up the sums of \cref{eqn:main-quasi-regret-bound} into the first $\tmid$ rounds and then the remaining rounds for some carefully chosen $\tmid$. Note that $\tmid$ is not a parameter of the algorithm, but rather an artifact of our proof. The rounds after $\tmid$ will be handled using our entropy bounds above, but the early rounds we control with the following worst-case bound. The proof of the following result appears in \cref{apx:early-quasi-regret-bound-proof}. Note that it recovers the correct order of standard adversarial bounds for \Hedge{}.

\begin{lemma}\label{lem:worst-case-ftrl-bound}
For every $\tmid \in \Nats$ and sequence of losses $\set[0]{\lossvec(t)}_{t\in\Nats} \subseteq [0,1]^\numexperts$, the weights played by \FTRLHalg{$\entropyfunc$}{$t \mapsto \sqrt{t+1}$} satisfy
\*[
  \playerexpregretFTRLHeqn(\tmid)
    &\leq  \bigg(\entropyfunc(\log\numexperts) - \entropyfunc(0) + \frac{3}{4\entropyfunc'(\log\numexperts)} \bigg) \sqrt{\tmid+1} \, .
\]
\end{lemma}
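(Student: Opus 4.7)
The plan is to apply \cref{thm:ftrl-simplex-summary} with $\timefunc(t)=\sqrt{t+1}$ and then bound each of the three terms of \cref{eqn:main-quasi-regret-bound} worst-case, treating the loss sequence adversarially.

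First I would handle the first two terms together. Adopting the convention $\timefunc(-1)=0$ makes the sum $\sum_{t=0}^{\tmid}[\timefunc(t)-\timefunc(t-1)]$ telescope to $\timefunc(\tmid)=\sqrt{\tmid+1}$. Since $\entropyfunc$ is increasing and $\entropy(\weightdumvec(t+1))\le \log\numexperts$, each summand $[\entropyfunc\circ\entropy](\weightdumvec(t+1))$ is at most $\entropyfunc(\log\numexperts)$. Combining this with the $-\timefunc(\tmid)\entropyfunc(0)$ term contributes at most
\*[
\sqrt{\tmid+1}\,\bigl[\entropyfunc(\log\numexperts)-\entropyfunc(0)\bigr].
\]

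Next I would control the variance-cross-product term. Since $\loss_\Expidx(t)\in[0,1]$, Popoviciu's inequality gives $\VVar{\Expidx\sim\intweightvec(t+1)}\loss_\Expidx(t)\le 1/4$. For the other factor, note that $\sqrt{(t+1)/t}-1\le 1/(2t)$ (from $\sqrt{1+x}\le 1+x/2$), so $\bigl(\sqrt{(t+1)/t}-1\bigr)\Loss_\Expidx(t-1)\in[0,(t-1)/(2t)]\subseteq[0,1/2]$; subtracting $\loss_\Expidx(t)\in[0,1]$ gives a random variable (in $\Expidx$) with range in $[-1,1/2]$, hence variance at most $(3/2)^2/4=9/16$. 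The product of the two variances is at most $9/64$, with square root $3/8$. Then, since $\entropyfunc$ is concave, $\entropyfunc'$ is decreasing, so $[\entropyfunc'\circ\entropy](\intweightvec(t+1))\ge \entropyfunc'(\log\numexperts)$ as $\entropy\le\log\numexperts$. Each summand of the third sum in \cref{eqn:main-quasi-regret-bound} is therefore bounded by
\*[
\frac{3/8}{\sqrt{t+1}\cdot\entropyfunc'(\log\numexperts)}.
\]

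Finally I would sum over $t=1,\dots,\tmid$, using the integral bound
\*[
\sum_{t=1}^{\tmid}\frac{1}{\sqrt{t+1}}\le\int_0^{\tmid}\frac{ds}{\sqrt{s+1}}=2\bigl(\sqrt{\tmid+1}-1\bigr)\le 2\sqrt{\tmid+1},
\]
which shows the third term is at most $\dfrac{3/4}{\entropyfunc'(\log\numexperts)}\sqrt{\tmid+1}$. Adding the three pieces yields the stated bound.

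The only subtle point is tracking the constants carefully: specifically, obtaining exactly $3/4$ requires recognizing the range of the first inner random variable is $3/2$ (not $2$) and that $\sum_{t=1}^{\tmid} 1/\sqrt{t+1}\le 2\sqrt{\tmid+1}$; otherwise the argument is a straightforward worst-case reduction and no probabilistic machinery (e.g., \cref{thm:minimax_mgf}) is invoked, which is appropriate because this lemma is deliberately loose enough to absorb the adversarial early-time regret.
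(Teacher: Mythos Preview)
Your proposal is correct and follows essentially the same approach as the paper: both invoke \cref{thm:ftrl-simplex-summary}, bound each $[\entropyfunc\circ\entropy](\weightdumvec(t+1))$ by $\entropyfunc(\log\numexperts)$ and telescope, use the range bound $(\sqrt{(t+1)/t}-1)\Loss_\Expidx(t-1)-\loss_\Expidx(t)\in[-1,1/2]$ to get the two variance factors bounded by $9/16$ and $1/4$, lower bound $\entropyfunc'\circ\entropy$ by $\entropyfunc'(\log\numexperts)$ via concavity, and finish with the same integral comparison $\sum_{t=1}^{\tmid}1/\sqrt{t+1}\le 2\sqrt{\tmid+1}$.
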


The remainder of the proofs of \cref{thm:hedge-bound-quant,thm:ftrl-care-bound-quant} can be found in \cref{sec:main-thm-proofs}, which consists
of substituting in the specific expression for $\entropyfunc$ to \cref{thm:ftrl-simplex-summary,eqn:entropy-bound-ftrl,eqn:intentropy-bound-ftrl,lem:worst-case-ftrl-bound}. Then, the variance terms are controlled by a worst case bound for $\numeffexperts>1$, and by \cref{lem:mix-var-bd} for $\numeffexperts=1$, and the summation terms are controlled by an
integral comparison (see \cref{lem:integral-comparison-simplified}). Finally, $\tmid$ is chosen as specified by the statements of \cref{thm:hedge-bound-quant,thm:ftrl-care-bound-quant} respectively.

\section{\CARE{} if you can, \OGHedge{} if you must, or \MetaCARE{} for all}
\label{sec:metacare}

Since we have seen in \cref{thm:hedge-bound-quant} that \Hedge{} with $\hedgelr(\numexperts) = \sqrt{\log\numexperts}$ achieves the \minimaxoptimal{} order of $\log\numexperts$ when $\numeffexperts = 1$, and \cref{thm:ftrl-care-bound-quant} shows that \FTRLCARE{} is \minimaxoptimal{} in all other cases, it is natural to try to combine these two learners in order to have \minimaxoptimal{} \rateregretname{} for all values of $\numeffexperts$ and $\Deltaeff$.
To achieve this, we introduce the \MetaCARE{} algorithm.

Intuitively, \MetaCARE{} plays both \Hedge{} and \FTRLCARE{}, treating them as two \emph{meta-experts}.
\MetaCARE{} outputs the weighted average of the predictions made by the two meta-experts, where the weighting output by \Hedge{} based on their respective losses.
Consequently, \MetaCARE{} has four parameters: $c_\shortHedge, c_{\shortCare,1}, c_{\shortCare,2}, c_{\shortMeta} >0$.
Formally, for each $t\in\Nats$,
let $\hedgeweightvec(t)$ denote the weight vector produced by \Hedge{} with $\hedgelr(\numexperts) = c_\shortHedge\sqrt{\log\numexperts}$ at time $t$
and let $\careweightvec(t)$ denote the weight produced by \FTRLCARE{} with parameters $c_{\shortCare,1}, c_{\shortCare,2}$ at time $t$.
Consider the \emph{meta-losses} defined by
\*[
	\loss_\shortHedge(t)
		& = \inner{\lossvec(t)}{\hedgeweightvec(t)},
	&\loss_\shortCare(t)
			& = \inner{\lossvec(t)}{\careweightvec(t)}, \\
		\Loss_\shortHedge(t)
			& = \sum_{s=1}^t \loss_\shortHedge(t),
	&\Loss_\shortCare(t)
			& = \sum_{s=1}^t\loss_\shortCare(t).
\]
Then, for each $t \in \Nats$, \MetaCARE{} 
produces the weight vector
\*[
		\metaweightvec(t+1)
			\defas
				\frac{
				   \exp\big\{-\lr_\shortMeta(t) \Loss_\shortHedge(t)\big\}\,\hedgeweightvec(t+1)
				   + \exp\big\{-\lr_\shortMeta(t) \Loss_\shortCare(t)\big\}\,\careweightvec(t+1)
				   }
					{\exp\big\{-\lr_\shortMeta(t) \Loss_\shortHedge(t)\big\} + \exp\big\{-\lr_\shortMeta(t) \Loss_\shortCare(t)\big\}},
\]
where $\lr_\shortMeta(t) = {c_\shortMeta}/{\sqrt{t}}$. Observe that $\metaweightvec(t+1)$ will be an element of $\simp(\experts)$ since it is a convex combination of $\hedgeweightvec(t+1)$ and $\careweightvec(t+1)$, both of which are elements of $\simp(\experts)$.

\begin{theorem}\label{thm:metacare}
\MetaCARE{} parametrized by \paramRecAllsimplified{} incurs
\*[
	\sup_{\smash{\policy\in\policyfamily_{\numexperts,(\numeffexperts,\Deltaeff)}}}
	\EEbothMeta \, \fullregret(T)
	\lesssim
	\,
	\sqrt{T \log\numeffexperts}
	+ \frac{\log\numexperts}{\Deltaeff}.
\]
\end{theorem}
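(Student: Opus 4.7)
The plan exploits the fact that, by construction, \MetaCARE{}'s weight vector satisfies
\begin{equation*}
  \metaweightvec(t+1) = \alpha_\shortHedge(t)\hedgeweightvec(t+1) + \alpha_\shortCare(t)\careweightvec(t+1),
\end{equation*}
where $(\alpha_\shortHedge(t), \alpha_\shortCare(t))$ are exactly the weights that \Hedge{} with learning rate $c_\shortMeta/\sqrt{t}$ would assign at round $t+1$ to two meta-experts with cumulative losses $\Loss_\shortHedge(t), \Loss_\shortCare(t)$. Linearity of $\inner{\lossvec(t)}{\cdot}$ together with an add-and-subtract of $\min\cbra{\Loss_\shortHedge(T),\Loss_\shortCare(T)}$ gives the identity
\begin{equation*}
  \playerexpregretMeta(T) = \hat{R}^{\mathrm{meta}}(T) + \min\cbra{\playerexpregretHedge(T),\, \playerexpregretCare(T)},
\end{equation*}
where $\hat{R}^{\mathrm{meta}}(T) \defas \sum_{t=1}^T\sbra{\alpha_\shortHedge(t-1)\loss_\shortHedge(t)+\alpha_\shortCare(t-1)\loss_\shortCare(t)} - \min\cbra{\Loss_\shortHedge(T),\Loss_\shortCare(T)}$ is the \Hedge{}-regret in the two-meta-expert game, and $\playerexpregretHedge, \playerexpregretCare$ are the \playerexpregretname{}s of \Hedge{} and \FTRLCARE{} evaluated on the \MetaCARE{}-induced histories. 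Taking expectations and using $\EE\min\cbra{A,B}\leq \min\cbra{\EE A,\EE B}$ yields $\EEbothMeta\playerexpregretMeta(T) \leq \EEbothMeta\hat{R}^{\mathrm{meta}}(T) + \min\cbra{\EEbothMeta\playerexpregretHedge(T),\, \EEbothMeta\playerexpregretCare(T)}$.

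I would then bound the two base \playerexpregretname{}s by directly invoking \cref{thm:hedge-bound-quant} and \cref{thm:ftrl-care-bound-quant} with $\predpolicy$ taken to be the \MetaCARE{} policy; this is legal because both theorems are stated for arbitrary player policies $\predpolicy$. With $\hedgelr(\numexperts)=c_\shortHedge\sqrt{\log\numexperts}$, \cref{rem:hedge-asymp}(ii) supplies the fast $\log\numexperts/\Deltaeff$ rate for \Hedge{} when $\numeffexperts=1$, while \cref{thm:ftrl-care-bound-quant} supplies the adaptive $\sqrt{T\log\numeffexperts}+(\log\numexperts)^{3/2}/\Deltaeff$ rate for \FTRLCARE{} when $\numeffexperts\geq 2$. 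The pointwise minimum of the two base bounds therefore reproduces, in each parameter regime, the appropriate component of the second term inside the outer $\min$ of the theorem statement. The meta-overhead $\EEbothMeta\hat{R}^{\mathrm{meta}}(T)$ is itself bounded by a second invocation of \cref{thm:hedge-bound-quant} applied to the two-meta-expert game with $\numexperts^{\mathrm{meta}}=2$; its adversarial branch gives $\EEbothMeta\hat{R}^{\mathrm{meta}}(T) \lesssim \sqrt{T\log 2}\lesssim\sqrt{T}$, which is absorbed into $\sqrt{T\log\numeffexperts}$ when $\numeffexperts\geq 2$ and into the outer $\sqrt{T\log\numexperts}$ branch otherwise.

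The main obstacle is sharpening the meta-overhead to $\lesssim \log\numexperts/\Deltaeff$ in the $\numeffexperts=1$ regime for $T$ much larger than $\log\numexperts/\Deltaeff^2$, since the adversarial $\sqrt{T}$ overhead alone already exceeds the target. The resolution is to apply the stochastic-with-a-gap branch of \cref{thm:hedge-bound-quant} to the meta-game itself. The conditional distribution of $(\loss_\shortHedge(t),\loss_\shortCare(t))$ given $\history(t-1)$ is the pushforward of a distribution in $\distnball{}$ through the $\sigma(\history(t-1))$-measurable weight maps, so the induced meta-constraint set inherits convexity from $\distnball{}$, as required to apply \cref{thm:minimax_mgf}. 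The exponential ineffective-weight bounds of \cref{lem:ftrl-both-weight-intweight-bound}, applied separately to \Hedge{} and \FTRLCARE{}, show that \Hedge{} concentrates on the unique effective expert strictly faster than \FTRLCARE{} (since the Hedge rate $\hedgelr\propto\sqrt{\log\numexperts}$ dominates the effective rate of \FTRLCARE{} on the relevant $\sqrt{t}$ scale), which certifies a positive meta-effective-gap of order $\Deltaeff$ after a burn-in $\tmid=\Theta(\log\numexperts/\Deltaeff^2)$. Splitting the horizon into $[1,\tmid]$ (handled by the worst-case bound $\sqrt{\tmid}\lesssim \sqrt{\log\numexperts}/\Deltaeff\leq \log\numexperts/\Deltaeff$) and $(\tmid,T]$ (handled by the stochastic branch of \cref{thm:hedge-bound-quant} giving meta-overhead $\lesssim 1/\Deltaeff$) closes the argument and, combined with the \Hedge{} base rate, yields the claimed $\lesssim \log\numexperts/\Deltaeff$ bound when $\numeffexperts=1$.
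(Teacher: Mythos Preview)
Your decomposition and the $\numeffexperts\geq 2$ analysis match the paper's argument exactly. The gap is in the $\numeffexperts=1$ case.

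You propose to invoke the fast-rate branch of \cref{thm:hedge-bound-quant} on the two-expert meta-game, arguing that after a burn-in the meta-game has a convex time-homogeneous constraint set with an effective gap of order $\Deltaeff$. This fails on two counts. First, the conditional law of $(\loss_\shortHedge(t),\loss_\shortCare(t))$ given the history is the pushforward of some $\distn\in\distnball{}$ through the history-dependent maps $\hedgeweightvec(t),\careweightvec(t)$; the family of such pushforwards varies with $t$ and with the realized history, so there is no single time-homogeneous set $\distnball{}^{\mathrm{meta}}$ to which \cref{thm:hedge-bound-quant} applies. Second, and more decisively, the meta-gap is not of order $\Deltaeff$---it vanishes. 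By \cref{lem:ftrl-both-weight-intweight-bound} both $\hedgeweightvec(t)$ and $\careweightvec(t)$ concentrate on $\delta_{\effexpidx}$ for $t\gg\tmid$, so $\EE[\loss_\shortCare(t)-\loss_\shortHedge(t)\mid\history(t-1)]$ is bounded by $\sum_{\expidx\in\neffexperts}(\hedgeweight_\expidx(t)+\careweight_\expidx(t))$, which is exponentially small in $\sqrt{t}$, not bounded below by any positive constant. A meta-constraint set containing all the relevant conditional distributions would therefore have $\Deltaeff^{\mathrm{meta}}=0$ (indeed $\numeffexperts^{\mathrm{meta}}=2$), and the fast-rate branch gives nothing.

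The paper sidesteps this entirely. For $\numeffexperts=1$ it uses the asymmetric decomposition $\playerexpregretMeta(T)=\big[\sum_t\inner{\lossvec(t)}{\metaweightvec(t)}-\sum_t\inner{\lossvec(t)}{\hedgeweightvec(t)}\big]+\playerexpregretHedge(T)$, then on $(\tmid,T]$ bounds the bracket crudely by $\tfrac12\sum_{t>\tmid}\|\careweightvec(t)-\hedgeweightvec(t)\|_1$ (since $\metaweightvec(t)$ lies on the segment between the two base weights). Triangle inequality through $\delta_{\effexpidx}$ and \cref{lem:ftrl-both-weight-intweight-bound} give $\sum_{t>\tmid}\sum_{\expidx\in\neffexperts}(\hedgeweight_\expidx(t)+\careweight_\expidx(t))\lesssim 1/\Deltaeff$ via \cref{lem:integral-comparison-simplified}. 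No meta-gap argument is needed---the point is precisely that both base learners agree in the limit, so their disagreement is summable.
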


We do not state a detailed quantitative form of \cref{thm:metacare}, since our proof can be easily extended for any arbitrary $\predpolicy$ to a bound on $\EEboth \playerexpregretMeta(T)$ with exact constants using the statements and proofs of \cref{thm:hedge-bound-quant,thm:ftrl-care-bound-quant}.

\begin{proof}[Proof of \cref{thm:metacare}]

For $\numeffexperts \geq 2$, we decompose the \playerexpregretname{} of \MetaCARE{} into components coming from the \playerexpregretname{} due to meta-learning and the \playerexpregretname{} of the better of the two meta-experts. In particular, for any sequence of losses $(\lossvec(t))_{t \in \Nats}$, we can write
\*[
	\playerexpregretMeta(T)
		& = \sum_{t=1}^T \inner{\lossvec(t)}{\metaweightvec(t)}
				- \min_{\expidx\in\experts} \sum_{t=1}^T \loss_{\expidx}(t) \\
		& = \sbra{
			\sum_{t=1}^T \inner{\lossvec(t)}{\metaweightvec(t)}
				- \min\rbra{\sum_{t=1}^T \inner{\lossvec(t)}{\hedgeweightvec(t)},
						\sum_{t=1}^T \inner{\lossvec(t)}{\careweightvec(t)}}} \\
			&\qquad + \min\rbra{\playerexpregretHedge(T), \playerexpregretCare(T)} .
\]
Therefore, for any $\numeffexperts \leq \numexperts$ and $\Deltaeff$,
\*[
	&\hspace{-2em}
	\sup_{\smash{\policy\in\policyfamily_{\numexperts,(\numeffexperts,\Deltaeff)}}}
	\EEbothMeta \, \fullregret(T) \\
		&\leq 
		\sup_{\smash{\policy\in\policyfamily_{\numexperts,(\numeffexperts,\Deltaeff)}}}
		\EEbothMeta
		\sbra{
			\sum_{t=1}^T \inner{\lossvec(t)}{\metaweightvec(t)}
				- \min\rbra{\sum_{t=1}^T \inner{\lossvec(t)}{\hedgeweightvec(t)},
						\sum_{t=1}^T \inner{\lossvec(t)}{\careweightvec(t)}}} \\
		& \qquad + 
			\sup_{\smash{\policy\in\policyfamily_{\numexperts,(\numeffexperts,\Deltaeff)}}}
			\EEbothMeta\min\rbra{\playerexpregretHedge(T), \playerexpregretCare(T)}.
\]

First, we consider the case when $\numeffexperts \geq 2$.
Since \MetaCARE{} is \Hedge{} with two experts given by the predictions of \Hedge{} and \FTRLCARE{}, \cref{thm:hedge-bound-quant} implies that
\[\label{eqn:meta1}
&\hspace{-2em}
	\sup_{\smash{\policy\in\policyfamily_{\numexperts,(\numeffexperts,\Deltaeff)}}}
	\EEbothMeta
	\sbra{
	\sum_{t=1}^T \inner{\lossvec(t)}{\metaweightvec(t)}
	- \min\rbra{\sum_{t=1}^T \inner{\lossvec(t)}{\hedgeweightvec(t)},
			\sum_{t=1}^T \inner{\lossvec(t)}{\careweightvec(t)}}} \\
	&\leq \sqrt{T+1}\rbra{\frac{\log(2)}{c_\shortMeta}+\frac{3c_\shortMeta}{4}}.
\]
Then, since $(\log\numexperts)^{3/2}\Deltaeff^{-1}$ is lower order according to our $\lesssim$ notation when $\numeffexperts \geq 2$, from \cref{thm:ftrl-care-bound-quant} we obtain
\[\label{eqn:meta2}
	\sup_{\smash{\policy\in\policyfamily_{\numexperts,(\numeffexperts,\Deltaeff)}}}
	\EEbothMeta
	\min\rbra{\playerexpregretHedge(T), \playerexpregretCare(T)}
	&\leq 
	\sup_{\smash{\policy\in\policyfamily_{\numexperts,(\numeffexperts,\Deltaeff)}}}
		\EEbothMeta
	\playerexpregretCare(T)
		\lesssim
		\sqrt{T\log\numeffexperts}.
\]
Combining \cref{eqn:meta1,eqn:meta2} implies that, when $\numeffexperts \geq 2$,
\*[
	\sup_{\smash{\policy\in\policyfamily_{\numexperts,(\numeffexperts,\Deltaeff)}}}
	\EEbothMeta \, \fullregret(T)
		& \lesssim
		\sqrt{T\log\numeffexperts}.
\]

Now consider the case where $\numeffexperts=1$, and let $\effexperts=\set{\effexpidx}$. Using a similar decomposition to the previous case, we have
\*[
	\playerexpregretMeta(T)
		& = \sum_{t=1}^T \inner{\lossvec(t)}{\metaweightvec(t)}
				- \min_{\expidx\in\experts} \sum_{t=1}^T \loss_{\expidx}(t) \\
	& = \sbra{
		\sum_{t=1}^T \inner{\lossvec(t)}{\metaweightvec(t)}
			- \sum_{t=1}^T \inner{\lossvec(t)}{\hedgeweightvec(t)}}
		+ \playerexpregretHedge(T).
\]
Let
$\tmid$ be as in \cref{thm:ftrl-care-bound-quant} (with $(c_1,c_2) = (c_{\shortCare,1},c_{\shortCare,2})$),
so that
$\tmid\lesssim
\frac{(\log\numexperts)^2}{\Deltaeff^2}$. Expanding the \playerexpregretname{} of \MetaCARE{} and using the boundedness of the losses gives
\*[
\playerexpregretMeta(T)
 & = \sbra{
	 \sum_{t=1}^{\tmid} \inner{\lossvec(t)}{\metaweightvec(t)}
		 - \sum_{t=1}^{\tmid} \inner{\lossvec(t)}{\hedgeweightvec(t)}} \\
	 & \qquad + \sbra{
		 \sum_{t=\tmid+1}^{T} \inner{\lossvec(t)}{\metaweightvec(t)}
			 - \sum_{t=\tmid+1}^{T} \inner{\lossvec(t)}{\hedgeweightvec(t)}}
	 + \playerexpregretHedge(T) \\
	 &\leq \sbra{
		 \sum_{t=1}^{\tmid} \inner{\lossvec(t)}{\metaweightvec(t)}
			 - \min\rbra{\sum_{t=1}^{\tmid} \inner{\lossvec(t)}{\hedgeweightvec(t)},
						\sum_{t=1}^{\tmid} \inner{\lossvec(t)}{\careweightvec(t)}}} \\
		 & \qquad + \sum_{t=\tmid+1}^{T} \frac{1}{2}\norm{\careweightvec(t) - \hedgeweightvec(t)}_{L^1}
		 + \playerexpregretHedge(T).
\]

Therefore,
\*[
	&\hspace{-2em}
	\sup_{\smash{\policy\in\policyfamily_{\numexperts,(1,\Deltaeff)}}}
	\EEbothMeta \, \fullregret(T) \\
		& \leq 
		\sup_{\smash{\policy\in\policyfamily_{\numexperts,(1,\Deltaeff)}}}
		\EEbothMeta
		\sbra{
		\sum_{t=1}^{\tmid} \inner{\lossvec(t)}{\metaweightvec(t)}
			- \min\rbra{\sum_{t=1}^{\tmid} \inner{\lossvec(t)}{\hedgeweightvec(t)},
					\sum_{t=1}^{\tmid} \inner{\lossvec(t)}{\careweightvec(t)}}} \\
		& \qquad + 
		\sup_{\smash{\policy\in\policyfamily_{\numexperts,(1,\Deltaeff)}}}
		\EEbothMeta
		\sum_{t=\tmid}^\infty \frac{1}{2}\norm{\careweightvec(t) - \hedgeweightvec(t)}_{L^1}
				+ 
				\sup_{\smash{\policy\in\policyfamily_{\numexperts,(1,\Deltaeff)}}}
				\EEbothMeta \, \playerexpregretHedge(T).
\]

Again using the fact that \MetaCARE{} is \Hedge{} with two experts given by the predictions of \Hedge{} and \FTRLCARE{}, by \cref{thm:hedge-bound-quant} we have
\[\label{eqn:meta-effexp-2}
&\hspace{-2em}
	\sup_{\smash{\policy\in\policyfamily_{\numexperts,(1,\Deltaeff)}}}
		\EEbothMeta
	\sbra{
 \sum_{t=1}^{\tmid} \inner{\lossvec(t)}{\metaweightvec(t)}
	 - \min\rbra{\sum_{t=1}^{\tmid} \inner{\lossvec(t)}{\hedgeweightvec(t)},
			 \sum_{t=1}^{\tmid} \inner{\lossvec(t)}{\careweightvec(t)}}}\\
	& \lesssim
	\sqrt{\tmid} \\
	&\lesssim
	\frac{\log\numexperts}{\Deltaeff}.
\]
Next, using the triangle inequality, the fact that $1 - \hedgeweight_{\effexpidx}(t) = \sum_{\expidx\in\neffexperts} \hedgeweight_\expidx(t)$ along with the same fact for $\careweightvec$, and \cref{lem:ftrl-both-weight-intweight-bound,lem:integral-comparison-simplified} (see also the proofs of \cref{thm:hedge-bound-quant,thm:ftrl-care-bound-quant} for more details),
\[\label{eqn:meta-effexp-3}
 &\hspace{-2em}
 \sup_{\smash{\policy\in\policyfamily_{\numexperts,(1,\Deltaeff)}}}
		\EEbothMeta
 \sum_{t=\tmid}^\infty \frac{1}{2}\norm{\careweightvec(t) - \hedgeweightvec(t)}_{L^1} \\
 	& \leq 
 	\sup_{\smash{\policy\in\policyfamily_{\numexperts,(1,\Deltaeff)}}}
		\EEbothMeta
 	\sum_{t=\tmid}^\infty \frac{1}{2} \rbra{\norm{\careweightvec(t) - \delta_{\effexpidx}}_{L^1} + \norm{\hedgeweightvec(t) - \delta_{\effexpidx}}_{L^1}}\\
	&= 
	\sup_{\smash{\policy\in\policyfamily_{\numexperts,(1,\Deltaeff)}}}
		\EEbothMeta
	\sum_{t=\tmid}^T \sum_{\expidx\in\neffexperts} \Big(\hedgeweight_\expidx(t) + \careweight_\expidx(t)\Big) \\
	& \lesssim
	\frac{1}{\Deltaeff},
\]
where $\delta_{\effexpidx}$ is the point-mass on $\effexpidx$ (equivalently, the weight vector with weight $1$ on expert $\effexpidx$ and $0$ on the others).

Finally, from \cref{thm:hedge-bound-quant},
we have
\[\label{eqn:meta-effexp-1}
	\sup_{\smash{\policy\in\policyfamily_{\numexperts,(1,\Deltaeff)}}}
				\EEbothMeta \, \playerexpregretHedge(T)
 	& \lesssim
 	\frac{\log\numexperts}{\Deltaeff}.
\]

Combining \cref{eqn:meta-effexp-1,eqn:meta-effexp-2,eqn:meta-effexp-3} shows that, in the case of $\numeffexperts=1$, we have
\*[
	\sup_{\smash{\policy\in\policyfamily_{\numexperts,(1,\Deltaeff)}}}
				\EEbothMeta \, \fullregret(T)
	\lesssim
	\frac{\log\numexperts}{\Deltaeff}.
\]

\end{proof}
\section{Related work}
\label{sec:literature}

The existing literature on statistical decision making with sequential data is vast, spanning decades and at least two major fields of study:
sequential decision theory began as a sub-field of statistics, and the historical literature is rather exclusive to statistics, while the more recent literature on decision procedures without i.i.d. assumptions has largely been developed within machine learning and computer science.
In this section,
we highlight the most relevant notions of adaptivity, and how their statistical interpretations differ from each other as well as the present work.

\subsection{Distributional assumptions}\label{sec:lit-distribution}

First, note that while we use the language of prediction to describe our setting, our prediction space $\predspace$ is distinct from the observation space $\dataspace$, so we achieve the same level of generality as allowing for arbitrary decisions.
Classically, the statistical literature on sequential hypothesis testing 
\citep{wald45,robbins1952,begg1979sequential,lai1988,chambaz2017} 
and sequential parameter estimation 
\citep{wolfowitz47,anscombe53estimation,gilliland68,rasmussen1980} 
relies on assumptions on the joint dependence structure of data to obtain performance guarantees.
From a minimax perspective, removing the assumptions on the dependence structure reduces the problem to adversarially chosen data. Instead, by characterizing these arbitrary distributions in some way such that performance depends on the characterization, we can design methods for which the performance adapts to the characterization.

\citet{hanneke2017learning} provides an overview of when classical estimation procedures designed for \iid{} data will be consistent under various non-stationarity conditions.
Additionally, he considers the asymptotic performance of a broader class of algorithms, although there is no notion of \emph{adaptivity} since performance is binary: either a dependence structure admits a consistent online learning algorithm or it doesn't.
In contrast, since the present work deals with finite expert classes, there is always a consistent algorithm, and so we focus on the specific performance of algorithms beyond their convergence properties.

\citet{rakhlin2011online} consider general constraints on the \envpolicyname{} for sequential prediction.
We also use constraints on the \envpolicyname{} to define relaxations of the \iid{} assumption, but the specific constraints that we define and study are not ones studied by \citeauthor{rakhlin2011online}.
Additionally, we focus on developing methods that are minimax optimal under the constraint  even when the nature of the constraint is unknown. 
For each of the constraints analyzed by \citeauthor{rakhlin2011online}, the authors bound the minimax \regretname{} non-constructively, and consequently cannot guarantee the existence of an algorithm that is \adaptminimaxoptimal{}.
In contrast, we provide an explicit, efficient algorithm that is \adaptminimaxoptimal{} for our constraint framework.

\subsection{Notions of easy data}\label{sec:lit-easy}

Beyond quantifying the minimax performance of decision rules under distributional assumptions, significant progress has been made over the last decade towards \regretname{} bounds that depend on key summary statistics of the observed data sequence. While the terminology for these types of bounds varies in the literature, we will follow the nomenclature of \citet{cesabianchi2007secondorder}, who differentiate between zero-order, first-order, and second-order \regretname{} bounds. We use stochastic constraints to link zero- and second-order bounds in a general framework, and hence can compare with results derived in a wide range of settings.

Zero-order bounds refer to those that depend only on the time horizon, the size of the expert class, and an absolute bound on the size of the predictions (alternatively, the losses). Results of this nature have existed for many years, beginning with \citet{littlestone94} and \citet{vovk98}, and are concisely summarized by \citet{plg07}. These bounds are often dubbed \emph{worst-case} or \emph{adversarial}, since they hold for any sequence of observations subject to the aforementioned global constraints.

In contrast, first-order bounds control \regretname{} in terms of a data-dependent quantity; namely, the sum of the actual observed losses (potentially over all experts, or just the best expert for tighter results). Hence, they may lead to much tighter bounds than  zero-order guarantees if the observed losses end up being in a much tighter range than is guaranteed by some absolute bound on the size of the losses. The first bound of this form was by \citet{freund97} for the \OGHedge{} algorithm, which was later upgraded to a multiplicative rather than additive dependence on the cumulative best loss \citep[Corollary~2.4]{plg07}. Similar bounds have been developed for the bandit setting \citep{auer2003bandit,audibert10}, algorithms with adaptive parametrization \citep{hutter2004,vanerven11}, and the combination of adaptive parametrization with partial information \citep{neu2015}.

However, a limitation of first-order bounds is that they are not translation-invariant in the losses. In particular, they suggest that every expert incurring loss of $1$ on each round is much harder to compete against than every expert incurring loss of zero on each round, which is not the case. One solution is to obtain \regretname{} bounds that are similar to first-order, but rather than depending on the sum of the losses, they depend on a single first-order translation-invariant parameter that characterizes the observed loss sequence. In the bandit setting, examples of such a parameter include the effective loss range \citep{cb18,thune18easy} and the amount of corruption allowed on the mean of the losses
\citep{lykouris18corruption,gupta19corruption}. A similar analysis of corruption of experts' predictions in the full-information setting has recently appeared by \citet{amir2020prediction}.

Beyond these first-order quantities, another line of work has focused on second-order bounds, which depend on some form of variation of the observed losses. The first results of this form were derived by \citet{cesabianchi2007secondorder}, who obtain a bound in terms of the sum of the squared losses via tuning the learning rate for \Hedge. This was extended by both \citet{mcmahan2010adaptive} and \citet{hazan2010variance} to depend on the sample second moment and variance respectively of the losses (empirically along the trajectory of observations), and again by \citet{hazan2011bandits} to obtain the same in the bandit setting. Both \citet{vanerven11} and \citet{derooij14FTL} obtain similar variation bounds which are smaller for a different notion of ``easy'' data (defined by the mixability of the loss). Finally, another type of second-order bound was developed by \citet{gaillard14}, where they utilize the squared difference of algorithm losses with expert losses.

A different perspective on easy data is taken by \citet{chaudhuri2009} and \citet{luo2015}, who develop methods not only to have \regretname{} relative to the best expert of size $\Oo(\sqrt{T\log N})$, but to also have \regretname{} relative to the $\eps\numexperts$-quantile expert of size $\Oo(\sqrt{T\log(1/\eps)})$ for all $\eps\in\ointer{1/\numexperts, 1}$. The algorithms they propose are more optimistic than \Hedge{} in the sense that they trust the past data more, which leads to suboptimal performance in settings between stochastic and adversarial, exaggerating the shortcomings of the standard parametrization of \Hedge{} in this case.

Several other methods exist that tune the learning rate of \OGHedge{} adaptively based on the past interaction with the environment. Generally, these are motivated by improved second order bounds. Examples include \citet{koolen2015} and \citet{vanErven2016}, who use a prior on the learning rate and meta-experts for a discrete collection of possible learning rates respectively.

We also derive second-order (in particular, variance) bounds for the observed data sequence (see the intermediary result \cref{thm:ftrl-simplex-summary}). However, we are also able to extend this notion due to the stochastic nature of our constraints. In particular, once we take the expectation (with respect to the \envpolicyname{} and the \playername{}'s actions) of our second-order bounds, we obtain bounds directly comparable to (and tighter than) existing zero-order bounds. This provides greater insight than existing second-order bounds, which often leave a direct dependence on the variability of the chosen learning algorithm that is not \apriori{} clear, and do not explicitly characterize what an ``easy'' data sequence actually looks like.

In the full-information setting, another line of investigation describes ``easy'' stochastic data by that which satisfies a \emph{Bernstein condition}; that is, the conditional second moment of the losses are controlled by a concave function of the conditional first moment. This condition was shown to be crucial for achieving \emph{fast-rates} in the batch setting by \citet{bartlett06}, then in the online convex optimization setting (infinite expert class) by \citet{vanErven2016}, and finally for simultaneously the finite expert and infinite expert online setting by \citet{koolen16}.
Recent work by \citet{grunwald2020fastrates} provided sufficient conditions to extend these results to unbounded losses.

\subsection{Stochastic and adversarially optimal algorithms}\label{sec:lit-sao}

In addition to developing bounds for ``easy'' data, the line of work most relevant to the present paper has focused on developing algorithms that are simultaneously optimal in two key settings: worst-case adversarial observations and \iid{} (stochastic) observations. These bounds are characterized by matching the adversarial bounds mentioned above and the optimal stochastic bounds for either bandits \citep[Theorem~1]{auer2002stochastic} or full-information \citep[Theorem~11]{gaillard14}. Beginning with \citet{audibert09} and \citet{bubeck12SAO}, the bandit literature is rich in this area; contributions include removing prior knowledge of the time horizon \citep{seldin2014one}, matching lower bounds \citep{auer16}, and a simultaneously optimal algorithm 
with respect to a slightly weaker notion of \regretname{} \citep{zimmert19}.

In our discussion of the previous bounds, we have not specifically distinguished between the types of algorithms used to achieve them. However, there is an aesthetic (and computational) desire to find algorithms that achieve \regretname{} bounds that are optimal both for worst-case data and some notion of ``easy'' data, and yet are as simple as the algorithms which perform well in either just the adversarial or just the \iid{} setting. A recent breakthrough on this front was achieved by \citet{mourtada2019optimality}, who showed the standard parametrization of the \Hedge{} algorithm is optimal for both the adversarial and the stochastic settings. For the bandit setting, the \TsallisInf{} algorithm of \citet{zimmert19} has a similarly simple aesthetic; namely, it is also an analytic solution to an \FTRL{} problem with an appropriate regularizer.
One of the more surprising contributions of our work is that we show \emph{every} pre-specified parametrization of \Hedge{} is not \adaptminimaxoptimal{}.

\preprint{

\section*{Acknowledgements}\label{sec:acknowledgements}
BB is supported by an NSERC Canada Graduate Scholarship and the Vector Institute.
JN is supported by an NSERC Vanier Canada Graduate Scholarship and the Vector Institute.
DMR is supported in part by an NSERC Discovery Grant, Ontario Early Researcher Award, Canada CIFAR AI Chair funding through the Vector Institute, and a stipend provided by the Charles Simonyi Endowment.
This material is based also upon work supported by the United States Air Force under Contract No. FA850-19-C-0511. Any opinions, findings and conclusions or recommendations expressed in this material are those of the author(s) and do not necessarily reflect the views of the United States Air Force.
This research was partially carried out while all three authors were visiting the Institute for Advanced Study in Princeton, New Jersey for the Special Year on Optimization, Statistics, and Theoretical Machine Learning.
JN and BB's travel to the Institute for Advanced Study were separately funded by NSERC Michael Smith Foreign Study Supplements.
We thank Teodor Vanislavov Marinov for helpful discussions at the IAS, as well as Nicol\`{o} Campolongo, Peter D. Gr\"unwald, Francesco Orabona, Alex Stringer, Csaba Szepesv\'{a}ri, Yanbo Tang, and Julian Zimmert for their insightful comments on preliminary versions of this work.
}
\aos{

}

\aos{
\begin{supplement}
\stitle{Supplement to ``Relaxing the \IID{} Assumption: Adaptively Minimax Optimal Regret via Root-Entropic Regularization''}
\slink[doi]{COMPLETED BY THE TYPESETTER}
\sdatatype{.pdf}
\sdescription{Additional Proofs, Additional Proof Details, Implementation Details, and Simulations}
\end{supplement}
}

\preprint{
\bibliographystyle{abbrvnat}
\bibliography{bib-files/semi-adv}
}

\aos{
\bibliographystyle{imsart-nameyear}
\bibliography{bib-files/semi-adv}
}
\newpage
\appendix

\section{Additional details for proofs of upper bounds}
\label{sec:main-thm-proofs}\label{SEC:MAIN-THM-PROOFS}

In this section, we complete the argument sketched in \cref{sec:main-quant-sketch}.

\subsection{Details for \cref{thm:hedge-bound-quant}}
Substituting in that \Hedge{} with parameter $\hedgelr$ corresponds to, for a given $\numexperts\in\Nats$, $\entropyfunc(s) =s/\hedgelr(\numexperts)$, \cref{thm:ftrl-simplex-summary} says that the weights $\hedgeweightvec$ lead to \playerexpregretname{} bounded by
\[\label{eqn:hedge-regret-bound-1}
	\playerexpregretHedge(T) 
	  & \leq \frac{\log\numexperts}{\hedgelr(\numexperts)} +  \sum_{t=1}^T \frac{\sqrt{t+1}-\sqrt{t}}{\hedgelr(\numexperts)} \entropy(\hedgeweightvec(t+1)) \\
	&\qquad +\sum_{t=1}^T \frac{\hedgelr(\numexperts)\sqrt{\VVar{\Expidx\sim \intweightvec(t+1)} \sbra{\bigg(\frac{\sqrt{t+1}}{\sqrt{t}} -1\bigg) \Loss_\Expidx(t-1) - \loss_\Expidx(t)} \vphantom{\frac{\timefunc(t)}{\timefunc(t-1)}}\VVar{\Expidx\sim \intweightvec(t+1)}\sbra{\loss_\Expidx(t)}}}{\sqrt{t+1}},
\]

where 
\*[
	\intweightvec(t+1) = \underset{\intweightvec\in \simp(\experts)}{\argmin} \rbra{\inner{\alpha_t \Lossvec(t) + (1-\alpha_t) \frac{\sqrt{t+1}}{\sqrt{t}}\Lossvec(t-1)}{\intweightvec}-\frac{\sqrt{t+1}}{\hedgelr(\numexperts)}\entropy(\intweightvec)}
\]
for some $\alpha_t\in[0,1]$. 
Then, recalling that $\entropyfunc'(s) = 1/\hedgelr(\numexperts)$, we can split up \cref{eqn:hedge-regret-bound-1} into the rounds before some $\tmid \in \Nats$ and the rounds after by applying \cref{lem:worst-case-ftrl-bound}. 
That is, when $T \leq \tmid$, we use the bound of \cref{lem:worst-case-ftrl-bound}, and if $T > \tmid$ we have

\[ \label{eqn:hedge-regret-bound-split}
	\playerexpregretHedge(T) 
	& \leq \sqrt{\tmid+1} \bigg(\frac{\log\numexperts}{\hedgelr(\numexperts)} + \frac{3\hedgelr(\numexperts)}{4} \bigg)
	  + \sum_{t=\tmid+1}^T \frac{\sqrt{t+1}-\sqrt{t}}{\hedgelr(\numexperts)} \entropy(\hedgeweightvec(t+1)) \\
	&\quad +\sum_{t=\tmid+1}^T \frac{\hedgelr(\numexperts)\sqrt{\VVar{\Expidx\sim \intweightvec(t+1)} \sbra{\bigg(\frac{\sqrt{t+1}}{\sqrt{t}} -1\bigg) \Loss_\Expidx(t-1) - \loss_\Expidx(t)} \vphantom{\frac{\timefunc(t)}{\timefunc(t-1)}}\VVar{\Expidx\sim \intweightvec(t+1)}\sbra{\loss_\Expidx(t)}}}{\sqrt{t+1}}.
\]

Next, substituting $\entropyfunc$ and $\entropyfunc'$ for \Hedge{} into \cref{eqn:entropy-bound-ftrl}, we get
\[\label{eqn:entropy-bound-hedge}
&\hspace{-2em}\sup_{\policy \in \policyspace(\distnball{})} \EEboth \entropy(\hedgeweightvec(t+1)) \\
   &\qquad\leq  \frac{17}{16} \log\numeffexperts
  + \frac{7}{4}(\numexperts - \numeffexperts) \exp\left\{\frac{[\hedgelr(\numexperts)]^2}{8}\right\} \exp\left\{- \frac{\hedgelr(\numexperts) \Deltaeff}{2\sqrt{2}}\sqrt{t} \right\}.
\]
Thus,
\[\label{eqn:entropy-bound-hedge-summed}
	&\hspace{-2em}\sum_{t=\tmid+1}^T \frac{\sqrt{t+1}-\sqrt{t}}{c} \sup_{\policy \in \policyspace(\distnball{})} \EEboth \entropy(\hedgeweightvec(t+1)) \\
	&\leq \frac{17\log\numeffexperts  \Big[\sqrt{T+1} - \sqrt{\tmid+1} \Big]}{16\hedgelr(\numexperts)} \\
  	&\qquad\qquad+ \frac{7(\numexperts - \numeffexperts) \exp\left\{\frac{[\hedgelr(\numexperts)]^2}{8}\right\} }{8\hedgelr(\numexperts)}\sum_{t=\tmid+1}^T \frac{\exp\left\{- \frac{\hedgelr(\numexperts) \Deltaeff}{2\sqrt{2}}\sqrt{t} \right\} }{\sqrt{t}} \\
  &\leq \frac{17\log\numeffexperts \Big[\sqrt{T+1} - \sqrt{\tmid+1} \Big]}{16\hedgelr(\numexperts)}  \\
  &\qquad\qquad+ \frac{7(\numexperts - \numeffexperts) \exp\left\{\frac{[\hedgelr(\numexperts)]^2}{8} - \frac{\hedgelr(\numexperts) \Deltaeff}{2\sqrt{2}}\sqrt{\tmid} \right\}}{\sqrt{2} \, [\hedgelr(\numexperts)]^2 \Deltaeff},
\]
where the last step comes from applying \cref{lem:integral-comparison-simplified} to bound the summation.
For the last term of \cref{eqn:hedge-regret-bound-split}, we consider the cases of $\numeffexperts>1$ and $\numeffexperts=1$ separately. 
For both, however, we will use $\tmid = \ceil{\frac{8 (\log(\numexperts)+[\hedgelr(\numexperts)]^2/4+\hedgelr(\numexperts))^2}{[\hedgelr(\numexperts)]^2\Deltaeff^2}}$.

\ \\

\textbf{\OGHedge{} upper bound: $\numeffexperts > 1$.}\\
If $\numeffexperts>1$, 
using \cref{lem:simple-var-bd} to bound the variances gives
\[\label{eqn:variance-bound-hedge-summed}
	&\hspace{-2em}\sum_{t=\tmid+1}^T \frac{\hedgelr(\numexperts)}{\sqrt{t+1}}\sqrt{\VVar{\Expidx\sim \intweightvec(t+1)} \sbra{\bigg(\frac{\sqrt{t+1}}{\sqrt{t}} -1\bigg) \Loss_\Expidx(t-1) - \loss_\Expidx(t)}\vphantom{\left(\frac{\timefunc(t)}{\timefunc(t-1)}\right)}\VVar{\Expidx\sim \intweightvec(t+1)}\sbra{\loss_\Expidx(t)}} \\
	&\leq \frac{3\hedgelr(\numexperts)}{8} \sum_{t=\tmid+1}^T \frac{1}{\sqrt{t+1}} \\
	&\leq \frac{3\hedgelr(\numexperts)}{4}\Big[\sqrt{T+1} - \sqrt{\tmid+1} \Big].
\]
Combining \cref{eqn:hedge-regret-bound-split,eqn:entropy-bound-hedge-summed,eqn:variance-bound-hedge-summed} 
gives that
\[\label{eqn:multiple-hedge-summed} 
	&\hspace{-2em}\sup_{\policy \in \policyspace(\distnball{})} \EEboth \playerexpregretHedge(T) \\ 
	&\leq \sqrt{\tmid+1} \bigg(\frac{\log\numexperts}{\hedgelr(\numexperts)} + \frac{3\hedgelr(\numexperts)}{4} \bigg) 
	+ \frac{17\log\numeffexperts \Big[\sqrt{T+1} - \sqrt{\tmid+1} \Big]}{16\hedgelr(\numexperts)}  \\
  	& \qquad + \frac{7(\numexperts - \numeffexperts) \exp\left\{\frac{[\hedgelr(\numexperts)]^2}{8} - \frac{\hedgelr(\numexperts)\Deltaeff}{2\sqrt{2}}\sqrt{\tmid} \right\}}{\sqrt{2} \, [\hedgelr(\numexperts)]^2 \Deltaeff} 
  	+ \frac{3\hedgelr(\numexperts)\Big[\sqrt{T+1} - \sqrt{\tmid+1} \Big]}{4} \\
  	&= \sqrt{T + 1} \bigg(\frac{3\hedgelr(\numexperts)}{4} + \frac{17\log\numeffexperts}{16\hedgelr(\numexperts)}  \bigg)
  	+ \sqrt{\tmid + 1} \bigg(\frac{\log\numexperts}{\hedgelr(\numexperts)} - \frac{17\log\numeffexperts }{16\hedgelr(\numexperts)}  \bigg) \\
  	& \qquad + \frac{7(\numexperts - \numeffexperts) \exp\left\{\frac{[\hedgelr(\numexperts)]^2}{8} - \frac{\hedgelr(\numexperts) \Deltaeff}{2\sqrt{2}}\sqrt{\tmid} \right\}}{\sqrt{2} \, [\hedgelr(\numexperts)]^2 \Deltaeff}. 
\]

Substituting $\tmid$ into \cref{eqn:multiple-hedge-summed} gives
\[\label{eqn:multiple-hedge-summed-tmid} 
	&\hspace{-2em}\sup_{\policy \in \policyspace(\distnball{})} \EEboth \playerexpregretHedge(T)\\
	&\leq \sqrt{T + 1} \bigg(\frac{3\hedgelr(\numexperts)}{4} + \frac{17}{16\hedgelr(\numexperts)} \log\numeffexperts \bigg) \\
  	&\qquad + \sqrt{\frac{8 (\log(\numexperts)+[\hedgelr(\numexperts)]^2/4+\hedgelr(\numexperts))^2}{[\hedgelr(\numexperts)]^2\Deltaeff^2} + 2} \ \bigg(\frac{\log(\numexperts) - \log(\numeffexperts )}{\hedgelr(\numexperts)}  \bigg) \\
  	& \qquad + \frac{7(\numexperts - \numeffexperts) \exp\left\{\frac{[\hedgelr(\numexperts)]^2}{8} - \frac{\hedgelr(\numexperts) \Deltaeff}{2\sqrt{2}}\sqrt{\frac{8 (\log(\numexperts)+[\hedgelr(\numexperts)]^2/4+\hedgelr(\numexperts))^2}{[\hedgelr(\numexperts)]^2\Deltaeff^2}} \right\}}{\sqrt{2} \, [\hedgelr(\numexperts)]^2 \Deltaeff} \\
  	&\leq \sqrt{T} \bigg(\frac{3\hedgelr(\numexperts)}{4} + \frac{17}{16\hedgelr(\numexperts)} \log(\numeffexperts ) \bigg)
  	+ \frac{\sqrt{2} \, \log(\numexperts)}{\hedgelr(\numexperts)} + \frac{3\hedgelr(\numexperts)}{4} \\
  	& \qquad 
  	+ \frac{2\sqrt{2} \, [\log(\numexperts)]^2}{[\hedgelr(\numexperts)]^2\Deltaeff}
  	+ \frac{\log(\numexperts)}{\sqrt{2} \, \Deltaeff}
  	+ \frac{2\sqrt{2} \, \log(\numexperts)}{\hedgelr(\numexperts)\Deltaeff}
  	+ \frac{7}{\sqrt{2} \, [\hedgelr(\numexperts)]^2 \Deltaeff}.
\]

\ \\

\textbf{\OGHedge{} upper bound: $\numeffexperts = 1$}\\
If $\effexperts = \{\effexpidx\}$, we control the variance terms using \cref{lem:mix-var-bd}
\*[
	&\hspace{-2em} 
	\EEboth \sqrt{\VVar{\Expidx\sim \intweightvec(t+1)} \sbra{\bigg(\frac{\sqrt{t+1}}{\sqrt{t}} -1\bigg) \Loss_\Expidx(t-1) - \loss_\Expidx(t)} \vphantom{\left(\frac{\timefunc(t)}{\timefunc(t-1)}\right)}\VVar{\Expidx\sim \intweightvec(t+1)}\sbra{\loss_\Expidx(t)}}\\
  &\leq \frac{9}{4} \EEboth\Bigg[\sum_{\expidx\neq\effexpidx} \intweight_\expidx(t+1)\Bigg].
\]
  
We control this using \cref{lem:ftrl-both-weight-intweight-bound} with $p=1$, which gives 
\*[
	\EEboth \sbra{\sum_{\expidx\neq\effexpidx} \intweight_\expidx(t+1)}
	&\leq (\numexperts - 1) \exp\left\{2\hedgelr(\numexperts) +\frac{[\hedgelr(\numexperts)]^2}{2}\right\} \exp\left\{- \frac{\hedgelr(\numexperts)\Deltaeff}{\sqrt{2}}\sqrt{t} \right\}.
\]

Thus,
\[\label{eqn:variance-bound-hedge-summed2}
	&\hspace{-2em} \sup_{\policy \in \policyspace(\distnball{})} \EEboth \sum_{t=\tmid+1}^T \frac{\hedgelr(\numexperts)\sqrt{\VVar{\Expidx\sim \intweightvec(t+1)} \sbra{\bigg(\frac{\sqrt{t+1}}{\sqrt{t}} -1\bigg) \Loss_\Expidx(t-1) - \loss_\Expidx(t)}  \vphantom{\left(\frac{\timefunc(t)}{\timefunc(t-1)}\right)}\VVar{\Expidx\sim \intweightvec(t+1)}\sbra{\loss_\Expidx(t)}}}{\sqrt{t+1}} \\
	&\leq \frac{9\hedgelr(\numexperts)}{4}(\numexperts - 1) \exp\left\{2\hedgelr(\numexperts) +\frac{[\hedgelr(\numexperts)]^2}{2}\right\} \sum_{t=\tmid+1}^T \frac{1}{\sqrt{t}} \exp\left\{- \frac{\hedgelr(\numexperts)\Deltaeff}{\sqrt{2}}\sqrt{t} \right\} \\
	&\leq \frac{9}{\sqrt{2} \, \Deltaeff}(\numexperts - 1) \exp\left\{2\hedgelr(\numexperts) +\frac{[\hedgelr(\numexperts)]^2}{2}\right\} \exp\left\{- \frac{\hedgelr(\numexperts)\Deltaeff}{\sqrt{2}}\sqrt{\tmid} \right\},
\]
where the last step follows from again applying \cref{lem:integral-comparison-simplified}.
Combing \cref{eqn:hedge-regret-bound-split,eqn:entropy-bound-hedge-summed,eqn:variance-bound-hedge-summed2} gives that when $\numeffexperts = 1$,

\[\label{eqn:single-hedge-summed} 
	\sup_{\policy \in \policyspace(\distnball{})} \EEboth \playerexpregretHedge(T)
	&\leq \sqrt{\tmid+1} \bigg(\frac{\log\numexperts}{\hedgelr(\numexperts)} + \frac{3\hedgelr(\numexperts)}{4} \bigg) 
	+ \frac{17(\log 1) \Big[\sqrt{T+1} - \sqrt{\tmid+1} \Big] }{16\hedgelr(\numexperts)} \\
  	&\qquad + \frac{7(\numexperts - 1) \exp\left\{\frac{[\hedgelr(\numexperts)]^2}{8} - \frac{\hedgelr(\numexperts) \Deltaeff}{2\sqrt{2}}\sqrt{\tmid} \right\} }{\sqrt{2} \, [\hedgelr(\numexperts)]^2 \Deltaeff} \\
  	&\qquad + \frac{9(\numexperts - 1) \exp\left\{2\hedgelr(\numexperts) +\frac{[\hedgelr(\numexperts)]^2}{2} - \frac{\hedgelr(\numexperts) \Deltaeff}{\sqrt{2}}\sqrt{\tmid} \right\}}{\sqrt{2} \, \Deltaeff}.
\]

Substituting $\tmid$ into \cref{eqn:single-hedge-summed} gives
\[\label{eqn:single-hedge-summed-tmid} 
	&\hspace{-2em}\sup_{\policy \in \policyspace(\distnball{})} \EEboth \playerexpregretHedge(T)\\
	&\leq \sqrt{\frac{8 (\log\numexperts+[\hedgelr(\numexperts)]^2/4+\hedgelr(\numexperts))^2}{[\hedgelr(\numexperts)]^2\Deltaeff^2} + 2} \ \bigg(\frac{\log\numexperts}{\hedgelr(\numexperts)} + \frac{3\hedgelr(\numexperts)}{4} \bigg) \\
  	& \qquad + \frac{7\exp\left\{\frac{[\hedgelr(\numexperts)]^2}{8} - \frac{\hedgelr(\numexperts) \Deltaeff}{2\sqrt{2}}\sqrt{\frac{8 (\log\numexperts+[\hedgelr(\numexperts)]^2/4+\hedgelr(\numexperts))^2}{[\hedgelr(\numexperts)]^2\Deltaeff^2}} \right\}}{\sqrt{2} \, [\hedgelr(\numexperts)]^2 \Deltaeff}(\numexperts - 1)  \\
  	& \qquad + \frac{9(\numexperts - 1) \exp\left\{2\hedgelr(\numexperts) +\frac{[\hedgelr(\numexperts)]^2}{2} - \frac{\hedgelr(\numexperts)\Deltaeff}{\sqrt{2}}\sqrt{\frac{8 (\log\numexperts+[\hedgelr(\numexperts)]^2/4+\hedgelr(\numexperts))^2}{[\hedgelr(\numexperts)]^2\Deltaeff^2}} \right\}}{\sqrt{2} \, \Deltaeff} \\
  	&\leq
  	\frac{2\sqrt{2}(\log\numexperts)^2}{[\hedgelr(\numexperts)]^2\Deltaeff}
  	+ \frac{2\sqrt{2}\log\numexperts}{\hedgelr(\numexperts) \Deltaeff} 
  	+ \frac{4 \log\numexperts}{\sqrt{2} \, \Deltaeff} \\
  	& \qquad 
  	+ \frac{7/[\hedgelr(\numexperts)]^2 + 9 + 3[\hedgelr(\numexperts)]^2/4 + 3\hedgelr(\numexperts)}{\sqrt{2} \, \Deltaeff}
  	+ \sqrt{2} \Big(\frac{\log\numexperts}{\hedgelr(\numexperts)} + \frac{3\hedgelr(\numexperts)}{4} \Big).
\]
\manualendproof

\subsection{Details for \cref{thm:ftrl-care-bound-quant}}
This argument follows the same logical structure as the one for \cref{thm:hedge-bound-quant}.
Using that \FTRLCARE{} with parameters $c_1,c_2>0$ corresponds to $\entropyfunc(s) =\frac{\sqrt{s+c_2}}{c_1}$, \cref{thm:ftrl-simplex-summary} says that the weights $\careweightvec$ lead to \playerexpregretname{} bounded by
\[\label{eqn:care-regret-bound-1}
  \playerexpregretCare(T) 
    & \leq -\frac{\sqrt{(T+1)c_2}}{c_1} + \sum_{t=0}^T \frac{\sqrt{t+1}-\sqrt{t}}{c_1}\cdot \sqrt{\entropy(\careweightvec(t+1))+c_2} \\
    &\quad +\sum_{t=1}^T \frac{2c_1\sqrt{\entropy(\intweightvec(t+1))+c_2}}{\sqrt{t+1}} \\
		&\qquad\qquad\times \sqrt{{\VVar{\Expidx\sim \intweightvec(t+1)} \sbra{\rbra{\frac{\sqrt{t+1}}{\sqrt{t}} -1} \Loss_\Expidx(t-1) - \loss_\Expidx(t)}\vphantom{\frac{\timefunc(t)}{\timefunc(t-1)}}\VVar{\Expidx\sim \intweightvec(t+1)}\sbra{\loss_\Expidx(t)}}}\, ,
\]
where 
\*[
	\intweightvec(t+1) = \underset{\intweightvec\in \simp(\experts)}{\argmin} \rbra{\inner{\alpha_t \Lossvec(t) + (1-\alpha_t) \frac{\sqrt{t+1}}{\sqrt{t}}\Lossvec(t-1)}{\intweightvec}-\frac{\sqrt{t+1}}{c_1}\sqrt{\entropy(\intweightvec) + c_2}}
\]  
for some $\alpha_t\in[0,1]$. 
Then, recalling that $\entropyfunc'(s) = \frac{1}{2c_1\sqrt{s+c_2}}$, we can split up \cref{eqn:care-regret-bound-1} into the rounds before some $\tmid \in \Nats$ and the rounds after by applying \cref{lem:worst-case-ftrl-bound}. That is, when $T \leq \tmid$, we use the bound of \cref{lem:worst-case-ftrl-bound}, and if $T > \tmid$ we have

\[\label{eqn:care-regret-bound-split}
  \playerexpregretCare(T) 
    & \leq \sqrt{(\tmid+1)[\log\numexperts+c_2]}\Big(\frac{1}{c_1} + \frac{3c_1}{2} \bigg) -\frac{\sqrt{(T+1)c_2}}{c_1} \\
		&\quad + \sum_{t=\tmid}^T \frac{\sqrt{t+1}-\sqrt{t}}{c_1}\cdot \sqrt{\entropy(\careweightvec(t+1))+c_2} \\
    &\quad +\sum_{t=\tmid+1}^T \frac{2c_1\sqrt{\entropy(\intweightvec(t+1))+c_2}}{\sqrt{t+1}}\\
		&\qquad\qquad\times\sqrt{{\VVar{\Expidx\sim \intweightvec(t+1)} \sbra{\rbra{\frac{\sqrt{t+1}}{\sqrt{t}} -1} \Loss_\Expidx(t-1) - \loss_\Expidx(t)} \vphantom{\frac{\timefunc(t)}{\timefunc(t-1)}}\VVar{\Expidx\sim \intweightvec(t+1)}\sbra{\loss_\Expidx(t)}}}.
\]

Next, substituting $\entropyfunc$ and $\entropyfunc'$ for \FTRLCARE{} into \cref{eqn:entropy-bound-ftrl}, using Jensen's inequality with the concavity of square root, and the fact that $\sqrt{x+y} \leq \sqrt{x}+\sqrt{y}$ for all $x,y>0$ gives
\[\label{eqn:entropy-bound-care}
	&\hspace{-2em}
  \sup_{\policy \in \policyspace(\distnball{})} \EEboth \sqrt{\entropy(\careweightvec(t+1)) + c_2} \\
  &\leq \sqrt{\sup_{\policy \in \policyspace(\distnball{})} \EEboth \entropy(\careweightvec(t+1)) + c_2} \\ 
  &\leq \sqrt{\frac{17\log\numeffexperts}{16} + c_2} \
  + \frac{4\sqrt{(\numexperts - \numeffexperts)} \exp\left\{\frac{c_1^2 c_2}{4}- \frac{c_1 \sqrt{c_2} \, \Deltaeff\sqrt{t}}{2\sqrt{2}} \right\}}{3}.
\]
Thus,
\[\label{eqn:entropy-bound-care-summed}
  &\hspace{-2em}\sum_{t=\tmid+1}^T \frac{\sqrt{t+1}-\sqrt{t}}{c_1} \sup_{\policy \in \policyspace(\distnball{})} \EEboth \sqrt{\entropy(\careweightvec(t+1)) + c_2} \\
  &\leq \frac{\sqrt{{\frac{17}{16}} \log\numeffexperts + c_2} \Big[\sqrt{T+1} - \sqrt{\tmid+1} \Big]}{c_1} \\
  &\qquad + \sum_{t=\tmid+1}^T \frac{4\sqrt{(\numexperts - \numeffexperts)}\exp\left\{\frac{c_1^2 c_2}{4}- \frac{c_1 \sqrt{c_2} \, \Deltaeff}{2\sqrt{2}}\sqrt{t} \right\}}{3c_1\sqrt{t}}  \\
  &\leq \frac{\sqrt{{\frac{17}{16}} \log\numeffexperts + c_2} \Big[\sqrt{T+1} - \sqrt{\tmid+1} \Big]}{c_1} \\
  &\qquad + \frac{8\sqrt{2}\sqrt{(\numexperts - \numeffexperts)} \exp\left\{\frac{c_1^2 c_2}{4} - \frac{c_1 \sqrt{c_2} \, \Deltaeff}{2\sqrt{2}}\sqrt{\tmid} \right\}}{3c_1^2\sqrt{c_2} \, \Deltaeff},
\]
where the last step used \cref{lem:integral-comparison-simplified}.
Similarly, we use these same properties and \cref{eqn:intentropy-bound-ftrl} to obtain
\[\label{eqn:intentropy-bound-care}
  &\hspace{-2em}\sup_{\policy \in \policyspace(\distnball{})} \EEboth \sqrt{\entropy(\intweightvec(t+1)) + c_2}\\
  &\leq \sqrt{\sup_{\policy \in \policyspace(\distnball{})} \EEboth \entropy(\intweightvec(t+1)) + c_2} \\ 
  &\leq \sqrt{{\frac{17\log\numeffexperts}{16}}  + c_2} \
  + \frac{4\sqrt{(\numexperts - \numeffexperts)} \exp\left\{c_1\sqrt{c_2} + \frac{c_1^2 c_2}{4}- \frac{c_1 \sqrt{c_2} \, \Deltaeff\sqrt{t}}{2\sqrt{2}} \right\}}{3}.
\]

For the last term of \cref{eqn:care-regret-bound-split}, we consider the cases of $\numeffexperts>1$ and $\numeffexperts=1$ separately. 
For both, however, we will use $\tmid = \ceil{\frac{2 [\log\numexperts+3c_1\sqrt{c_2} + \frac{5}{4}c_1^2c_2]^2}{c_1^2c_2\Deltaeff^2}}$ and the constant $\careconst = \max\{c_2, 3c_1\sqrt{c_2} + \frac{5}{4}c_1^2c_2\}$.

\ \\

\textbf{\FTRLCARE{} upper bound: $\numeffexperts > 1$.}\\
If $\numeffexperts>1$, we again use \cref{lem:simple-var-bd} to control the variance terms.
Then, using \cref{eqn:intentropy-bound-care} and another application of \cref{lem:integral-comparison-simplified},
\[\label{eqn:variance-bound-care-summed}
	&\hspace{-2em}\sup_{\policy \in \policyspace(\distnball{})} \EEboth\sum_{t=\tmid+1}^T \frac{2c_1\sqrt{\entropy(\intweightvec(t+1))+c_2}}{\sqrt{t+1}}\\
	&\qquad \times\sqrt{{\VVar{\Expidx\sim \intweightvec(t+1)} \sbra{\rbra{\frac{\sqrt{t+1}}{\sqrt{t}} -1} \Loss_\Expidx(t-1) - \loss_\Expidx(t)} \vphantom{\left(\frac{\timefunc(t)}{\timefunc(t-1)}\right)}\VVar{\Expidx\sim \intweightvec(t+1)}\sbra{\loss_\Expidx(t)}}} \\
	&\leq \frac{3c_1}{4} \sum_{t=\tmid+1}^T 
		\sqrt{\frac{{\frac{17}{16} \log\numeffexperts + c_2}}{t+1} } \\
	&\qquad + c_1 \sqrt{(\numexperts - \numeffexperts)} \sum_{t=\tmid+1}^T \frac{\exp\left\{{c_1\sqrt{c_2}} + \frac{c_1^2 c_2}{4}- \frac{c_1 \sqrt{c_2} \, \Deltaeff\sqrt{t}}{2\sqrt{2}} \right\}}{\sqrt{t}}  \\
	&\leq \frac{3c_1 \sqrt{{\frac{17}{16} \log\numeffexperts + c_2}}\Big[\sqrt{T+1} - \sqrt{\tmid+1} \Big]}{2} \\
	&\qquad + \frac{8\sqrt{(\numexperts - \numeffexperts)}\exp\left\{{c_1\sqrt{c_2}} + \frac{c_1^2 c_2}{4} - \frac{c_1 \sqrt{c_2} \, \Deltaeff\sqrt{\tmid}}{2\sqrt{2}} \right\}}{\sqrt{2c_2} \, \Deltaeff} .
\]

Combining \cref{eqn:care-regret-bound-split,eqn:entropy-bound-care-summed,eqn:variance-bound-care-summed} 
gives that
\[\label{eqn:multiple-care-summed} 
	&\hspace{-2em}\sup_{\policy \in \policyspace(\distnball{})} \EEboth \playerexpregretCare(T) \\
	&\leq 
	\sqrt{(\tmid+1)[\log\numexperts+c_2]}
		\Big(\frac{1}{c_1} + \frac{3c_1}{2} \Big) 
	- \frac{\sqrt{(T+1)c_2}}{c_1} \\
	&\quad + \frac{\sqrt{{\frac{17}{16} \log\numeffexperts + c_2}} 
		\Big[\sqrt{T+1} - \sqrt{\tmid+1} \Big]}{c_1} \\
		&\quad + \frac{3c_1\sqrt{{\frac{17}{16} \log\numeffexperts + c_2}}
		\Big[\sqrt{T+1} - \sqrt{\tmid+1} \Big]}{2}  \\
    & \quad + \frac{16\sqrt{(\numexperts - \numeffexperts)} 
    	\exp\left\{\frac{c_1^2 c_2}{4} - \frac{c_1 \sqrt{c_2} \, \Deltaeff\sqrt{\tmid}}{2\sqrt{2}} \right\} }{3\sqrt{2}c_1^2\sqrt{c_2} \, \Deltaeff} \\
			&\quad + \frac{8\sqrt{(\numexperts - \numeffexperts)}\exp\left\{{c_1\sqrt{c_2}} + \frac{c_1^2 c_2}{4} - \frac{c_1 \sqrt{c_2} \, \Deltaeff\sqrt{\tmid}}{2\sqrt{2}} \right\}}{\sqrt{2c_2} \, \Deltaeff} 
    	 \\
    &\leq 
    \frac{33}{32}\sqrt{(T+1)[\log\numeffexperts+c_2]}
		\Big(\frac{1}{c_1} + \frac{3c_1}{2} \Big) \\
    & \quad + \sqrt{(\tmid+1)}
		\Big(\frac{1}{c_1} + \frac{3c_1}{2} \Big)
		\Big(\sqrt{\log\numexperts+c_2} - \sqrt{\log\numeffexperts+c_2} \Big) \\
		& \quad + \frac{\sqrt{2}\sqrt{(\numexperts - \numeffexperts)}\exp\left\{\frac{c_1^2 c_2}{4} - \frac{c_1 \sqrt{c_2} \, \Deltaeff}{2\sqrt{2}}\sqrt{\tmid} \right\}}{\sqrt{c_2} \, \Deltaeff}
		\Big(\frac{8}{3c_1^2} + 4\exp\left\{c_1\sqrt{c_2} \right\} \Big).
\]

Substituting $\tmid$ into \cref{eqn:multiple-care-summed} gives
\*[
	&\hspace{-2em}\sup_{\policy \in \policyspace(\distnball{})} \EEboth \playerexpregretCare(T) \\
	&\leq
	\Big(\frac{1}{c_1} + \frac{3c_1}{2} \Big)\Bigg[
	\frac{33}{32}\sqrt{(T+1)[\log\numeffexperts+c_2]}	\\
    &\qquad\qquad\qquad\ + \sqrt{\bigg(\frac{2 [\log\numexperts+3c_1\sqrt{c_2} + \frac{5}{4}c_1^2c_2]^2}{c_1^2c_2\Deltaeff^2}+2\bigg)[\log\numexperts+c_2]} \ \Bigg] \\
	& \qquad + \frac{\sqrt{2}\sqrt{(\numexperts - \numeffexperts)}}{\sqrt{c_2} \, \Deltaeff} \Big(\frac{8}{3c_1^2} + 4\exp\left\{c_1\sqrt{c_2} \right\} \Big)\\
		&\qquad\qquad\qquad\times \exp\left\{\frac{c_1^2 c_2}{4} - \frac{c_1 \sqrt{c_2} \, \Deltaeff}{2\sqrt{2}}\sqrt{\frac{2 [\log\numexperts+3c_1\sqrt{c_2} + \frac{5}{4}c_1^2c_2]^2}{c_1^2c_2\Deltaeff^2}}\right\}
		 \\
	&\leq 
	\Big(\frac{1}{c_1} + \frac{3c_1}{2} \Big)\Bigg[
	\frac{33}{32}\sqrt{(T+1)[\log\numeffexperts+c_2]}\\
	&\qquad\qquad\qquad\ + \frac{\sqrt{2} \, [\log\numexperts+\careconst]^{3/2}}{c_1 \sqrt{c_2} \, \Deltaeff}	
	+ \sqrt{2[\log\numexperts+c_2]} \Bigg] \\
	&\qquad+ \frac{\sqrt{2} (8+12c_1^2)}{3c_1^2\sqrt{c_2} \, \Deltaeff}.
\]

\ \\

\textbf{\FTRLCARE{} upper bound: $\numeffexperts = 1$}\\
If $\effexperts = \{\effexpidx\}$, we control the variance terms using \cref{lem:mix-var-bd} In particular,
\*[
	&\hspace{-2em} 
	\EEboth \Bigg[\VVar{\Expidx\sim \intweightvec(t+1)} \sbra{\bigg(\frac{\sqrt{t+1}}{\sqrt{t}} -1\bigg) \Loss_\Expidx(t-1) - \loss_\Expidx(t)} \vphantom{\left(\frac{\timefunc(t)}{\timefunc(t-1)}\right)}\VVar{\Expidx\sim \intweightvec(t+1)}\sbra{\loss_\Expidx(t)}\Bigg]\\
  &\leq \frac{27}{32} \EEboth\Bigg[\sum_{\expidx\neq\effexpidx} \intweight_\expidx(t+1)\Bigg].
\]
We control this using \cref{lem:ftrl-both-weight-intweight-bound} with $p=1$, which gives
\*[
	\sup_{\policy \in \policyspace(\distnball{})} \EEboth \sbra{\sum_{\expidx\neq\effexpidx} \intweight_\expidx(t+1)}
  	&\leq (\numexperts - 1) \exp\Big\{4c_1\sqrt{c_2} +2c_1^2c_2- \sqrt{2} \, c_1\sqrt{c_2}\Deltaeff \sqrt{t} \Big\}.
\]
Thus, using Cauchy-Schwarz and \cref{eqn:intentropy-bound-care} (recalling $\log\numeffexperts=0$), for any $\policy \in \policyspace(\distnball{})$
\*[ 
	&\hspace{-2em}
	\EEboth 
	\sqrt{\rbra{\entropy(\intweightvec(t+1)) + c_2} {\VVar{\Expidx\sim \intweightvec(t+1)} \sbra{\rbra{\frac{\sqrt{t+1}}{\sqrt{t}} -1} \Loss_\Expidx(t-1) - \loss_\Expidx(t)} \vphantom{\left(\frac{\timefunc(t)}{\timefunc(t-1)}\right)}\VVar{\Expidx\sim \intweightvec(t+1)}\sbra{\loss_\Expidx(t)}}} \\
	&\leq \sqrt{\EEboth \entropy(\intweightvec(t+1))+c_2} \\ 
  &\qquad\qquad\times\sqrt{\EEboth {\bigg[\VVar{\Expidx\sim \intweightvec(t+1)} \sbra{\rbra{\frac{\sqrt{t+1}}{\sqrt{t}} -1} \Loss_\Expidx(t-1) - \loss_\Expidx(t)}
	\VVar{\Expidx\sim \intweightvec(t+1)}\sbra{\loss_\Expidx(t)}\bigg]}} \\
	&\leq \sqrt{\frac{7(\numexperts - 1) \exp\left\{{2c_1\sqrt{c_2}} + \frac{c_1^2 c_2}{2}- \frac{c_1 \sqrt{c_2} \, \Deltaeff\sqrt{t}}{\sqrt{2}} \right\}}{4} + c_2} \\ 
  &\qquad\qquad\times\sqrt{(\numexperts - 1)}\exp\cbra{{2c_1\sqrt{c_2}} + c_1^2 c_2- \frac{c_1 \sqrt{c_2} \, \Deltaeff\sqrt{t}}{\sqrt{2}}} \\
	&\leq \frac{3(\numexperts - 1)\exp\cbra{{3c_1\sqrt{c_2}} + \frac{5c_1^2 c_2}{4}- \frac{3c_1 \sqrt{c_2} \, \Deltaeff}{2\sqrt{2}}\sqrt{t} }}{2}	\\ 
  &\qquad\qquad+ \sqrt{c_2(\numexperts - 1)}\exp\cbra{{2c_1\sqrt{c_2}} + {c_1^2 c_2} - \frac{c_1 \sqrt{c_2} \, \Deltaeff\sqrt{t}}{\sqrt{2}}} 
		 \\
	&\leq (3/2+\sqrt{c_2})(\numexperts - 1)
		\exp\cbra{3c_1\sqrt{c_2} + \frac{5c_1^2 c_2}{4} - \frac{c_1 \sqrt{c_2} \, \Deltaeff}{\sqrt{2}}\sqrt{t}}.
\]

Summing this over $t$ and applying \cref{lem:integral-comparison-simplified} gives
\[\label{eqn:variance-bound-care-summed2}
	&\hspace{-2em}\sup_{\policy \in \policyspace(\distnball{})} \EEboth\sum_{t=\tmid+1}^T \frac{2c_1\sqrt{\entropy(\intweightvec(t+1))+c_2}}{\sqrt{t+1}}\\
	&\times\sqrt{{\VVar{\Expidx\sim \intweightvec(t+1)} \sbra{\rbra{\frac{\sqrt{t+1}}{\sqrt{t}} -1} \Loss_\Expidx(t-1) - \loss_\Expidx(t)}} \vphantom{\left(\frac{\timefunc(t)}{\timefunc(t-1)}\right)}\VVar{\Expidx\sim \intweightvec(t+1)}\sbra{\loss_\Expidx(t)}} \\
	&\leq c_1(3+2\sqrt{c_2})(\numexperts - 1) 
		\exp\Big\{3c_1\sqrt{c_2} + \frac{5c_1^2 c_2}{4}\Big\} \\
		&\qquad\qquad\times\sum_{t=\tmid+1}^T \frac{1}{\sqrt{t}}
		\exp\left\{- \frac{c_1 \sqrt{c_2} \, \Deltaeff}{\sqrt{2}}\sqrt{t} \right\} \\
	&\leq \frac{\sqrt{2} (3+2\sqrt{c_2})(\numexperts - 1)}{\sqrt{c_2} \, \Deltaeff} 
		\exp\bigg\{3c_1\sqrt{c_2} + \frac{5c_1^2 c_2}{4} - \frac{c_1 \sqrt{c_2} \, \Deltaeff}{\sqrt{2}}\sqrt{\tmid} \bigg\}.
\]
Combining \cref{eqn:care-regret-bound-split,eqn:entropy-bound-care-summed,eqn:variance-bound-care-summed2} gives that for $\numeffexperts = 1$,
\[\label{eqn:single-care-summed} 
	&\hspace{-2em}\sup_{\policy \in \policyspace(\distnball{})} \EEboth \playerexpregretCare(T) \\
	&\leq 
	\sqrt{(\tmid+1)[\log\numexperts+c_2]}
		\Big(\frac{1}{c_1} + \frac{3c_1}{2} \Big) 
	- \frac{\sqrt{(T+1)c_2}}{c_1} \\ 
	& \quad + \frac{8\sqrt{2(\numexperts - 1)}\exp\left\{\frac{c_1^2 c_2}{4} - \frac{c_1 \sqrt{c_2} \, \Deltaeff}{2\sqrt{2}}\sqrt{\tmid} \right\}}{3c_1^2\sqrt{c_2} \, \Deltaeff} \\
			&\quad + \frac{\sqrt{ c_2}}{c_1} 
		\Big[\sqrt{T+1} - \sqrt{\tmid+1} \Big] \\
  & \quad + \frac{\sqrt{2} (3+2\sqrt{c_2})(\numexperts - 1)\exp\bigg\{3c_1\sqrt{c_2} + \frac{5c_1^2 c_2}{4} - \frac{c_1 \sqrt{c_2} \, \Deltaeff}{\sqrt{2}}\sqrt{\tmid} \bigg\}}{\sqrt{c_2} \, \Deltaeff} 
		 \\
	&\leq 
    \sqrt{(\tmid+1)[\log\numexperts+c_2]}
		\Big(\frac{1}{c_1} + \frac{3c_1}{2} \Big)\\
	& \quad+ \frac{8\sqrt{2(\numexperts - 1)}\exp\left\{\frac{c_1^2 c_2}{4} - \frac{c_1 \sqrt{c_2} \, \Deltaeff\sqrt{\tmid}}{2\sqrt{2}} \right\}}{3c_1^2\sqrt{c_2} \, \Deltaeff}
    	 \\
    & \quad + \frac{\sqrt{2} (3+2\sqrt{c_2})(\numexperts - 1)}{\sqrt{c_2} \, \Deltaeff} 
		\exp\bigg\{3c_1\sqrt{c_2} + \frac{5c_1^2 c_2}{4} - \frac{c_1 \sqrt{c_2} \, \Deltaeff}{\sqrt{2}}\sqrt{\tmid} \bigg\}.
\]

Substituting $\tmid$ into \cref{eqn:single-care-summed} gives
\*[
	&\hspace{-2em}\sup_{\policy \in \policyspace(\distnball{})} \EEboth \playerexpregretCare(T) \\
	&\leq 
	\Big(\frac{1}{c_1} + \frac{3c_1}{2} \Big)
    \sqrt{\bigg(\frac{2 [\log\numexperts+3c_1\sqrt{c_2} + \frac{5}{4}c_1^2c_2]^2}{c_1^2c_2\Deltaeff^2}+2\bigg)[\log\numexperts+c_2]} \\
	& \qquad + \frac{8\sqrt{2(\numexperts - 1)}}{3c_1^2\sqrt{c_2} \, \Deltaeff}
    	\ \exp\left\{\frac{c_1^2 c_2}{4} - \frac{c_1 \sqrt{c_2} \, \Deltaeff}{2\sqrt{2}}\sqrt{\frac{2 [\log\numexperts+3c_1\sqrt{c_2} + \frac{5}{4}c_1^2c_2]^2}{c_1^2c_2\Deltaeff^2}} \right\} \\
    & \qquad + \frac{\sqrt{2} (3+2\sqrt{c_2})(\numexperts - 1)}{\sqrt{c_2} \, \Deltaeff} \\
		&\qquad\qquad\times \exp\left\{3c_1\sqrt{c_2} + \frac{5c_1^2 c_2}{4} - \frac{c_1 \sqrt{c_2} \, \Deltaeff}{\sqrt{2}}\sqrt{\frac{2 [\log\numexperts+3c_1\sqrt{c_2} + \frac{5}{4}c_1^2c_2]^2}{c_1^2c_2\Deltaeff^2}}\right\} \\
	&\leq 
	\Big(\frac{1}{c_1} + \frac{3c_1}{2} \Big)\Bigg[
	\frac{\sqrt{2} \, [\log\numexperts+\careconst]^{3/2}}{c_1 \sqrt{c_2} \, \Deltaeff}	
	+ \sqrt{2[\log\numexperts+c_2]} \Bigg] \\
	&\qquad + \frac{1}{\sqrt{c_2} \, \Deltaeff}\bigg[\frac{8\sqrt{2}}{3c_1^2} + \sqrt{2} (3+2\sqrt{c_2}) \bigg].
\]
\manualendproof	
\section{Generic \FTRL{} \regretname{} bounds with local norms}
\label{sec:olo}

\subsection{Online linear optimization with \FTRL{}}
\label{ssec:generic-ftrl}
An \emph{online linear optimization} (OLO) problem in $\Reals^\olodim$ is defined by a closed \emph{prediction domain} $\olodomain \subseteq \Reals^\olodim$ and a \emph{loss domain} $\ololossdomain \subseteq \Reals^\olodim$.
At each time $t$, the \playername{} selects $\oloplayervec(t)\in \olodomain$, then observes some $\ololossvec(t) \in \ololossdomain$ and incurs the loss $\inner{\ololossvec(t)}{\oloplayervec(t)}$.
For any sequence of losses $\ololossvec(1),\dots,\ololossvec(T) \in \ololossdomain$, the \playername{}'s \oloplayerregretname{} is defined by
\*[
	\oloplayerregret(T) = \sum_{t=1}^T \inner{\ololossvec(t)}{\oloplayervec(t)} - \inf_{\oloplayervec\in \olodomain} \sum_{t=1}^T \inner{\ololossvec(t)}{\oloplayervec}.
\]
There are many ways one could choose $\oloplayervec(t)$, but in this work we focus specifically on \FTRL, which is a generic method for online linear optimization. The \FTRL{} algorithm is parametrized by $\olodomain$, $\ololossdomain$, and a sequence of \emph{regularizers} $\set{\oloregularizer{t}:\olodomain \to \Reals}_{t\in\PosInts}$. For each time $t+1$, a \playername{} using the \FTRLalg{$\olodomain$}{$\ololossdomain$}{$(\oloregularizer{t})_{t\in\PosInts}$} algorithm outputs
\[
	\oloplayervec(t+1) \in \argmin_{\oloplayervec\in \olodomain}\rbra{ \inner{\oloLossvec(t)}{\oloplayervec}+ \olocumregularizer{t}(\oloplayervec) },
\label{eqn:oloFTRL}
\]
where $\olocumregularizer{t}(\oloplayervec) = \sum_{s=0}^t \oloregularizer{s}(\oloplayervec)$ and $\oloLossvec(t) = \sum_{s=1}^t \ololossvec(s)$.

\subsection{OLO \FTRL{} \regretname{} bounds}

The classical \regretname{} bound for \FTRL{} consists of a term that is the difference of losses incurred by consecutive player vectors and a term that looks like the regularizer evaluated at the optimal player vector in hindsight. The former is usually bounded using strong-convexity 
to obtain a norm of the consecutive weight differences. For tighter control, such as that obtained by \citet{Abernethy2009BeatingTA}, this norm may be chosen to be a \emph{local norm}.
A local norm with respect to a function $f$ will be of the form $\norm{x}_y = \sqrt{\inner{x}{\hess f(y) x}}$, and has the property that the dual is $\norm{x}_{y,\star} = \sqrt{\inner{x}{(\hess f(y))^{-1} x}}$. The natural choice of function to define the local norm with respect to is the regularizer; however, this is generally more challenging for non-constant regularizers.

Surprisingly, while both local norms and time-dependent regularizers are standard in the \FTRL{} literature, we were unable to find an explicit statement that combines them exactly as we needed. The closest seems to be Theorem~1 of \citet{mcmahan2017survey}, which requires that the regularizers are strongly convex with respect to a norm and then defines the local norm using the time-dependent strong convexity parameter. 
This strong-convexity argument is insufficient for our analysis, as the \CARE{} regularizer can be at worst only $1/\sqrt{\log\numexperts}$-strongly convex in all settings, and consequently would not lead to the adaptive rates we obtain.
We begin with a modification of \cite[Lemma~1]{mcmahan2010adaptive} to combine local norm bounds with time-dependent regularizer bounds.

\begin{lemma}
\label{lem:adaptive-local-ftrl}
For any $\olodomain$, $\ololossdomain$, $(\oloregularizer{t})_{t\in\PosInts}$, and $(\ololossvec(t))_{t\in\Nats}\subseteq \ololossdomain$,
the \FTRLalg{$\olodomain$}{$\ololossdomain$}{$(\oloregularizer{t})_{t\in\PosInts}$} algorithm has \oloplayerregretname{} bounded for all $T\in\Nats$ by
\*[
	\oloplayerregret(T)
	  & \leq \olocumregularizer{T}(\oloplayervecopt(T))
	  	- \sum_{t=0}^T \oloregularizer{t}(\oloplayervec(t+1))
	    + \sum_{t=1}^T \inner{\ololossvec(t)}{\oloplayervec(t) - \oloplayervec(t+1)},
\]
for all $\oloplayervecopt(T) \in \argmin_{\oloplayervec\in \olodomain} \inner{\oloLossvec(T)}{\oloplayervec}$.
\end{lemma}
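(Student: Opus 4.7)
The plan is to use the classical follow-the-leader / be-the-leader decomposition, adapted to allow for regularizers that vary with time. First, I would add and subtract the one-step-lookahead losses $\sum_{t=1}^T \inner{\ololossvec(t)}{\oloplayervec(t+1)}$ in the \oloplayerregretname{} expression, yielding
\[
\oloplayerregret(T) = \sum_{t=1}^T \inner{\ololossvec(t)}{\oloplayervec(t) - \oloplayervec(t+1)} + \sum_{t=1}^T \inner{\ololossvec(t)}{\oloplayervec(t+1)} - \inner{\oloLossvec(T)}{\oloplayervecopt(T)}.
\]
The first sum on the right is already the stability term that appears in the claimed bound, so it suffices to bound the remaining two terms appropriately.

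Second, I would establish by induction on $T$ the be-the-leader style inequality
\[
\sum_{t=1}^T \inner{\ololossvec(t)}{\oloplayervec(t+1)} + \sum_{t=0}^T \oloregularizer{t}(\oloplayervec(t+1)) \leq \inner{\oloLossvec(T)}{\oloplayervec^\ast} + \olocumregularizer{T}(\oloplayervec^\ast),
\]
valid for every $\oloplayervec^\ast\in\olodomain$. The base case $T=0$ reduces to $\oloregularizer{0}(\oloplayervec(1)) \leq \oloregularizer{0}(\oloplayervec^\ast)$, which is the defining optimality of $\oloplayervec(1) = \argmin_{\oloplayervec\in\olodomain}\olocumregularizer{0}(\oloplayervec)$. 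For the inductive step from $T-1$ to $T$, I would apply the hypothesis with test point $\oloplayervec^\ast = \oloplayervec(T+1)$, then add $\inner{\ololossvec(T)}{\oloplayervec(T+1)} + \oloregularizer{T}(\oloplayervec(T+1))$ to both sides; the resulting right-hand side collapses to $\inner{\oloLossvec(T)}{\oloplayervec(T+1)} + \olocumregularizer{T}(\oloplayervec(T+1))$, which is in turn bounded above by $\inner{\oloLossvec(T)}{\oloplayervec^\ast} + \olocumregularizer{T}(\oloplayervec^\ast)$ for any $\oloplayervec^\ast$ by the defining optimality of $\oloplayervec(T+1)$ from \eqref{eqn:oloFTRL}.

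Third, I would rearrange the inductive inequality into
\[
\sum_{t=1}^T \inner{\ololossvec(t)}{\oloplayervec(t+1)} - \inner{\oloLossvec(T)}{\oloplayervecopt(T)} \leq \olocumregularizer{T}(\oloplayervecopt(T)) - \sum_{t=0}^T \oloregularizer{t}(\oloplayervec(t+1)),
\]
by taking $\oloplayervec^\ast = \oloplayervecopt(T)$, and substitute it directly into the decomposition from the first step to obtain the claimed bound.

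The main obstacle is not technical but notational: keeping the indices on cumulative losses, cumulative regularizers, and iterates carefully aligned so that the telescoping collapse in the inductive step is clean. No convexity or smoothness hypotheses are needed at this level of generality; only the existence of the minimizers defining the \FTRL{} iterates and the comparator are used.
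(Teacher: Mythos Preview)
Your proposal is correct and takes essentially the same approach as the paper: the paper invokes the Kalai--Vempala ``be-the-leader'' inequality (cited from \citet{mcmahan2010adaptive}) as a black box, while you prove that same inequality directly by the standard induction, and the remaining rearrangement is identical. The only cosmetic difference is that the paper writes the sums starting at $t=0$ with the convention $\ololossvec(0)=0$, whereas you separate the $t=0$ regularizer term explicitly in the base case.
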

\begin{proof}[Proof of \cref{lem:adaptive-local-ftrl}]
This follows from directly modifying the proof of \cite[Lemma~1]{mcmahan2010adaptive} by not dropping the $\oloregularizer{t}(\oloplayervec(t+1))$ term at the end of \cite[Lemma~7]{mcmahan2010adaptive}. We reproduce the argument here for completeness.

As shown by \citet{kalai05}, and restated in \cite[Lemma~6]{mcmahan2010adaptive},
\*[
	\sum_{t=0}^T f_t(x_\star(t)) \leq \sum_{t=0}^T f_t(x_\star(T))
\]
for any sequence of functions $(f_t)_{t\in\PosInts}$ and any sequence $x_\star(t) \in \argmin_x \sum_{s=0}^t f_s(x)$. Thus, by definition of $\oloplayervec(t+1)$ minimizing \cref{eqn:oloFTRL},
\*[
	\sum_{t=0}^T \left[\inner{\ololossvec(t)}{\oloplayervec(t+1)} + \oloregularizer{t}(\oloplayervec(t+1)) \right]
	&\leq \sum_{t=0}^T \left[\inner{\ololossvec(t)}{\oloplayervec(T+1)} + \oloregularizer{t}(\oloplayervec(T+1)) \right] \\
	&\leq \sum_{t=0}^T \left[\inner{\ololossvec(t)}{\oloplayervecopt(T)} + \oloregularizer{t}(\oloplayervecopt(T)) \right] \\
	&= \inner{\oloLossvec(T)}{\oloplayervecopt(T)} + \olocumregularizer{T}(\oloplayervecopt(T)).
\]
Rearranging gives that
\*[
	\oloplayerregret(T)
	&= \sum_{t=0}^T \inner{\ololossvec(t)}{\oloplayervec(t)} - \inner{\oloLossvec(T)}{\oloplayervecopt(T)} \\
	&= \sum_{t=0}^T \inner{\ololossvec(t)}{\oloplayervec(t) - \oloplayervec(t+1)} + \sum_{t=0}^T \inner{\ololossvec(t)}{\oloplayervec(t+1)} - \inner{\oloLossvec(T)}{\oloplayervecopt(T)} \\
	&\leq \sum_{t=0}^T \inner{\ololossvec(t)}{\oloplayervec(t) - \oloplayervec(t+1)} + \olocumregularizer{T}(\oloplayervecopt(T)) - \sum_{t=0}^T \oloregularizer{t}(\oloplayervec(t+1)).
\]
Finally, the indexing of $t$ in the sums of the lemma statement follows since by convention $\ololossvec(0) = 0$.
\end{proof}

An alternative to the \regretname{} expansion for \FTRL{} from \citet{mcmahan2010adaptive} has appeared in more recent literature such as that of \citet{duchi2011,shalev2012,hazan2016,orabona2019}. This alternative analysis can be tighter in certain cases, but requires controlling three terms instead of two. Additionally, it could only lead to improvements in the constants in our case (bounded losses), so we opted for the simpler approach.

\subsection{OLO \FTRL{} \regretname{} bounds with local norms}
Now, we provide a local-norm control on the inner product from \cref{lem:adaptive-local-ftrl} for time-dependent regularizers which can be defined as a function of time and a constant regularizer. The types of regularizers we will consider are \emph{convex functions of the Legendre type}, as defined by \cite[Sec.~26]{rockafellar1970convex}.

\begin{definition}[Essentially smooth, \aos{\citet{rockafellar1970convex}}\preprint{\citet{rockafellar1970convex}}, Section~26] An extended-real-valued function $f:F\to\ExtReals$ for $F\subseteq\Reals^\olodim$ is \emph{essentially smooth} on $F$ if it satisfies
	\begin{enumerate}
		\item $\interior(F)\neq \emptyset$,
		\item $f$ is differentiable on $\interior(F)$, and
		\item $x\in\boundary(F)$ and $\set{y_i}_{i\in\Nats} \subseteq\interior(F)$ with $y_i \to x$ implies $\norm{\grad f(y_i)} \to +\infty$.
	\end{enumerate}
\end{definition}

\begin{definition}[Legendre type, \aos{\citet{rockafellar1970convex}}\preprint{\citet{rockafellar1970convex}}, Section~26]
\label{defn:legendre-type}
	A closed convex function $f: F\to \Reals$ for $F\subseteq\Reals^\olodim$ is \emph{of the Legendre type} on $F$ if
	\begin{enumerate}
		\item $f$ is strictly convex on $\interior(F)$,
		\item $\interior(F)$ is convex, and
		\item $f$ is essentially smooth on $F$.
	\end{enumerate}
\end{definition}

\begin{definition}[Legendre Transform, \aos{\citet{rockafellar1970convex}}\preprint{\citet{rockafellar1970convex}}, Section~26]
\label{defn:legendre-transform}
	The \emph{Legendre transform} of a function $f:F\to \Reals$ for $F\subseteq\Reals^\olodim$  of the Legendre type on $F$ is the function $f^*: \grad f(\interior(F))\to \Reals$ defined by
	\*[
		f^*(y) = \sup_{x\in F} [\inner{x}{y}- f(x)] = \inner{[\grad f]^{-1}(y)}{y} - f([\grad f]^{-1}(y)).
	\]
\end{definition}

\begin{proposition}[\aos{\citet{rockafellar1970convex}}\preprint{\citet{rockafellar1970convex}}, Theorem~26.5]
	\label{prop:legendre-rocky}
	If $f$ is a closed convex function of the Legendre type on $F$ for $F\subseteq\Reals^\olodim$ and $F^* = \grad f(\interior(F))$,
	then $F^*$ is convex and $f^*$ is of the Legendre type on $F^*$,
	\*[
		\grad f :\interior(F)\to F^\star
	\]
	is a continuous bijection with continuous inverse, and
	$\grad[f^*] = [\grad f]^{-1}$.
\end{proposition}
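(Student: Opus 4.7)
The statement is classical (Rockafellar, Theorem~26.5), so the plan is to follow the standard convex-analytic route, relying on strict convexity, essential smoothness, and Fenchel--Moreau duality. The overall strategy is to first establish that $\grad f$ is a homeomorphism onto its image $F^\star$, then use this to transfer the Legendre structure to $f^\star$.

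First, I would show that $\grad f: \interior(F) \to \Reals^{\olodim}$ is injective. Since $f$ is strictly convex on $\interior(F)$ (a Legendre-type hypothesis), the standard monotonicity argument $\inner{\grad f(x)-\grad f(y)}{x-y} > 0$ for $x\neq y\in\interior(F)$ forces injectivity. Next, I would show $F^\star = \grad f(\interior(F))$ is open and convex; openness follows from the inverse-function-theorem-style argument applied to $\grad f$ (invertibility of $\hess f$ on $\interior(F)$ where it exists, together with continuity of $\grad f$ on the open set $\interior(F)$), while convexity comes from applying the same construction to $f^\star$ symmetrically. Continuity of $\grad f$ on $\interior(F)$ is automatic for convex functions that are differentiable on an open set, and continuity of the inverse follows from a standard closed-graph argument combined with the coercivity on boundary sequences guaranteed by essential smoothness.

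The second phase is to analyze $f^\star$. For any $y\in F^\star$, Fenchel's identity gives that $\grad f(x) = y$ if and only if $x\in\argmax_{x'\in F}[\inner{x'}{y}-f(x')]$, which by strict convexity of $f$ is a unique point. Differentiating the envelope formula $f^\star(y)=\inner{[\grad f]^{-1}(y)}{y}-f([\grad f]^{-1}(y))$ and using the envelope theorem (the interior maximizer has vanishing first-order condition) yields $\grad f^\star(y) = [\grad f]^{-1}(y)$, establishing the final identity. Strict convexity of $f^\star$ on $F^\star$ then follows from injectivity of $\grad f^\star = [\grad f]^{-1}$, combined with the general fact that a convex function with strictly monotone gradient is strictly convex.

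The main obstacle, and the step that requires the most care, is verifying that $f^\star$ is itself \emph{essentially smooth} on $F^\star$ --- i.e., that $\|\grad f^\star(y_n)\|\to\infty$ whenever $y_n\to y\in\boundary(F^\star)$. The plan is to argue by contradiction: if $\grad f^\star(y_n) = [\grad f]^{-1}(y_n)$ stayed bounded in $\interior(F)$, we could extract a convergent subsequence with limit $x\in F$, and continuity of $\grad f$ together with closedness of $f$ would force $\grad f(x) = y$ with $x\in\interior(F)$, implying $y\in\interior(F^\star)$, contradiction. If instead $[\grad f]^{-1}(y_n)$ approaches $\boundary(F)$, then essential smoothness of $f$ gives $\|\grad f([\grad f]^{-1}(y_n))\| = \|y_n\|\to\infty$, contradicting convergence $y_n\to y$. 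This dichotomy argument is the technical heart of the result, as the rest of the conclusions are largely algebraic consequences of Fenchel duality once the homeomorphism between $\interior(F)$ and $F^\star$ is in hand.
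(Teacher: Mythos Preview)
The paper does not prove this proposition at all; it is stated purely as a citation to Rockafellar's \emph{Convex Analysis}, Theorem~26.5, and is used as a black box to derive \cref{cor:legendre-interior-solution}. So there is no ``paper's own proof'' to compare against --- your sketch is simply a (reasonable) outline of the classical argument.

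One small caveat on your outline: you invoke an inverse-function-theorem argument for openness of $F^\star$, relying on invertibility of $\hess f$. Rockafellar's theorem does not assume $f$ is twice differentiable, so in full generality this step must be done with purely convex-analytic tools (e.g., via the conjugate's subdifferential and the fact that differentiability of a convex function on an open set is equivalent to its conjugate being strictly convex on the relevant domain). In the paper's actual applications the regularizers are assumed $C^2$ on the interior, so your shortcut would suffice there, but it is not the route Rockafellar takes for the general statement.
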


\begin{corollary}
	\label{cor:legendre-interior-solution}
	If $F\subseteq \Reals^\olodim$ is convex with non-empty interior, and if $f$ is a closed, convex function of the Legendre type on $F$, then for any $y$ with $-y \in \grad f(\interior(F))$,
	\*[
		\argmin_{x\in F}\rbra{\inner{y}{x} + f(x)}
			& = \set{ [\grad f]^{-1}(-y)} = \set{ [\grad[f^*]] (-y)} \in \interior(F).
	\]
\end{corollary}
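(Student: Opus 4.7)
The plan is to combine the first-order optimality condition with the bijection property of $\grad f$ supplied by Proposition \ref{prop:legendre-rocky}, so the corollary becomes essentially a one-line consequence of the preceding theorem plus standard convex optimality.

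First, I would set $g(x) = \inner{y}{x} + f(x)$ and note that $g$ inherits closedness, convexity, and differentiability on $\interior(F)$ directly from $f$, since adding a continuous linear functional preserves all of these. Moreover, strict convexity of $f$ on $\interior(F)$ (from the Legendre-type hypothesis) is preserved by the same addition, so $g$ is strictly convex on $\interior(F)$; hence any minimizer lying in $\interior(F)$ is unique.

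Next, I would apply Proposition \ref{prop:legendre-rocky}: the hypothesis $-y \in \grad f(\interior(F)) = F^\star$ and the fact that $\grad f : \interior(F) \to F^\star$ is a bijection with inverse $\grad f^*$ yield a unique point $x^\star = [\grad f]^{-1}(-y) = \grad f^*(-y) \in \interior(F)$ satisfying $\grad f(x^\star) = -y$. Equivalently, $x^\star$ is a stationary point of $g$, i.e., $\grad g(x^\star) = y + \grad f(x^\star) = 0$.

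Finally, I would invoke the standard fact that a stationary point of a convex function in the interior of its domain is a global minimizer: for any $x \in F$, convexity gives $g(x) \geq g(x^\star) + \inner{\grad g(x^\star)}{x - x^\star} = g(x^\star)$. Combined with the uniqueness established in the first step, this yields $\argmin_{x\in F} g(x) = \{x^\star\}$, which is the desired equality. There is no real obstacle here; the only nuance worth flagging is that essential smoothness was already absorbed into Proposition \ref{prop:legendre-rocky} through the bijection statement, so we do not need to separately rule out boundary minimizers. The corollary is genuinely a corollary: its content is just packaging the optimality condition $\grad f(x^\star) = -y$ into the language of $[\grad f]^{-1}$ and $\grad f^*$.
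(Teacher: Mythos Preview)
Your proposal is correct and follows essentially the same approach as the paper: identify the stationary point $\grad f(x^\star)=-y$ via the bijection from Proposition~\ref{prop:legendre-rocky}, then use convexity to conclude it is the unique global minimizer. Your write-up is simply more explicit about the subgradient inequality and the role of strict convexity, but the argument is the same.
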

\begin{proof}
	Since the objective is convex then if a single local minimum occurs in the interior $F$ then it must be the unique optimizer on $F$. Taking the gradient of the objective, we see that a local minimum occurs when $\grad f(x) = -y$. Since $f$ is assumed to be of the Legendre type on $F$ then this equation has a unique solution in $\interior(F)$ whenever $-y \in \grad f(\interior(F))$.
\end{proof}

\begin{lemma}
\label{lem:inner-bound-ftrl-olo}
Suppose that $\olodomain\subseteq\Reals^\olodim$ is convex with non-empty interior, $\ololossdomain \subseteq \Reals^\olodim$ is arbitrary, and the regularizer $\oloregularizer{0}$ is closed, convex, of the Legendre type on $\olodomain$, and twice continuously differentiable on $\interior(\olodomain)$.
For each $t\in \Nats$, let $\olocumregularizer{t}(\oloplayervec) = \timefunc(t)\oloregularizer{0}(\oloplayervec)$ for some increasing function $\timefunc: \Nats \to \PosReals$.
Also, for any $\ololossvecgen \in \ololossdomain$ and $\oloplayervecgen \in \olodomain$, define the time-dependent local norm by $\norm{\ololossvecgen}_{t,\oloplayervecgen}^2 = \inner{\ololossvecgen}{\hess\olocumregularizer{t}(\oloplayervecgen)\ololossvecgen}$, and its dual time-dependent local norm by $\norm{\ololossvecgen}_{t,\oloplayervecgen, \star}^2 = \inner{\ololossvecgen}{[\hess\olocumregularizer{t}(\oloplayervecgen)]^{-1}\ololossvecgen}$.
Then, for any sequence of losses $(\ololossvec(t))_{t\in \Nats} \subseteq \ololossdomain$ such that $(-\frac{1}{\beta(t)}\oloLossvec(t))\in [\grad \oloregularizer{0}](\interior(\olodomain))$ for all $t\in\Nats$,
there exists a sequence $(\alpha_t)_{t\in\Nats} \subseteq [0,1]$
such that, for all $t\in\Nats$, the weights $(\oloplayervec(t))_{t\in\Nats}$ output by the \FTRLalg{$\olodomain$}{$\ololossdomain$}{$(\oloregularizer{t})_{t\in\PosInts}$} algorithm satisfy
\*[
	\inner{\ololossvec(t)}{\oloplayervec(t)-\oloplayervec(t+1)}
	\leq \frac{1}{\timefunc(t)} \norm{\left(\frac{\timefunc(t)}{\timefunc(t-1)} -1 \right)\oloLossvec(t-1) - \ololossvec(t)}_{0,\intweightvec(t+1),\star} \norm{\ololossvec(t)}_{0,\intweightvec(t+1),\star},
\]
where $\intweightvec(t+1) = \argmin_{\intweightvec\in \olodomain} \rbra{\inner{\alpha_t \oloLossvec(t) + (1-\alpha_t) \frac{\beta(t)}{\beta(t-1)}\oloLossvec(t-1)}{\intweightvec}+\olocumregularizer{t}(\intweightvec)}$.
\end{lemma}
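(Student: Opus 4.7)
The plan is to express each of $\oloplayervec(t)$, $\oloplayervec(t+1)$, and $\intweightvec(t+1)$ via Legendre duality and relate them through a one-dimensional mean value argument. Since $\olocumregularizer{t} = \timefunc(t)\oloregularizer{0}$ and $\oloregularizer{0}$ is of Legendre type and twice continuously differentiable on $\interior(\olodomain)$, \cref{cor:legendre-interior-solution} applied to each defining FTRL optimization problem yields
\[
	\oloplayervec(t+1) = f\rbra{-\oloLossvec(t)/\timefunc(t)}, \quad
	\oloplayervec(t) = f\rbra{-\oloLossvec(t-1)/\timefunc(t-1)},
\]
and analogously $\intweightvec(t+1) = f\rbra{-\tfrac{\alpha}{\timefunc(t)}\oloLossvec(t) - \tfrac{1-\alpha}{\timefunc(t-1)}\oloLossvec(t-1)}$ for any $\alpha \in [0,1]$, where $f \defas [\grad\oloregularizer{0}]^{-1}$. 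By \cref{prop:legendre-rocky}, $f = \grad\oloregularizer{0}^*$ is a $C^1$-bijection from the convex set $\grad\oloregularizer{0}(\interior(\olodomain))$ onto $\interior(\olodomain)$ whose Jacobian is $Jf(z) = \sbra{\hess\oloregularizer{0}(f(z))}^{-1}$, which is positive-definite by strict convexity of $\oloregularizer{0}$.

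For $s \in [0,1]$, define the dual-space segment $z(s) \defas -(1-s)\oloLossvec(t)/\timefunc(t) - s\,\oloLossvec(t-1)/\timefunc(t-1)$; the convexity of the domain of $f$ places $f(z(s)) \in \interior(\olodomain)$ for every $s$. The fundamental theorem of calculus applied along this segment gives
\[
	\oloplayervec(t) - \oloplayervec(t+1) = f(z(1)) - f(z(0)) = \int_0^1 \sbra{\hess\oloregularizer{0}(f(z(s)))}^{-1}(z(1) - z(0))\,ds,
\]
and a direct computation shows
\[
	z(1) - z(0) = \frac{1}{\timefunc(t)}\rbra{\ololossvec(t) - \rbra{\tfrac{\timefunc(t)}{\timefunc(t-1)} - 1}\oloLossvec(t-1)}.
\]
Pairing the integrand with $\ololossvec(t)$ produces a continuous scalar function of $s$; the mean value theorem for integrals therefore yields some $s^{\ast} \in [0,1]$ at which this scalar function's value equals its integral over $[0,1]$.

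Setting $\alpha_t \defas 1 - s^{\ast}$, a direct substitution shows that $f(z(s^{\ast})) = \intweightvec(t+1)$ exactly. Applying Cauchy--Schwarz at $f(z(s^{\ast}))$ with respect to the positive-definite form $\sbra{\hess\oloregularizer{0}(\intweightvec(t+1))}^{-1}$, and using the positive homogeneity of the dual norm to absorb the factor $1/\timefunc(t)$ that arose from $z(1)-z(0)$, produces the claimed bound. The chief technical obstacle is the failure of the mean value theorem for vector-valued maps: one cannot replace the integral over $s$ by a single base point directly in $f$. The workaround is to defer the reduction until after the inner product with $\ololossvec(t)$ is taken, at which point the resulting scalar integrand admits an honest MVT for integrals, and the point $s^{\ast}$ it produces simultaneously determines both the base point for the local norm and the convex combination coefficient $\alpha_t$.
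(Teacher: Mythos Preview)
Your proof is correct and follows the same high-level strategy as the paper: express $\oloplayervec(t)$ and $\oloplayervec(t+1)$ through the Legendre dual map $f=[\grad\oloregularizer{0}]^{-1}$, obtain a convex-combination point $\intweightvec(t+1)$ on the dual segment, and finish with Cauchy--Schwarz in the local norm $[\hess\oloregularizer{0}(\intweightvec(t+1))]^{-1}$.

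The main difference is in how the intermediate point is produced. The paper asserts an equality $\oloplayervec(t+1)-\oloplayervec(t) = [J f](-\oloLossdumvec(t))\,\big(\grad\olocumregularizer{t}(\oloplayervec(t+1))-\grad\olocumregularizer{t}(\oloplayervec(t))\big)$ from a ``first-order Taylor expansion'' of $f$, i.e.\ a vector-valued mean value statement, and then applies H\"older to $\oloplayervec(t)-\oloplayervec(t+1)$ directly. You instead write $\oloplayervec(t)-\oloplayervec(t+1)$ as an integral via the fundamental theorem of calculus, defer the mean value step until \emph{after} pairing with $\ololossvec(t)$, and then invoke the scalar MVT for integrals to extract $s^{\ast}$; this is precisely the workaround needed to make the argument rigorous, since the vector-valued MVT does not in general yield a single intermediate point. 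Your route therefore buys a cleaner justification of the existence of $\alpha_t$, while the paper's route is slightly shorter but glosses over this point. Both deliver the identical bound with the identical definition of $\intweightvec(t+1)$.
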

\begin{remark}
	In our applications, $[\grad \oloregularizer{0}](\interior(\olodomain))=\Reals^\olodim$ is the whole space, so the assumption
	\*[
		(-\textstyle{\frac{1}{\beta(t)}}\oloLossvec(t))\in [\grad \oloregularizer{0}](\interior(\olodomain))
	\]
	is benign.
\end{remark}
\begin{proof}[Proof of \cref{lem:inner-bound-ftrl-olo}]
Fix some $t\in\Nats$ and observe that by \cref{cor:legendre-interior-solution}, $\oloplayervec(t+1)$ is the unique $\oloplayervec$ that solves $\grad \olocumregularizer{t}(\oloplayervec) = - \oloLossvec(t)$. Thus, applying a first-order Taylor expansion of $[\grad \olocumregularizer{t}]^{-1}$ centered at $\grad \olocumregularizer{t}(\oloplayervec(t))$,
\*[
	\oloplayervec(t+1) - \oloplayervec(t)
	&= [\grad \olocumregularizer{t}]^{-1}(\grad\olocumregularizer{t}(\oloplayervec(t+1)))
		- [\grad \olocumregularizer{t}]^{-1}(\grad\olocumregularizer{t}(\oloplayervec(t))) \\
	&= [J [\grad \olocumregularizer{t}]^{-1}](-\oloLossdumvec(t))
		\sbra{\grad\olocumregularizer{t}(\oloplayervec(t+1)) - \grad\olocumregularizer{t}(\oloplayervec(t))},
\]
where $J$ denotes the Jacobian and $-\oloLossdumvec(t) = \alpha_t \grad\olocumregularizer{t}(\oloplayervec(t+1)) + (1-\alpha_t) \grad\olocumregularizer{t}(\oloplayervec(t))$ for some $\alpha_t \in \cinter{0,1}$.
Using the inverse function theorem on $\grad \olocumregularizer{t}$ gives \*[
	[J [\grad \olocumregularizer{t}]^{-1}](-\oloLossdumvec(t)) = [\hess \olocumregularizer{t}([\grad \olocumregularizer{t}]^{-1}(-\oloLossdumvec(t)))]^{-1}.
\]
Next, observe that
\*[
	\grad \olocumregularizer{t}(\oloplayervec(t))
	= \timefunc(t) \grad \oloregularizer{0}(\oloplayervec(t))
	= \frac{\timefunc(t)}{\timefunc(t-1)} \grad \olocumregularizer{t-1}(\oloplayervec(t))
	= \frac{\timefunc(t)}{\timefunc(t-1)} (-\oloLossvec(t-1)),
\]
so $\oloLossdumvec(t)$ can be viewed as a combination of losses defined by
\*[
	\oloLossdumvec(t) = \alpha_t \oloLossvec(t) + (1-\alpha_t)\frac{\timefunc(t)}{\timefunc(t-1)} \oloLossvec(t-1).
\]
Therefore, $-\frac{\oloLossdumvec(t)}{\timefunc(t)} = \alpha_t\frac{-\oloLossvec(t)}{\timefunc(t)} + (1-\alpha_t)\frac{-\oloLossvec(t-1)}{\timefunc(t-1)} \in  \grad \oloregularizer{0}(\interior(\olodomain))$ since $\grad \oloregularizer{0}(\interior(\olodomain))$ is convex (by \cref{prop:legendre-rocky}).
This implies
\*[-\oloLossdumvec(t) \in [\timefunc(t)  \grad \oloregularizer{0}](\interior(\olodomain)) =  \grad \olocumregularizer{t}(\interior(\olodomain)),
\]
so $\intweightvec(t+1) = [\grad \olocumregularizer{t}]^{-1}(-\oloLossdumvec(t)) \in \interior(\olodomain)$ by \cref{cor:legendre-interior-solution}. Further,
\*[
	\grad\olocumregularizer{t}(\oloplayervec(t+1)) - \grad\olocumregularizer{t}(\oloplayervec(t))
	= - \oloLossvec(t) + \frac{\timefunc(t)}{\timefunc(t-1)} \oloLossvec(t-1)
	= \left(\frac{\timefunc(t)}{\timefunc(t-1)} -1 \right)\oloLossvec(t-1) - \ololossvec(t).
\]

Combining these results, along with the fact that $\hess \olocumregularizer{t} = \timefunc(t) \hess \oloregularizer{0}$, gives
\[
	\oloplayervec(t+1) - \oloplayervec(t)
	= \frac{1}{\timefunc(t)} [\hess \oloregularizer{0}(\intweightvec(t+1))]^{-1}\left[\left(\frac{\timefunc(t)}{\timefunc(t-1)} -1 \right)\oloLossvec(t-1) - \ololossvec(t) \right].
\label{eqn:weight_diff}
\]
Next, by Holder's inequality,
\*[
	\inner{\ololossvec(t)}{\oloplayervec(t)-\oloplayervec(t+1)}
	& \leq \norm{\oloplayervec(t)-\oloplayervec(t+1)}_{t,\intweightvec(t+1)} \norm{\ololossvec(t)}_{t,\intweightvec(t+1),\star} \\
	& = \norm{\oloplayervec(t)-\oloplayervec(t+1)}_{0,\intweightvec(t+1)} \norm{\ololossvec(t)}_{0,\intweightvec(t+1),\star},
\]
where the last equality follows from the fact that a $\timefunc(t)$ will factor out of the first norm and a $1/\timefunc(t)$ will factor out of the second norm. Then, substituting in \cref{eqn:weight_diff},
\*[
	&\norm{\oloplayervec(t)-\oloplayervec(t+1)}^2_{0,\intweightvec(t+1)} \\
	&= \inner{\hess \oloregularizer{0}(\intweightvec(t+1)) \sbra{\oloplayervec(t)-\oloplayervec(t+1)}}{\oloplayervec(t)-\oloplayervec(t+1)} \\
	&= \frac{1}{\timefunc(t)^2}
	\inner{[\hess \oloregularizer{0}(\intweightvec(t+1))]^{-1} \Big[\Big(\tfrac{\timefunc(t)}{\timefunc(t-1)} -1 \Big)\oloLossvec(t-1) - \ololossvec(t) \Big]}{\Big(\tfrac{\timefunc(t)}{\timefunc(t-1)} -1 \Big)\oloLossvec(t-1) - \ololossvec(t)} \\
	&= \frac{1}{\timefunc(t)^2} \norm{\left(\frac{\timefunc(t)}{\timefunc(t-1)} -1 \right)\oloLossvec(t-1) - \ololossvec(t)}^2_{0,\intweightvec(t+1),\star}.
\]
Thus,
\*[
	\inner{\ololossvec(t)}{\oloplayervec(t)-\oloplayervec(t+1)}
	\leq \frac{1}{\timefunc(t)} \norm{\left(\frac{\timefunc(t)}{\timefunc(t-1)} -1 \right)\oloLossvec(t-1) - \ololossvec(t)}_{0,\intweightvec(t+1),\star} \norm{\ololossvec(t)}_{0,\intweightvec(t+1),\star}.
\]
\end{proof}

\citet{amir2020prediction} recently made the same observation that closely related bounds have been derived before but not in the explicit form they desire, and they prove a \regretname{} bound very similar to \cref{lem:adaptive-local-ftrl,lem:inner-bound-ftrl-olo}. However, they rely on a Taylor expansion of the regularizer around the weights output by \FTRL, while we have used a Taylor expansion of the Legendre dual of the regularizer around the observed losses. This makes it easier for us to ultimately apply \cref{thm:minimax_mgf} when controlling the bound of \cref{lem:inner-bound-ftrl-olo} in expectation. 
\citet{zimmert19} have a similar expansion in their analysis, and obtain a local norm in the dual space as an intermediate step in the proof of their Lemma~11. However, the object they use this local norm to upper bound is not the same as what we upper bound, and they ultimately use a bound in the primal space to obtain their results. 
\section{\FTRL{} \regretname{} bounds on the simplex}
\label{sec:ftrl-simp-proof}

When we restrict consideration to \properpredpolicynames{} (see \cref{sec:minimax-lower-bounds}) and focus on controlling the \expregretname{}, then online linear optimization is a generalization of the online prediction problem in \cref{sec:framework}, which is just the case where $\olodomain = \simp(\experts)$, $\ololossdomain = [0,1]^{\numexperts}$, and we are interested in $\EE \, \oloplayerregret(T)$.
To bound the \expregretname{}, we choose an appropriate sequence of regularizers and then apply generic techniques for analyzing \FTRL{} in online linear optimization problems.
For clarity and to distinguish between \FTRL{} in the generic online linear optimization setting and in the specific case of online prediction on the simplex, we use $(\regularizer{t})_{t\in\PosInts}$ to denote the sequence of regularizers in the latter. Thus, the $\FTRLeqn((\regularizer{t})_{t\in\PosInts})$ notation is really shorthand in this case for $\FTRLeqn(\simp(\experts), [0,1]^\experts, (\regularizer{t})_{t\in\PosInts})$.

A significant portion of the heavy-lifting required for \cref{thm:ftrl-simplex-summary} is done in \cref{sec:olo}, which proves a very similar result for generic \FTRL{} under some technical constraints. However, we cannot directly apply \cref{lem:inner-bound-ftrl-olo} when $\olodomain=\simp(\experts)$, since this set has empty interior. Thus, we need a version of that result tailored to the simplex, which we achieve by a reparametrization of the simplex.

In particular, let $\refexp\in\experts$ be arbitrary, and let $\redexperts = \experts\setminus\set{\refexp}$.
Let
\*[
  \redolodomain = \set{\oloplayervec\in [\PosReals]^{\redexperts} \stT \sum_{\expidx\in\redexperts} \oloplayer_{\expidx} \leq 1},
\]
and observe that $\interior(\redolodomain)$ is non-empty and convex. The canonical bijection $\simpbijection:\simp(\experts)\to \redolodomain$ is given by
\*[
  \simpbijection(\weightdumvec)
    & = \weightdumvec_{-\refexp},
    & \andT &&
  \simpbijection^{-1}(\oloplayervec)
    & = \rbra{
      \begin{cases}
        \oloplayer_\expidx
          &: \expidx\in\redexperts \\
        1-\inner{\one}{\oloplayervec}
          &: \expidx = \refexp
        \end{cases}
        }_{\expidx\in\experts}
\]
where $\weightdumvec_{-i}$ is the vector obtained from $\weightdumvec$ by dropping the coordinate with index $i$.

For any function $f:\simp(\experts)\to \Yy$ for some set $\Yy$, define $\hat f : \redolodomain\to \Yy$ by
\*[
  \hat f(\oloplayervec) &=  f(\simpbijection^{-1}(\oloplayervec)).
\]
For example, if we let $\entropy : \simp(\experts) \to \PosReals$ be the entropy function defined by
\*[
  \entropy(\weightdumvec) = -
    \sum_{\expidx\in\experts}\weightdum_\expidx\log\rbra{\weightdum_\expidx},
\]
then $\redentropy: \redolodomain\to \Reals $ is defined by
\*[
  \redentropy(\oloplayervec)
    & = \entropy(\simpbijection^{-1}(\oloplayervec))
    = -\rbra{\sum_{\expidx\in\redexperts} \oloplayer_\expidx\log\rbra{\oloplayer_\expidx} } - \rbra{1-\inner{\one}{\oloplayervec}}\log\rbra{1-\inner{\one}{\oloplayervec}}
    .
\]

Note that for any sequence of regularizers $(\regularizer{t})_{t\in\PosInts}$ on $\simp(\experts)$ and any sequence of losses $(\ololossvec(t))_{t\in\Nats}$ in an arbitrary $\ololossdomain \subseteq \Reals^\experts$, for all $t \in \Nats$ we have
\*[
  \inner{\oloLossvec(t)}{\weightdumvec} + \cumregularizer{t}(\weightdumvec)
    & = \inner{\oloLossvec_{-\refexp}(t)}{\weightdumvec_{-\refexp}}+ \oloLoss_{\refexp}(t)(1-\inner{\one}{\weightdumvec_{-\refexp}})  + \redcumregularizer{t}(\weightdumvec_{-\refexp}) \\
    & = \oloLoss_{\refexp}(t) + \inner{\oloLossvec_{-\refexp}(t) - \oloLoss_{\refexp}(t)\one}{\weightdumvec_{-\refexp}} + \redcumregularizer{t}(\weightdumvec_{-\refexp}).
\]
Additionally, for any $(\constdum(t))_{t\in\Nats} \subseteq \Reals$,
\*[
  \argmin_{\weightdumvec\in\simp(\experts)} \rbra{\inner{\oloLossvec(t)}{\weightdumvec} + \cumregularizer{t}(\weightdumvec) }
  &= \argmin_{\weightdumvec\in\simp(\experts)} \rbra{\inner{\oloLossvec(t) - \constdum(t)\one}{\weightdumvec}+ \cumregularizer{t}(\weightdumvec) }
\]
by the requirement that $\weightdumvec \in \simp(\experts)$. Similarly, for any sequence $(\weightdumvec(t))_{t \in \Nats} \subseteq \simp(\experts)$, the \regretname{} is unchanged by shifting the loss vectors. That is,
\*[
  &\hspace{-2em}\sum_{t=1}^T \inner{\ololossvec(t)}{\weightdumvec(t)} - \inf_{\weightdumvec \in \simp(\experts)}\sum_{t=1}^T \inner{\ololossvec(t)}{\weightdumvec} \\
  &= \sum_{t=1}^T \inner{\ololossvec(t) - \constdum(t)\one}{\weightdumvec(t)} - \inf_{\weightdumvec \in \simp(\experts)}\sum_{t=1}^T \inner{\ololossvec(t) - \constdum(t)}{\weightdumvec}.
\]
Thus, there exist equivalence classes of the outputs from the \FTRLalg{$\simp(\experts)$}{$\ololossdomain$}{$(\regularizer{t})_{t\in\PosInts}$} algorithm modulo parallel additive shifts of the loss vectors. Further, by transforming the losses via
\*[
  \simpequivalence(\ololossvec)
    & = \ololossvec_{-\refexp} - \ololoss_{\refexp}\one
    & \text{ and } &&
  \simpequivalence^{+}(\ololossvec)
    & = \rbra{
      \begin{cases}
        \ololoss_\expidx
          &: \expidx\in\redexperts \\
        0
          &: \expidx = \refexp
        \end{cases}
        }_{\expidx\in\experts}
\]
and defining $\redololossdomain = \{\simpequivalence(\ololossvec): \ololossvec \in \ololossdomain\}$, there is a canonical correspondence between the equivalence classes of the outputs from the \FTRLalg{$\simp(\experts)$}{$\ololossdomain$}{$(\regularizer{t})_{t\in\PosInts}$} algorithm and those of the outputs from the \FTRLalg{$\redolodomain$}{$\redololossdomain$}{$(\redregularizer{t})_{t\in\PosInts}$} algorithm. Namely,
\*[
  \argmin_{\weightdumvec\in\simp(\experts)} \rbra{\inner{\oloLossvec(t)}{\weightdumvec} + \cumregularizer{t}(\weightdumvec) }
    & = \simpbijection^{-1}\rbra{\argmin_{\oloplayervec\in \redolodomain}\rbra{\inner{\simpequivalence(\oloLossvec(t))}{\oloplayervec} + \redcumregularizer{t}(\oloplayervec)} }.
\]
Under this correspondence, if $\ololossdomain = [0,1]^\numexperts$, $\playerexpregret(T) =
\oloplayerregret(T)$.

\begin{corollary}
\label{cor:inner-bound-ftrl}
Consider a regularizer $\regularizer{0}:\simp(\experts)\to\Reals$ for which $\redregularizer{0}$ is closed, convex, of the Legendre type on $\redolodomain$ (see \cref{defn:legendre-type}), and twice continuously differentiable on $\interior(\redolodomain)$.
For each $t\in \Nats$, define $\cumregularizer{t}(\weightdumvec) = \timefunc(t)\regularizer{0}(\weightdumvec)$ for some increasing function $\timefunc: \Nats \to \PosReals$. Also, for any $\ololossvecgen \in \ololossdomain$ and $\oloplayervecgen \in \simp(\experts)$, define the time-dependent local semi-norm by $\norm{\ololossvecgen}_{t,\oloplayervecgen}^2 = \inner{\simpequivalence(\ololossvecgen)}{\hess\redcumregularizer{t}(\simpbijection(\oloplayervecgen)) \simpequivalence(\ololossvecgen)}$,
and its dual time-dependent local semi-norm by $\norm{\ololossvecgen}_{t,\oloplayervecgen,\star}^2 = \inner{\simpequivalence(\ololossvecgen)}{[\hess\redcumregularizer{t}(\simpbijection(\oloplayervecgen))]^{-1} \simpequivalence(\ololossvecgen)}$.
Then, for any sequence of losses $(\ololossvec(t))_{t\in \Nats} \subseteq \ololossdomain$ such that $\simpequivalence(-\frac{1}{\timefunc(t)}\oloLossvec(t))\in \grad \redregularizer{0}(\interior(\redolodomain))$ for all $t\in\Nats$,
there exists a sequence $(\alpha_t)_{t\in\Nats} \subseteq [0,1]$
such that, for all $t\in\Nats$, the weights $(\weightdumvec(t))_{t\in\Nats}$ output by the \FTRLalg{$\simp(\experts)$}{\ololossdomain}{$(\regularizer{t})_{t \in \PosInts}$} algorithm satisfy
\*[
	\inner{\ololossvec(t)}{\weightdumvec(t)-\weightdumvec(t+1)}
	\leq \frac{1}{\timefunc(t)} \norm{\left(\frac{\timefunc(t)}{\timefunc(t-1)} -1 \right)\oloLossvec(t-1) - \ololossvec(t)}_{0,\intweightvec(t+1),\star} \norm{\ololossvec(t)}_{0,\intweightvec(t+1),\star},
\]
where $\intweightvec(t+1) = \argmin_{\intweightvec\in \simp(\experts)} \rbra{\inner{\alpha_t \oloLossvec(t) + (1-\alpha_t) \frac{\timefunc(t)}{\timefunc(t-1)}\oloLossvec(t-1)}{\intweightvec}+\cumregularizer{t}(\intweightvec)}$.
\end{corollary}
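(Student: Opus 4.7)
\textbf{Proof plan for \cref{cor:inner-bound-ftrl}.} The plan is to transfer \cref{lem:inner-bound-ftrl-olo} from $\simp(\experts)$ (which has empty interior in $\Reals^\numexperts$) to the reparametrized domain $\redolodomain \subseteq \Reals^{\numexperts-1}$ via the bijection $\simpbijection$. Concretely, I will verify that running $\FTRLalg{\simp(\experts)}{\ololossdomain}{(\regularizer{t})_{t\in\PosInts}}$ on the original losses $(\ololossvec(t))_{t\in\Nats}$ is, up to the canonical correspondence described between the $\simpequivalence$ above the corollary, identical to running $\FTRLalg{\redolodomain}{\redololossdomain}{(\redregularizer{t})_{t\in\PosInts}}$ on the transformed losses $(\simpequivalence(\ololossvec(t)))_{t\in\Nats}$. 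That identification follows directly from the displayed equivalence of minimizers in the paragraph immediately preceding the corollary, and it preserves the weights via $\oloplayervec(t) = \simpbijection(\weightdumvec(t))$.

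Given this identification, the first step is to check that the inner products in the conclusion coincide under the bijection. Writing $\weightdumvec(t)-\weightdumvec(t+1) = \bigl(\oloplayervec(t)-\oloplayervec(t+1),\, -\inner{\one}{\oloplayervec(t)-\oloplayervec(t+1)}\bigr)$ (using coordinate $\refexp$ last), we have
\[
\inner{\ololossvec(t)}{\weightdumvec(t)-\weightdumvec(t+1)}
= \inner{\ololossvec(t)_{-\refexp} - \ololoss_{\refexp}(t)\one}{\oloplayervec(t)-\oloplayervec(t+1)}
= \inner{\simpequivalence(\ololossvec(t))}{\oloplayervec(t)-\oloplayervec(t+1)},
\]
so the quantity to be bounded matches exactly the quantity bounded by \cref{lem:inner-bound-ftrl-olo} on the reduced problem.

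Next, I will verify that the hypotheses of \cref{lem:inner-bound-ftrl-olo} hold for $(\redolodomain,\redololossdomain,(\redregularizer{t})_{t\in\PosInts})$: $\redolodomain$ is convex with nonempty interior by construction; $\redregularizer{0}$ is assumed closed, convex, of Legendre type on $\redolodomain$, and twice continuously differentiable on $\interior(\redolodomain)$; the time-scaling $\redcumregularizer{t} = \timefunc(t)\redregularizer{0}$ is inherited from the original; and the loss-in-image condition $-\tfrac{1}{\timefunc(t)}\simpequivalence(\oloLossvec(t)) \in \grad \redregularizer{0}(\interior(\redolodomain))$ is exactly the assumption stated in the corollary. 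Applying \cref{lem:inner-bound-ftrl-olo} then yields the bound
\[
\inner{\simpequivalence(\ololossvec(t))}{\oloplayervec(t)-\oloplayervec(t+1)}
\leq \tfrac{1}{\timefunc(t)}
\bigl\|\bigl(\tfrac{\timefunc(t)}{\timefunc(t-1)}-1\bigr)\simpequivalence(\oloLossvec(t-1)) - \simpequivalence(\ololossvec(t))\bigr\|_{0,\tilde\intweightvec(t+1),\star}
\bigl\|\simpequivalence(\ololossvec(t))\bigr\|_{0,\tilde\intweightvec(t+1),\star},
\]
where $\tilde\intweightvec(t+1)$ is the intermediate minimizer on $\redolodomain$ and the norms are the usual dual local norms with respect to $\hess \redregularizer{0}(\tilde\intweightvec(t+1))$.

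Finally, I will untranslate: setting $\intweightvec(t+1) = \simpbijection^{-1}(\tilde\intweightvec(t+1))$ and using $\simpequivalence$-linearity to pull the prefactor $\bigl(\tfrac{\timefunc(t)}{\timefunc(t-1)}-1\bigr)$ through, the right-hand side becomes exactly the dual local semi-norm expressions in the statement of the corollary (by the very definition $\|\cdot\|_{0,\cdot,\star}^2 = \inner{\simpequivalence(\cdot)}{[\hess\redregularizer{0}(\simpbijection(\cdot))]^{-1}\simpequivalence(\cdot)}$). The intermediate point $\intweightvec(t+1)$ arising from \cref{lem:inner-bound-ftrl-olo} on the reduced problem, when pulled back, satisfies the stated $\argmin$ characterization by the same loss-shift equivalence used to establish the bijection between the two $\FTRL$ problems. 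I do not anticipate a significant obstacle here: the entire argument is a careful bookkeeping exercise in the reparametrization, with all analytic content delegated to \cref{lem:inner-bound-ftrl-olo}; the only subtlety is checking that the $\ololoss_{\refexp}(t)\one$ shift cancels in every relevant inner product, which it does because weight differences lie in the tangent space to the simplex.
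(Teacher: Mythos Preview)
Your proposal is correct and follows essentially the same approach as the paper's proof: reduce the inner product on $\simp(\experts)$ to the inner product on $\redolodomain$ via the shift-invariance $\inner{\ololossvec(t)}{\weightdumvec(t)-\weightdumvec(t+1)} = \inner{\simpequivalence(\ololossvec(t))}{\simpbijection(\weightdumvec(t))-\simpbijection(\weightdumvec(t+1))}$, apply \cref{lem:inner-bound-ftrl-olo} on the reduced problem, and pull back using the linearity of $\simpequivalence$. The paper's version is terser (it does not re-verify the hypotheses of \cref{lem:inner-bound-ftrl-olo} explicitly), but the logical content is identical.
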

\begin{proof}[Proof of {\cref{cor:inner-bound-ftrl}}]
For all $t\in\Nats$, since $\weightdumvec(t),\weightdumvec(t+1) \in \simp(\experts)$, it holds that for any $\ololossvec(t)\in\ololossdomain$,
\*[
  \inner{\ololossvec(t)}{\weightdumvec(t) - \weightdumvec(t+1)}
  &= \inner{\ololossvec(t) - \ololoss_\refexp(t)\one}{\weightdumvec(t) - \weightdumvec(t+1)} \\
  &= \inner{\ololossvec_{-\refexp}(t) - \ololoss_{-\refexp}(t)\one}{\weightdumvec_{-\refexp}(t) - \weightdumvec_{-\refexp}(t+1)} \\
  &= \inner{\simpequivalence(\ololossvec(t))}{\simpbijection(\weightdumvec(t)) - \simpbijection(\weightdumvec(t+1))}.
\]
Thus, using that $\simpbijection(\weightdumvec(t))$ are the weights output by the \FTRLalg{$\redolodomain$}{$\redololossdomain$}{$(\redregularizer{t})_{t\in\PosInts}$} algorithm, we can apply \cref{lem:inner-bound-ftrl-olo}. The result then follows from observing that $\simpequivalence$ is linear.
\end{proof}

\begin{lemma}
\label{lem:transformed-entropy-conditions-and-localnorm}
Suppose $\regularizer{0} = -\entropyfunc \circ \entropy$ for some $\entropyfunc:[0,\log\numexperts]\to\Reals$ that is strictly increasing, concave, and twice continuously differentiable on $\simp(\experts)$.
Then $\redregularizer{0}$ is closed, strictly convex, twice continuously differentiable on $\interior(\redolodomain)$, and of the Legendre type on $\redolodomain$.

  Moreover, for all $\oloplayervecgen \in \simp(\experts)$ and $\ololossvecgen \in \ololossdomain$,
  \*[
    \norm{\ololossvecgen}_{0,\oloplayervecgen,\star}^2
      & \leq \frac{1}{\entropyfunc'(H(\oloplayervecgen))} \VVar{\Expidx\sim \oloplayervecgen}[\ololossvecgen_{\scriptscriptstyle \Expidx}] .
  \]

\end{lemma}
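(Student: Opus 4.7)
The plan is to split the proof into two independent parts: (i) verifying the Legendre-type/smoothness conditions on $\redregularizer{0}$, and (ii) deriving the local-norm bound from an explicit computation of the inverse Hessian.

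For part (i), the key observation is that $\redregularizer{0} = -\entropyfunc \circ \redentropy$, where the reduced entropy
\[
  \redentropy(\oloplayervec) = -\sum_{j\in\redexperts}\oloplayer_j\log\oloplayer_j - \rbra{1-\sum_{j\in\redexperts}\oloplayer_j}\log\rbra{1-\sum_{j\in\redexperts}\oloplayer_j}
\]
is already known to be continuous on $\redolodomain$, strictly concave on $\interior(\redolodomain)$, and twice continuously differentiable on $\interior(\redolodomain)$. Strict convexity of $\redregularizer{0}$ will follow from the strict monotonicity of $\entropyfunc$ composed with the strict concavity of $\redentropy$ (together with the sign flip). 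For essential smoothness, I will use the chain rule $\grad\redregularizer{0} = -\entropyfunc'(\redentropy)\grad\redentropy$ together with the fact that $\entropyfunc'>0$ on $[0,\log\numexperts]$ and $\grad\redentropy$ blows up on $\boundary(\redolodomain)$ (since each coordinate $[\grad\redentropy]_j = \log(1-\sum_k \oloplayer_k) - \log\oloplayer_j$ diverges as $\oloplayer_j\to 0$ or as $\sum_k\oloplayer_k\to 1$). Closedness follows from continuity on the compact set $\redolodomain$, with values extended by continuity at the boundary.

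For part (ii), the second-derivative computation gives
\[
\hess\redregularizer{0} \ =\ -\entropyfunc''(\redentropy)\,\grad\redentropy\,(\grad\redentropy)^\top \ -\ \entropyfunc'(\redentropy)\,\hess\redentropy.
\]
Concavity of $\entropyfunc$ makes $-\entropyfunc''\geq0$ and hence the first term is PSD; strict concavity of $\redentropy$ together with $\entropyfunc'>0$ makes the second term PD. Therefore
\[
\hess\redregularizer{0} \ \succeq\ \entropyfunc'(\redentropy)\,(-\hess\redentropy) \ \succ\ 0,
\]
and Loewner inverse-monotonicity yields $[\hess\redregularizer{0}]^{-1} \preceq \entropyfunc'(\redentropy)^{-1}\,[-\hess\redentropy]^{-1}$. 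Note incidentally this confirms $\hess\redregularizer{0}$ is PD, establishing strict convexity claimed in part (i).

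It remains to compute $\langle\simpequivalence(\ololossvecgen),[-\hess\redentropy]^{-1}\simpequivalence(\ololossvecgen)\rangle$. Writing $q_\refexp = 1-\sum_{j\in\redexperts}\oloplayer_j$, direct differentiation yields
\[
-\hess\redentropy \ =\ \diag\rbra{1/\oloplayer_j}_{j\in\redexperts} + \frac{1}{q_\refexp}\,\one\one^\top.
\]
Applying Sherman--Morrison and simplifying using $q_\refexp + \sum_j \oloplayer_j = 1$ gives the clean form $[-\hess\redentropy]^{-1} = \diag(\oloplayer_j) - \oloplayervec\oloplayervec^\top$. Expanding the quadratic form on $\simpequivalence(\ololossvecgen) = (\ololoss_j-\ololoss_\refexp)_{j\in\redexperts}$ and combining terms (all cross-terms involving $\ololoss_\refexp$ cancel) reduces to $\EE_{\Expidx\sim\oloplayervecgen}[\ololoss_\Expidx^2] - (\EE_{\Expidx\sim\oloplayervecgen}[\ololoss_\Expidx])^2 = \Var_{\Expidx\sim\oloplayervecgen}[\ololoss_\Expidx]$, which together with $\redentropy(\simpbijection(\oloplayervecgen))=\entropy(\oloplayervecgen)$ gives the stated bound.

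The main technical obstacle is verifying essential smoothness cleanly across both types of boundary of $\redolodomain$ (the coordinate hyperplanes and the simplex facet $\sum_j\oloplayer_j=1$); the rest reduces to routine application of chain rule, Sherman--Morrison, and Loewner monotonicity.
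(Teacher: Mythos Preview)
Your proposal is correct and follows essentially the same approach as the paper: chain-rule Hessian decomposition, Loewner ordering to drop the PSD rank-one term, Sherman--Morrison inversion of $-\hess\redentropy$, and identification of the resulting quadratic form with a variance. The one subtlety the paper makes explicit (and you glossed over) is that $\entropyfunc'>0$ is not guaranteed at the right endpoint $\log\numexperts$; for essential smoothness the paper observes that any boundary point of $\redolodomain$ has $\redentropy\leq\log(\numexperts-1)<\log\numexperts$, so $\entropyfunc'(\redentropy)>0$ is indeed strictly positive there and the gradient blowup goes through.
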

\begin{proof}[Proof of {\cref{lem:transformed-entropy-conditions-and-localnorm}}]

First, note that for $\expidx\neq\expidxdum\in\redexperts$ and $\oloplayervec \in \redolodomain$,
  \*[
    -\partial_{\oloplayer_\expidx} \redentropy(\oloplayervec)
      & = \log(\oloplayer_\expidx) - \log\rbra{1-\inner{\one}{\oloplayervec}}, \\
    -\partial_{\oloplayer_\expidx}^2 \redentropy(\oloplayervec)
      & = \frac{1}{\oloplayer_\expidx} + \frac{1}{1-\inner{\one}{\oloplayervec}}, \text{ and } \\
    -\partial_{\oloplayer_\expidx}\partial_{\oloplayer_\expidxdum} \redentropy(\oloplayervec)
      & = \frac{1}{1-\inner{\one}{\oloplayervec}}.
  \]
Thus,
  \*[
    - \hess \redentropy(\oloplayervec)
      & = \diag(1/\oloplayervec) + \frac{1}{1-\inner{\one}{\oloplayervec}}\one\one\Tr,
  \]
which is strictly positive-definite on $\interior(\redolodomain)$.

Therefore $\redentropy$ is strictly concave. Since a composition of a strictly concave function with a strictly increasing strictly concave function is strictly concave, $\entropyfunc\circ\redentropy$ is strictly concave, which means $\redregularizer{0}$ is strictly convex. Since $\redregularizer{0}$ is continuous and finite on $\redolodomain$, and $\redolodomain$ is closed it must also be a closed function, because a proper convex function is closed if it is lower-semi-continuous. The twice continuous differentiability of $\redregularizer{0}$ on $\interior(\redolodomain)$ follows from the twice continuous differentiability of $\entropy$ on $\redolodomain$ and the twice differentiability of $\entropyfunc$.

Since we have already observed that $\interior(\redolodomain)$ is convex and non-empty, to see that $\redregularizer{0}$ is of the Legendre type on $\redolodomain$ we need only verify that $\lim_{n\to\infty}\norm{\grad\redregularizer{0}(\oloplayervec\upper{n})} \to\infty$ for any $\set{\oloplayervec\upper{n}}_{n\in\Nats}\subseteq\interior(\redolodomain)$ such that $\oloplayervec\upper{n}\to \oloplayerdumvec\in\boundary(\redolodomain)$. The gradient of $\redregularizer{0}$ is given by
\*[
  \grad \redregularizer{0}(\oloplayervec)
    & = -[\entropyfunc'\circ \redentropy(\oloplayervec)] \grad \redentropy(\oloplayervec).
\]
Now, notice that if $\oloplayerdumvec\in\boundary(\redolodomain)$, $\redentropy(\oloplayerdumvec)\leq \log(\numexperts - 1)$. Since $\entropyfunc$ is strictly increasing and concave on $[0,\log(\numexperts)]$, this implies $\entropyfunc'(\redentropy(\oloplayerdumvec))>0$. At any $\oloplayerdumvec\in\boundary \redolodomain$, either there exists an $\expidx\in\redexperts$ such that $\oloplayerdum_\expidx =0$ or $\inner{\one}{\oloplayerdumvec} = 1$. In both cases, $\oloplayervec\upper{n} \to \oloplayerdumvec\in\boundary(\redolodomain)$ implies $\norm[0]{\grad\redentropy(\oloplayervec\upper{n})} \to +\infty$. Therefore,
$\grad \redregularizer{0}(\oloplayervec_\expidx) \to \entropyfunc'(\redentropy(\oloplayerdumvec))\cdot(+\infty) = +\infty$, which confirms that $\redregularizer{0}$ is of the Legendre type on $\redolodomain$.

To derive the semi-norm formula, first notice that using the Sherman--Morrison--Woodbury formula gives
  \[
  -[\hess \redentropy(\oloplayervec)]^{-1}
    & = \diag(\oloplayervec) -  \diag(\oloplayervec)\one \rbra{(1-\inner{\one}{\oloplayervec}) +\one\Tr \diag(\oloplayervec) \one }^{-1} \one\Tr \diag(\oloplayervec) \\
    & = \diag(\oloplayervec) -  \diag(\oloplayervec)\one \one\Tr \diag(\oloplayervec) \\
    & = \diag(\oloplayervec) -  \oloplayervec \oloplayervec\Tr.
\label{eqn:seminorm-a}
  \]

  Then,
  \*[
    \hess\redregularizer{0}(\oloplayervec)
      & = -[\entropyfunc''\circ \redentropy(\oloplayervec)] (\grad \redentropy(\oloplayervec))(\grad \redentropy(\oloplayervec))\Tr - [\entropyfunc'\circ \redentropy(\oloplayervec)] (\hess \redentropy(\oloplayervec)) \\
      &\succeq - [\entropyfunc'\circ \redentropy(\oloplayervec)] (\hess \redentropy(\oloplayervec)),
  \]
  where $A \succeq B$ means $A - B$ is positive semi-definite.
  Therefore,
  \[
    [\hess\redregularizer{0}(\oloplayervec)]^{-1}
      &\preceq \rbra{- [\entropyfunc'\circ \redentropy(\oloplayervec)] (\hess \redentropy(\oloplayervec))}^{-1}\\
      & = \frac{1}{\entropyfunc'\circ \redentropy(\oloplayervec)} \rbra{\diag(\oloplayervec) -  \oloplayervec \oloplayervec\Tr}.
\label{eqn:seminorm-b}
  \]

Applying \cref{eqn:seminorm-a,eqn:seminorm-b} to an arbitrary $\oloplayervecgen \in \simp(\experts)$ and $\ololossvecgen \in \ololossdomain$ gives
  \*[
  \norm{\ololossvecgen}_{0,\oloplayervecgen,\star}^2
    & = \inner{\simpequivalence(\ololossvecgen)}{\hess\redregularizer{0}(\simpbijection(\oloplayervecgen))^{-1} \simpequivalence(\ololossvecgen)} \\
    & = \inner{\ololossvecgen_{-\refexp} - \ololossvecgen_\refexp \one}{\hess\redregularizer{0}(\oloplayervecgen_{-\refexp})^{-1} \sbra{\ololossvecgen_{-\refexp} - \ololossvecgen_\refexp\one}} \\
    & \leq \frac{1}{\entropyfunc'\circ \redentropy(\oloplayervecgen_{-\refexp})} \inner{\ololossvecgen_{-\refexp} - \ololossvecgen_\refexp \one}{\sbra{\diag(\oloplayervecgen_{-\refexp}) -  \oloplayervecgen_{-\refexp} \oloplayervecgen_{-\refexp}\Tr} \sbra{\ololossvecgen_{-\refexp} - \ololossvecgen_\refexp\one}} \\
    & = \frac{1}{\entropyfunc'\circ \entropy(\oloplayervecgen)} \inner{\ololossvecgen - \ololossvecgen_\refexp \one}{\sbra{\diag(\oloplayervecgen) -  \oloplayervecgen \oloplayervecgen\Tr} \sbra{\ololossvecgen - \ololossvecgen_\refexp\one}} \\
    & = \frac{1}{\entropyfunc'\circ \entropy(\oloplayervecgen)} \VVar{\Expidx\sim\oloplayervecgen} [\ololossvecgen_{\scriptscriptstyle \Expidx}-\ololossvecgen_\refexp] \\
    & = \frac{1}{\entropyfunc'\circ \entropy(\oloplayervecgen)} \VVar{\Expidx\sim\oloplayervecgen}[\ololossvecgen_{\scriptscriptstyle \Expidx}].
  \]
\end{proof}

\begin{lemma}
\label{lem:transformed-entropy-implicit-form}
Suppose $\regularizer{0} = -\entropyfunc \circ \entropy$ for some $\entropyfunc:[0,\log\numexperts]\to\Reals$ that is strictly increasing, concave, and twice continuously differentiable on $\simp(\experts)$.
Further, suppose that $\cumregularizer{t} = \timefunc(t)\regularizer{0}$ for some strictly increasing $\timefunc:\Nats\to\PosReals$.
Then, $[\grad\redregularizer{0}](\interior(\redolodomain)) = \Reals^{\redexperts}$, and
the weight vectors produced by the \FTRLalg{$\simp(\experts)$}{$\ololossdomain$}{$(\regularizer{t})_{t \in \PosInts}$} algorithm are equivalent to the weights produced by \OGHedge{} with an implicitly defined learning rate. In particular, the learning rate and weights are the solution to the system of equations
\[
	\ilr({t+1})
	& = \frac{1}{\timefunc(t)\cdot \entropyfunc'\circ\entropy(\weightdumvec(t+1))} \\
	\weightdumvec(t+1)
	& = \rbra{ \frac{\exp\rbra{-\ilr({t+1}) \oloLoss_\expidx(t)}}{\sum_{\expidxdum\in\experts}\exp\rbra{-\ilr({t+1}) \oloLoss_\expidxdum(t)}}}_{\expidx\in\experts}.
\label{eqn:ftrl-system}
\]
Moreover, for any sequence of losses $(\ololossvec(t))_{t \in \Nats} \subseteq \ololossdomain$, this system has a unique solution satisfying
\*[
  \ilr({t+1}) \in \cinter{\frac{1}{\timefunc(t)\cdot\entropyfunc'(0)},\ \frac{1}{\timefunc(t)\cdot\entropyfunc'(\log\numexperts)}}.
\]
\end{lemma}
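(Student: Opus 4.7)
The plan is to first establish that $\grad\redregularizer{0}$ maps $\interior(\redolodomain)$ onto all of $\Reals^\redexperts$, then derive the softmax form by inspecting the first-order conditions of the reduced FTRL problem, and finally verify uniqueness together with the stated bounds on $\ilr(t+1)$.

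For the surjectivity claim, fix any $y\in\Reals^\redexperts$. The map $\oloplayervec\mapsto \inner{-y}{\oloplayervec}+\redregularizer{0}(\oloplayervec)$ is continuous on the compact set $\redolodomain$, so it attains a minimum. By the essential-smoothness half of the Legendre property established in \cref{lem:transformed-entropy-conditions-and-localnorm}, $\norm{\grad\redregularizer{0}(\oloplayervec_n)}\to\infty$ whenever $\oloplayervec_n\to\boundary(\redolodomain)$, which precludes any boundary minimizer. At an interior minimizer the first-order condition gives $\grad\redregularizer{0}(\oloplayervec)=y$, so $y\in\grad\redregularizer{0}(\interior(\redolodomain))$.

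For the softmax form, I would compute the FOC of the reduced FTRL problem $\min_{\oloplayervec\in\redolodomain}[\inner{\simpequivalence(\oloLossvec(t))}{\oloplayervec}+\timefunc(t)\redregularizer{0}(\oloplayervec)]$ explicitly. Using $\partial_{\oloplayer_\expidx}\redentropy(\oloplayervec) = \log(1-\inner{\one}{\oloplayervec}) - \log(\oloplayer_\expidx)$ from the proof of \cref{lem:transformed-entropy-conditions-and-localnorm}, and noting $\simpequivalence(\oloLossvec(t))_\expidx = \oloLoss_\expidx(t)-\oloLoss_\refexp(t)$, the condition rearranges to
\*[
\log\rbra{\frac{\weightdum_\expidx(t+1)}{\weightdum_\refexp(t+1)}} = -\ilr(t+1)\sbra{\oloLoss_\expidx(t)-\oloLoss_\refexp(t)},\quad \expidx\in\redexperts,
\]
where $\weightdumvec(t+1) = \simpbijection^{-1}(\oloplayervec)$ and $\ilr(t+1) = 1/[\timefunc(t)\entropyfunc'(\entropy(\weightdumvec(t+1)))]$. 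Multiplying through by $\weightdum_\refexp(t+1)$, exponentiating, and normalizing via $\sum_{\expidx\in\experts}\weightdum_\expidx(t+1)=1$ produces the symmetric softmax formula of \cref{eqn:ftrl-system}, which is also trivially satisfied at $\expidx=\refexp$.

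For uniqueness, strict convexity of $\redregularizer{0}$ on $\interior(\redolodomain)$ (again \cref{lem:transformed-entropy-conditions-and-localnorm}) gives a unique $\oloplayervec$, hence a unique $\weightdumvec(t+1)$, which then pins down $\ilr(t+1)$ via its defining equation. For the bounds, concavity of $\entropyfunc$ makes $\entropyfunc'$ nonincreasing, so $\entropy(\weightdumvec(t+1))\in[0,\log\numexperts]$ forces $\entropyfunc'(\entropy(\weightdumvec(t+1)))\in[\entropyfunc'(\log\numexperts),\entropyfunc'(0)]$; inverting this interval and dividing by $\timefunc(t)$ gives the stated range for $\ilr(t+1)$. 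The main bookkeeping challenge will be translating the FOC derived in the reduced coordinates back through $\simpbijection^{-1}$ so as to recover a formula fully symmetric in all of $\experts$; the remaining steps are routine consequences of the structural results already assembled.
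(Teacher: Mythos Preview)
Your proposal is correct and complete, but takes a somewhat different route from the paper.

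The main difference is in the logical order and the mechanism for each sub-claim. You establish surjectivity of $\grad\redregularizer{0}$ \emph{first}, by a direct compactness-plus-essential-smoothness argument: the objective $\oloplayervec\mapsto\inner{-y}{\oloplayervec}+\redregularizer{0}(\oloplayervec)$ attains its minimum on the compact set $\redolodomain$, and the Legendre property forces that minimum into the interior. The paper instead derives surjectivity \emph{last}, as a byproduct: it first shows that the scalar implicit equation in $\eta$ (obtained by substituting the softmax into the learning-rate formula) has a unique root by comparing the strictly increasing function $f_1(\eta)=\eta$ with the non-increasing function $f_2(\eta)=1/[\timefunc(t)\,\entropyfunc'(\entropy(\text{softmax}_\eta))]$, and then observes that since this root exists for arbitrary cumulative losses, the gradient map must be onto. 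Correspondingly, your uniqueness and interval bounds come from strict convexity of the FTRL objective together with monotonicity of $\entropyfunc'$, whereas the paper reads both off the $f_1$-versus-$f_2$ intersection argument.

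Your route is arguably more conceptual and avoids the one-dimensional fixed-point analysis entirely; the paper's route is more constructive, in that the $f_1$/$f_2$ picture directly suggests the root-finding implementation used later (\cref{ssec:care-implementation}). One small point worth making explicit in your write-up: to conclude that the \emph{system} \cref{eqn:ftrl-system} has a unique solution (not merely that the FTRL problem does), note that any pair $(\ilr,\weightdumvec)$ solving the system satisfies the reduced first-order condition, and hence by strict convexity must coincide with the unique FTRL minimizer; this closes the loop between ``unique FTRL solution'' and ``unique system solution.''
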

\begin{proof}[Proof of \cref{lem:transformed-entropy-implicit-form}]
First, recall that the weights output by the \FTRLalg{$\simp(\experts)$}{$\ololossdomain$}{$(\regularizer{t})_{t \in \PosInts}$} algorithm will solve
\*[
	\weightdumvec(t+1)
  &= \argmin_{\weightvec \in \simp(\experts)}\left\{\inner{\oloLossvec(t)}{\weightvec} - \timefunc(t) \entropyfunc(\entropy(\weightvec)) \right\} \\
  &= \simpbijection^{-1} \left(\argmin_{\oloplayervec \in \redolodomain}\left\{\inner{\oloLossvec_{-\refexp}(t) -\one \oloLoss_{\refexp}(t)}{\oloplayervec} - \timefunc(t) \entropyfunc(\redentropy(\oloplayervec)) \right\} \right).
\]
By \cref{lem:transformed-entropy-conditions-and-localnorm,cor:legendre-interior-solution}, we know that this means $\weightdumvec(t+1) = \simpbijection^{-1}(\oloplayervec)$ for the unique $\oloplayervec \in \interior(\redolodomain)$ such that
\*[
  \grad \redentropy(\oloplayervec) = -\frac{\oloLossvec_{-\refexp}(t) -\one \oloLoss_{\refexp}(t)}{\timefunc(t)\cdot \entropyfunc'(\redentropy(\oloplayervec))}.
\]
Thus, by the definition of $\simpbijection$ and $\redentropy$,
\[
  \grad \redentropy(\weightdumvec_{-\refexp}(t+1))
  = -\frac{\oloLossvec_{-\refexp}(t) -\one \oloLoss_{\refexp}(t)}{\timefunc(t)\cdot \entropyfunc'(\redentropy(\weightdumvec_{-\refexp}(t+1)))}
  = -\frac{\oloLossvec_{-\refexp}(t) -\one \oloLoss_{\refexp}(t)}{\timefunc(t)\cdot \entropyfunc'(\entropy(\weightdumvec(t+1)))}.
\label{eqn:ftrl-solution1}
\]

It is well known that the unique solution to
\*[
\grad \redentropy(\simpbijection(\weightdumvec))
	& = \simpequivalence(-X)
\]
is given by
\*[
	\weightdumvec_\expidx
		& = \frac{\exp(-X_\expidx)}{\sum_{\expidxdum\in\experts}\exp(-X_\expidxdum)}.
\]
Therefore, any and all solutions of \cref{eqn:ftrl-solution1}
must also be solutions of \cref{eqn:ftrl-system}.
Next, we want to show that there is a unique solution, $\lr(t+1)$, to the implicit equation
\[\label{eqn:implicit}
	\ilr(t+1) = \frac{1}{\timefunc(t)\cdot\entropyfunc'\circ\entropy\rbra{\rbra{ \frac{\exp\{-\ilr(t+1) \oloLoss_\expidx(t)\}}{\sum_{\expidxdum\in\experts}\exp\{-\ilr(t+1) \oloLoss_{\expidxdum}(t)\}}}_{\expidx\in\experts}}}.
\]

On the left hand side, we have $f_1(\lr)=\lr$, which is trivially strictly increasing from $0$ to $\frac{1}{\timefunc(t)\cdot\entropyfunc'(\log\numexperts)}$ as $\eta$ increases from $0$ to $\frac{1}{\timefunc(t)\cdot\entropyfunc'(\log\numexperts)}$. On the right hand side, we have
\*[
	f_2(\lr)
	& = \frac{1}{\timefunc(t)\cdot\entropyfunc'\circ\entropy\rbra{\rbra{ \frac{\exp\{-\ilr(t+1) \oloLoss_\expidx(t)\}}{\sum_{\expidxdum\in\experts}\exp\{-\ilr(t+1) \oloLoss_{\expidxdum}(t)\}}}_{\expidx\in\experts}}},
\]
which is non-increasing with $f_2(0) = \frac{1}{\timefunc(t)\cdot\entropyfunc'(\log\numexperts)}$. Further, by non-negativity of entropy and concavity of $\entropyfunc$, $f_2(\lr) \geq \frac{1}{\timefunc(t)\cdot\entropyfunc'(0)}$.
Thus, $f_1$ and $f_2$ must intersect at some $\lr \in \cinter{\frac{1}{\timefunc(t)\cdot\entropyfunc'(0)},\ \frac{1}{\timefunc(t)\cdot\entropyfunc'(\log\numexperts)}}$, and this intersection is unique by the monotonicity of both functions and the strict monotonicity of $f_1$.

This guarantees at least one interior point solution to the implicit equation defined in \cref{eqn:implicit}. Moreover, since the objective function optimized by the weights output by the \FTRLalg{$\simp(\experts)$}{$\ololossdomain$}{$(\regularizer{t})_{t \in \PosInts}$} algorithm is strictly convex, this interior point solution must be the unique optimizer of the objective. Finally, since the sequence of losses was arbitrary and the \FTRLalg{$\redolodomain$}{$\redololossdomain$}{$(\redregularizer{t})_{t \in \PosInts}$} algorithm outputs a unique weight vector at each time $t+1$, we conclude that $[\grad\redregularizer{0}](\interior(F)) = \Reals^{\redexperts}$ as otherwise there would be some loss vector for which the solution to \cref{eqn:implicit} does not exist.

\end{proof}

\subsection{Proof of \cref{thm:ftrl-simplex-summary}}
\label{ssec:ftrl-simplex-summary-proof}
\cref{thm:ftrl-simplex-summary} is an immediate consequence of the combination of \cref{cor:inner-bound-ftrl,lem:adaptive-local-ftrl,lem:transformed-entropy-conditions-and-localnorm,lem:transformed-entropy-implicit-form}.
In the application of \cref{lem:adaptive-local-ftrl}, we can select $\weightdumvecopt(T)\in \argmin_{\weightdumvec\in \simp(\experts)} \inner{\Lossvec(T)}{\weightdumvec}$ such that $\entropy(\weightdumvecopt(T)) = 0$ because at least one $\argmin$ occurs at a vertex of the simplex.
\manualendproof

\section{Proofs of lemmas in \cref{sec:main-quant-sketch}}
\label{sec:helper-lemmas}

\subsection{Proof of \cref{lem:modular-entropy}}
\label{sec:modular-entropy-proof}

First, observe that
\*[
	\entropy(\weightdumvec)
	& = -\sum_{\expidx\in\experts} \weightdum_\expidx \log \left(\weightdum_\expidx\right)
	= -\sum_{\effexpidx\in\effexperts} \weightdum_\effexpidx \log \left(\weightdum_\effexpidx\right) 
		- \sum_{\expidx\in\neffexperts} \weightdum_\expidx \log \left(\weightdum_\expidx\right). 
\]

To bound the first term, consider the optimization problem
\*[
	\min_{\substack{\inner{\one}{\weightdumvec} = 1 \\ \inner{\one}{\weightdumvec_{\effexperts}} \leq 1}} \sum_{\effexpidx\in\effexperts} \weightdum_\effexpidx \log \left(\weightdum_\effexpidx\right),
\]
where $\weightdumvec_{\effexperts} \defas \set[0]{\weightdumvec_{\effexpidx}}_{\effexpidx\in\effexperts}$.
This is a convex objective with linear constraints, so it can be solved using the Lagrange multiplier method. The Lagrangian is 
\*[
	L(\weightdumvec; \alpha, \beta) 
		& = \sum_{\effexpidx\in\effexperts} \weightdum_\effexpidx \log \left(\weightdum_\effexpidx\right) +\alpha \rbra{\inner{\one}{\weightdumvec}-1} +\beta \rbra{\inner{\one}{\weightdumvec_{\effexperts}}-1},
\]
and the dual problem is 
\*[
	\max_{\substack{\alpha\in\Reals \\ \beta\geq 0}} \min_{\weightdumvec\in\Reals^\numexperts} \sum_{\effexpidx\in\effexperts} \weightdum_\effexpidx \log \left(\weightdum_\effexpidx\right) +\alpha \rbra{\inner{\one}{\weightdumvec}-1} +\beta \rbra{\inner{\one}{\weightdumvec_{\effexperts}}-1}.
\]
This gives, for $\effexpidx\in\experts$ and $\expidx\in\neffexperts$,
\*[
	\partial_{\effexpidx} L(\weightdumvec;\alpha, \beta)
		& = \log(\weightdum_\effexpidx)+1+\alpha+\beta, \text{ and } \\
	\partial_{\expidx} L(\weightdumvec;\alpha, \beta)
		& = \alpha.		
\]
Then, at the saddle point, $\alpha = 0$ and $\log\weightdum_{\effexpidx} = -\frac{1}{1+\beta}$ for all $\effexpidx\in\effexperts$. 

If $\beta = 0$ then $\weightdum_{\effexpidx}= \frac{1}{\exp(1)}$ for all $\effexpidx\in\effexperts$. 
This is only feasible if $\numeffexperts \leq 2$. In this case 
\*[
	\sum_{\effexpidx\in\effexperts} \weightdum_\effexpidx \log \left(\weightdum_\effexpidx\right) 
	\geq - \sum_{\effexpidx\in\effexperts} \frac{\log(\exp(1))}{\exp(1)}
	= - \frac{\numeffexperts}{\exp(1)}.
\]

Otherwise $\beta>0$, and by the K.K.T. condition, $\inner{\one}{\weightdumvec_{\effexperts}}=1$, which implies that $\weightdumvec_{\effexpidx} = \frac{\one}{\numeffexperts}$ for all $\effexpidx \in \effexperts$. That is,
\*[
	\sum_{\effexpidx\in\effexperts} \weightdum_\effexpidx \log \left(\weightdum_\effexpidx\right) 
	\geq - \sum_{\effexpidx\in\effexperts} \frac{\log(\numeffexperts)}{\numeffexperts}
	= - \log(\numeffexperts).
\]

Thus for $\numeffexperts\geq 3$
\[
	\entropy(\weightdumvec)
	& \leq \log(\numeffexperts) - \sum_{\expidx\in\neffexperts} \weightdum_\expidx \log\weightdum_\expidx,
\label{eqn:3experts}
\]
and for $\numeffexperts \leq 2  $
\[
	\entropy(\weightdumvec)
	& \leq \frac{\numeffexperts}{\exp(1)} - \sum_{\expidx\in\neffexperts} \weightdum_\expidx \log\weightdum_\expidx .
\label{eqn:2experts}
\]
Further, if $\effexperts = \{\effexpidx\}$, since $\log(x)\geq 1-1/x$ for all $x \geq 0$,
\[
	\entropy(\weightdumvec)
	& = - \sum_{\expidx \in \experts} \weightdum_\expidx \log \left(\weightdum_\expidx\right) \\
	& = -\weightdum_\effexpidx \log \left(\weightdum_\effexpidx\right) 
		- \sum_{\expidx\in\neffexperts} \weightdum_\expidx \log \left(\weightdum_\expidx\right) \\
	& \leq -\weightdum_\effexpidx \rbra{1-\frac{1}{\weightdum_\effexpidx}} 
		- \sum_{\expidx\in\neffexperts} \weightdum_\expidx \log \left(\weightdum_\expidx\right) \\
	&= (1-\weightdum_\effexpidx) 
			- \sum_{\expidx\in\neffexperts} \weightdum_\expidx \log \left(\weightdum_\expidx\right) \\
	&= \sum_{\expidx\in\neffexperts} \weightdum_\expidx -  \sum_{\expidx\in\neffexperts} \weightdum_\expidx \log \left(\weightdum_\expidx\right).
\label{eqn:1expert}
\]

In order to control the sum over ineffective experts we use the technical result of \cref{lem:log-leq-poly-0}, which says that
\[
	- \sum_{\expidx\in\neffexperts} \weightdum_\expidx \log \left(\weightdum_\expidx\right)
	\leq \frac{1}{(1-p) \exp(1)} \sum_{\expidx\in\neffexperts} [\weightdum_\expidx]^p
\label{eqn:neffexperts}
\]

Combing \cref{eqn:3experts,eqn:2experts,eqn:1expert,eqn:neffexperts} gives for $\numeffexperts \geq 1$,
\*[
	\entropy(\weightdumvec)
	& \leq \frac{2}{\exp(1)\log(2)}\log(\numeffexperts) + \rbra{1+\frac{1}{(1-p)\exp(1)}}\sum_{\expidx\in\neffexperts} \weightdum_\expidx^p.
\]
\manualendproof

\subsection{Proof of {\cref{lem:ftrl-both-weight-intweight-bound}}}
\label{apx:ineffective-weight-bound-proof}

For the first result, observe that from \cref{thm:ftrl-simplex-summary}, $\weightdumvec(t+1)$ is the unique solution to
\*[
  \ilr({t+1})
  & = \frac{1}{\sqrt{t+1}\cdot \entropyfunc'\circ\entropy(\weightdumvec(t+1))} \\
  \weightdumvec(t+1)
  & = \rbra{ \frac{\exp\rbra{-\ilr({t+1}) \Loss_\expidx(t)}}{\sum_{\expidxdum\in\experts}\exp\rbra{-\ilr({t+1}) \Loss_\expidxdum(t)}}}_{\expidx\in\experts},
\]
and
\*[
\ilr({t+1}) \in \cinter{\frac{1}{\sqrt{t+1}\cdot[\entropyfunc'(0)]},\ \frac{1}{\sqrt{t+1}\cdot[\entropyfunc'(\log\numexperts)]}}.
\]

Now, set $\lblr(t+1) = \frac{1}{\sqrt{t+1}\cdot\entropyfunc'(0)}$.
For $\expidx\in\neffexperts$, since $\lblr(t+1)\leq \ilr(t+1)$ and $\Loss_{\expidxoptpath(t)}(t) \leq \Loss_\expidx(t)$ by definition,
\*[
\Big[\weightdum_\expidx(t+1)\Big]^p
  & \leq   \rbra{\frac{\weightdum_\expidx(t+1)}{\weightdum_{\expidxoptpath(t)}(t+1)}}^p \\
  & = \exp\Big\{-p\ \ilr(t+1) \sbra{\Loss_\expidx(t)-\Loss_{\expidxoptpath(t)}(t)}\Big\} \\
  & \leq \exp\Big\{-p\ \lblr(t+1) \sbra{\Loss_\expidx(t)-\Loss_{\expidxoptpath(t)}(t)}\Big\} \\
  & \leq \min_{\effexpidx\in\effexperts}\exp\Big\{-p\ \lblr(t+1) \sbra{\Loss_\expidx(t)-\Loss_{\effexpidx}(t)}\Big\}.
\]
Thus, using \cref{thm:minimax_mgf},
\*[
\sup_{\policy \in \policyspace(\distnball{})} \EEboth\Big[\weightdum_\expidx(t+1)^p\Big]
&\leq \sup_{\policy \in \policyspace(\distnball{})} \EEboth
	\Big[\min_{\effexpidx\in\effexperts}\exp\Big\{-p\ \lblr(t+1) \sbra{\Loss_\expidx(t)-\Loss_{\effexpidx}(t)}\Big\}\Big] \\
&\leq \exp\left\{- t\lblr(t+1)\Deltaeff p + t\lblr(t+1)^2 \frac{p^2}{2} \right\} \\
& = \exp\left\{ - t\frac{1}{\sqrt{t+1}\cdot\entropyfunc'(0)}\Deltaeff p + t\rbra{\frac{1}{\sqrt{t+1}\cdot\entropyfunc'(0)}}^2 \frac{p^2}{2} \right\} \\
& \leq \exp\left\{\frac{p^2}{2(\entropyfunc'(0))^2}\right\} \exp\left\{- \frac{\Deltaeff p}{\sqrt{2} (\entropyfunc'(0))}\sqrt{t} \right\},
\]
where in the last inequality we used the fact that $\frac{t}{t+1}\geq \frac{1}{2}$ for $t\in\Nats$.

For the second result, for each $\alpha\in\cinter{0,1}$, we define the \emph{intermediate losses} $\Lossdumvec\upper{\alpha}(t) = \alpha \Lossvec(t) + (1-\alpha) \frac{\sqrt{t+1}}{\sqrt{t}} \Lossvec(t-1)$. We define a new random expert by $\intexpidxoptpath_{\alpha}(t) = \argmin_{\expidx\in\experts} \Lossdum\upper{\alpha}_\expidx(t)$, which is analogous to $\expidxoptpath(t)$ but for $\Lossdumvec\upper{\alpha}(t)$.
Then, applying \cref{lem:transformed-entropy-implicit-form} to the intermediate losses, observe that $\intweightvec\upper{\alpha}(t+1)$ is the unique solution to
\*[
	\ilrint\upper{\alpha}(t+1)
	& = \frac{1}{\sqrt{t+1}\cdot \entropyfunc'\circ\entropy(\intweightvec\upper{\alpha}(t+1))} \\
	\intweightvec\upper{\alpha}(t+1)
	& = \rbra{ \frac{\exp\rbra{-\ilrint\upper{\alpha}({t+1}) \Lossdum\upper{\alpha}_\expidx(t)}}{\sum_{\expidxdum\in\experts}\exp\rbra{-\ilrint\upper{\alpha}({t+1}) \Lossdum\upper{\alpha}_\expidxdum(t)}}}_{\expidx\in\experts},
\]
and
\*[
\ilrint\upper{\alpha}({t+1}) \in \cinter{\frac{1}{\sqrt{t+1}\cdot[\entropyfunc'(0)]},\ \frac{1}{\sqrt{t+1}\cdot[\entropyfunc'(\log\numexperts)]}}.
\]

Next, using that $\lossvec_\expidx(t) \in [0,1]$ for all $\expidx \in \experts$,
\*[
	\Lossdum\upper{\alpha}_{\expidx}(t)
	&= \alpha \Lossvec_{\expidx}(t) + (1-\alpha)\sqrt{\frac{t+1}{t}}\Lossvec_{\expidx}(t-1)
	\geq \Lossvec_{\expidx}(t) - 1.
\]

Then, observe that since $\Loss_{\expidx}(t) \leq t$ for all $\expidx\in\experts$, $\frac{\sqrt{t+1}}{\sqrt{t}}\Loss_{\expidx}(t-1) \leq \Loss_{\expidx}(t-1)+1$ for all $t\in\Nats$. Thus, for any $\expidxdum \in \experts$,
\*[
	\Lossdum\upper{\alpha}_{\expidxdum}(t)
	&= \alpha \Lossvec_{\expidxdum}(t)  + (1-\alpha)\sqrt{\frac{t+1}{t}} \Lossvec_{\expidxdum}(t-1)
	\leq \Lossvec_{\expidxdum}(t) + 1.
\]

Combining these two facts gives that for all $\alpha\in\cinter{0,1}$,
\*[
	\Lossdum\upper{\alpha}_{\expidx}(t) - \Lossdum\upper{\alpha}_{\expidxdum}(t)
	&\geq \Lossvec_{\expidx}(t) - \Lossvec_{\expidxdum}(t) - 2.
\label{eqn:lossdum-diff}
\]

Now, for $\expidx\in\neffexperts$, taking $\lblr(t+1) = \frac{1}{\sqrt{t+1}\cdot\entropyfunc'(0)}$, and since $\lblr(t+1)\leq \ilrint(t+1)$ we have
\*[
\Big[\intweight\upper{\alpha}_\expidx(t+1)\Big]^p
  & \leq \rbra{\frac{\intweight\upper{\alpha}_\expidx(t+1)}{\intweight\upper{\alpha}_{\intexpidxoptpath_\alpha(t)}(t+1)}}^p \\
  & = \exp\left\{-p\ \ilrint\upper{\alpha}(t+1) \sbra{\Lossdum\upper{\alpha}_\expidx(t)-\Lossdum\upper{\alpha}_{\intexpidxoptpath_\alpha(t)}(t)}\right\} \\
  & \leq \exp\left\{-p\ \lblr(t+1) \sbra{\Lossdum\upper{\alpha}_\expidx(t)-\Lossdum\upper{\alpha}_{\intexpidxoptpath_\alpha(t)}(t)}\right\} \\
  &\leq  \min_{\effexpidx\in\effexperts}\exp\left\{-p\ \lblr(t+1) \sbra{\Loss_\expidx(t)-\Loss_{\effexpidx}(t)-2}\right\}.
\]
Thus, again using \cref{thm:minimax_mgf},
\*[
\sup_{\policy \in \policyspace(\distnball{})} \EEboth \sup_{\alpha\in\cinter{0,1}}\Big[\intweight_\expidx(t+1)^p\Big]
&\leq \sup_{\policy \in \policyspace(\distnball{})} \EEboth \Big[\min_{\effexpidx\in\effexperts}\exp\left\{-p\ \lblr(t+1) \sbra{\Loss_\expidx(t)-\Loss_{\effexpidx}(t)-2}\right\} \Big] \\
&\leq \exp\Big\{2 p \lblr(t+1) - t\lblr(t+1)\Deltaeff p + t\lblr(t+1)^2 \frac{p^2}{2} \Big\} \\
& = \exp\bigg\{\frac{2 p - \Deltaeff p t}{\sqrt{t+1}\cdot\entropyfunc'(0)} + \rbra{\frac{1}{\sqrt{t+1}\cdot\entropyfunc'(0)}}^2 \frac{p^2 t}{2} \bigg\} \\
& \leq \exp\left\{\frac{2p}{\entropyfunc'(0)} +\frac{p^2}{2(\entropyfunc'(0))^2}\right\} \exp\left\{- \frac{\Deltaeff p}{\sqrt{2} (\entropyfunc'(0))}\sqrt{t} \right\},
\]
where in the last inequality we again used the fact that $\frac{t}{t+1}\geq \frac{1}{2}$ for $t\in\Nats$.
\manualendproof

\subsection{Proof of \cref{lem:worst-case-ftrl-bound}}
\label{apx:early-quasi-regret-bound-proof}

Substituting the variance bounds of \cref{lem:simple-var-bd} \cref{eqn:main-quasi-regret-bound} using $\timefunc(t) = \sqrt{t+1}$, $\entropyfunc$ increasing and concave, and the fact that $\entropy(\weightdumvec) \leq \log(\numexperts)$ gives
\*[
  \playerexpregretFTRLHeqn(\tmid) 
  &\leq -\entropyfunc(0)\sqrt{\tmid+1} 
  + \sum_{t=0}^{\tmid} \Big[\sqrt{t+1}-\sqrt{t}\Big] \entropyfunc(\log(\numexperts))
  + \sum_{t=1}^{\tmid} \frac{3}{8\sqrt{t+1} \cdot \entropyfunc'(\log(\numexperts))}. \\
  &= \sqrt{\tmid+1} \Big(\entropyfunc(\log(\numexperts)) - \entropyfunc(0) \Big)
  + \sum_{t=1}^{\tmid} \frac{3}{8\sqrt{t+1} \cdot \entropyfunc'(\log(\numexperts))}.
\]

Then, since 
\*[
	\sum_{t=1}^{\tmid} \frac{1}{\sqrt{t+1}} 
	\leq \int_0^{\tmid} \frac{1}{\sqrt{t+1}} dt
	= 2\sqrt{\tmid+1},
\]
we have that
\*[
  \playerexpregretFTRLHeqn(\tmid) 
    \leq \sqrt{\tmid+1} \bigg(\entropyfunc(\log(\numexperts)) - \entropyfunc(0) + \frac{3}{4\entropyfunc'(\log(\numexperts))} \bigg).
\]
\manualendproof

\subsection{Miscellaneous stochastic and mathematical results}
\label{sec:misc-results}

Here we state a few convenient results that will be used repeatedly, but require none of the assumptions of our setting except boundedness. The first two of these lemmas allow us to control the variance of the experts' losses. 

\begin{lemma}\label{lem:simple-var-bd}
For any $\weight\in\simp(\experts)$, $(\lossvec(t))_{t\in\Nats} \subseteq [0,1]^\numexperts$, and $t\in\Nats$,
\*[
  \VVar{\Expidx\sim \weight}
    \sbra{\rbra{\sqrt{\frac{t+1}{t}} -1} \Loss_\Expidx(t-1) - \loss_\Expidx(t)}
     \leq \frac{9}{16} \ \text{ and } \
  \VVar{\Expidx\sim \weight}\sbra{\loss_\Expidx(t)}
     \leq \frac{1}{4}.
\]
\end{lemma}
\begin{proof}[Proof of \cref{lem:simple-var-bd}]
Since $\sqrt{t+1} - \sqrt{t} \leq \frac{1}{2\sqrt{t}}$ for $t\geq 1$, $\rbra{\sqrt{\frac{t+1}{t}} -1} \in \cinter{0,\frac{1}{2t}}$. Combined with $\lossvec(t) \in [0,1]^\experts$ for all $t\in\Nats$, this gives that for all $\expidx \in \experts$,
\*[
  \rbra{\sqrt{\frac{t+1}{t}} -1} \Loss_\expidx(t-1) - \loss_\expidx(t) \in \cinter{-1,\ \frac{1}{2}}.
\]
Thus, the result follows since if $a \leq X \leq b$, then $\Var(X) \leq (b-a)^2/4$.
\end{proof}

\begin{lemma}\label{lem:mix-var-bd}
For any $\policy\in\policyspace$, $\predpolicy\in\predpolicyspace$, sequence $(\weightvec(t))_{t\in\Nats}$ such that $\weightvec(t)$ is $\sigma(\history(t-1))$-measurable for all $t$, $\effexpidx\in\experts$, and $t\in\Nats$,
\*[
	&\hspace{-2em} 
	\EEboth \sqrt{\VVar{\Expidx\sim \weightvec(t+1)} \sbra{\bigg(\frac{\sqrt{t+1}}{\sqrt{t}} -1\bigg) \Loss_\Expidx(t-1) - \loss_\Expidx(t)} \vphantom{\left(\frac{\timefunc(t)}{\timefunc(t-1)}\right)}\VVar{\Expidx\sim \weightvec(t+1)}\sbra{\loss_\Expidx(t)}}\\
  &\leq \frac{9}{4} \EEboth\Bigg[\sum_{\expidx\neq\effexpidx} \weight_\expidx(t+1)\Bigg]
\]
and
\*[
	&\hspace{-2em} 
	\EEboth \Bigg[\VVar{\Expidx\sim \weightvec(t+1)} \sbra{\bigg(\frac{\sqrt{t+1}}{\sqrt{t}} -1\bigg) \Loss_\Expidx(t-1) - \loss_\Expidx(t)} \vphantom{\left(\frac{\timefunc(t)}{\timefunc(t-1)}\right)}\VVar{\Expidx\sim \weightvec(t+1)}\sbra{\loss_\Expidx(t)}\Bigg]\\
  &\leq \frac{27}{32} \EEboth\Bigg[\sum_{\expidx\neq\effexpidx} \weight_\expidx(t+1)\Bigg].
\]
\end{lemma}
\begin{proof}[Proof of \cref{lem:mix-var-bd}]
First, let $\nu$ be any distribution such that $\supp(\nu)\subset[-y,1-y]$ and $x\in[-y,1-y]$, and suppose $X \sim \alpha \delta_x +(1-\alpha)\nu$ for some $\alpha \in [0,1]$.

Since variance is invariant to shifts, we can suppose $y=0$ without loss of generality. Define $\mu_\nu = \EE_{Z\sim\nu}(Z)$ and $\sigma^2_\nu = \mathop{\Var}_{Z\sim\nu}(Z)$. Then, using the variance for a mixture distribution,
\*[
	\Var(X)
	& = \alpha x^2 +(1-\alpha)\mu_\nu^2 - (\alpha x +(1-\alpha)\mu_\nu)^2 + (1-\alpha)\sigma_\nu^2  \\
	& = \alpha(1-\alpha) x^2 +\alpha(1-\alpha)\mu_\nu^2 -2\alpha(1-\alpha)x\mu_\nu+(1-\alpha)\sigma^2_\nu \\
	& = \alpha(1-\alpha) (x-\mu_\nu)^2+ (1-\alpha)\sigma^2_\nu
\]
Now,
\*[
	\sup_{x,\nu} \Var(X)
		& = \sup_{x,\mu} \sup_{\nu: \mu_\nu =\mu} \Var(X).
\]
The inner $\sup$ is achieved by $\nu(\mu) = \bernoullidist(\mu)$ and has $\sigma^2_{\nu(\mu)} =\mu(1-\mu)$, so that
\*[
	\sup_{x,\nu} \Var(X)
		& = \sup_{\mu} \sup_{x} \alpha(1-\alpha) (x-\mu)^2+ (1-\alpha)\mu(1-\mu).
\]
Now, the inner sup is achieved by $x=0$ when $\mu\geq1/2$ and by $x=1$ when $\mu <1/2$. Due to symmetry we need only consider the case that $\mu \geq 1/2$.
\*[
	\sup_{x,\nu} \Var(X)
		& = \sup_{\mu} \alpha(1-\alpha) \mu^2+ (1-\alpha)\mu(1-\mu)\\
		& = \sup_{\mu}\sbra{ -(1-\alpha)^2 \mu^2+ (1-\alpha)\mu}.
\]
Since $\mu \in [0,1]$ this is a constrained quadratic maximum. If the unconstrained maximum occurs in interior of the region then it is equal to the constrained maximum. Otherwise the constrained maximum occurs at the boundary.

The unconstrained maximum occurs at $\mu = \frac{1}{2(1-\alpha)}$ with objective value $1/4$. This in the interior of the constraint region when $(1-\alpha)> 1/2$; equivalently $\alpha < 1/2$. The boundary values are $0$ and $\alpha(1-\alpha)$.

That is,
\[\label{eqn:generic-case-var-bd}
	\Var(X)
		& \leq \begin{cases}
							\alpha(1-\alpha) &: \alpha \geq 1/2\\
							1/4 &: \alpha < 1/2
						\end{cases}.
\]

Let $\weight\in\simp(\experts)$ be arbitrary.
We can apply \cref{eqn:generic-case-var-bd} to obtain
\*[
  \VVar{\Expidx\sim \weight}\sbra{\loss_\Expidx(t)} 
  	&\leq \frac{1}{4}\ind{\weight_\effexpidx\leq 1/2} + (1-\weight_\effexpidx).
\]
Similarly, since $\rbra{\sqrt{\frac{t+1}{t}} -1} \Loss_\Expidx(t-1) - \loss_\Expidx(t) \in \cinter{-1,\frac{1}{2}}$,
\*[
	&\VVar{\Expidx\sim \weight} \sbra{\bigg(\frac{\sqrt{t+1}}{\sqrt{t}} -1\bigg) \Loss_\Expidx(t-1) - \loss_\Expidx(t)} \\
  	& \qquad \leq \rbra{\frac{3}{2}}^2 \rbra{\frac{1}{4}\ind{\weight_\effexpidx\leq 1/2} + (1-\weight_\effexpidx)}.
\]

Thus, using Markov's inequality, 
\*[
	&\hspace{-2em} 
	\EEboth \sqrt{\VVar{\Expidx\sim \weightvec(t+1)} \sbra{\bigg(\frac{\sqrt{t+1}}{\sqrt{t}} -1\bigg) \Loss_\Expidx(t-1) - \loss_\Expidx(t)} \vphantom{\left(\frac{\timefunc(t)}{\timefunc(t-1)}\right)}\VVar{\Expidx\sim \weightvec(t+1)}\sbra{\loss_\Expidx(t)}}\\
	& \leq \frac{3}{2}\rbra{ \frac{1}{4}\Prboth\sbra{\weight_\effexpidx(t+1)\leq 1/2} + \EEboth\sbra{1-\weight_\effexpidx(t+1)}}\\
  &\leq \frac{9}{4} \EEboth\Bigg[\sum_{\expidx\neq\effexpidx} \weight_\expidx(t+1)\Bigg].
\]

Alternatively, using $\VVar{\Expidx\sim \weight}\sbra{\loss_\Expidx(t)} \leq 1/4$,
\*[
	&\hspace{-2em} 
	\EEboth \Bigg[\VVar{\Expidx\sim \weightvec(t+1)} \sbra{\bigg(\frac{\sqrt{t+1}}{\sqrt{t}} -1\bigg) \Loss_\Expidx(t-1) - \loss_\Expidx(t)} \vphantom{\left(\frac{\timefunc(t)}{\timefunc(t-1)}\right)}\VVar{\Expidx\sim \weightvec(t+1)}\sbra{\loss_\Expidx(t)}\Bigg]\\
	& \leq \frac{9}{16}\rbra{ \frac{1}{4}\Prboth\sbra{\weight_\effexpidx(t+1)\leq 1/2} + \EEboth\sbra{1-\weight_\effexpidx(t+1)}}\\
  &\leq \frac{27}{32} \EEboth\Bigg[\sum_{\expidx\neq\effexpidx} \weight_\expidx(t+1)\Bigg].
\]

\end{proof}

Next, we have a result which controls a summation term which appears often in our proofs.

\begin{lemma}
\label{lem:integral-comparison-simplified}
For any $\alpha>0$ and $\tmid \geq 1$
\*[
	\sum_{t=\tmid+1}^T \frac{1}{\sqrt{t}} \exp\left\{-\alpha \sqrt{t}\right\}
	\leq \frac{2}{\alpha} \exp(-\alpha\sqrt{t_0} ).
\]
\end{lemma}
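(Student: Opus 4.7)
The plan is a straightforward integral comparison. The integrand $f(s) = s^{-1/2}\exp(-\alpha\sqrt{s})$ is positive and (for $\alpha>0$, which is the only nontrivial case since otherwise the right-hand side is vacuous or the left-hand side bound becomes useless) monotonically decreasing in $s$ for $s \geq \tmid \geq 1$. Consequently, bounding each summand $f(t)$ by the integral of $f$ over $[t-1,t]$ yields
\[
  \sum_{t=\tmid+1}^T \frac{1}{\sqrt{t}}\exp\bigl\{-\alpha\sqrt{t}\bigr\}
  \;\leq\; \int_{\tmid}^{T} \frac{1}{\sqrt{s}}\exp\bigl\{-\alpha\sqrt{s}\bigr\}\,ds
  \;\leq\; \int_{\tmid}^{\infty} \frac{1}{\sqrt{s}}\exp\bigl\{-\alpha\sqrt{s}\bigr\}\,ds.
\]

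Next I would evaluate the remaining integral exactly via the substitution $u = \sqrt{s}$, so that $du = \tfrac{1}{2\sqrt{s}}\,ds$, i.e., $\tfrac{1}{\sqrt{s}}\,ds = 2\,du$. Under this change of variables the integrand telescopes to $2\exp(-\alpha u)$, giving
\[
  \int_{\tmid}^{\infty} \frac{1}{\sqrt{s}}\exp\bigl\{-\alpha\sqrt{s}\bigr\}\,ds
  \;=\; 2\int_{\sqrt{\tmid}}^{\infty}\exp\bigl\{-\alpha u\bigr\}\,du
  \;=\; \frac{2}{\alpha}\exp\bigl\{-\alpha\sqrt{\tmid}\bigr\},
\]
which is exactly the desired bound.

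There is no real obstacle here; the only item to double-check is the monotonicity used in the first step, which follows because $\tfrac{d}{ds}\bigl[s^{-1/2}e^{-\alpha\sqrt{s}}\bigr] = -\tfrac{1}{2}s^{-3/2}e^{-\alpha\sqrt{s}} - \tfrac{\alpha}{2}s^{-1}e^{-\alpha\sqrt{s}} < 0$ for all $s>0$ when $\alpha>0$. Hence the integral comparison is valid on $[\tmid,T]$ regardless of the precise value of $\tmid\geq 1$, and the bound follows.
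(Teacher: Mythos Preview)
Your proof is correct and follows essentially the same approach as the paper: bound the sum by the integral via monotonicity, extend the upper limit to $\infty$, substitute $u=\sqrt{s}$, and evaluate. You add an explicit check of monotonicity and the remark that $\alpha>0$ is needed for the bound to be nontrivial, which the paper leaves implicit.
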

\begin{proof}[Proof of {\cref{lem:integral-comparison-simplified}}]
\*[
	\sum_{t=\tmid+1}^T \frac{1}{\sqrt{t}} \exp\left\{-\alpha \sqrt{t}\right\}
	& \leq
		\int_{\tmid}^T \frac{1}{\sqrt{t}} \exp\left\{-\alpha\sqrt{t}\right\} dt \\
	& \leq
		\int_{\tmid}^\infty \frac{1}{\sqrt{t}} \exp\left\{-\alpha\sqrt{t}\right\} dt \\
	& =
		\int_{\sqrt{\tmid}}^\infty 2\exp\left\{-\alpha u \right\} du \\
	& = \frac{2}{\alpha} \exp(-\alpha\sqrt{t_0} ).
\]
\end{proof}

Finally, we have a simple fact about logarithms that will be useful when controlling the entropy of weight distributions.

\begin{lemma}\label{lem:log-leq-poly-0}
For $x\in(0,1]$ and $p\in(0,1)$
\*[
	- x\log(x) \leq \frac{1}{(1-p) \exp(1)} x^p.
\]
\end{lemma}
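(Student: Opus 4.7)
The plan is to reduce the inequality to the standard fact that the function $y\mapsto -y\log y$ on $(0,1]$ achieves its maximum value $1/e$ at $y = 1/e$.

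First I would rearrange the desired inequality. Dividing both sides by $x^p$ (which is positive since $x>0$) gives the equivalent statement
\*[
  -x^{1-p}\log(x) \leq \frac{1}{(1-p)e}.
\]
Substituting $y = x^{1-p}$ (so $y \in (0,1]$ and $\log x = \log(y)/(1-p)$), the left-hand side becomes $-y\log(y)/(1-p)$. Multiplying through by $(1-p)>0$, the inequality is therefore equivalent to
\*[
  -y\log(y) \leq \frac{1}{e} \qquad \text{for all } y\in(0,1].
\]

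Next I would verify this last inequality by elementary calculus. Setting $g(y) = -y\log(y)$, one computes $g'(y) = -\log(y) - 1$, which vanishes at $y = 1/e$, and $g''(y) = -1/y < 0$, so $y = 1/e$ is the unique interior maximizer of $g$ on $(0,1]$. Evaluating, $g(1/e) = -(1/e)\log(1/e) = 1/e$. At the endpoints, $g(1) = 0$ and $\lim_{y\downarrow 0} g(y) = 0$, so indeed $g(y)\leq 1/e$ throughout $(0,1]$, which completes the proof.

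There is no real obstacle here; the whole content is the substitution that reduces the $p$-dependent statement to the canonical bound on $-y\log y$. The only minor care needed is to handle the boundary case $x=1$ (where both sides are nonnegative and the left side is $0$) and to note that the substitution $y = x^{1-p}$ is a bijection of $(0,1]$ onto itself for $p\in(0,1)$.
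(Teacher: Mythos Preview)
Your proof is correct and follows essentially the same approach as the paper: both divide by $x^p$ and then maximize the resulting one-variable function by elementary calculus. The only cosmetic difference is that you make the substitution $y=x^{1-p}$ to reduce to the parameter-free bound $-y\log y\le 1/e$, whereas the paper optimizes $f(x)=-x^{1-p}\log x$ directly and finds the critical point $x_0=e^{-1/(1-p)}$; these are the same computation up to a change of variable.
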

\begin{proof}[Proof of \cref{lem:log-leq-poly-0}]
	Consider $f(x) = -x^{1-p}\log(x)$. Then, $f(0^+) = f(0)$, $f(1) = 0$, and
	\*[
		f'(x) = -(1-p) x^{-p} \log(x) - x^{-p} = -x^{-p}((1-p)\log(x)+1).
	\]
	Thus, the only critical point of $f$ occurs at $x_0 = \exp(-1/(1-p))$. This is a local max since $\sign (f'(x)) = -\sign(x-x_0)$ for $x\in(0,1)$. Thus, $f$ is maximized on the interval $(0,1)$ at $x_0$. Hence $f(x)\leq f(x_0) = \frac{1}{(1-p)\exp(1)}$. Multiplying both sides by $x^p$ proves the result.

\end{proof}

\section{Proofs of lower bounds}
\label{apx:lower-bound-proofs}

\subsection{Proof of \cref{thm:oracle-lb}}
\label{ssec:oracle-lb-proof}
Our strategy is to define a simple setting with multiple experts (many of them identical), so that we can show the lower bound holds in the asymptotic limit as $T$ and $\numexperts$ tends to infinity.
Let $\dataspace = \set{(1,0,0), (0,1,0), (0,0,1)}$, $\predspace = \simp(\range{3})$, and $\loss(\pred, \data) = \frac{1}{2} \sum_{i=1}^3 \abs{\pred_i -\data_i}$. 
Observe that $\loss(\pred, \data)\in\cinter{0,1}$ for all $\pred\in\predspace$ and $\data\in\dataspace$. 
Let $\numeffexperts\leq \numexperts\in \Nats$.

In this setting, consider the distribution
\*[
	\refdistn
  	= \rbra{\rbra{\frac{1}{2} \delta_{(1,0,0)} +\frac{1}{2} \delta_{(0,1,0)}}^{\otimes \numeffexperts} \otimes (\delta_{(0,0,1)})^{\otimes(\numexperts-\numeffexperts)}}\otimes \rbra{\frac{1}{2} \delta_{(1,0,0)} +\frac{1}{2} \delta_{(0,1,0)}},
\]
and let $\distnball{} = \set{\refdistn}$. 
Then $\policyspace(\distnball{})$ contains a single policy, $\policy_\star$, given by
\*[
	\policy_\star = (\history(t)\in\historyspace^t\mapsto \refdistn)_{t\in\Nats}.
\]
Intuitively, each of the effective experts flips a coin to play the first or second element, but the observation is also either the first or second element from an independent coin toss, and the ineffective experts always output the third element.

Now, define the pushforward of the distribution through the loss function by $\refdistn^\loss = \pushfwdmeas{\loss}{\refdistn}$ to obtain the single loss distribution on the experts. 
Observe this simplifies to
\*[
	\refdistn^\loss = \bernoullidist(1/2)^{\otimes \numeffexperts} \otimes \bernoullidist(1)^{\otimes (N-\numeffexperts)}.
\]
This singleton policy space satisfies the time-homogeneous convex constraint condition with $\effexperts = \range{\numeffexperts}$, and $\Deltaeff = 1/2$.

Note that any prediction $\pred$ has $\EEE{\data\sim 
\refdistn} \loss(\pred ,\data)\geq \frac{1}{2}$.
For each $\effexpidx\in\effexperts$, let $M_{\effexpidx} \defas \sum_{t=1}^T \loss(\pred_{\effexpidx},\data)$ be the random variable corresponding to the cumulative loss of the effective expert. Then, $M_{\effexpidx}\distiidas\binomdist(T,1/2)$, and
\*[
	&\hspace{-2em}
	\inf_{\predpolicy \in \predpolicyspace_\numexperts} \
	\sup_{\policy \in \policyspace(\distnball{})} \
	\EEboth
	\max_{\expidx \in \experts}
	\sum_{t=1}^T \sbra{\loss(\pred(t), \data(t)) - \loss(\exppred_\expidx(t),\data(t))} \\
	&\geq \EEE{M\sim\binomdist(T,1/2)^{\otimes \numeffexperts}} \max_{\effexpidx\in\effexperts} \Big(T/2 - M_{\effexpidx}\Big).	
\]
Now, since $\frac{2}{\sqrt{T}}\rbra{T/2 - M_\effexpidx}$ are \iid{} and converge in Wasserstein distance to a $\normaldist(0,1)$ as $T\to\infty$ (from, for example, \cite[Theorem~3.1]{chen2010normal}), and since $\max$ is Lipschitz,
\*[
\lim_{T\to\infty} \EEE{M\sim\binomdist(T,1/2)^{\otimes \numeffexperts}} \rbra{\max_{\effexpidx\in\effexperts}\frac{1}{\sqrt{T}}\rbra{T/2 - M_\effexpidx}} 
	& = \frac{1}{2}\EEE{Z\sim\normaldist(0,1)^{\otimes \numeffexperts}} \rbra{\max_{\effexpidx\in\effexperts}Z_\effexpidx}.
\]
We now
turn to the non-asymptotic lower bound of \citet{kamath2015bounds}, which states that for all $\numeffexperts\in\Nats$
\*[
	\frac{\EEE{Z\sim\normaldist(0,1)^{\otimes \numeffexperts}} \rbra{\max_{\effexpidx\in\effexperts}Z_\effexpidx}}{0.23\sqrt{\log \numeffexperts}} \geq 1,
\]
Now, by the definition of limit, for each $\numeffexperts$ there exists a $\tmid(\numeffexperts)$ such that for $T\geq \tmid(\numeffexperts)$
\*[
\EEE{M\sim\binomdist(T,1/2)^{\otimes \numeffexperts}} \rbra{\max_{\effexpidx\in\effexperts}\frac{1}{\sqrt{T}}\rbra{T/2 - M_\effexpidx}} 
	& \geq \rbra{\frac{0.2}{0.23}} \rbra{\frac{1}{2}}\EEE{Z\sim\normaldist(0,1)^{\otimes \numeffexperts}} \rbra{\max_{\effexpidx\in\effexperts}Z_\effexpidx}\\
	& \geq \sqrt{\rbra{\log\numeffexperts} / 100} \, . 
\]

Combining these facts, 
we have that for any $\numexperts\in\Nats$, $\numeffexperts\leq\numexperts$ and $T\geq \tmid(\numeffexperts)$,
\*[
	\inf_{\predpolicy \in \predpolicyspace_\numexperts} 
	\sup_{\policy \in \policyspace(\distnball{})}
	\frac{\EEboth
	\max_{\expidx \in \experts}\sum_{t=1}^T \sbra{
	\loss(\pred(t), \data(t)) - \loss(\expidxpred(t),\data(t))}}{\sqrt{(T\log\numeffexperts) / 100}}
		& \geq 1.
\]  
\manualendproof

\subsection{Proof of \cref{thm:hedge-lb}}\label{ssec:hedge-lb-proof}

Fix $\numexperts > 0$, $\numeffexperts \leq \numexperts$, and $c>0$ within the respective constraints of either (i) or (ii) of \cref{thm:hedge-lb}.
Let $\dataspace = \{0,1\}^\numexperts$, $\predspace = [0,1]^\numexperts$, and $\loss(\pred,\data) = \inner{\pred}{\data}$, and
suppose $T \geq \frac{32\log\numexperts}{c^2}$.
In order to prove both cases of the \Hedge{} lower bound, our approach is first to define a specific example of a $\distnball{} \in \distnballsuperset{}(\numexperts,\numeffexperts)$. Then, for either case we find a specific policy $\policy \in \policyspace(\distnball{})$ which forces \Hedge{} to incur at least as much \regretname{} as the desired lower bound. It turns out that we do not need anything more complicated than a $\distnball{}$ that consists of convex combinations of deterministic experts.

For simplicity, suppose that $\numeffexperts$ is even. (The argument is the same, but with some more housekeeping, when $\numeffexperts$ is odd.) We wish to split $\experts$ up so that $\effexperts = \range{\numeffexperts}$, and thus $\neffexperts = \range{\numexperts} \setminus \range{\numeffexperts}$. To do so, we define a set of distributions on $\dataspace$ by
\*[
	U 
	= \set{\delta_{m}^{\otimes \numeffexperts/2}\otimes \delta_{1-m}^{\otimes \numeffexperts/2} \otimes \delta_1^{\otimes (N-\numeffexperts)} \stT  m \in \set{0,1}}
	\union
	\set{\delta_0\otimes \delta_1^{\otimes (N-1)}},
\]
and suppose that each expert $\expidx\in\experts$ predicts $(\exppred)_{\expidx} = \unitvec_\expidx$, the unit vector in direction $\expidx$.
Thus, the set $U$ induces three different expert loss distributions. In each of these, the incurred loss of any expert is assigned either a Dirac measure at $0$ or at $1$. Thus, the three distributions are defined by which experts incur loss of $0$ (with the rest incurring loss of $1$). These options are either: a) the first $\numeffexperts/2$ incur loss of $0$, b) the experts labelled $\numeffexperts/2 + 1$ to $\numeffexperts$ incur loss of $0$, and c) only the first expert incurs loss of $0$.

Then, we define $\distnball{}$ to be the 
convex hull of $U$. 
One can check that any convex combination of the three distributions in $U$ can only lead to an expert in $\effexperts$ being optimal in expectation, and additionally note that $\Deltaeff = 1/2$. Consequently, $\distnball{} \in \distnballsuperset{}(\numexperts,\numeffexperts)$, so it remains to find a $\policy \in \policyspace(\distnball{})$ that forces \Hedge{} with either parametrization to incur the \regretname{} of the theorem.

Before we do this, we first recall the adversarial analysis of \Hedge{} by \cite[Theorem~2.3]{plg07}. Similar to that analysis, we will analyze the telescoping series
\*[
	\Psi(t) = \frac{1}{\lr(t+1)}\log(\hedgeweight_{\expidxoptpath(t)}(t+1)) - \frac{1}{\lr(t)}\log(\hedgeweight_{\expidxoptpath(t-1)}(t)),
\]
which, for an arbitrary $\tmid$, satisfies
\*[
	\sum_{t=\tmid+1}^T \Psi(t) 
	& = \frac{1}{\lr(T+1)}\log(\hedgeweight_{\expidxoptpath(T)}(T+1)) - \frac{1}{\lr(\tmid+1)}\log(\hedgeweight_{\expidxoptpath(\tmid)}(\tmid+1)).
\]

When upper bounding, \citeauthor{plg07} used that the first term was negative and kept the second term, but we now wish to use that the second term is positive to obtain
\[\label{eqn:lb-psi-bound}
	\sum_{t=\tmid+1}^T \Psi(t)
	&\geq  \frac{1}{\lr(T+1)}\log(\hedgeweight_{\expidxoptpath(T)}(T+1)).
\]

Then, we can partition $-\Psi(t)$ into
\*[
	-\Psi(t) 
	&= \rbra{\frac{1}{\lr(t+1)} - \frac{1}{\lr(t)}} \log\left(\frac{1}{\hedgeweight_{\expidxoptpath(t)}(t+1)}\right)
	+ \frac{1}{\lr(t)}\log
		\left(
		\frac
		{
			\frac
			{\exp\left\{-\lr(t) \Loss_{\expidxoptpath(t)}(t) \right\}}
			{\sum_{\expidx\in\experts}\exp\left\{-\lr(t) \Loss_{\expidx}(t) \right\}}
		}
		{
			\frac
			{\exp\left\{-\lr(t+1) \Loss_{\expidxoptpath(t)}(t) \right\}}
			{\sum_{\expidx\in\experts}\exp\left\{-\lr(t+1) \Loss_{\expidx}(t) \right\}}
		}		
		\right) \\
	& \qquad + \frac{1}{\lr(t)} \log
		\left(
			\frac
			{\sum_{\expidx\in\experts}\exp\left\{-\lr(t) \Loss_{\expidx}(t) \right\}}
			{\sum_{\expidx\in\experts}\exp\left\{-\lr(t) \Loss_{\expidx}(t-1) \right\}}
		\right)
	+ \sbra{\Loss_{\expidxoptpath(t)}(t) - \Loss_{\expidxoptpath(t-1)}(t-1)}.
\]
Observe that
\*[
	&\hspace{-2em}\frac{1}{\lr(t)} \log
		\left(
			\frac
			{\sum_{\expidx\in\experts}\exp\left\{-\lr(t) \Loss_{\expidx}(t) \right\}}
			{\sum_{\expidx\in\experts}\exp\left\{-\lr(t) \Loss_{\expidx}(t-1) \right\}}
		\right) \\
	&= \frac{1}{\lr(t)} \log
		\left(
			\sum_{\expidx\in\experts}\frac
			{\exp\left\{-\lr(t) \Loss_{\expidx}(t-1) \right\}}
			{\sum_{\expidxdum\in\experts}\exp\left\{-\lr(t) \Loss_{\expidxdum}(t-1) \right\}} \exp\left\{-\lr(t) \loss_{\expidx}(t) \right\}
		\right) \\
	&= \frac{1}{\lr(t)} \log
		\left(
			\sum_{\expidx\in\experts}\hedgeweight_\expidx(t) \exp\left\{-\lr(t) \loss_{\expidx}(t) \right\}
		\right) \\
	&= - \sum_{\expidx \in \experts}\hedgeweight_\expidx(t) \loss_\expidx(t) \\
		&\qquad
		+ \frac{1}{\lr(t)}\log
		\left( 
			\exp\left\{\lr(t)\sum_{\expidx \in \experts}\hedgeweight_\expidx(t)\loss_\expidx(t) \right\} 
			\sum_{\expidx \in \experts}\hedgeweight_\expidx(t)\exp\left\{-\lr(t) \loss_\expidx(t) \right\}
		\right) \\
	&= - \sum_{\expidx \in \experts}\hedgeweight_\expidx(t) \loss_\expidx(t) \\
		&\qquad
		+ \frac{1}{\lr(t)}\log
		\left( 
			\sum_{\expidx \in \experts}\hedgeweight_\expidx(t)\exp\left\{\lr(t) \left[- \loss_\expidx(t) + \sum_{\expidxdum\in\experts} \hedgeweight_{\expidxdum}(t) \loss_{\expidxdum}(t)\right] \right\}
		\right).
\]
Thus, we can write
\*[
	-\Psi(t) 
	&= \rbra{\frac{1}{\lr(t+1)} - \frac{1}{\lr(t)}} \log\left(\frac{1}{\hedgeweight_{\expidxoptpath(t)}(t+1)}\right)
	+ \frac{1}{\lr(t)}\log
		\left(
		\frac
		{
			\frac
			{\exp\left\{-\lr(t) \Loss_{\expidxoptpath(t)}(t) \right\}}
			{\sum_{\expidx\in\experts}\exp\left\{-\lr(t) \Loss_{\expidx}(t) \right\}}
		}
		{
			\frac
			{\exp\left\{-\lr(t+1) \Loss_{\expidxoptpath(t)}(t) \right\}}
			{\sum_{\expidx\in\experts}\exp\left\{-\lr(t+1) \Loss_{\expidx}(t) \right\}}
		}		
		\right) \\
	& \qquad -\sum_{\expidx\in\experts} 
	\hedgeweight_{\expidx}(t)  \loss_{\expidx}(t)
		+ \frac{1}{\lr(t)} \log
		\left(
 				 \mathop{\EE}_{\Expidx \sim \hedgeweightvec(t)}\exp\left\{\lr(t) \rbra{-\loss_{\Expidx}(t) - \mathop{\EE}_{\Expidxdum \sim \hedgeweightvec(t)}[-\loss_{\Expidxdum}(t)]} \right\}
		\right) \\
	&\qquad + \sbra{\Loss_{\expidxoptpath(t)}(t) - \Loss_{\expidxoptpath(t-1)}(t-1)} \\
	& \defas A(t)+B(t)+C_1(t) +C_2(t)+D(t) .
\]

First, observe that since $\lr(t)$ is decreasing in both cases, $B(t) \geq 0$. Also,
\*[
	\sum_{t=\tmid+1}^T A(t) 
	&= \sum_{t=\tmid+1}^T\rbra{\frac{1}{\lr(t+1)} - \frac{1}{\lr(t)}} \log\left(\frac{1}{\hedgeweight_{\expidxoptpath(t)}(t+1)}\right), \\
	\sum_{t=\tmid+1}^T C_1(t) 
	&= -\sum_{t=\tmid+1}^T\sum_{\expidx\in\experts} 
		\hedgeweight_{\expidx}(t)  \loss_{\expidx}(t), \\
	\sum_{t=\tmid+1}^T C_2(t) 
	&= \sum_{t=\tmid+1}^T \frac{1}{\lr(t)} \log
		\left(
 				 \mathop{\EE}_{\Expidx \sim \hedgeweightvec(t)}\exp\left\{\lr(t) \rbra{-\loss_{\Expidx}(t) - \mathop{\EE}_{\Expidxdum \sim \hedgeweightvec(t)}[-\loss_{\Expidxdum}(t)]} \right\}
		\right), \text{ and }\\
	\sum_{t=\tmid+1}^T D(t) 
	&= \Loss_{\expidxoptpath(T)}(T) - \Loss_{\expidxoptpath(\tmid)}(\tmid).
\]

Thus, combining these with \cref{eqn:lb-psi-bound} gives
\*[
	&\hspace{-2em}-\frac{1}{\lr(T+1)}\log(\hedgeweight_{\expidxoptpath(T)}(T+1)) \\
	&\geq - \sum_{t=\tmid+1}^T \Psi(t) \\
	&\geq \sum_{t=\tmid+1}^T\rbra{\frac{1}{\lr(t+1)} - \frac{1}{\lr(t)}} \log\left(\frac{1}{\hedgeweight_{\expidxoptpath(t)}(t+1)}\right)
	-\sum_{t=\tmid+1}^T\sum_{\expidx\in\experts} 
		\hedgeweight_{\expidx}(t) \loss_{\expidx}(t) \\
	& \qquad + \sum_{t=\tmid+1}^T \frac{1}{\lr(t)} \log
		\left(
 				 \mathop{\EE}_{\Expidx \sim \hedgeweightvec(t)}\exp\left\{\lr(t) \rbra{-\loss_{\Expidx}(t) - \mathop{\EE}_{\Expidxdum \sim \hedgeweightvec(t)}[-\loss_{\Expidxdum}(t)]} \right\}
		\right) \\
	& \qquad + \Loss_{\expidxoptpath(T)}(T) - \Loss_{\expidxoptpath(\tmid)}(\tmid).
\]

Rearranging, we see that
\[\label{eqn:lb-regret-diff-bound}
	&\hspace{-2em}\playerexpregretHedge(T) - \playerexpregretHedge(\tmid) \\
	&\geq \frac{1}{\lr(T+1)}\log(\hedgeweight_{\expidxoptpath(T)}(T+1))
	+ \sum_{t=\tmid+1}^T\rbra{\frac{1}{\lr(t+1)} - \frac{1}{\lr(t)}} \log\left(\frac{1}{\hedgeweight_{\expidxoptpath(t)}(t+1)}\right) \\
	& \qquad + \sum_{t=\tmid+1}^T \frac{1}{\lr(t)} \log
		\left(
 				 \mathop{\EE}_{\Expidx \sim \hedgeweightvec(t)}\exp\left\{\lr(t) \rbra{-\loss_{\Expidx}(t) - \mathop{\EE}_{\Expidxdum \sim \hedgeweightvec(t)}[-\loss_{\Expidxdum}(t)]} \right\}
		\right).
\]
The way we bound these terms will depend on the specific parametrization and \envpolicyname{} chosen for that parametrization.

\subsubsection{\Hedge{} with adversarially optimal parametrization}
\label{ssec:lb-logN-prf}

First, we consider the case of playing \Hedge{} with $\hedgelr(\numexperts) = c\sqrt{\log\numexperts}$. We define the \envpolicyname{} $\policy \in \policyspace(\distnball{})$ such that at round $t$, the distribution on $\dataspace$ is
\*[
	\distn_t = \begin{cases}
		\delta_0^{\otimes (\numeffexperts/2)} \otimes \delta_1^{\otimes (N-\numeffexperts/2)}
			&: t \text{ odd}\\
		\delta_1^{\otimes (\numeffexperts/2)} \otimes \delta_0^{\otimes (\numeffexperts/2)} \otimes\delta_1^{\otimes (N-\numeffexperts)}
			&: t \text{ even}.
	\end{cases}
\]
That is, on even and odd rounds the data alternates between the first half of $\effexperts$ incurring loss of $0$ and the second half of $\effexperts$ incurring loss of $0$, with the remaining $\numexperts - \numeffexperts$ experts always incurring loss of $1$. Both of these distributions are actually in $U$, so they are trivially in $\distnball{}$.

Now, due to the deterministic nature of $\policy$, we can exactly determine what $\hedgeweightvec(t)$ will look like. In particular, we have that
\[\label{eqn:logN-lb-loss}
	\Loss_\expidx(t) = \begin{cases}
		\frac{t-1}{2}
			&: t \text{ odd,} \andT \expidx\in \range{\numeffexperts/2} \\
		\frac{t+1}{2}
			&: t \text{ odd,} \andT \expidx\in \range{\numeffexperts}\setminus\range{\numeffexperts/2}	\\
		\frac{t}{2}
			&: t \text{ even,} \andT \expidx\in \range{\numeffexperts} \\
		t
			&: \expidx\not\in \range{\numeffexperts}.			
	\end{cases}
\]

Thus, recognizing that $\hedgeweight_\expidx(t)$ uses $\Loss_\expidx(t-1)$ and letting $\lbhelper(t) = \exp\left\{-\lr(t) \frac{(t-1)}{2}\right\}$, we can define $\hedgeweight_\expidx(t)$ by
\*[
	\begin{cases}
		[\numeffexperts + (\numexperts-\numeffexperts) \lbhelper(t)]^{-1}
			&: t \text{ odd,} \andT \expidx\in \range{\numeffexperts} \\
		\lbhelper(t)[\numeffexperts + (\numexperts-\numeffexperts) \lbhelper(t)]^{-1}
			&: t \text{ odd,} \andT \expidx\not\in \range{\numeffexperts}	\\
		\exp(\lr(t)/2)[\numeffexperts\cosh(\lr(t)/2) + (\numexperts-\numeffexperts) \lbhelper(t)]^{-1}
			&: t \text{ even,} \andT \expidx\in \range{\numeffexperts/2} \\
		\exp(-\lr(t)/2)[\numeffexperts\cosh(\lr(t)/2) + (\numexperts-\numeffexperts) \lbhelper(t)]^{-1}
			&: t \text{ even,} \andT \expidx\in \range{\numeffexperts}\setminus \range{\numeffexperts/2} \\
		\lbhelper(t)[\numeffexperts\cosh(\lr(t)/2) + (\numexperts-\numeffexperts) \lbhelper(t)]^{-1}
			&: t \text{ even,} \andT \expidx\not\in \range{\numeffexperts}.						
	\end{cases}
\]

The next thing to observe is that for all $t$, $\expidxoptpath(t) \in [\numeffexperts]$ \as{}, and $\hedgeweight_{\expidxoptpath(t)}(t+1)$ equals
\[\label{eqn:optweight-lb-logN}
	\begin{cases}
		[\numeffexperts + (\numexperts-\numeffexperts) \lbhelper(t+1)]^{-1}
			&: t+1 \text{ odd}\\
		\exp(\lr(t+1)/2)[\numeffexperts\cosh(\lr(t+1)/2) + (\numexperts-\numeffexperts) \lbhelper(t+1)]^{-1}
			&: t+1 \text{ even}.						
	\end{cases}
\]

Now, let $\tmid = \floor{\frac{16\log\numexperts}{c^2}}$ and suppose $t\geq \tmid+1$. Then, using $\textstyle{\frac{x}{\sqrt{x+1}}} \geq \textstyle{\frac{1}{2}}\sqrt{x}$ for $x \geq 1$,
\*[
	\lbhelper(t)
	&\leq \lbhelper(t+1) \\
	&= \exp\left\{-\frac{c\sqrt{\log\numexperts}}{\sqrt{t+1}}\frac{t}{2} \right\} \\
	&\leq \exp\left\{-\frac{c\sqrt{t\log\numexperts}}{4} \right\} \\
	&\leq  \exp\left\{-\frac{c\sqrt{16(\log\numexperts)^2}}{4c} \right\} \\
	&= \frac{1}{N}.
\]
This gives
\[
	\frac{1}{\numeffexperts + (\numexperts-\numeffexperts) \lbhelper(t+1)}
	\geq \frac{1}{\numeffexperts + (\numexperts-\numeffexperts)/\numexperts}
	\geq \frac{1}{\numeffexperts + 1}.
\label{eqn:weight-c-lb}
\]
Also, $\exp\left\{\lr(t+1)/2 \right\} \geq 1$, so
\*[
	\cosh\left(\frac{\lr(t+1)}{2} \right)
	& = \frac{1}{2}\left[\exp\left\{\frac{c\sqrt{\log\numexperts}}{2\sqrt{t+1}}\right\} + \exp\left\{-\frac{c\sqrt{\log\numexperts}}{2\sqrt{t+1}}\right\} \right] \\
	&\leq \frac{1}{2}\left[\exp\left\{\frac{c^2\sqrt{\log\numexperts}}{2\sqrt{16\log\numexperts}} \right\} + 1 \right] \\
	&\leq \exp\left\{c^2/8\right\}.
\]
Thus,
\*[
	\frac{\exp\left\{\lr(t+1)/2 \right\}}{\numeffexperts\cosh(\lr(t+1)/2) + (\numexperts-\numeffexperts) \lbhelper(t+1)}
	\geq \frac{1}{\exp\left\{c^2/8\right\}\numeffexperts + 1},
\]
which combined with \cref{eqn:weight-c-lb} gives that for all $t \geq \tmid+1$,
\*[
	\hedgeweight_{\expidxoptpath(t)}(t+1) \geq \frac{1}{\exp\left\{c^2/8\right\}\numeffexperts+1}.
\]
This observation shows that if $T \geq \tmid + 1$,
\[
	\frac{1}{\lr(T+1)}\log(\hedgeweight_{\expidxoptpath(T)}(T+1)) 
		& \geq - \frac{\sqrt{T+1}}{c\sqrt{\log\numexperts}} \Big[c^2/8 + \log(\numeffexperts+1) \Big]. 
\label{eqn:reg-diff-lb1}
\]

In order to control the terms of \cref{eqn:lb-regret-diff-bound}, we first observe that 
\*[
  \sum_{t=\tmid+1}^T\rbra{\frac{1}{\lr(t+1)} - \frac{1}{\lr(t)}} \log\left(\frac{1}{\hedgeweight_{\expidxoptpath(t)}(t+1)}\right) \geq 0.
\] 
Then, we will use \cref{eqn:reg-diff-lb1} to lower bound the first term on the RHS of \cref{eqn:lb-regret-diff-bound}. We now turn to controlling the third term, again supposing $t \geq \tmid+1$. Notice that if $t$ is odd, then $\Expidx \sim \hedgeweightvec(t)$ means $\loss_{\Expidx}(t) \sim \bernoullidist\rbra{ \frac{\numeffexperts/2 +(N-\numeffexperts)\lbhelper(t)}{\numeffexperts +(N-\numeffexperts)\lbhelper(t)}}$. Therefore,
\*[
	&\hspace{-2em}\frac{1}{\lr(t)} \log
		\left(
 				 \mathop{\EE}_{\Expidx \sim \hedgeweightvec(t)}\exp\left\{\lr(t) \rbra{-\loss_{\Expidx}(t) - \mathop{\EE}_{\Expidxdum \sim \hedgeweightvec(t)}[-\loss_{\Expidxdum}(t)]} \right\}
		\right) \\
	&= \frac{1}{\lr(t)} \log
		\bigg(
 				\frac{\numeffexperts/2}{\numeffexperts + (\numexperts-\numeffexperts)\lbhelper(t)}
 				 \exp\left\{\lr(t) \frac{\numeffexperts/2 + (\numexperts-\numeffexperts)\lbhelper(t)}{\numeffexperts + (\numexperts-\numeffexperts)\lbhelper(t)} \right\} \\
 				 & \qquad \qquad \qquad+
 				 \frac{\numeffexperts/2 + (\numexperts-\numeffexperts)\lbhelper(t)}{\numeffexperts + (\numexperts-\numeffexperts)\lbhelper(t)}
 				 \exp\left\{-\lr(t) \frac{\numeffexperts/2}{\numeffexperts + (\numexperts-\numeffexperts)\lbhelper(t)} \right\}
		\bigg) \\
	&\geq \frac{1}{\lr(t)} \log
		\bigg(
 				\frac{\numeffexperts/2}{\numeffexperts + (\numexperts-\numeffexperts)\lbhelper(t)}
 				 \exp\left\{\lr(t) \frac{\numeffexperts/2}{\numeffexperts + (\numexperts-\numeffexperts)\lbhelper(t)} \right\} \\
 				 & \qquad \qquad \qquad+
 				 \frac{\numeffexperts/2}{\numeffexperts + (\numexperts-\numeffexperts)\lbhelper(t)}
 				 \exp\left\{-\lr(t) \frac{\numeffexperts/2}{\numeffexperts + (\numexperts-\numeffexperts)\lbhelper(t)} \right\}
		\bigg) \\
	&= \frac{1}{\lr(t)} \log
		\bigg(
 				\frac{\numeffexperts}{\numeffexperts + (\numexperts-\numeffexperts)\lbhelper(t)}
 				 \cosh\left\{\lr(t) \frac{\numeffexperts/2}{\numeffexperts + (\numexperts-\numeffexperts)\lbhelper(t)} \right\}
		\bigg).
\] 

Now, we observe that $\log(\cosh(x))$ is $\frac{1}{\cosh^2(x_1)}$-strongly convex on $x\in[0,x_1]$. Thus,
\*[
	\log(\cosh(x)) - \log(\cosh(0))
	\geq \frac{d}{dy}\log(\cosh(y)) \bigg\lvert_{y=0} + \frac{x^2}{2\cosh^2(x_1)},
\]
so $\cosh(x) \geq \exp\big\{\frac{x^2}{2 \cosh^2 (x_1 )} \big\}$ on this interval. Then, notice that if $t \geq \tmid+1$, $\lr(t) \leq c^2/4$. So, $\lbhelper(t) \geq 0$ gives

\*[
	\lr(t) \frac{\numeffexperts/2}{\numeffexperts + (\numexperts-\numeffexperts)\lbhelper(t)}
	\leq \frac{\lr(t)}{2}
	\leq \frac{c^2}{8}.
\]
Using this strong-convexity bound on $\cosh(x)$ along with the two inequalities $\lbhelper(t) \leq 1/\numexperts$  and $[2\cosh^2(c^2/8)]^{-1} \geq (1/2)\exp\left\{-c^2/4 \right\}$ results in
\*[
	&\hspace{-2em}\frac{1}{\lr(t)} \log
		\bigg(
 				\frac{\numeffexperts}{\numeffexperts + (\numexperts-\numeffexperts)\lbhelper(t)}
 				 \cosh\left\{\lr(t) \frac{\numeffexperts/2}{\numeffexperts + (\numexperts-\numeffexperts)\lbhelper(t)} \right\}
		\bigg) \\
	&\geq \frac{1}{\lr(t)}\log\bigg(\frac{\numeffexperts}{\numeffexperts + (\numexperts-\numeffexperts)\lbhelper(t)} \bigg)
	 + \frac{\lr(t)}{2\exp\left\{c^2/4 \right\}}\bigg(\frac{\numeffexperts/2}{\numeffexperts + (\numexperts-\numeffexperts)\lbhelper(t)} \bigg)^2 \\
	&\geq \frac{1}{\lr(t)}\log\bigg(\frac{\numeffexperts}{\numeffexperts + (\numexperts-\numeffexperts)\lbhelper(t)}\bigg)
	 + \frac{\lr(t)}{2\exp\left\{c^2/4 \right\}}\bigg(\frac{\numeffexperts}{2\numeffexperts + 1} \bigg)^2 \\
	&\geq \frac{1}{\lr(t)}\log\bigg(\frac{\numeffexperts}{\numeffexperts + (\numexperts-\numeffexperts)\lbhelper(t)} \bigg) + \frac{\lr(t)}{18\exp\left\{c^2/4 \right\}}.
\]

Finally, using $\log(x) \geq 1 - 1/x$,
\*[
	\frac{1}{\lr(t)}\log\bigg(\frac{\numeffexperts}{\numeffexperts + (\numexperts-\numeffexperts)\lbhelper(t)} \bigg)
	\geq \frac{1}{\lr(t)}\bigg(1 - \frac{\numeffexperts + (\numexperts-\numeffexperts)\lbhelper(t)}{\numeffexperts} \bigg)
	\geq - \frac{\numexperts}{\numeffexperts}\frac{\lbhelper(t)}{\lr(t)}.
\]
Thus, when $t \geq \tmid+1$ and $t$ is odd,
\[
	&\hspace{-2em}\frac{1}{\lr(t)} \log
		\left(
 				 \mathop{\EE}_{\Expidx \sim \hedgeweightvec(t)}\exp\left\{\lr(t) \rbra{-\loss_{\Expidx}(t) - \mathop{\EE}_{\Expidxdum \sim \hedgeweightvec(t)}[-\loss_{\Expidxdum}(t)]} \right\}
		\right) \\
	&\geq - \frac{\numexperts}{\numeffexperts}\frac{\lbhelper(t)}{\lr(t)} + \frac{\lr(t)}{18\exp\left\{c^2/4 \right\}}.
\label{eqn:reg-diff-lb2a}
\]

Otherwise, if $t$ is even, then $\Expidx \sim \hedgeweightvec(t)$ implies 
\*[
  \loss_{\Expidx}(t) \sim \bernoullidist\rbra{ \frac{(\numeffexperts/2)\exp\left\{\lr(t)/2 \right\} +(N-\numeffexperts)\lbhelper(t)}{\numeffexperts\cosh\left\{\lr(t)/2 \right\} +(N-\numeffexperts)\lbhelper(t)}}.
\] 
So, using $\cosh(x) \geq 1$,
\*[
	&\hspace{-2em}\frac{1}{\lr(t)} \log
		\left(
 				 \mathop{\EE}_{\Expidx \sim \hedgeweightvec(t)}\exp\left\{\lr(t) \rbra{-\loss_{\Expidx}(t) - \mathop{\EE}_{\Expidxdum \sim \hedgeweightvec(t)}[-\loss_{\Expidxdum}(t)]} \right\}
		\right) \\
	&= \frac{1}{\lr(t)} \log
		\Bigg(
 				\frac{(\numeffexperts/2)\exp\left\{-\lr(t)/2\right\}}{\numeffexperts\cosh\left\{\lr(t)/2 \right\} +(N-\numeffexperts)\lbhelper(t)} \\
 				 &\qquad\qquad\qquad\qquad\times
 				 \exp\left\{\lr(t) \frac{(\numeffexperts/2)\exp\left\{\lr(t)/2\right\} + (\numexperts-\numeffexperts)\lbhelper(t)}{\numeffexperts\cosh\left\{\lr(t)/2 \right\} +(N-\numeffexperts)\lbhelper(t)} \right\} \\
 				 & \qquad \qquad \qquad+
 				 \frac{(\numeffexperts/2)\exp\left\{\lr(t)/2\right\} + (\numexperts-\numeffexperts)\lbhelper(t)}{\numeffexperts\cosh\left\{\lr(t)/2 \right\} +(N-\numeffexperts)\lbhelper(t)} \\
 				 &\qquad\qquad\qquad\qquad\times
 				 \exp\left\{-\lr(t) \frac{(\numeffexperts/2)\exp\left\{-\lr(t)/2\right\}}{\numeffexperts\cosh\left\{\lr(t)/2 \right\} +(N-\numeffexperts)\lbhelper(t)} \right\}
		\Bigg) \\
	&\geq \frac{1}{\lr(t)} \log
		\Bigg(
 				\frac{(\numeffexperts/2)\exp\left\{-\lr(t)/2\right\}}{\numeffexperts\cosh\left\{\lr(t)/2 \right\} +(N-\numeffexperts)\lbhelper(t)} \\
 				&\qquad\qquad\qquad\qquad\times
 				 \exp\left\{\lr(t) \frac{(\numeffexperts/2)\exp\left\{-\lr(t)/2\right\}}{\numeffexperts\cosh\left\{\lr(t)/2 \right\} +(N-\numeffexperts)\lbhelper(t)} \right\} \\
 				 & \qquad \qquad \qquad+
 				 \frac{(\numeffexperts/2)\exp\left\{-\lr(t)/2\right\}}{\numeffexperts\cosh\left\{\lr(t)/2 \right\} +(N-\numeffexperts)\lbhelper(t)} \\
 				 &\qquad\qquad\qquad\qquad\times
 				 \exp\left\{-\lr(t) \frac{(\numeffexperts/2)\exp\left\{-\lr(t)/2\right\}}{\numeffexperts\cosh\left\{\lr(t)/2 \right\} +(N-\numeffexperts)\lbhelper(t)} \right\}
		\Bigg) \\
	&= \frac{1}{\lr(t)} \log
		\bigg(
 				\frac{\numeffexperts\exp\left\{-\lr(t)/2 \right\}}{\numeffexperts\cosh\left\{\lr(t)/2 \right\} + (\numexperts-\numeffexperts)\lbhelper(t)} \\
 				&\qquad\qquad\qquad\qquad\times
 				 \cosh\left\{\lr(t) \frac{(\numeffexperts/2)\exp\left\{-\lr(t)/2 \right\}}{\numeffexperts\cosh\left\{\lr(t)/2 \right\} + (\numexperts-\numeffexperts)\lbhelper(t)} \right\}
		\bigg) \\
	&\geq \frac{1}{\lr(t)} \log
		\bigg(
 				\frac{\numeffexperts\exp\left\{-\lr(t)/2 \right\}}{\numeffexperts\cosh\left\{\lr(t)/2 \right\} + (\numexperts-\numeffexperts)\lbhelper(t)}
		\bigg).
\]

Then, using $\log(x) \geq 1-1/x$,
\[
	&\hspace{-2em}\frac{1}{\lr(t)} \log
		\bigg(
 				\frac{\numeffexperts\exp\left\{-\lr(t)/2 \right\}}{\numeffexperts\cosh\left\{\lr(t)/2 \right\} + (\numexperts-\numeffexperts)\lbhelper(t)}
		\bigg) \\
	&\geq \frac{1}{\lr(t)}\bigg(1 - \frac{\numeffexperts\cosh\left\{\lr(t)/2 \right\} + (\numexperts-\numeffexperts)\lbhelper(t)}{\numeffexperts\exp\left\{-\lr(t)/2 \right\}} \bigg) \\
	&= \frac{1}{\lr(t)}\bigg(\frac{\numeffexperts\sinh\left\{-\lr(t)/2 \right\} - (\numexperts-\numeffexperts)\lbhelper(t)}{\numeffexperts\exp\left\{-\lr(t)/2 \right\}} \bigg) \\
	&\geq -\frac{1}{\lr(t)}\frac{\numexperts}{\numeffexperts}\lbhelper(t)\exp\left\{-\lr(t)/2 \right\} \\
	&= -\frac{\numexperts}{\numeffexperts}\frac{\exp\left\{-\lr(t)(\frac{t-2}{2}) \right\}}{\lr(t)}.
\label{eqn:reg-diff-lb2b}
\]

Combing \cref{eqn:reg-diff-lb2a,eqn:reg-diff-lb2b} and recognizing $\lbhelper(t) \leq \exp\left\{-\lr(t)(\frac{t-2}{2}) \right\}$ gives us
\[
	&\hspace{-2em}\frac{1}{\lr(t)} \log
		\left(
 				 \mathop{\EE}_{\Expidx \sim \hedgeweightvec(t)}\exp\left\{\lr(t) \rbra{-\loss_{\Expidx}(t) - \mathop{\EE}_{\Expidxdum \sim \hedgeweightvec(t)}[-\loss_{\Expidxdum}(t)]} \right\}
		\right) \\
	&\geq -\frac{\numexperts}{\numeffexperts}\frac{\exp\left\{-\lr(t)(\frac{t-2}{2}) \right\}}{\lr(t)}
	+ \frac{\lr(t)}{18\exp\left\{c^2/4 \right\}} \ind{t \text{ is odd}}.
\label{eqn:reg-diff-lb3}
\]

Now, we wish to sum the two terms in \cref{eqn:reg-diff-lb3}. First, using that $\textstyle{\frac{t-2}{\sqrt{t}}} \geq \textstyle{\frac{3\sqrt{t}}{2}}$ when $t \geq 6$ and $\tmid \geq 5$ since $\log\numexperts \geq 5/2$, as well as crudely lower bounding $\tmid$ by dividing by $2$, 
\[
	&\hspace{-2em}\sum_{t=\tmid+1}^T \frac{\exp\left\{-\lr(t)(\frac{t-2}{2}) \right\}}{\lr(t)} \\
	&= \sum_{t=\tmid+1}^T \frac{\sqrt{t}}{c\sqrt{\log\numexperts}} \exp\left\{-\frac{c\sqrt{\log\numexperts}}{\sqrt{t}}\Big(\frac{t-2}{2}\Big) \right\} \\
	&\leq \sum_{t=\tmid+1}^T \frac{\sqrt{t}}{c\sqrt{\log\numexperts}} \exp\left\{-\frac{3c}{4} \sqrt{t\log\numexperts} \right\} \\
	&\leq \frac{1}{c\sqrt{\log\numexperts}}\int_{\tmid}^\infty \sqrt{t}\exp\left\{-\frac{3c}{4} \sqrt{t\log\numexperts} \right\} dt \\
	&= \frac{128\Big(\tmid \frac{9c^2\log\numexperts}{16} + 2\sqrt{\tmid}\frac{3c\sqrt{\log\numexperts}}{4} + 2 \Big)\exp\left\{-(3c/4)\sqrt{\tmid\log\numexperts} \right\}}{27c^4 (\log\numexperts)^2} \\
	&\leq \frac{128\Big(\frac{16\log\numexperts}{c^2} \frac{9c^2\log\numexperts}{16} + 2\frac{4\sqrt{\log\numexperts}}{c}\frac{3c\sqrt{\log\numexperts}}{4} + 2 \Big)\exp\left\{-(3c/4)\frac{\sqrt{8}\log\numexperts}{c} \right\}}{c^4 (\log\numexperts)^2} \\
	&\leq \frac{128\Big(16 + \frac{9}{\log\numexperts} + \frac{2}{(\log\numexperts)^{2}}\Big)}{c^4 \numexperts^2}.
\label{eqn:reg-diff-lb4}
\]

Then, supposing the worst case where both $\tmid+1$ and $T$ are even, and crudely upper bounding $\tmid+2$ by multiplying by $3/2$,
\[
	\sum_{t=\tmid+1}^T \lr(t) \ind{t \text{ is odd}}
	&= \sum_{t=(\tmid+2)/2}^{(T-1)/2} \lr(2t) \\
	&= \sum_{t=(\tmid+2)/2}^{(T-1)/2} \frac{c\sqrt{\log\numexperts}}{\sqrt{2t}} \\
	&\geq \int_{(\tmid+2)/2}^{(T-1)/2} \frac{c\sqrt{\log\numexperts}}{\sqrt{2t}} dt \\
	&= c \sqrt{\log\numexperts}[\sqrt{T-1} - \sqrt{\tmid+2}] \\
	&\geq c\sqrt{(T-1)\log\numexperts} - c\sqrt{\log\numexperts}\frac{\sqrt{24\log\numexperts}}{c} \\
	&= c\sqrt{(T-1)\log\numexperts} - 2\log\numexperts\sqrt{6}.
\label{eqn:reg-diff-lb5}
\]

Thus, combining \cref{eqn:reg-diff-lb1,eqn:reg-diff-lb4,eqn:reg-diff-lb5}, we have shown that for 
$T \geq \frac{16\log\numexperts}{c^2}$,
\*[
	\playerexpregretHedge(T)
	&\geq \playerexpregretHedge(T) - \playerexpregretHedge(\tmid) \\
	&\geq - \frac{\sqrt{T+1}}{c\sqrt{\log\numexperts}} \Big[c^2/8 + \log(\numeffexperts+1) \Big]
	- \frac{128\Big(16 + \frac{9}{\log\numexperts} + \frac{2}{(\log\numexperts)^{2}}\Big)}{c^4 \numeffexperts \numexperts} \\
	&\qquad\qquad
	+ \frac{c\sqrt{(T-1)\log\numexperts}}{18\exp\left\{c^2/4 \right\}} 
	- \frac{2\log\numexperts\sqrt{6}}{18\exp\left\{c^2/4 \right\}}.
\]

Finally, rearranging the restriction on the size $\numeffexperts$ and using $\frac{\sqrt{T-1}}{\sqrt{T+1}} \geq 1/2$, since $\log(\numeffexperts + 1)< \frac{c^2 \log\numexperts}{72\exp\left\{c^2/4 \right\}} - \frac{c^2}{8}$ it holds that
\*[
	\frac{1}{c\sqrt{\log\numexperts}}\Big[c^2/8 + \log(\numeffexperts+1) \Big]
	&< \frac{1}{2} \frac{\sqrt{T-1}}{\sqrt{T+1}} \frac{c\sqrt{\log\numexperts}}{18\exp\left\{c^2/4\right\}}.
\]

Thus, using $\numexperts \geq e^9$ and $\numeffexperts \geq 1$,
\*[
	\playerexpregretHedge(T)
	&\geq \frac{c\sqrt{(T-1)\log\numexperts}}{36\exp\left\{c^2/4 \right\}}
	- \frac{128\Big(16 + \frac{9}{\log\numexperts} + \frac{2}{(\log\numexperts)^{2}}\Big)}{c^4 \numeffexperts \numexperts}
	- \frac{2\log\numexperts\sqrt{6}}{18\exp\left\{c^2/4 \right\}} \\
	&\geq
	\frac{c\sqrt{T\log\numexperts}}{72\exp\left\{c^2/4 \right\}}
	- \frac{1}{3c^2}
	- \frac{\log\numexperts}{3}.
\]

\subsubsection{\Hedge{} with stochastically optimal parametrization}
\label{ssec:lb-c-prf}

Now, we consider the case of playing \Hedge{} with the oracle-informed parameter $\hedgelr(\numexperts,\numeffexperts)$. 
We define the \envpolicyname{} $\policy \in \policyspace(\distnball{})$ such that for some even $t_1$, at round $t$ the distribution on $\dataspace$ is
\*[
	\distn_t = \begin{cases}
		\delta_0^{\otimes (\numeffexperts/2)} \otimes \delta_1^{\otimes (N-\numeffexperts/2)}
			&: t \text{ odd} \andT t\leq t_1 \\
		\delta_1^{\otimes (\numeffexperts/2)} \otimes \delta_0^{\otimes (\numeffexperts/2)} \otimes\delta_1^{\otimes (N-\numeffexperts)}
			&: t \text{ even} \andT t\leq t_1 \\
		\delta_0\otimes \delta_1^{\otimes (N-1)}
				&: t \text{ even} \andT t> t_1.
	\end{cases}
\]
That is, the data is the same as for \Hedge{} in \cref{ssec:lb-logN-prf} up to $t = t_1$, and then afterwards all experts incur loss of $1$ except the first expert, which incurs zero loss. Once again, all of these distributions are actually in $U$, so they are trivially in $\distnball{}$.

Since $t_1$ is even, for $t > t_1$ we expand on \cref{eqn:logN-lb-loss} to obtain
\*[
	\Loss_\expidx(t) = \begin{cases}
		\frac{t_1}{2}
			&: \expidx = 1 \\
		\frac{2t - t_1}{2}
			&: \expidx\in \range{\numeffexperts}\setminus\{1\} \\
		t
			&: \expidx\not\in \range{\numeffexperts}.			
	\end{cases}
\]

Thus, when $t > t_1$, 
\*[
	\hedgeweight_1(t) = \Big[1 + (\numeffexperts-1) \exp\{-\lr(t) (t-t_1-1)\} + (N-\numeffexperts) \exp\{-\lr(t) (t-t_1/2-1)\}\Big]^{-1},
\]
and for $\expidx \neq 1$, $\hedgeweight_\expidx(t)$ equals
\*[
\begin{cases}
	\Big[\exp\{-\lr(t) (t_1-t+1)\} + \numeffexperts-1  + (\numexperts-\numeffexperts) \exp\{-\lr(t) (t_1/2)\}\Big]^{-1}
		&: \expidx\in \range{\numeffexperts}\setminus \set{1} \\
	\Big[\exp\{-\lr(t) (t_1/2-t+1)\} + (\numeffexperts-1)\exp\left\{\lr(t) (t_1/2) \right\}  + \numexperts-\numeffexperts\Big]^{-1}
		&: \expidx\not\in \range{\numeffexperts}.						
\end{cases}
\]

The next thing to observe is that for $t > t_1$, $\hedgeweight_{\expidxoptpath(t)}(t+1)$ equals
\[\label{eqn:best-exp-lb}
	\Big[1 + (\numeffexperts-1) \exp\{-\lr(t+1) (t-t_1)\} + (N-\numeffexperts) \exp\{-\lr(t+1) (t-t_1/2)\}\Big]^{-1}.
\]

Now, define 
\*[
	t_2 = \ceil{4\left(\frac{\log\numexperts}{\hedgelr(\numexperts,\numeffexperts)} + t_1 \right)^2}.
\]
If $t > t_2$, it holds that
\[\label{eqn:t2-lb}
	&\sqrt{t} > \frac{2\log\numexperts}{\hedgelr(\numexperts,\numeffexperts)} + 2t_1 \\
	&\hspace{-2em}\implies \
	\sqrt{t} > \frac{2\log\numeffexperts}{\hedgelr(\numexperts,\numeffexperts)} + 2t_1 \\
	&\hspace{-2em}\implies \
	\hedgelr(\numexperts,\numeffexperts) \frac{\sqrt{t}}{2} - \hedgelr(\numexperts,\numeffexperts)t_1 > \log\numeffexperts \\
	&\hspace{-2em}\implies \
	\frac{\hedgelr(\numexperts,\numeffexperts)[t - t_1]}{\sqrt{t+1}} > \log(\numeffexperts - 1)  \\
	&\hspace{-2em}\implies \ 
	(\numeffexperts-1) \exp\{-\lr(t+1) (t-t_1)\} < 1.
\]
Similarly, 
\[\label{eqn:t3-lb}
	&\sqrt{t} > \frac{2\log\numexperts}{\hedgelr(\numexperts,\numeffexperts)} + 2t_1 \\
	&\hspace{-2em}\implies \
	\sqrt{t} > \frac{2\log\numexperts}{\hedgelr(\numexperts,\numeffexperts)} + t_1 \\
	&\hspace{-2em}\implies \
	\sqrt{t} - t_1 > \frac{2\log\numexperts}{\hedgelr(\numexperts,\numeffexperts)} \\
	&\hspace{-2em}\implies \
	\frac{c[t - t_1/2]}{\sqrt{t+1}} > \log(\numexperts-\numeffexperts) \\
	&\hspace{-2em}\implies \
	(\numexperts-\numeffexperts) \exp\{-\lr(t+1) (t-t_1/2)\} < 1.
\]

Combining \cref{eqn:best-exp-lb,eqn:t2-lb,eqn:t3-lb} shows that when $t > t_2$, since $t_2 > t_1$ by definition, we have
\*[
	\hedgeweight_{\expidxoptpath(t)}(t+1)
		&\geq 1/3.	
\]

This observation controls the first term of \cref{eqn:lb-regret-diff-bound}. For the second term of \cref{eqn:lb-regret-diff-bound}, we note that by Jensen's inequality,
\*[
	\sum_{t=\tmid+1}^T \frac{1}{\lr(t)} \log
		\left(
 				 \mathop{\EE}_{\Expidx \sim \hedgeweightvec(t)}\exp\left\{\lr(t) \rbra{-\loss_{\Expidx}(t) - \mathop{\EE}_{\Expidxdum \sim \hedgeweightvec(t)}[-\loss_{\Expidxdum}(t)]} \right\}
		\right) \geq 0.
\]

Define $\tmid = \floor{\frac{16\log\numexperts}{[\hedgelr(\numexperts,\numeffexperts)]^2}}$. Now, when $\tmid+1 \leq t \leq t_1$, $\hedgeweight_{\expidxoptpath(t)}(t+1)$ behaves as in \cref{eqn:optweight-lb-logN}. Thus, when $t+1$ is odd, $\hedgeweight_{\expidxoptpath(t)}(t+1) \leq 1/\numeffexperts$ since $\lbhelper(t+1) \geq 0$. Otherwise, when $t+1$ is even, we use that since $\log\numexperts > 5/2$,
\*[
	\exp\left\{\frac{\lr(t+1)}{2} \right\}
	\leq \exp\left\{\frac{[\hedgelr(\numexperts,\numeffexperts)]^2}{8\sqrt{\log\numexperts}} \right\}
	\leq \exp\left\{[\hedgelr(\numexperts,\numeffexperts)]^2/8\right\},
\]
as well as $\lbhelper(t+1) \geq 0$ and $\cosh(x) \geq 1$ to obtain $\hedgeweight_{\expidxoptpath(t)}(t+1) \leq \frac{\exp\left\{[\hedgelr(\numexperts,\numeffexperts)]^2/8\right\}}{\numeffexperts}$. Thus,
\[\label{eqn:term3-lb-c}
&\hspace{-2em}\sum_{t=\tmid+1}^{t_1}\rbra{\frac{1}{\lr(t+1)} - \frac{1}{\lr(t)}} \log\left(\frac{1}{\hedgeweight_{\expidxoptpath(t)}(t+1)}\right) \\
	& \geq \sum_{t=\tmid+1}^{t_1}\rbra{\frac{1}{\lr(t+1)} - \frac{1}{\lr(t)}} \Big[\log\numeffexperts- \frac{[\hedgelr(\numexperts,\numeffexperts)]^2}{8}\Big] \\
	&= \frac{\log\numeffexperts - [\hedgelr(\numexperts,\numeffexperts)]^2/8}{\hedgelr(\numexperts,\numeffexperts)} [\sqrt{t_1+1} - \sqrt{\tmid+1}].
\]

Set $t_1 = T/2$, and suppose $T > \frac{32\log\numexperts}{[\hedgelr(\numexperts,\numeffexperts)]^2}$ to ensure $t_1 > \tmid+1$. Then, substituting \cref{eqn:term3-lb-c} into \cref{eqn:lb-regret-diff-bound} gives
\*[
	\playerexpregretHedge(T)
	&\geq \playerexpregretHedge(T) - \playerexpregretHedge(\tmid) \\
	&\geq -\frac{\sqrt{T+1}}{\hedgelr(\numexperts,\numeffexperts)}\log(3)
	+ \frac{\log\numeffexperts - [\hedgelr(\numexperts,\numeffexperts)]^2/8}{\hedgelr(\numexperts,\numeffexperts)} [\sqrt{t_1+1} - \sqrt{\tmid+1}] \\
	&\geq -\frac{\sqrt{T+1}}{\hedgelr(\numexperts,\numeffexperts)}\log(3)
	+ \frac{\log\numeffexperts - [\hedgelr(\numexperts,\numeffexperts)]^2/8}{2\hedgelr(\numexperts,\numeffexperts)} \Big[\sqrt{T+1} - \frac{\sqrt{32\log\numexperts}}{\hedgelr(\numexperts,\numeffexperts)}\Big] \\
	&\geq \frac{\log\numeffexperts}{4\hedgelr(\numexperts,\numeffexperts)}\sqrt{T+1} - \frac{3[\log\numeffexperts - [\hedgelr(\numexperts,\numeffexperts)]^2/8]\log\numexperts}{[\hedgelr(\numexperts,\numeffexperts)]^2} \\
	&\geq \frac{\log\numeffexperts}{4\hedgelr(\numexperts,\numeffexperts)}\sqrt{T} - \frac{3\log\numeffexperts}{[\hedgelr(\numexperts,\numeffexperts)]^2},
\]
where we also used $\log\numeffexperts > [\hedgelr(\numexperts,\numeffexperts)]^2/4 + 4\log(3)$.

\manualendproof
\section{Implementing \FTRLCARE{} and \MetaCARE}
\label{ssec:care-implementation}

The following algorithm efficiently implements \FTRLCARE{}; its validity follows from \cref{thm:ftrl-simplex-summary}.

\begin{algorithm}[H]
	\SetAlgoLined
	\SetKwInput{KwInput}{Inputs}
	\SetKwInput{KwData}{Receive Data}
	\SetKwFunction{KwFn}{Function}
	\KwInput{\\
	$\bullet$ constants $c_1,c_2 >0$, number of experts $\numexperts$\;
	$\bullet$ a function $\texttt{root} : (f,(a,b))\in(\Reals\to\Reals)\times\Reals^2\to \texttt{maybe}(\Reals)$ which returns a root of the function $f$ on the interval $[a,b]$ when $f(a)f(b)<0$, and returns $\texttt{nothing}$ otherwise.}
	\KwResult{Infinite list of weight vectors, $\set{\careweightvec}_{t\in\Nats}$}
	$\entropy = $
		\texttt{Function}$\Big(\weightdumvec\mapsto
			\cbra{ \sum_{\expidx\in\experts} [-\weightdum_\expidx \log(\weightdum_\expidx)]}\Big)$\;
	$\weightvec= $
		\texttt{Function}$\bigg((\lr, \Lossdumvec)
			\mapsto\cbra[3]{
				\rbra[3]{\frac
					{\exp(-\lr \Lossdumvec_\expidx)}
					{\sum_{\expidxdum\in\experts}\exp(-\lr \Lossdumvec_\expidxdum)}}_{\expidx\in\experts}
			}\bigg)$\;
	$\Lossvec(0) = \texttt{zeroes}(\numexperts)$\;
	$\careweightvec(1) = \texttt{ones}(\numexperts) / \numexperts$ \;
	\For{$t\in\Nats$}{
	\KwData{vector of expert losses from round $t$, $\lossvec(t)\in\cinter{0,1}^\experts$}
	$\Lossvec(t) = \Lossvec(t-1)+\lossvec(t)$\;
	$\ilr(t+1) = \texttt{root}\bigg($
		\texttt{Function}$\Big(\lr\mapsto\cbra[2]{
				\eta - \frac{2c_1\sqrt{c_2+\entropy(\weightvec(\lr,\Lossvec(t)))}}{\sqrt{t+1}}
			}\Big)$$,\
			\ointer{\frac{2c_1\sqrt{c_2}}{\sqrt{t+1}}, \frac{2c_1\sqrt{c_2+\log(\numexperts)}}{\sqrt{t+1}}}
		\bigg)$\;
	$\careweightvec(t+1) = \weightvec(\ilr(t+1),\Lossvec(t))$
	}
	\caption{Implementation of \FTRLCARE}
\end{algorithm}

\MetaCARE{} only requires the above implementation of \FTRLCARE{} and a standard implementation of \Hedge{}.
The parameters of \MetaCARE{} can be tuned to optimize the $\numeffexperts=1$ bound of \cref{thm:hedge-bound-quant} and the leading term of \cref{thm:ftrl-care-bound-quant}, hence improving the universal constants, but it does not affect the order of the bound.

\section{Simulations}
\label{sec:simulations}
In this section, we present a brief simulation analysis of the performance of \Hedge{}, \FTRLCARE{}, and \MetaCARE{} to provide intuition for how the algorithms differ and to demonstrate the effectiveness of \MetaCARE{} that we have proved in our analysis.
Since the weights of all three algorithms can be completely determined by the expert losses, we specify each scenario using only the loss distributions rather than the distributions on $\dataspace$ and $\expspace$.
In \cref{fig:1EE-vs-2EE}, we plot the \expregretname{} against the number of rounds $T$ for two \envpolicynames{}: the left column $(\numeffexperts=1)$ corresponds to the stochastic setting, where the losses of the first expert are \iid{} $\bernoullidist(1/2)$ and the losses of all the other experts are \iid{} $\bernoullidist(1)$; the right column $(\numeffexperts=2)$ corresponds to an adversarial setting with two effective experts, where on the $t$th round the loss of the first expert is deterministically $t \text{ mod } 2$, the loss of the second expert is deterministally $(t+1) \text{ mod } 2$, and the losses of the remaining experts are all deterministically $1$.
In \cref{fig:varyN}, we plot the \expregretname{} against the number of experts $\numexperts$ for various $T$.
The \envpolicyname{} has $\numeffexperts=2$, and is the same as for the right column of \cref{fig:1EE-vs-2EE}.
For both settings, the gap between the expected losses of the best effective and ineffective experts under distributions in the convex hull of those produced by the \envpolicyname{} is $\Deltaeff = 1/2$.
For all of the simulations, the algorithms are parametrized using $c_{\shortHedge} = c_{\shortCare,1} = \sqrt{8}$, $c_{\shortCare,2}=1$, and $c_{\shortMeta} = 100$.
All of the plots display \expregretname{}; for the $\numeffexperts=1$ case of \cref{fig:1EE-vs-2EE} this is approximated by averaging over $10$ simulations, and for the remaining plots this is exact since the losses are all deterministic.
\newcommand{\subfigwidth}{0.44\textwidth}
\begin{figure}[H]
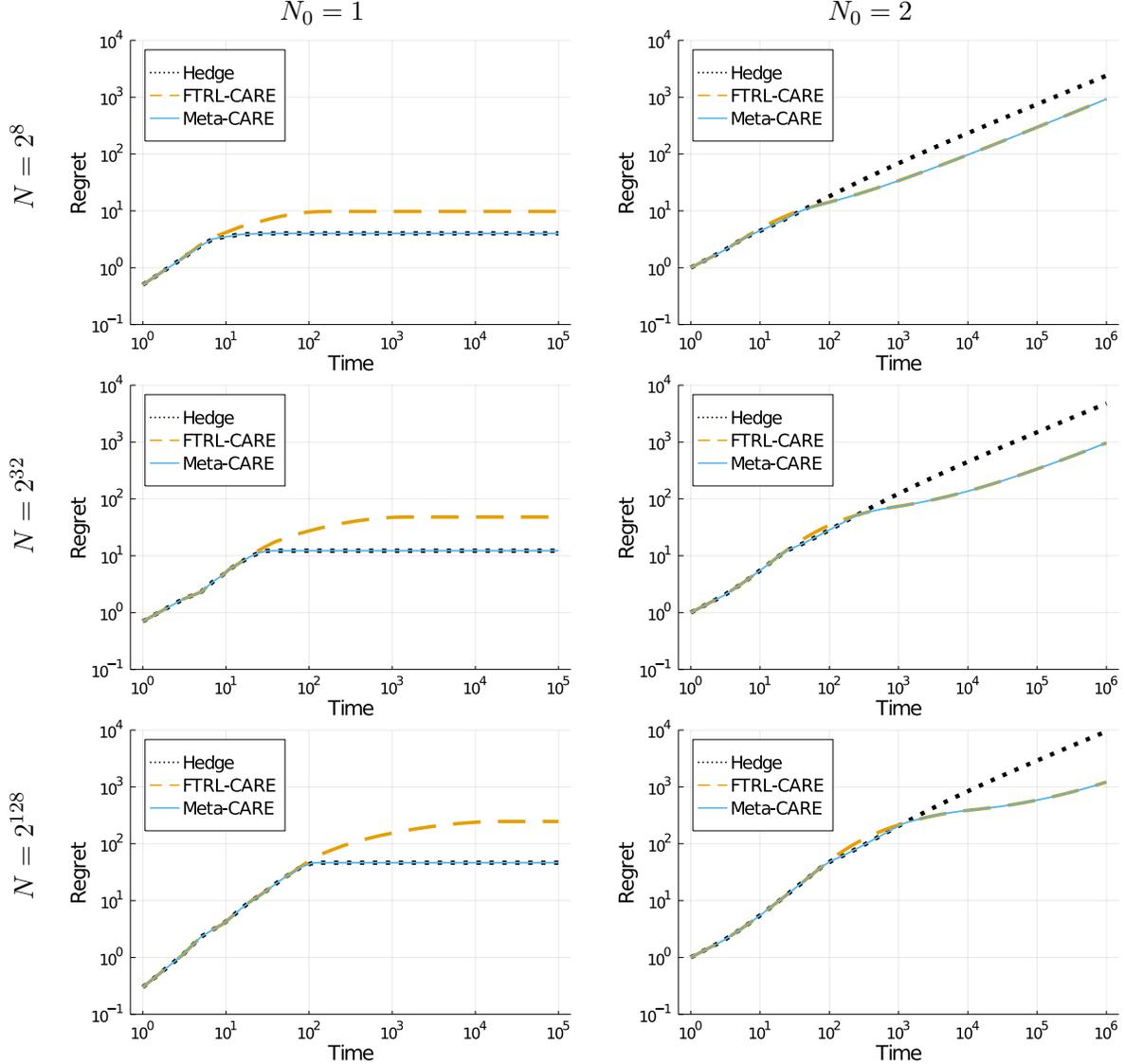

  \begin{tabular}{ccc}
    & $\numeffexperts = 1$ & $\numeffexperts = 2$
    \\
    \rotatebox{90}{$\numexperts = 2^{8}$}
    &
    \begin{subfigure}{\subfigwidth}
      \includegraphics[width=\textwidth, trim=0cm 0cm 0cm 0cm, clip]{figure-files/regret_logT=5--log2N=8--Stochastic-w-Gap=0.5--c_meta=100.0--c_care1=2.8.pdf}
    \end{subfigure}
    &
    \begin{subfigure}{\subfigwidth}
      \includegraphics[width=\textwidth, trim=0cm 0cm 0cm 0cm, clip]{figure-files/regret_logT=6--log2N=8--Alternating--N0=2--c_meta=100.0--c_care1=2.8.pdf}
    \end{subfigure}
    \\
    \rotatebox{90}{$\numexperts = 2^{32}$}
    &
    \begin{subfigure}{\subfigwidth}
      \includegraphics[width=\textwidth, trim=0cm 0cm 0cm 0cm, clip]{figure-files/regret_logT=5--log2N=32--Stochastic-w-Gap=0.5--c_meta=100.0--c_care1=2.8.pdf}
    \end{subfigure}
    &
    \begin{subfigure}{\subfigwidth}
      \includegraphics[width=\textwidth, trim=0cm 0cm 0cm 0cm, clip]{figure-files/regret_logT=6--log2N=32--Alternating--N0=2--c_meta=100.0--c_care1=2.8.pdf}
    \end{subfigure}
    \\
    \rotatebox{90}{$\numexperts = 2^{128}$}
    &
    \begin{subfigure}{\subfigwidth}
      \includegraphics[width=\textwidth, trim=0cm 0cm 0cm 0cm, clip]{figure-files/regret_logT=5--log2N=128--Stochastic-w-Gap=0.5--c_meta=100.0--c_care1=2.8.pdf}
    \end{subfigure}
    &
    \begin{subfigure}{\subfigwidth}
      \includegraphics[width=\textwidth, trim=0cm 0cm 0cm 0cm, clip]{figure-files/regret_logT=6--log2N=128--Alternating--N0=2--c_meta=100.0--c_care1=2.8.pdf}
    \end{subfigure}
\end{tabular}
  \caption{
    Comparing \expregretname{} as a function of time $T$, for number of effective experts $\numeffexperts \in\set{1,2}$ and varying total number of experts $\numexperts$.
    Plots are on a $\log$-$\log$ scale; slopes of lines correspond to polynomial powers, and intercepts of lines correspond to $\log$-(constants of proportionality).
  }
    \label{fig:1EE-vs-2EE}
\end{figure}

\begin{figure}[ht]
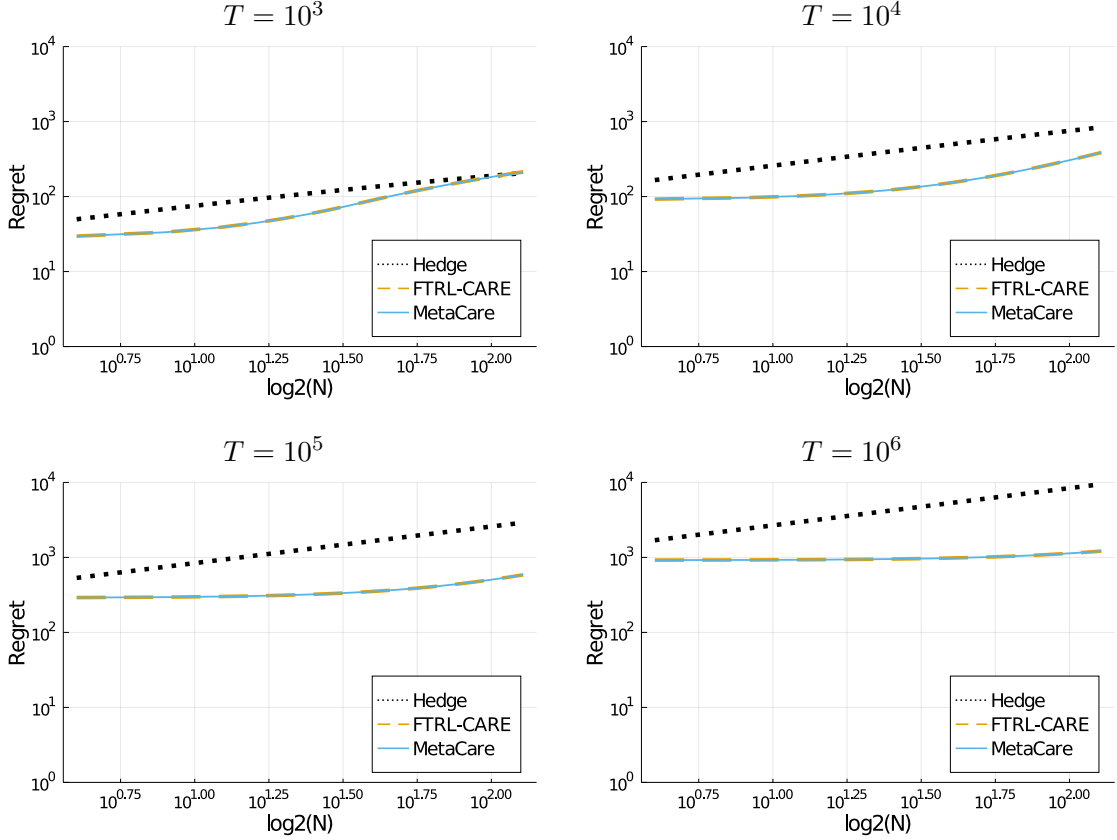

  \begin{tabular}{cc}
    $T = 10^3$ & $T = 10^4$
    \\
    \begin{subfigure}{\subfigwidth}
      \includegraphics[width=\textwidth, trim=0cm 0cm 0cm 0cm, clip]{figure-files/regret_logT=3--Alternating--N0=2.pdf}
    \end{subfigure}
    &
    \begin{subfigure}{\subfigwidth}
      \includegraphics[width=\textwidth, trim=0cm 0cm 0cm 0cm, clip]{figure-files/regret_logT=4--Alternating--N0=2.pdf}
    \end{subfigure}
    \\
    \\
    $T = 10^5$ & $T = 10^6$
    \\
    \begin{subfigure}{\subfigwidth}
      \includegraphics[width=\textwidth, trim=0cm 0cm 0cm 0cm, clip]{figure-files/regret_logT=5--Alternating--N0=2.pdf}
    \end{subfigure}
    &
    \begin{subfigure}{\subfigwidth}
      \includegraphics[width=\textwidth, trim=0cm 0cm 0cm 0cm, clip]{figure-files/regret_logT=6--Alternating--N0=2.pdf}
    \end{subfigure}
\end{tabular}
  \caption{
    Comparing \expregretname{} as a function of the number of experts $\numexperts$ for $\numeffexperts = 2$ effective experts at varying times $T$.
    Plots are on a $\log$-$\log$ scale; slopes of lines correspond to polynomial powers, and intercepts of lines correspond to $\log$-(constants of proportionality).
    Note that since the $x$-axis variable in each case is $\log_2(\numexperts)$, the second tick on the $x$-axis corresponds to $\numexperts = 2^{10^{1.00}} = 1024$, and the last tick on the $x$-axis corresponds to $\numexperts = 2^{10^{2.00}} \approx 1.27\times 10^{30}$.
  }
    \label{fig:varyN}
\end{figure}

\preprint{$ $\\}
Beginning with \cref{fig:1EE-vs-2EE}, for $\numeffexperts =1$, \expregretname{} levels-off at a higher constant for \FTRLCARE{} than for \Hedge{}.
As anticipated by the theory, the period for which adversarial \regretname{} is accumulated before the \regretname{} levels off increases with $\numexperts$ for both \Hedge{} and \FTRLCARE{}, and is longer for \FTRLCARE{}, leading to higher total \expregretname{}.
For $\numeffexperts =2$, the gap between the \expregretname{} of \FTRLCARE{} and \Hedge{} widens as $\numexperts$ increases, corresponding to the $\sqrt{T\log\numexperts}$ \rateregretname{} for \Hedge{} v.s. the $\sqrt{T\log\numeffexperts}$ \rateregretname{} for \FTRLCARE{}.
As anticipated by our theoretical results, there is a phase transition in the \regretname{} accumulation for both \FTRLCARE{} and \Hedge{} at roughly the time when the respective \expregretname{}s level off in the $\numeffexperts = 1$ case.
In all cases, the \expregretname{} of \MetaCARE{} closely tracks the better of \Hedge{} and \FTRLCARE{}.

For \cref{fig:varyN}, when $T$ is small relative to $\log\numexperts$, both \FTRLCARE{} and \Hedge{} have \expregretname{} growing with $\numexperts$ according to the adversarial rate, corresponding to a slope of $1/2$.
When $T$ is large relative to $\numexperts$, so that $\sqrt{T\log\numeffexperts} \gg (\log \numexperts)^{3/2} / \Deltaeff$, the \expregretname{} of \FTRLCARE{} is approximately constant in $\numexperts$ while the \expregretname{} of \Hedge{} grows like $\sqrt{\log \numexperts}$, as anticipated by our theoretical results.
Once again, the \expregretname{} of \MetaCARE{} closely tracks the better of \Hedge{} and \FTRLCARE{}.

\end{document}